\icmltitlerunning{On the Global Convergence Rates of Softmax Policy Gradient Methods}
\crefname{proposition}{Proposition}{Propositions}
\crefname{theorem}{Theorem}{Theorems}
\crefname{lemma}{Lemma}{Lemmas}
\crefname{update_rule}{Update}{Updates}
\crefname{algorithm}{Algorithm}{Algorithms}
\newcommand\numberthis{\addtocounter{equation}{1}\tag{\theequation}}
\newcommand{\pushright}[1]{\ifmeasuring@#1\else\omit\hfill$\displaystyle#1$\fi\ignorespaces}
\newcommand{\pushleft}[1]{\ifmeasuring@#1\else\omit$\displaystyle#1$\hfill\fi\ignorespaces}
\def\eqref#1{equation~\ref{#1}}
\def\1{\bm{1}}
\DeclareMathAlphabet{\mathsfit}{\encodingdefault}{\sfdefault}{m}{sl}
\SetMathAlphabet{\mathsfit}{bold}{\encodingdefault}{\sfdefault}{bx}{n}
\def\gA{{\mathcal{A}}}
\def\gM{{\mathcal{M}}}
\def\gN{{\mathcal{N}}}
\def\gP{{\mathcal{P}}}
\def\gR{{\mathcal{R}}}
\def\gS{{\mathcal{S}}}
\def\gX{{\mathcal{X}}}
\def\sH{{\mathbb{H}}}
\def\sR{{\mathbb{R}}}
\newcommand{\R}{\mathbb{R}}
\newcommand{\softmax}{\mathrm{softmax}}
\newcommand{\KL}{D_{\mathrm{KL}}}
\newcommand{\Var}{\mathrm{Var}}
\DeclareMathOperator*{\argmax}{arg\,max}
\DeclareMathOperator*{\argmin}{arg\,min}
\newtheorem{theorem}{Theorem}
\newtheorem{lemma}{Lemma}
\newtheorem{claim}{Claim}
\newtheorem{definition}{Definition}
\newtheorem{proposition}{Proposition}
\newtheorem{remark}{Remark}
\newtheorem{assumption}{Assumption}
\newtheorem{update_rule}{Update}
\DeclareMathOperator*{\expectation}{\mathbb{E}}
\def\rvone{{\mathbf{1}}}
\def\rvzero{{\mathbf{0}}}
\def\identitymatrix{\mathbf{Id}}
\def\diagonalmatrix{\text{diag}}
\DeclareMathOperator*{\probability}{Pr}
\newcommand{\cS}{\mathcal{S}}
\newcommand{\norm}[1]{\|#1\|}
\begin{document}

\twocolumn[
\icmltitle{On the Global Convergence Rates of Softmax Policy Gradient Methods}



\icmlsetsymbol{equal}{*}
\icmlsetsymbol{uofa}{$\clubsuit$}
\icmlsetsymbol{dm_alberta}{$\heartsuit$}
\icmlsetsymbol{google_brain}{$\spadesuit$}

\begin{icmlauthorlist}
\icmlauthor{Jincheng Mei}{uofa,google_brain,equal}
\icmlauthor{Chenjun Xiao}{uofa}
\icmlauthor{Csaba Szepesv{\'a}ri}{dm_alberta,uofa}
\icmlauthor{Dale Schuurmans}{google_brain,uofa}
~\\~\\
\icmlauthor{$^\clubsuit$\normalfont{University of Alberta}}{}
\icmlauthor{$^\heartsuit$\normalfont{DeepMind}}{}
\icmlauthor{$^\spadesuit$\normalfont{Google Research, Brain Team}}{}
\end{icmlauthorlist}

\icmlcorrespondingauthor{Jincheng Mei}{jmei2@ualberta.ca}

\icmlkeywords{Machine Learning, ICML}

\vskip 0.3in
]



\printAffiliationsAndNotice{Work done as an intern at Google Research, Brain Team. This version (v3) generalizes \cref{lem:lojasiewicz_softmax_general} to multiple optimal actions, i.e., \cref{eq:lojasiewicz_softmax_general_result_3}. Jincheng Mei would like to thank Ziheng Wang at University of Oxford, Mathematical Institute for asking related questions} 

\begin{abstract}
We make three contributions toward better understanding policy gradient methods in the tabular setting.
First, we show that with the true gradient, policy gradient with a softmax parametrization converges at a $O(1/t)$ rate, with constants depending on the problem and initialization.
This result significantly expands the recent asymptotic convergence results. 
The analysis relies on two findings:
that the softmax policy gradient satisfies a \L{}ojasiewicz inequality, and the minimum probability of an optimal action during optimization can be bounded in terms of its initial value.
Second, we analyze entropy regularized policy gradient and show that it enjoys a significantly faster linear convergence rate $O(e^{-c \cdot t})$ toward softmax optimal policy $(c > 0)$.
This result resolves an open question in the recent literature. 
Finally, combining the above two results and additional new $\Omega(1/t)$ lower bound results, we explain how entropy regularization improves policy optimization, even with the true gradient, from the perspective of convergence rate. The separation of rates is further explained using the notion of  non-uniform \L{}ojasiewicz degree. 
These results provide a theoretical understanding of the impact of entropy and corroborate existing empirical studies.
\end{abstract}

\section{Introduction}
\label{sec:introduction}

The \emph{policy gradient} is one of the most foundational concepts in
Reinforcement Learning (RL),
lying at the core of policy-search and actor-critic methods. This paper is concerned with the analysis of the convergence rate of 
\emph{policy gradient methods} \citep{sutton2000policy}.
As an approach to RL, the appeal of policy gradient methods is that they are conceptually straightforward 
and under some regularity conditions they guarantee monotonic improvement of the value.
A secondary appeal is that policy gradient methods were shown to achieve
effective empirical performance \citep[e.g.,][]{schulman2015trust,schulman2017proximal}.

Despite the prevalence and importance of policy optimization in RL,
the theoretical understanding of policy gradient method has,
until recently, been severely limited.
A key barrier to understanding is the inherent non-convexity
of the value landscape with respect to standard
policy parametrizations.
As a result,
little has been known about the global convergence behavior of policy gradient
method.
Recently, important new progress in understanding the convergence behavior
of policy gradient has been achieved.
As in this paper we will restrict ourselves to the tabular setting, 
we analyze the part of the literature that also deals with this setting.
While the tabular setting is clearly limiting, this is the setting where so far the cleanest
results have been achieved and understanding this setting is a necessary first step towards
the bigger problem of understanding RL algorithms.
Returning to the discussion of recent work,
\citet{BhRu19} showed that,
without parametrization, projected gradient ascent on the simplex
does not suffer from spurious local optima.
In concurrent work, 
\citet{AgKaLeMa19} showed that
{\em (i)} without parametrization, projected gradient ascent converges at rate
$O(1/\sqrt{t})$ to a global optimum;
and
{\em (ii)} with softmax parametrization, policy gradient converges asymptotically.
\citeauthor{AgKaLeMa19} also analyze other variants of policy gradient,
and show that
policy gradient with relative entropy regularization
converges at rate $O(1/\sqrt{t})$,
natural policy gradient (mirror descent) converges at rate $O(1/ t)$,
and given a ``compatible'' function approximation (thus, going beyond the tabular case)
natural policy gradient converges at rate $O(1/\sqrt{t})$.
\citet{ShEfMa20} obtains the slower rate $O(1/\sqrt{t})$ for mirror descent.
They also proposed a variant that adds entropy regularization and prove a
rate of $O(1/t)$ for this modified problem.

Despite these advances,
many open questions remain
in understanding the behavior of policy gradient methods,
even in the tabular setting and even when the true gradient is available in the updates.
In this paper, we provide answers to the following three questions
left open by previous work in this area:
{\em (i)} What is the convergence rate of policy gradient methods with softmax
parametrization?
The best previous result, due to \citet{AgKaLeMa19},
established asymptotic convergence but gave no rates. 
{\em (ii)} What is the convergence rate of entropy regularized softmax policy gradient?
Figuring out the answer to this question was 
explicitly stated as an open problem by \citet{AgKaLeMa19}. 
{\em (iii)} Empirical results suggest that entropy helps optimization
\citep{ahmed2019understanding}.
Can this empirical observation be turned into a rigorous theoretical result?%
\footnote{While \citet{ShEfMa20} suggest that entropy regularization speeds up mirror descent to achieve 
the rate of $O(1/t)$,
in light of the corresponding result of
\citet{AgKaLeMa19} who established the same rate for the unregularized version of mirror descent,
their conclusion needs further support (e.g., lower bounds).
}

\textit{First}, we prove that with the true gradient,
policy gradient methods with a softmax parametrization converge to the
optimal policy at a $O(1/t)$ rate,
with constants depending on the problem and initialization.
This result significantly strengthens the recent asymptotic convergence
results of \citet{AgKaLeMa19}.
Our analysis relies on two novel findings:
{\em (i)} that softmax policy gradient satisfies what we call a non-uniform
\L{}ojasiewicz-type inequality with the constant in the inequality depending on the optimal action
probability under the current policy;
{\em (ii)} the minimum probability of an optimal action during optimization
can be bounded in terms of its initial value.
Combining these two findings, with a few other properties we describe,
it can be shown that softmax policy gradient method achieves 
a $O(1/t)$ convergence rate.

\textit{Second}, we analyze entropy regularized policy gradient
and show that it enjoys a
linear convergence rate of $O(e^{-t})$ toward the softmax optimal policy, which is significantly faster than that of the unregularized version.
This result resolves an open question in \citet{AgKaLeMa19},
where the authors analyzed a more aggressive relative entropy regularization
rather than the more common entropy regularization.
A novel insight is that entropy regularized gradient updates behave
similarly to the contraction operator in value learning,
with a contraction factor that depends on the current policy.

\textit{Third}, we provide a theoretical understanding of entropy
regularization in policy gradient methods.
{\em (i)} We prove a new lower bound of $\Omega(1/t)$ for softmax policy gradient,
implying that 
the upper bound of $O(1/t)$ that we established, apart from constant factors,
is unimprovable.
This result also provides a theoretical explanation of the optimization
advantage of entropy regularization:
even with access to the true gradient,
entropy helps policy gradient
\emph{converge faster than any achievable rate of softmax policy gradient method
without regularization}.
{\em (ii)} We study the concept of non-uniform \L{}ojasiewicz degree
and show that, without regularization, the \L{}ojasiewicz degree of
expected reward cannot be positive,
which allows $O(1/t)$ rates to be established.
We then show that with entropy regularization,
the \L{}ojasiewicz degree of maximum entropy reward becomes $1/2$,
which is sufficient to obtain linear $O(e^{-t})$ rates.
This change of the
relationship between gradient norm and sub-optimality
reveals a deeper reason for the improvement in convergence rates.
The theoretical study we provide corroborates existing empirical studies
on the impact of entropy in policy optimization \citep{ahmed2019understanding}.

The remainder of the paper is organized as follows.
After introducing notation and defining the setting in
\cref{sec:notations_settings},
we present the three main contributions in
\cref{sec:policy_gradient,sec:entropy_policy_gradient,sec:theoretical_understanding_entropy} as aforementioned.
\cref{sec:conclusions_future_work} gives our conclusions.

\section{Notations and Settings}
\label{sec:notations_settings}

For a finite set $\gX$, we use $\Delta(\gX)$ to denote the set of probability distributions over $\gX$. A finite Markov decision process (MDP) $\gM = (\gS, \gA, \gP, r, \gamma)$ is determined by a finite state space $\gS$, a finite action space $\gA$, transition function $\gP : \gS \times \gA \to \Delta(\gS)$, reward function $r : \gS \times \gA \to \sR$, and discount factor $\gamma \in [0, 1)$. Given a policy $\pi : \gS \to \Delta(\gA)$, the value of state $s$ under $\pi$ is defined as
\begin{align}
\label{eq:state_value_function}
    V^\pi(s) \coloneqq \expectation_{\substack{s_0 = s, a_t \sim \pi(\cdot | s_t), \\ s_{t+1} \sim \gP( \cdot | s_t, a_t)}}{\left[ \sum_{t=0}^{\infty}{\gamma^t r(s_t, a_t)} \right]}.
\end{align}
We also let $V^\pi(\rho) \coloneqq \expectation_{s \sim \rho}{ \left[ V^\pi(s) \right]}$, where $\rho \in \Delta(\gS)$ is an initial state distribution. The state-action value of $\pi$ at $(s,a) \in \gS \times \gA$ is defined as
\begin{align}
\label{eq:state_action_value_function}
    Q^\pi(s, a) \coloneqq r(s, a) + \gamma \sum_{s^\prime}{ \gP( s^\prime | s, a) V^\pi(s^\prime) }.
\end{align}
We let $A^\pi(s,a) \coloneqq Q^\pi(s, a) - V^\pi(s)$ be the so-called advantage function of $\pi$. The (discounted) state distribution of $\pi$ is defined as
\begin{align}
     d_{s_0}^{\pi}(s) \coloneqq (1 - \gamma) \sum_{t=0}^{\infty}{ \gamma^t \probability(s_t = s | s_0, \pi, \gP) },
\end{align}
and we let $d_{\rho}^{\pi}(s) \coloneqq \expectation_{s_0 \sim \rho}{\left[ d_{s_0}^{\pi}(s) \right]}$. Given $\rho$, there exists an optimal policy $\pi^*$ such that
\begin{align}
    V^{\pi^*}(\rho) = \max_{\pi : \gS \to \Delta(\gA)}{ V^{\pi}(\rho) }. 
\end{align}
We denote $V^*(\rho) \coloneqq V^{\pi^*}(\rho)$ for conciseness. Since $\gS \times \gA$ is finite, for convenience, 
without loss of generality,
we assume that the one step reward lies in the $[0,1]$ interval: 
\begin{assumption}[Bounded reward]
\label{asmp:bounded_reward}
$r(s,a) \in [0,1]$, $\forall (s,a)$.
\end{assumption}
The softmax transform of a vector exponentiates the components of the vector and normalizes it so that the result lies in the simplex. This can be used to transform vectors assigned to state-action pairs into policies:
\paragraph{Softmax transform.} Given the function $\theta : \gS \times \gA \to \sR$, the softmax transform of $\theta$ is defined as 
$\pi_\theta(\cdot | s) \coloneqq \softmax(\theta(s, \cdot))$, where for all $a \in \gA$,
\begin{align}
\label{eq:softmax_transform}
	\pi_\theta(a | s) = \frac{\exp\{\theta(s,a)\}}{\sum_{a^\prime}{\exp\{\theta(s,a^\prime)\}}}.
\end{align}
Due to its origin in logistic regression, we call the values $\theta(s,a)$ the \emph{logit} values and the function $\theta$ itself a logit function.
We also extend this notation to the case when there are no states:
For $\theta : [K] \to \sR$, we define $\pi_\theta \coloneqq \softmax(\theta)$ using $\pi_\theta(a) = \exp\{\theta(a)\} / \sum_{a^\prime}{ \exp\{\theta(a^\prime)\} }$ ($a\in [K]$).

\paragraph{H matrix.} 
Given any distribution $\pi$ over $[K]$, let $H(\pi) \coloneqq \diagonalmatrix(\pi) - \pi \pi^\top \in \sR^{K \times K}$, where $\diagonalmatrix(x) \in \sR^{K \times K}$ is the diagonal matrix that has $x\in \sR^K$ at its diagonal.
The $H$ matrix will play a central role in our analysis because 
$H(\pi_\theta)$ is the Jacobian of the $\theta \mapsto \pi_\theta:=\softmax(\theta)$ map that maps $\R^{[K]}$ to the $(K-1)$-simplex:
\begin{align}
\label{eq:H_matrix}
    \left( \frac{d \pi_\theta}{d \theta} \right)^\top = H(\pi_\theta).
\end{align}
Here, we are using the standard convention that derivatives give row-vectors.
Finally, we recall the definition of smoothness from convex analysis:
\paragraph{Smoothness.} A function $f: \Theta \to \sR$ 
with $\Theta\subset \sR^d$
is $\beta$-smooth (w.r.t. $\ell_2$ norm, $\beta >0$) if for all $\theta$, $\theta^\prime \in \Theta$,
\begin{align}
    \left| f(\theta^\prime) - f(\theta) - \Big\langle \frac{d f(\theta)}{d \theta}, \theta^\prime - \theta \Big\rangle \right| \le \frac{\beta}{2} \cdot \| \theta^\prime - \theta \|_2^2.
\end{align}

\section{Policy Gradient}
\label{sec:policy_gradient}

Policy gradient is a special policy search method. In policy search, one considers a family of policies parametrized by finite-dimensional parameter vectors, reducing the search for a good policy to searching in the space of parameters.
This search is usually accomplished by making incremental changes (additive updates) to the parameters.
Representative policy-based RL methods include REINFORCE \citep{williams1992simple}, natural policy gradient \citep{kakade2002natural}, deterministic policy gradient \citep{silver2014deterministic}, and trust region policy optimization \citep{schulman2015trust}. 
In policy gradient methods, the parameters are updated by following the gradient of the map that maps policy parameters to values. Under mild conditions, the gradient can be reexpressed in a convenient form in terms of the policy's action-value function and the gradients of the policy parametrization:
\begin{theorem}[Policy gradient theorem \citep{sutton2000policy}]
\label{thm:policy_gradient_theorem_general}
Fix a map
 $\theta \mapsto \pi_\theta(a | s)$ that for any $(s,a)$ is differentiable and fix
 an initial distribution $\mu \in \Delta(\gS)$.
Then,
\begin{align*}
    \frac{\partial V^{\pi_\theta}(\mu)}{\partial \theta} = \frac{1}{1-\gamma} \expectation_{s \sim d_{\mu}^{\pi_\theta} } { \left[ \sum_{a}  \frac{\partial \pi_\theta(a | s)}{\partial \theta} \cdot Q^{\pi_\theta}(s,a) \right] }\,.
\end{align*}
\end{theorem}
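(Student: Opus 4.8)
The plan is to follow the classical recursive derivation underlying \cref{thm:policy_gradient_theorem_general}, working first at the level of a single fixed state and averaging over the initial distribution only at the very end. First I would start from the identity $V^{\pi_\theta}(s) = \sum_a \pi_\theta(a \mid s)\, Q^{\pi_\theta}(s,a)$ and differentiate with respect to $\theta$ by the product rule, obtaining one term in which the derivative hits the policy and one in which it hits the action-value. For the latter I would substitute the Bellman expansion $Q^{\pi_\theta}(s,a) = r(s,a) + \gamma \sum_{s'} \gP(s' \mid s,a)\, V^{\pi_\theta}(s')$; since neither $r(s,a)$ nor $\gP(s' \mid s,a)$ depends on $\theta$, the derivative $\partial Q^{\pi_\theta}(s,a)/\partial\theta$ equals $\gamma \sum_{s'} \gP(s' \mid s,a)\, \partial V^{\pi_\theta}(s')/\partial\theta$.

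This produces a linear recursion for $\partial V^{\pi_\theta}/\partial\theta$. Writing $\phi(s) \coloneqq \sum_a \big(\partial \pi_\theta(a \mid s)/\partial\theta\big)\, Q^{\pi_\theta}(s,a)$ and letting $P^{\pi_\theta}(s' \mid s) \coloneqq \sum_a \pi_\theta(a \mid s)\, \gP(s' \mid s,a)$ be the state-to-state kernel induced by $\pi_\theta$, the recursion reads $\partial V^{\pi_\theta}(s)/\partial\theta = \phi(s) + \gamma \sum_{s'} P^{\pi_\theta}(s' \mid s)\, \partial V^{\pi_\theta}(s')/\partial\theta$. The core computational step is to unroll this recursion: iterating and collecting the $t$-step transition probabilities gives $\partial V^{\pi_\theta}(s)/\partial\theta = \sum_{t=0}^\infty \gamma^t \sum_{s'} \probability(s_t = s' \mid s_0 = s, \pi_\theta, \gP)\, \phi(s')$.

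Finally I would take the expectation over $s \sim \mu$ and exchange the order of summation so that the double sum over $t$ and $s'$ collapses into the discounted state-visitation measure. By its definition, $d_\mu^{\pi_\theta}(s') = (1-\gamma) \sum_{t=0}^\infty \gamma^t\, \expectation_{s_0 \sim \mu}\big[ \probability(s_t = s' \mid s_0, \pi_\theta, \gP) \big]$, so a factor $1/(1-\gamma)$ appears and we arrive at $\partial V^{\pi_\theta}(\mu)/\partial\theta = \frac{1}{1-\gamma}\, \expectation_{s \sim d_\mu^{\pi_\theta}}\big[ \phi(s) \big]$, which is exactly the claimed identity once $\phi$ is written out in full.

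The step I expect to be the main (though standard) obstacle is the rigorous justification of the infinite-series manipulations: interchanging differentiation with the infinite sum defining $V^{\pi_\theta}$, swapping the orders of summation while unrolling, and confirming that the recursion genuinely converges to the stated closed form rather than merely satisfying it formally. All of these are controlled by $\gamma \in [0,1)$ together with \cref{asmp:bounded_reward}: bounded rewards and discounting make every series absolutely convergent, and the operator $\gamma P^{\pi_\theta}$ has spectral radius at most $\gamma < 1$, so $(I - \gamma P^{\pi_\theta})^{-1} = \sum_{t=0}^\infty \gamma^t (P^{\pi_\theta})^t$ is well defined. Because $\gS$ and $\gA$ are finite, I would invoke dominated convergence (or simply uniform geometric bounds) to make each interchange precise.
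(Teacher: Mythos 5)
Your proof is correct, but note that the paper itself does not prove this statement: Theorem~1 is quoted as a known result with a citation to \citet{sutton2000policy}, and the appendix only proves \cref{lem:policy_gradient_softmax}, which \emph{starts} from this theorem. So there is no in-paper proof to compare against; your derivation is the standard one (differentiate the Bellman identity $V^{\pi_\theta}(s)=\sum_a \pi_\theta(a|s)Q^{\pi_\theta}(s,a)$, obtain the linear recursion $\nabla V = \phi + \gamma P^{\pi_\theta}\nabla V$, invert via the Neumann series $(I-\gamma P^{\pi_\theta})^{-1}=\sum_{t\ge 0}\gamma^t (P^{\pi_\theta})^t$, and match the resulting discounted occupancy sum with the definition of $d_\mu^{\pi_\theta}$, which produces the $1/(1-\gamma)$ factor). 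One small point of rigor worth tightening: your product-rule step presupposes that $\theta\mapsto V^{\pi_\theta}(s)$ is differentiable, which is not yet known at that stage. The clean fix, consistent with the machinery you already invoke, is to establish differentiability first from the closed form $V^{\pi_\theta} = (I-\gamma P^{\pi_\theta})^{-1} r^{\pi_\theta}$ (entries of the inverse are smooth functions of the entries of $P^{\pi_\theta}$, which are differentiable in $\theta$ since $\gS\times\gA$ is finite), and only then run the recursion argument; with that ordering, all your series manipulations are justified exactly as you say, by finiteness of the state and action spaces and $\gamma<1$.
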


\subsection{Vanilla Softmax Policy Gradient}
We focus on the policy gradient method that uses the softmax parametrization. 
Since we consider the tabular case, the policy is then parametrized using the logit $\theta : \gS \times \gA \to \sR$ function and  $\pi_\theta(\cdot | s) = \softmax(\theta(s, \cdot))$.
The vanilla form of policy gradient for this case is shown in \cref{alg:policy_gradient_softmax}. 
\begin{algorithm}[ht]
   \caption{Policy Gradient Method}
   \label{alg:policy_gradient_softmax}
\begin{algorithmic}
   \STATE {\bfseries Input:} Learning rate $\eta > 0$.
   \STATE Initialize logit $\theta_1(s,a)$ for all $(s,a)$.
   \FOR{$t=1$ {\bfseries to} $T$}
   \STATE $\theta_{t+1} \gets \theta_{t} + \eta \cdot \frac{\partial V^{\pi_{\theta_t}}(\mu)}{\partial \theta_t}$.
   \ENDFOR
\end{algorithmic}
\end{algorithm} 

With some calculation, 
\cref{thm:policy_gradient_theorem_general} can be used to show that the gradient takes the following special form in this case:
\begin{lemma}
\label{lem:policy_gradient_softmax}
Softmax policy gradient w.r.t. $\theta$ is
\begin{align}
    \frac{\partial V^{\pi_\theta}(\mu)}{\partial \theta(s,a)} = \frac{1}{1-\gamma} \cdot d_{\mu}^{\pi_\theta}(s) \cdot \pi_\theta(a|s) \cdot A^{\pi_\theta}(s,a).
\end{align}
\end{lemma}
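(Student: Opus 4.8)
The plan is to specialize \cref{thm:policy_gradient_theorem_general} to the tabular softmax parametrization by computing the inner Jacobian $\partial \pi_\theta(a'|s')/\partial \theta(s,a)$ explicitly. The crucial structural observation is that the logits are \emph{decoupled across states}: since $\pi_\theta(\cdot|s') = \softmax(\theta(s',\cdot))$ depends only on the block $\theta(s',\cdot)$, the derivative $\partial \pi_\theta(a'|s')/\partial \theta(s,a)$ vanishes whenever $s' \neq s$. This immediately collapses the expectation over $s' \sim d_{\mu}^{\pi_\theta}$ appearing in \cref{thm:policy_gradient_theorem_general} down to the single weight $d_{\mu}^{\pi_\theta}(s)$, leaving only a sum over actions $a'$ at the fixed state $s$.

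For that surviving block I would invoke the softmax Jacobian identity \cref{eq:H_matrix}: the map $\theta(s,\cdot) \mapsto \pi_\theta(\cdot|s)$ has Jacobian $H(\pi_\theta(\cdot|s)) = \diagonalmatrix(\pi_\theta(\cdot|s)) - \pi_\theta(\cdot|s)\,\pi_\theta(\cdot|s)^\top$, so that entrywise
\begin{align*}
\frac{\partial \pi_\theta(a'|s)}{\partial \theta(s,a)} = \pi_\theta(a'|s)\,\big( \mathbf{1}\{a'=a\} - \pi_\theta(a|s) \big).
\end{align*}
Substituting this into \cref{thm:policy_gradient_theorem_general} and splitting the action sum yields two pieces: the indicator term selects $\pi_\theta(a|s)\,Q^{\pi_\theta}(s,a)$, while the remaining term factors as $\pi_\theta(a|s)\sum_{a'} \pi_\theta(a'|s)\,Q^{\pi_\theta}(s,a')$.

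The final step is to recognize the Bellman consistency $\sum_{a'} \pi_\theta(a'|s)\,Q^{\pi_\theta}(s,a') = V^{\pi_\theta}(s)$, which converts the difference $Q^{\pi_\theta}(s,a) - V^{\pi_\theta}(s)$ into the advantage $A^{\pi_\theta}(s,a)$ and delivers the claimed expression $\tfrac{1}{1-\gamma}\, d_{\mu}^{\pi_\theta}(s)\,\pi_\theta(a|s)\,A^{\pi_\theta}(s,a)$. There is no genuine obstacle here; the argument is a direct computation once the two ingredients are in place. The only points requiring care are keeping the decoupling across states correct (so that only the diagonal block $s'=s$ of the full Jacobian survives) and matching the sign convention of the softmax Jacobian in \cref{eq:H_matrix}; the reduction to the advantage function is then immediate from the definition $A^{\pi_\theta} = Q^{\pi_\theta} - V^{\pi_\theta}$.
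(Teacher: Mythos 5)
Your proposal is correct and follows essentially the same route as the paper's proof: collapse the expectation in \cref{thm:policy_gradient_theorem_general} using the fact that $\pi_\theta(\cdot|s')$ depends only on $\theta(s',\cdot)$, apply the softmax Jacobian from \cref{eq:H_matrix} (the paper keeps it in matrix form $H(\pi_\theta(\cdot|s))Q^{\pi_\theta}(s,\cdot)$ while you write it entrywise, which is the same computation), and finish with $V^{\pi_\theta}(s)=\sum_a \pi_\theta(a|s)Q^{\pi_\theta}(s,a)$ to recover the advantage.
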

Due to space constraints, the proof of this, as well as of all the remaining results are given in the appendix.
While this lemma was known \citep{AgKaLeMa19}, we included a proof for the sake of completeness.


Recently, \citet{AgKaLeMa19} showed that softmax policy gradient asymptotically converges to $\pi^*$, i.e., $V^{\pi_{\theta_t}}(\rho) \to V^*(\rho)$ as $t \to \infty$ provided that $\mu(s) > 0$ holds for all states $s\in \gS$. We strengthen this result to show that the rate of convergence (in terms of value sub-optimality) is $O(1/t)$. The next section is devoted to this result. For better accessibility, we start with the result for the bandit case which presents an opportunity to explaining the main ideas underlying our result in a clean fashion.

\subsection{Convergence Rates}

\subsubsection{The Instructive Case of Bandits}
\label{sec:policy_gradient_one_state_case}
As promised, in this section we consider  ``bandit case'': In particular, assume that the MDP has a single state and the discount factor $\gamma$ is zero: $\gamma=0$.
In this case, \cref{eq:state_value_function} reduces to maximizing the expected reward,
\begin{align}
\label{eq:expected_reward_objective}
    \max_{\theta : \gA \to \sR}{ \expectation_{a \sim \pi_{\theta}}{ [ r(a) ]  } }.
\end{align}
With $\pi_\theta = \softmax(\theta)$, even in this simple setting, the objective is non-concave in $\theta$, as shown by a simple example:
\begin{proposition}
\label{prop:non_concave_softmax_expected_reward}
On some problems, $\theta\mapsto \expectation_{a \sim \pi_{\theta}}{ [ r(a) ] }$ is a non-concave function over $\R^K$.
\end{proposition}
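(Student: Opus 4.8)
The plan is to establish non-concavity by exhibiting an explicit counterexample: it suffices to name a single bandit instance, i.e.\ a number of actions $K$ and a reward vector $r \in \sR^K$, for which the objective $f(\theta) \coloneqq \expectation_{a \sim \pi_\theta}{[r(a)]} = \sum_a \pi_\theta(a)\, r(a)$ fails the defining inequality of concavity at some configuration of logits. Since concavity is preserved under restriction to lines, I would look for a single line in $\R^K$ along which the restriction of $f$ is visibly non-concave.

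First I would reduce to the smallest nontrivial case, $K = 2$ with $r = (1, 0)^\top$. Here $f(\theta) = \pi_\theta(1) = \exp\{\theta(1)\} / (\exp\{\theta(1)\} + \exp\{\theta(2)\}) = \sigma(\theta(1) - \theta(2))$, the logistic sigmoid of the logit gap. The point of this reduction is that the sigmoid is the canonical S-shaped function: it is strictly convex on $(-\infty, 0)$ and strictly concave on $(0, \infty)$, so no restriction of $f$ to a line crossing the convex branch can be concave.

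Concretely, I would test concavity along the line $\theta = (t, -t)$, on which $f = \sigma(2t)$. Choosing the two endpoints $\theta^{(0)} = (-2, 2)$ and $\theta^{(1)} = (0, 0)$, with midpoint $(-1, 1)$, I would evaluate $f(\theta^{(0)}) = \sigma(-4)$, $f(\theta^{(1)}) = \sigma(0) = \tfrac{1}{2}$, and the midpoint value $\sigma(-2)$, and verify the strict inequality $\sigma(-2) < \tfrac{1}{2}\big( \sigma(-4) + \sigma(0) \big)$, which contradicts concavity. An equivalent route computes the Hessian via $d f / d\theta = H(\pi_\theta)\, r$ with $H(\pi) = \diagonalmatrix(\pi) - \pi \pi^\top$, differentiates once more, and exhibits a direction of positive curvature near a vertex of the simplex; I prefer the three-point test since it requires no second derivatives.

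There is no substantive obstacle here, as the statement is purely existential and the remaining work is a finite arithmetic check. The only care needed is in selecting the endpoints so that they both lie on the convex branch of the sigmoid and the midpoint inequality fails strictly; symmetric choices straddling $0$ would yield equality by the odd symmetry of $\sigma$ about its inflection point and must be avoided. The conceptual takeaway driving the construction is that softmax saturation forces every coordinate $\pi_\theta(a)$ to trace a sigmoid along logit-increasing directions, so any nontrivial reward weighting inherits this S-shape and cannot be globally concave.
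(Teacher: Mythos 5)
Your proposal is correct and takes essentially the same approach as the paper: both exhibit an explicit bandit instance and verify non-concavity by a three-point midpoint test (the paper uses $K=3$ with $r=(1,9/10,1/10)^\top$ and specific logit pairs). Your $K=2$ reduction to the sigmoid $\sigma(\theta(1)-\theta(2))$ is, if anything, a cleaner instance of the same argument, and your numerical check $\sigma(-2) \approx 0.119 < \tfrac{1}{2}(\sigma(-4)+\sigma(0)) \approx 0.259$ does go through since the midpoint lies strictly inside the convex branch of $\sigma$.
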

As $\gamma=0$ and there is a single state,
\cref{lem:policy_gradient_softmax} simplifies to
\begin{align}
    \frac{d \pi_{\theta}^\top r}{d \theta(a)} = \pi_{\theta}(a) \cdot (r(a) - \pi_{\theta}^\top r )\,.
\end{align}
Putting things together, we see that in this case the update
in \cref{alg:policy_gradient_softmax} takes the following form:
\begin{update_rule}[Softmax policy gradient, expected reward]
\label{update_rule:softmax_special}
$\theta_{t+1}(a) \gets \theta_t(a) + \eta \cdot \pi_{\theta_t}(a) \cdot (r(a) - \pi_{\theta_t}^\top r ),$ $\forall a \in [K]$.
\end{update_rule}
As is well known, if a function is smooth, then a small gradient update will be guaranteed to improve the objective value. 
As it turns out, for the softmax parametrization, the expected reward objective  is $\beta$-smooth with $\beta \le 5/2$:
\begin{lemma}[Smoothness]
\label{lem:smoothness_softmax_special}
$\forall r \in \left[ 0, 1\right]^K$, $\theta \mapsto \pi_\theta^\top r$ is $5/2$-smooth.
\end{lemma}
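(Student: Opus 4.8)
The plan is to establish smoothness by a uniform bound on the spectral norm of the Hessian. Recall the standard fact that if $f$ is twice continuously differentiable and $\sup_{\theta} \| \nabla^2 f(\theta) \|_2 \le \beta$, then $f$ is $\beta$-smooth in the sense defined above (apply the mean value theorem to $s \mapsto \langle \nabla f(\theta + s(\theta' - \theta)), \theta' - \theta\rangle$, or integrate the Taylor remainder). Since $\theta \mapsto \pi_\theta^\top r$ is analytic, it therefore suffices to show that $\| \nabla^2 (\pi_\theta^\top r) \|_2 \le 5/2$ for every $\theta \in \R^K$, uniformly over $r \in [0,1]^K$.

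First I would compute the Hessian. Writing $\bar r := \pi_\theta^\top r$, the gradient is $\partial (\pi_\theta^\top r) / \partial \theta(a) = \pi_\theta(a)\,(r(a) - \bar r)$, as already recorded above. Differentiating once more, using $\partial \pi_\theta(a) / \partial \theta(b) = \pi_\theta(a)(\delta_{ab} - \pi_\theta(b))$ together with $\partial \bar r / \partial \theta(b) = \pi_\theta(b)(r(b) - \bar r)$, yields an explicit $K \times K$ matrix. Rather than bound its entries, I would evaluate the quadratic form along an arbitrary unit vector $u$; after collecting terms and writing $c_a := r(a) - \bar r$ and $\bar u := \pi_\theta^\top u$, it collapses to
\begin{align*}
u^\top \nabla^2(\pi_\theta^\top r)\, u = \sum_a c_a\, \pi_\theta(a)\, u_a^2 - 2\,\bar u \sum_a c_a\, \pi_\theta(a)\, u_a .
\end{align*}

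The two terms are then bounded separately. For the first, $r \in [0,1]^K$ forces $\bar r \in [0,1]$ and hence $|c_a| \le 1$, so $\left| \sum_a c_a \pi_\theta(a) u_a^2 \right| \le \sum_a \pi_\theta(a) u_a^2 \le \|u\|_2^2 = 1$. For the second, Cauchy--Schwarz gives $|\bar u| = \left|\sum_a \pi_\theta(a) u_a\right| \le \left(\sum_a \pi_\theta(a) u_a^2\right)^{1/2} \le 1$, and $\left|\sum_a c_a \pi_\theta(a) u_a\right| \le \left(\sum_a \pi_\theta(a) c_a^2\right)^{1/2} \left(\sum_a \pi_\theta(a) u_a^2\right)^{1/2}$, where the first factor is $\Var_{a \sim \pi_\theta}[r(a)]^{1/2} \le 1/2$ because the variance of a $[0,1]$-valued random variable is at most $1/4$. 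Multiplying these estimates bounds the second term, and summing the two contributions gives a uniform bound (in fact $\le 2$), comfortably below the stated $5/2$.

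The main obstacle is the bookkeeping in the Hessian computation: the second derivative of a softmax-composed objective produces several third-order cross terms, coming both from differentiating $\pi_\theta$ and from differentiating $\bar r$, and the key is to organize them into the compact centered form above. Once that is done, the elementary inputs — $|r(a) - \bar r| \le 1$, the fact that the weights $\pi_\theta(a)$ sum to one so that $\sum_a \pi_\theta(a) u_a^2 \le 1$, and the variance bound $\Var_{a \sim \pi_\theta}[r] \le 1/4$ — combine cleanly. A cruder grouping (for instance, not exploiting that the $\pi_\theta(a)$ sum to one, or replacing the variance bound by $|c_a| \le 1$) still yields a finite smoothness constant, but a larger one; the stated $5/2$ follows comfortably with the Cauchy--Schwarz and variance estimates indicated.
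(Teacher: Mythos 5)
Your proposal is correct and follows essentially the same route as the paper: both reduce smoothness to a uniform bound on the Hessian quadratic form and collapse $u^\top \nabla^2(\pi_\theta^\top r)\,u$ into exactly the same two-term centered expression $\sum_a c_a \pi_\theta(a) u_a^2 - 2\,(\pi_\theta^\top u)\sum_a c_a \pi_\theta(a) u_a$. The only difference lies in the final elementary estimates — the paper uses H{\"o}lder-type bounds $\| H(\pi_\theta) r \|_\infty \le 1/2$ and $\| H(\pi_\theta) r \|_1 \le 1$ to get $1/2 + 2 = 5/2$, whereas you use $|r(a) - \pi_\theta^\top r| \le 1$ together with a weighted Cauchy--Schwarz and the variance bound $\Var_{a \sim \pi_\theta}[r(a)] \le 1/4$, obtaining the slightly sharper constant $2$, which of course implies the stated $5/2$-smoothness.
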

Smoothness alone (as is also well known) is not sufficient to guarantee that gradient updates converge to a global optimum. For non-concave objectives, the next best thing to guarantee convergence to global maxima is to establish that the gradient of the objective at any parameter dominates the sub-optimality of the parameter.
Inequalities of this form are known as a \L{}ojasiewicz inequality \citep{lojasiewicz1963propriete}. 
The reason gradient dominance helps is because it prevents the gradient vanishing before reaching a maximum.
The objective function of our problem also satisfies such an inequality, although of a weaker, ``non-uniform'' form. For the following result, for simplicity, we assume that the optimal action is unique. 
This assumption can be lifted with a little extra work, which is discussed at the end of this section.
\begin{lemma}[Non-uniform \L{}ojasiewicz]
\label{lem:lojasiewicz_softmax_special}
Assume $r$ has one unique maximizing action $a^*$. Let $\pi^* = \argmax_{\pi \in \Delta}{ \pi^\top r}$. Then, 
\begin{align}
    \left\| \frac{d \pi_\theta^\top r}{d \theta} \right\|_2 \ge \pi_\theta(a^*) \cdot ( \pi^* - \pi_\theta )^\top r\,.
\end{align}
\end{lemma}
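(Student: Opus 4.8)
The plan is to exploit the elementary fact that the Euclidean norm of a vector dominates the absolute value of any one of its coordinates, and to apply this to the single coordinate indexed by the optimal action $a^*$. Concretely, I would start from the closed form of the gradient established just above, namely $\frac{d \pi_\theta^\top r}{d \theta(a)} = \pi_\theta(a) \cdot (r(a) - \pi_\theta^\top r)$ for each $a \in [K]$, and then bound
\[
    \left\| \frac{d \pi_\theta^\top r}{d \theta} \right\|_2 \ge \left| \frac{d \pi_\theta^\top r}{d \theta(a^*)} \right| = \pi_\theta(a^*) \cdot \left| r(a^*) - \pi_\theta^\top r \right|.
\]

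Next I would remove the absolute value. Because $a^*$ is the unique maximizing action, $r(a^*) = \max_a r(a) \ge \sum_a \pi_\theta(a) r(a) = \pi_\theta^\top r$, since the expected reward is a convex combination of the reward values and hence cannot exceed their maximum; therefore $r(a^*) - \pi_\theta^\top r \ge 0$ and the absolute value is dropped. Finally I would identify the right-hand side with the stated sub-optimality: since $\pi^* = \argmax_{\pi \in \Delta} \pi^\top r$ is the point mass at $a^*$ (the maximizer over the simplex of a linear functional whose largest coefficient is attained uniquely), we have $(\pi^*)^\top r = r(a^*)$, so that $(\pi^* - \pi_\theta)^\top r = r(a^*) - \pi_\theta^\top r$. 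Substituting yields exactly $\pi_\theta(a^*) \cdot (\pi^* - \pi_\theta)^\top r$, as claimed.

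I do not expect a genuine technical obstacle here; the estimate involves no delicate inequalities. The only real idea is the \emph{choice of coordinate}: a coarser bound — say through the largest coordinate, or summing all coordinates — would not produce a right-hand side tied to the value sub-optimality $(\pi^* - \pi_\theta)^\top r$ in the form needed downstream. The price of selecting only the $a^*$ coordinate is the prefactor $\pi_\theta(a^*)$, which is precisely what makes the inequality \emph{non-uniform} (the effective \L{}ojasiewicz constant depends on the current policy). Consequently, the burden of the convergence argument shifts to controlling how small $\pi_\theta(a^*)$ can become along the trajectory, which is exactly what the companion result bounding the minimum optimal-action probability is designed to handle.
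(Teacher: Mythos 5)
Your proposal is correct and follows essentially the same route as the paper: the paper's proof (written for the general case of possibly multiple optimal actions) lower-bounds the gradient norm by the coordinates indexed by optimal actions and then identifies the result with $(\pi^* - \pi_\theta)^\top r$; when $\gA^* = \{a^*\}$ this reduces exactly to your single-coordinate argument, with the same observations that $r(a^*) - \pi_\theta^\top r \ge 0$ and that $\pi^*$ is the point mass at $a^*$. No gap.
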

The weakness of this inequality is that the right-hand side scales with $\pi_\theta(a^*)$ 
-- hence we call it non-uniform.
As a result, \cref{lem:lojasiewicz_softmax_special} is not very useful if $\pi_{\theta_t}(a^*)$, the optimal action's probability, becomes very small during the updates. 

Nevertheless, the inequality still suffices to get an following  intermediate result.
The proof of this result combines smoothness and the \L{}ojasiewicz inequality we derived. 
\begin{lemma}[Pseudo-rate]
\label{lem:pseudo_rates_softmax_special}
Let $c_t = \min_{1 \le s \le t}{ \pi_{\theta_s}(a^*) }$.
Using \cref{update_rule:softmax_special} with $\eta = 2/5$, for all $t \ge 1$,
\begin{align*}
    ( \pi^* - \pi_{\theta_t} )^\top r &\le 5 / ( t \cdot c_t^2), \qquad \text{and} \\
    \sum_{t=1}^{T}{ ( \pi^* - \pi_{\theta_t} )^\top r} &\le \min\left\{ \sqrt{5 T} / c_T , \ ( 5 \log{T} ) / c_T^2 + 1\right\}.
\end{align*}
\end{lemma}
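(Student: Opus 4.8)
The plan is to compress the smoothness bound (\cref{lem:smoothness_softmax_special}) and the non-uniform \L{}ojasiewicz inequality (\cref{lem:lojasiewicz_softmax_special}) into a single scalar recursion for the sub-optimality gap and then solve that recursion. Write $\delta_t := (\pi^* - \pi_{\theta_t})^\top r$, and observe $\delta_t \in (0,1]$: the upper bound holds since $r \in [0,1]^K$, while strict positivity holds because a softmax with finite logits never places full mass on the unique optimal action $a^*$, so $\pi_{\theta_t}^\top r < r(a^*)$. Since $\theta \mapsto \pi_\theta^\top r$ is $5/2$-smooth and the update is gradient ascent with $\eta = 2/5 = 1/\beta$, substituting $\theta_{t+1} = \theta_t + \eta \cdot \frac{d \pi_{\theta_t}^\top r}{d \theta}$ into the smoothness inequality gives
\begin{align*}
  \pi_{\theta_{t+1}}^\top r \ge \pi_{\theta_t}^\top r + \eta\Big(1 - \tfrac{\beta \eta}{2}\Big)\Big\| \tfrac{d \pi_{\theta_t}^\top r}{d \theta} \Big\|_2^2 = \pi_{\theta_t}^\top r + \tfrac{1}{5}\Big\| \tfrac{d \pi_{\theta_t}^\top r}{d \theta} \Big\|_2^2 ,
\end{align*}
that is $\delta_{t+1} \le \delta_t - \tfrac15 \|\frac{d \pi_{\theta_t}^\top r}{d \theta}\|_2^2$; in particular $\delta_t$ is non-increasing. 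Applying \cref{lem:lojasiewicz_softmax_special} together with $\pi_{\theta_s}(a^*) \ge c_s$ then produces the recursion $\delta_{t+1} \le \delta_t - \tfrac{c_t^2}{5}\delta_t^2$.

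For the per-iterate rate I would invert this recursion. Dividing by $\delta_t \delta_{t+1} > 0$ and using $\delta_{t+1} \le \delta_t$ gives $\tfrac{1}{\delta_{t+1}} - \tfrac{1}{\delta_t} \ge \tfrac{c_t^2}{5}$, and telescoping yields $\tfrac{1}{\delta_t} \ge \tfrac{1}{\delta_1} + \tfrac15 \sum_{s=1}^{t-1} c_s^2$. The crucial monotonicity observation is that $c_s$ is non-increasing in $s$, so $c_s \ge c_t$ for $s \le t$ and hence $\sum_{s=1}^{t-1} c_s^2 \ge (t-1) c_t^2$; combining this with $\tfrac{1}{\delta_1} \ge 1 \ge \tfrac{c_t^2}{5}$ (since $\delta_1, c_t \le 1$) gives $\tfrac{1}{\delta_t} \ge \tfrac{t c_t^2}{5}$, which is exactly the first claim $\delta_t \le 5/(t c_t^2)$.

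For the sum I would establish each branch of the minimum separately. For the $\sqrt{5T}/c_T$ branch, telescoping the progress inequality gives $\tfrac15 \sum_{t=1}^T \|\frac{d \pi_{\theta_t}^\top r}{d \theta}\|_2^2 \le \delta_1 \le 1$; bounding each gradient norm below by $c_T \delta_t$ (again using $c_t \ge c_T$) yields $\sum_t \delta_t^2 \le 5/c_T^2$, and Cauchy--Schwarz gives $\sum_t \delta_t \le \sqrt{T}\,(\sum_t \delta_t^2)^{1/2} \le \sqrt{5T}/c_T$. For the $(5\log T)/c_T^2 + 1$ branch, I would feed the per-iterate bound $\delta_t \le 5/(t c_t^2) \le 5/(t c_T^2)$ into the sum, but split off $t=1$ using the sharper estimate $\delta_1 \le 1$, so that $\sum_{t=1}^T \delta_t \le 1 + \tfrac{5}{c_T^2}\sum_{t=2}^T \tfrac1t \le 1 + (5\log T)/c_T^2$, the last step using $\sum_{t=2}^T 1/t \le \log T$.

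The main obstacle, and the place where care is genuinely needed, is the time-varying coefficient $c_t$: it is itself a running minimum of the quantities being controlled, so the recursion is \emph{not} of the clean form $\delta_{t+1} \le \delta_t - C\delta_t^2$ with a fixed constant $C$. The monotonicity of $c_t$ is precisely what makes the telescoped sums collapse in the favourable direction, and the separate handling of the $t=1$ term is what sharpens the bound from the naive $5(1+\log T)/c_T^2$ down to the stated $(5\log T)/c_T^2 + 1$. Finally, I would stress that this is only a \emph{pseudo}-rate: the bounds still depend on $c_t$, and converting them into genuine rates requires an independent lower bound on $c_t$, which is exactly what the subsequent results supply.
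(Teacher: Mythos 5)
Your proposal is correct, and its overall architecture coincides with the paper's: smoothness (\cref{lem:smoothness_softmax_special}) plus the non-uniform \L{}ojasiewicz inequality (\cref{lem:lojasiewicz_softmax_special}) are combined into the scalar recursion $\delta_{t+1} \le \delta_t - \tfrac{c_t^2}{5}\delta_t^2$, and your treatment of the two summed bounds (telescoping the squared gradient norms plus Cauchy--Schwarz for the $\sqrt{5T}/c_T$ branch; plugging the per-iterate bound into the harmonic sum with the $t=1$ term split off for the $\log T$ branch) is essentially identical to the paper's. Where you genuinely diverge is in how the recursion is solved for the per-iterate rate: the paper proceeds by induction on $t$, using that $f_t(x) = x - \tfrac{c_t^2}{5}x^2$ is monotone on $\bigl[0, \tfrac{5}{2c_t^2}\bigr]$ together with $c_{t+1} \le c_t$ to push the hypothesis $\delta_t \le \tfrac{5}{c_t^2 t}$ forward, whereas you divide by $\delta_t \delta_{t+1} > 0$, use $\delta_{t+1}\le\delta_t$ to get $\tfrac{1}{\delta_{t+1}} - \tfrac{1}{\delta_t} \ge \tfrac{c_t^2}{5}$, and telescope, invoking the same monotonicity of the running minimum $c_t$ to collapse $\sum_{s=1}^{t-1} c_s^2 \ge (t-1)c_t^2$. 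Your inversion argument is valid — the positivity $\delta_t > 0$ you need is correctly justified by finiteness of the logits and uniqueness of $a^*$, and the edge inequality $\tfrac{1}{\delta_1} \ge 1 \ge \tfrac{c_t^2}{5}$ cleanly absorbs the $t$-th term — and it is arguably the more standard and more transparent route; amusingly, it is the same division-and-telescoping device the paper itself deploys later, in reverse, to prove the $\Omega(1/t)$ lower bound of \cref{thm:lower_bound_softmax_special}. The two methods buy the same constant, so the choice is purely one of taste: induction avoids any division (hence never needs $\delta_t > 0$), while inversion avoids having to guess the induction hypothesis and check a base case.
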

In the remainder of this section we assume that $\eta=2/5$.
\begin{remark}
\label{rmk:necessary_dependence_optimal_action_prob}
The value of $\pi_{\theta_t}(a^*)$, while it is nonzero (and so is $c_t$) can be small (e.g., because of the choice of $\theta_1$). 
Consequently, its minimum $c_t$ can be quite small and the upper bound in \cref{lem:pseudo_rates_softmax_special} can be large, or even vacuous. 
The dependence of the previous result on $\pi_{\theta_t}(a^*)$ comes from \cref{lem:lojasiewicz_softmax_special}. 
As it turns out, it is not possible to eliminate or improve the dependence on $\pi_\theta(a^*)$ in
\cref{lem:lojasiewicz_softmax_special}.
To see this consider $r = (5, 4, 4)^\top$, $\pi_\theta = (2 \epsilon, 1/2 - 2 \epsilon, 1/2)$ where $\epsilon > 0$ is small number. By algebra, $( \pi^* - \pi_\theta )^\top r = 1 - 2 \epsilon > 1/2$, $\frac{d \pi_\theta^\top r}{d \theta} = (2 \epsilon - 4 \epsilon^2, - \epsilon + 4 \epsilon^2, - \epsilon )^\top$, $\left\| \frac{d \pi_\theta^\top r}{d \theta} \right\|_2 = \epsilon \cdot \sqrt{6 - 24 \epsilon + 32 \epsilon^2} \le 3 \epsilon$. Hence, for any constant $C > 0$,
\begin{align}
    C \cdot ( \pi^* - \pi_\theta )^\top r > C/2 > 3 \epsilon \ge \left\| \frac{d \pi_\theta^\top r}{d \theta} \right\|_2,
\end{align}
which means for any \L{}ojasiewicz-type inequality, $C$ necessarily depends on $\epsilon$ and hence on $\pi_\theta(a^*)= 2\epsilon$.
\end{remark}
The necessary dependence on $\pi_{\theta_t}(a^*)$ makes it clear 
that  \cref{lem:pseudo_rates_softmax_special} is insufficient to conclude a $O(1/t)$ rate.
since $c_t$ may vanish faster than $O(1/t)$ as $t$ increases. 
Our next result eliminates this possibility.
In particular, the result follows from the asymptotic convergence result of 
\citet{AgKaLeMa19} which states that $\pi_{\theta_t}(a^*) \to 1$ as $t \to \infty$. 
From this and because $\pi_{\theta}(a)>0$ for any $\theta\in \R^K$ and action $a$, we conclude that $\pi_{\theta_t}(a^*)$ remains bounded away from zero during the course of the updates:
\begin{lemma}
\label{lem:optimal_action_prob_positive}
\label{lem:lower_bound_cT_softmax_special}
We have $\inf_{t\ge 1} \pi_{\theta_t}(a^*) > 0$. 
\end{lemma}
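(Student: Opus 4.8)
The plan is to combine the qualitative asymptotic convergence result of \citet{AgKaLeMa19} with the fact that the softmax transform always produces strictly positive probabilities. The key elementary observation is that any sequence of strictly positive numbers converging to a strictly positive limit is bounded away from zero; I would apply this to the sequence $\left( \pi_{\theta_t}(a^*) \right)_{t \ge 1}$. The heavy lifting — establishing convergence at all — is deferred to the cited result, so the remaining work is essentially bookkeeping.

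First, I would recast the asymptotic convergence result in the present bandit setting. Since here $V^{\pi_\theta} = \pi_\theta^\top r$ and $V^* = \max_a r(a) = r(a^*)$, the result of \citet{AgKaLeMa19} says precisely that $\pi_{\theta_t}^\top r \to r(a^*)$ as $t \to \infty$. Second, I would translate this value convergence into convergence of the optimal action probability, exploiting the uniqueness of $a^*$. Letting $\Delta \coloneqq r(a^*) - \max_{a \ne a^*} r(a) > 0$ (strictly positive by uniqueness), one has
\begin{align*}
r(a^*) - \pi_{\theta_t}^\top r = \sum_a \pi_{\theta_t}(a) \left( r(a^*) - r(a) \right) \ge \Delta \cdot \left( 1 - \pi_{\theta_t}(a^*) \right),
\end{align*}
so that $1 - \pi_{\theta_t}(a^*) \le \left( r(a^*) - \pi_{\theta_t}^\top r \right) / \Delta \to 0$, giving $\pi_{\theta_t}(a^*) \to 1$.

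Third, I would conclude. Because $\pi_{\theta_t}(a^*) \to 1$, there is a finite index $T_0$ with $\pi_{\theta_t}(a^*) > 1/2$ for all $t \ge T_0$. For the remaining finitely many indices $1 \le t < T_0$, each $\pi_{\theta_t}(a^*) > 0$ because the softmax transform assigns strictly positive probability to every action, so the finite minimum $\min_{1 \le t < T_0} \pi_{\theta_t}(a^*)$ is itself strictly positive. Hence $\inf_{t \ge 1} \pi_{\theta_t}(a^*) \ge \min\{ 1/2, \min_{1 \le t < T_0} \pi_{\theta_t}(a^*) \} > 0$.

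There is no serious obstacle once the asymptotic convergence of \citet{AgKaLeMa19} is taken as given; the genuine difficulty is entirely contained in that cited result. The only point worth flagging is that the argument is purely qualitative: it certifies a strictly positive infimum but yields no explicit value, so the constant it contributes to the eventual $O(1/t)$ rate is unquantified and depends on the initialization $\theta_1$ through the whole trajectory. The single possibility the argument must exclude — that $\pi_{\theta_t}(a^*)$ dips arbitrarily close to zero at some intermediate iterate while still converging to one — is handled exactly by the strict positivity of softmax on the finite prefix before $T_0$.
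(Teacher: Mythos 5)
Your argument is correct, but it is not the proof the paper actually gives. Both you and the paper ultimately lean on the asymptotic convergence result of \citet{AgKaLeMa19}; you finish from there with the elementary fact that a strictly positive sequence with a strictly positive limit is bounded away from zero (your step translating value convergence into $\pi_{\theta_t}(a^*) \to 1$ via the gap $\Delta$ is also fine). The paper's appendix proof instead builds an invariance argument: it defines the region $\gR_1$ of parameters where the gradient component at $a^*$ dominates all others, proves that $\gR_1$ is preserved by gradient updates and that $\pi_{\theta_t}(a^*)$ is non-decreasing inside it, proves $\gN_c = \{\theta : \pi_\theta(a^*) \ge c/(c+1)\} \subset \gR_1$, and only then invokes the cited asymptotic result to conclude that the iterates enter $\gN_c$ at some finite time $t_0$, whence $\inf_{t\ge 1}\pi_{\theta_t}(a^*) = \min_{1\le t\le t_0}\pi_{\theta_t}(a^*)$. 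Your route is shorter and delivers the stated lemma just as well; what the paper's heavier machinery buys is the monotonicity structure that later results need: \cref{prop:t_zero_softmax_special} (eventual monotone increase of $\pi_{\theta_t}(a^*)$, with $t_0=1$ under uniform initialization), \cref{thm:rates_uniform_softmax_special} (the explicit constant $c \ge 1/K$), and \cref{lem:t0_softmax_special} (the characterization of $t_0$ in the three-armed case) all flow from the $\gR_1$-invariance, none of which your compactness-style argument provides.

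One point you should not gloss over: the asymptotic result of \citet{AgKaLeMa19} is proved there under a stepsize restriction (their argument requires the ascent lemma, which they only establish for $\eta \le 1/5$), while the present lemma is invoked with $\eta = 2/5$. The paper explicitly patches this mismatch: the stepsize restriction enters their proof only through the ascent lemma, and with the sharper smoothness constant $5/2$ of \cref{lem:smoothness_softmax_special} the ascent lemma holds for $\eta \le 2/5$, so the cited convergence carries over. Your proposal silently assumes the citation applies at $\eta = 2/5$; as written, that is a minor but real gap, and closing it requires exactly this remark.
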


\begin{figure*}[ht]
\centering
\includegraphics[width=1\linewidth]{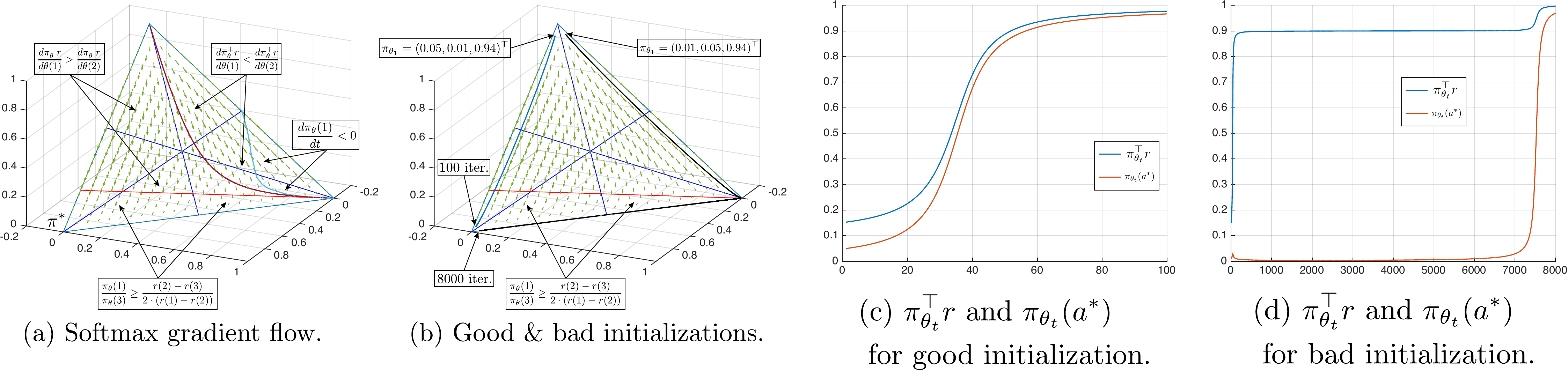}
\vskip -0.1in
\caption{Visualization of proof idea for \cref{lem:lower_bound_cT_softmax_special}.
} 
\label{fig:fig1}
\vskip -0.2in
\end{figure*}

With some extra work, one can also show that eventually $\theta_t$ enters a region where $\pi_{\theta_t}(a^*)$ can only increase:
\begin{proposition}
\label{prop:t_zero_softmax_special}
For any initialization there exist $t_0 \ge 1$ such that for any $t\ge t_0$, $t \mapsto \pi_{\theta_t}(a^*)$ is increasing. In particular, when $\pi_{\theta_1}$ is the uniform distribution, $t_0=1$.
\end{proposition}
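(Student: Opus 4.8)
The plan is to track the logit gaps $\Delta_a(t) := \theta_t(a^*) - \theta_t(a)$ for $a \ne a^*$ and to exploit the identity
\[
\pi_{\theta_t}(a^*) = \frac{1}{1 + \sum_{a \ne a^*} e^{-\Delta_a(t)}}\,,
\]
from which $t \mapsto \pi_{\theta_t}(a^*)$ is increasing exactly when every gap $\Delta_a(t)$ is non-decreasing. I would therefore reduce the statement to showing that, from some time $t_0$ on, each gap only grows. Since the softmax map is monotone in the logits, the condition ``all $\Delta_a(t) \ge 0$'' is equivalent to ``$a^*$ is a mode of $\pi_{\theta_t}$'', i.e. $\pi_{\theta_t}(a^*) \ge \pi_{\theta_t}(a)$ for all $a$.

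First I would compute the one-step increment of a gap from \cref{update_rule:softmax_special}:
\[
\Delta_a(t+1) - \Delta_a(t) = \eta\left[ \pi_{\theta_t}(a^*)\,\big(r(a^*) - \pi_{\theta_t}^\top r\big) - \pi_{\theta_t}(a)\,\big(r(a) - \pi_{\theta_t}^\top r\big) \right].
\]
The key step is to show this increment is nonnegative whenever $a^*$ is the mode of $\pi_{\theta_t}$. I would split into two cases. If $r(a) \le \pi_{\theta_t}^\top r$, the second term is non-positive while the first is nonnegative (because $a^*$ is the reward maximizer), so the increment is nonnegative. If $r(a) > \pi_{\theta_t}^\top r$, then $r(a^*) - \pi_{\theta_t}^\top r \ge r(a) - \pi_{\theta_t}^\top r > 0$ together with $\pi_{\theta_t}(a^*) \ge \pi_{\theta_t}(a)$ gives the comparison of the two (now positive) terms, again yielding a nonnegative increment.

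This produces a self-reinforcing induction: if all $\Delta_a(t) \ge 0$, then all increments are nonnegative, hence all $\Delta_a(t+1) \ge 0$, so $a^*$ remains the mode, and simultaneously $\pi_{\theta_t}(a^*)$ increases (indeed strictly, since the minimum-reward action contributes a strictly positive increment as long as $\pi_{\theta_t}(a^*) < 1$). It therefore suffices to locate a single time $t_0$ at which $a^*$ is the mode. Here I would invoke the asymptotic convergence $\pi_{\theta_t}(a^*) \to 1$ underlying \cref{lem:optimal_action_prob_positive}: for $t_0$ large enough, $\pi_{\theta_{t_0}}(a^*) > 1/2$, which forces $\pi_{\theta_{t_0}}(a) < 1/2 < \pi_{\theta_{t_0}}(a^*)$ for every $a \ne a^*$, so $a^*$ is the mode at $t_0$. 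For the uniform-initialization claim, all logits are equal at $t=1$, hence every $\Delta_a(1) = 0$ and $a^*$ is (weakly) the mode, giving $t_0 = 1$.

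The main obstacle is the sign analysis of the increment: it crucially relies on $a^*$ being simultaneously the reward maximizer and the current probability mode, so that the upward pull on $\theta_t(a^*)$ dominates the pull on every competing $\theta_t(a)$. Once this monotonicity of the gaps is established, the induction and the convergence input of \cref{lem:optimal_action_prob_positive} close the argument.
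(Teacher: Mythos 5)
Your proof is correct, and it rests on the same skeleton as the paper's: exhibit a region that is invariant under the gradient updates and inside which $\pi_{\theta_t}(a^*)$ can only grow, enter that region in finite time via the asymptotic convergence $\pi_{\theta_t}(a^*)\to 1$, and note that the uniform initialization lies in it from the start. The difference is which region plays the invariant role. The paper works with the gradient-dominance region $\gR_1 = \{\theta : \tfrac{d \pi_\theta^\top r}{d \theta(a^*)} \ge \tfrac{d \pi_\theta^\top r}{d \theta(a)}, \ \forall a \ne a^*\}$ (Claim a) in the proof of \cref{lem:lower_bound_cT_softmax_special}), whose invariance requires a separate, more delicate argument in the sub-case $\pi_{\theta_t}(a^*) < \pi_{\theta_t}(a)$, invoking the ascent property of the update (\cref{lem:ascent_lemma_smooth_function}). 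You instead take the smaller mode region $\gR_2 = \{\theta : \pi_\theta(a^*) \ge \pi_\theta(a), \ \forall a\}$ as the invariant set: your two-case sign analysis of the gap increment is exactly the paper's proof that $\gR_2 \subset \gR_1$, and combined with preservation of the logit ordering it yields invariance of $\gR_2$ directly, so the paper's harder Case (b) is never needed. This makes your argument more elementary and self-contained, and it lets you enter the region with the clean threshold $\pi_{\theta_{t_0}}(a^*) > 1/2$ in place of the paper's $\gN_c$; what you give up is only the larger invariant set, which the paper exploits elsewhere (e.g., the sharper characterization of $t_0$ in \cref{lem:t0_softmax_special}) but which is irrelevant to the proposition as stated. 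Two minor remarks: your assertion that $\pi_{\theta_t}(a^*)$ is increasing ``exactly when'' every gap is non-decreasing is only valid in the direction you use (non-decreasing gaps make $\sum_{a\ne a^*} e^{-\Delta_a(t)}$ non-increasing); the converse can fail, but nothing in your argument relies on it. Also, the asymptotic convergence you invoke is the external result of Agarwal et al., which is stated for $\eta \le 1/5$; as the paper notes, one must check it extends to $\eta = 2/5$ (it does, because the ascent lemma holds there), a caveat your proof inherits.
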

With \cref{lem:pseudo_rates_softmax_special,lem:lower_bound_cT_softmax_special}, we can now obtain an $O(1/t)$ convergence rate for softmax policy gradient method\footnote{For a continuous version of \cref{update_rule:softmax_special}, \citet{walton2020short} proves a $O(1/t)$ rate, using a Lyapunov function argument.}:
\begin{theorem}[Arbitrary initialization]
\label{thm:final_rates_softmax_special}
Using \cref{update_rule:softmax_special} with $\eta = 2/5$, for all $t \ge 1$,
\begin{align}
    ( \pi^* - \pi_{\theta_t} )^\top r \le 5 / (c^2 \cdot t),
\end{align}
where $c = \inf_{t\ge 1} \pi_{\theta_t}(a^*) > 0$ is a constant that depends on $r$ and $\theta_1$, but it does not depend on the time $t$.
\end{theorem}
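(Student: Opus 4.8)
The plan is to reduce the claim to a one-dimensional recursion on the value sub-optimality $\delta_t := (\pi^* - \pi_{\theta_t})^\top r$ and then prove $\delta_t \le 1/(c^2 t)$ by induction on $t$, where $c := \inf_{t \ge 1}\pi_{\theta_t}(a^*)$. The first ingredient I would assemble is the per-step improvement inequality. Since $\eta = 2/5 = 1/\beta$ is exactly the reciprocal of the smoothness constant $\beta = 5/2$ from \cref{lem:smoothness_softmax_special}, the standard smoothness/ascent argument applied to \cref{update_rule:softmax_special} yields $\delta_t - \delta_{t+1} = \pi_{\theta_{t+1}}^\top r - \pi_{\theta_t}^\top r \ge \frac{1}{2\beta}\left\|\frac{d\pi_{\theta_t}^\top r}{d\theta}\right\|_2^2 = \frac{1}{5}\left\|\frac{d\pi_{\theta_t}^\top r}{d\theta}\right\|_2^2$. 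Combining this with the non-uniform \L{}ojasiewicz inequality of \cref{lem:lojasiewicz_softmax_special} and the uniform lower bound $\pi_{\theta_t}(a^*) \ge c$ (which is positive by \cref{lem:lower_bound_cT_softmax_special}) gives the quadratic recursion $\delta_t - \delta_{t+1} \ge \frac{c^2}{5}\,\delta_t^2$.

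Second, I would set up the induction. For the base case, \cref{asmp:bounded_reward} gives $\delta_1 \le 1$, and since $c = \inf_t \pi_{\theta_t}(a^*) \le 1$ we have $1 \le 1/c^2$, hence $\delta_1 \le 1/(c^2 \cdot 1)$. For the inductive step, writing $g(x) = x - \frac{c^2}{5}x^2$, the recursion reads $\delta_{t+1} \le g(\delta_t)$. Because $g$ is increasing on $[0, 5/(2c^2)]$ and the inductive hypothesis places $\delta_t \le 1/(c^2 t) \le 1/c^2 \le 5/(2c^2)$, monotonicity lets me substitute the hypothesis: $\delta_{t+1} \le g(1/(c^2 t)) = \frac{1}{c^2 t} - \frac{1}{5 c^2 t^2}$. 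It then remains to verify $\frac{1}{c^2 t} - \frac{1}{5 c^2 t^2} \le \frac{1}{c^2(t+1)}$, which after clearing $c^2$ reduces to $\frac{1}{t(t+1)} \ge \frac{1}{5 t^2}$, i.e.\ $4t \ge 1$, true for every $t \ge 1$. This closes the induction and delivers the stated bound.

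The main obstacle, and the reason this argument is needed rather than simply quoting \cref{lem:pseudo_rates_softmax_special}, is the gap between the constant $5$ in the pseudo-rate and the constant $1$ claimed here. A naive telescoping of $1/\delta_{t+1} - 1/\delta_t \ge c^2/5$ only yields $\delta_t \lesssim 5/(c^2 t)$, the loss of the factor $5$ coming from discarding the quadratic slack. The fix is to carry the sharp target $1/(c^2 t)$ through the induction and exploit the full quadratic decrease together with the monotonicity of $g$ on the relevant interval, so the two points I must check carefully are (i) that $c \le 1$, which makes the base case and the inclusion $[0,1/c^2] \subseteq [0,5/(2c^2)]$ valid, and (ii) that \cref{lem:lower_bound_cT_softmax_special} guarantees $c > 0$, so the bound is finite. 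Note that the earlier \cref{prop:t_zero_softmax_special} and the asymptotic convergence of \citet{AgKaLeMa19} are only used upstream to establish $c > 0$; they are not needed again here.
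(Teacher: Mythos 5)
Your reduction to the recursion $\delta_t - \delta_{t+1} \ge \frac{c^2}{5}\delta_t^2$ is correct and follows exactly the paper's route (smoothness plus \cref{lem:lojasiewicz_softmax_special}, with \cref{lem:lower_bound_cT_softmax_special} supplying $c>0$), but your inductive step contains a sign error that is fatal to the constant you are trying to prove. From $\delta_{t+1} \le g(\delta_t) \le g\bigl(1/(c^2 t)\bigr) = \frac{1}{c^2 t} - \frac{1}{5c^2 t^2}$ you must verify $\frac{1}{t} - \frac{1}{5t^2} \le \frac{1}{t+1}$, which is equivalent to $\frac{1}{t(t+1)} \le \frac{1}{5t^2}$, i.e.\ $5t \le t+1$, i.e.\ $4t \le 1$ --- false for every $t \ge 1$ (at $t=1$: $1 - \tfrac15 = 0.8 \not\le \tfrac12$). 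You wrote the required inequality as $\frac{1}{t(t+1)} \ge \frac{1}{5t^2}$, which is the reverse of what the induction needs. Moreover, the failure cannot be repaired: for a target $A/(c^2 t)$, the step closes only if $\frac{A^2}{5t^2} \ge \frac{A}{t(t+1)}$, i.e.\ $A \ge \frac{5t}{t+1}$, for all $t$, which forces $A \ge 5$. The per-step decrease $\frac{c^2}{5}\delta_t^2$ is exactly a factor of $5$ too small to sustain the bound $1/(c^2 t)$.

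For comparison, the paper's own proof of this theorem is a one-liner: cite \cref{lem:pseudo_rates_softmax_special} together with \cref{lem:lower_bound_cT_softmax_special} and $c \le c_t$, which yields $\delta_t \le 5/(c^2 t)$; the induction with constant $5$ (including the trivial base case $\delta_t \le 1 \le 5/(c^2 t)$ for $t \le 5$) is already carried out inside \cref{lem:pseudo_rates_softmax_special}. So your instinct that the stated constant $1$ does not follow from simply quoting the pseudo-rate lemma is right, but the paper's proof does not establish the constant $1$ either --- the factor-of-$5$ gap is an inconsistency between the theorem statement and its proof, not something this recursion can close. The correct resolution is to prove the bound with constant $5$, as the paper's argument actually does, rather than to attempt the sharper constant, which is unobtainable by this method.
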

\cref{prop:t_zero_softmax_special}
suggests that one should set $\theta_1$ so that $\pi_{\theta_1}$ is uniform.
Using this initialization, we can show that $\inf_{t\ge 1} \pi_{\theta_t}(a^*) \ge 1/K$, strengthening  \cref{thm:final_rates_softmax_special}:
\begin{theorem}[Uniform initialization]
\label{thm:rates_uniform_softmax_special}
Using \cref{update_rule:softmax_special} with $\eta = 2/5$ and $\theta_1$ such that $\pi_{\theta_1}(a) = 1/K$, $\forall a$, for all $t \ge 1$,
\begin{align*}
    ( \pi^* - \pi_{\theta_t} )^\top r &\le 5 K^2 / t, \qquad \text{and} \\
    \sum_{t=1}^{T}{ ( \pi^* - \pi_{\theta_t} )^\top r} &\le \min\left\{ K \sqrt{5 T}, \ 5 K^2 \log{T} + 1\right\}.
\end{align*}
\end{theorem}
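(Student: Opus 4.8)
The plan is to derive both inequalities directly from the pseudo-rate bound of \cref{lem:pseudo_rates_softmax_special}, whose only uncontrolled ingredient is $c_t = \min_{1 \le s \le t} \pi_{\theta_s}(a^*)$. Under an arbitrary initialization this quantity could be vanishingly small, which is precisely why \cref{lem:pseudo_rates_softmax_special} alone was insufficient. The whole point of the uniform initialization is that it lets us pin $c_t$ down to the explicit value $1/K$, after which the two stated bounds fall out by substitution.

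First I would invoke \cref{prop:t_zero_softmax_special}, which guarantees that when $\pi_{\theta_1}$ is uniform we have $t_0 = 1$, i.e. the map $t \mapsto \pi_{\theta_t}(a^*)$ is increasing for every $t \ge 1$. Since an increasing sequence attains its minimum over $\{1, \dots, t\}$ at its first index, this means $c_t = \pi_{\theta_1}(a^*)$ for all $t$. Uniform initialization forces $\pi_{\theta_1}(a) = 1/K$ for every $a$, and in particular $\pi_{\theta_1}(a^*) = 1/K$, so $c_t = 1/K$ for all $t \ge 1$ (and thus $c_T = 1/K$).

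It then remains to substitute $c_t = 1/K$ into \cref{lem:pseudo_rates_softmax_special}. For the per-step guarantee, $5/(t \cdot c_t^2) = 5K^2 / t$, giving $(\pi^* - \pi_{\theta_t})^\top r \le 5K^2/t$. For the cumulative guarantee, $\sqrt{5T}/c_T = K\sqrt{5T}$ and $(5 \log T)/c_T^2 + 1 = 5K^2 \log T + 1$, so the minimum of the two reproduces the stated $\min\{K\sqrt{5T},\, 5K^2 \log T + 1\}$. No further estimation is needed.

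Consequently there is no real obstacle in this final step itself; it is a one-line specialization of two earlier results. The genuine work is entirely offloaded onto \cref{prop:t_zero_softmax_special}, whose proof must verify that the uniform starting point already lies in the region where $\pi_{\theta_t}(a^*)$ can only increase. Intuitively, at the uniform point all logits are tied while $a^*$ carries the strictly largest advantage $r(a^*) - \pi_{\theta_t}^\top r$, so the multiplicative update \cref{update_rule:softmax_special} raises $\theta(a^*)$ by more than any competitor and this dominance, once established, is self-sustaining; making that argument rigorous is precisely the content I would take as given from \cref{prop:t_zero_softmax_special}.
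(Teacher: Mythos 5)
Your proposal is correct and follows essentially the same route as the paper's proof: invoke \cref{prop:t_zero_softmax_special} to get that $t \mapsto \pi_{\theta_t}(a^*)$ is increasing from the uniform start, conclude $c_t \ge 1/K$ (your sharper $c_t = 1/K$ is also valid), and substitute into \cref{lem:pseudo_rates_softmax_special}. The only cosmetic difference is that the paper states the bound as an inequality $c_t \ge 1/K$ rather than the equality you derive, which changes nothing in the conclusion.
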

\begin{remark}
In \cref{sec:theoretical_understanding_entropy}, we prove a lower bound $\Omega(1/t)$ for the same update rule, showing that the upper bound $O(1/t)$ of \cref{thm:final_rates_softmax_special}, apart from constant factors, is unimprovable.
\end{remark}
In general it is difficult to characterize how the constant $C$ in \cref{thm:final_rates_softmax_special} depends on the problem and initialization. For the simple $3$-armed case, this dependence is relatively clear:
\begin{lemma}
\label{lem:t0_softmax_special}
Let $r(1) > r(2) > r(3)$. Then, $a^*=1$ and $\inf_{t\ge 1} \pi_{\theta_t}(a^*) = \min_{1 \le t \le t_0} \pi_{\theta_t}(1)$, where 
\begin{align}
    t_0 = \min \left\{ t \ge 1 \,:\, \frac{ \pi_{\theta_t}(1) }{ \pi_{\theta_t}(3) } \ge \frac{r(2) - r(3)}{2 \cdot ( r(1) - r(2) ) } \right\}.
\end{align}
\end{lemma}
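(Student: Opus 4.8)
The plan is to make explicit, for the three-armed case, the monotonicity phenomenon behind \cref{prop:t_zero_softmax_special}: after the threshold time $t_0$ the optimal-action probability $\pi_{\theta_t}(1)$ never decreases, so its infimum over all $t$ is attained among the first $t_0$ iterates. That $a^*=1$ is immediate from $r(1)>r(2)>r(3)$. Throughout I would abbreviate $p_a:=\pi_{\theta_t}(a)$, write $\bar r_t := \pi_{\theta_t}^\top r$, let $u_a := \eta\, p_a\,(r(a)-\bar r_t)$ be the logit increment of \cref{update_rule:softmax_special} at step $t$ (so $\theta_{t+1}(a)=\theta_t(a)+u_a$), and use the reward gaps $\Delta_{12}:=r(1)-r(2)$, $\Delta_{23}:=r(2)-r(3)$, $\Delta_{13}:=\Delta_{12}+\Delta_{23}$. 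Substituting $\bar r_t=\sum_a p_a r(a)$ gives the elementary identities $r(1)-\bar r_t = p_2\Delta_{12}+p_3\Delta_{13}$ and $r(2)-\bar r_t = -p_1\Delta_{12}+p_3\Delta_{23}$, which I will use repeatedly.

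First I would establish that the ratio $\pi_{\theta_t}(1)/\pi_{\theta_t}(3)=\exp\{\theta_t(1)-\theta_t(3)\}$ is strictly increasing in $t$. The logit gap changes by $u_1-u_3 = \eta\big[p_1(r(1)-\bar r_t) - p_3(r(3)-\bar r_t)\big]$, which is strictly positive because $r(1)>\bar r_t>r(3)$ whenever $p_1,p_2>0$ (always true under the softmax). Combined with the asymptotic convergence $\pi_{\theta_t}(1)\to 1$ (hence $\pi_{\theta_t}(3)\to 0$) from \citet{AgKaLeMa19}, the ratio tends to $+\infty$; thus $t_0$ is finite, and since the ratio is monotone increasing, the threshold condition, once satisfied at $t_0$, persists for every $t\ge t_0$.

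The core step is to show that the threshold condition at time $t$ forces $\pi_{\theta_{t+1}}(1)\ge\pi_{\theta_t}(1)$. Factoring the softmax yields the multiplicative form $\pi_{\theta_{t+1}}(1)/\pi_{\theta_t}(1) = e^{u_1}\big/\sum_a p_a e^{u_a}$, so it suffices to prove $\sum_a p_a\,(e^{u_1}-e^{u_a})\ge 0$. The $a=1$ summand vanishes and the $a=3$ summand is nonnegative since $u_3<0<u_1$ gives $e^{u_1}\ge e^{u_3}$; hence the whole inequality reduces to the single scalar condition $u_1\ge u_2$. Using the identities above,
\[
u_2-u_1 = \eta\big[-2p_1 p_2\Delta_{12} + p_2 p_3\Delta_{23} - p_1 p_3\Delta_{13}\big],
\]
so dropping the nonnegative term $p_1 p_3\Delta_{13}$ shows that $u_2\le u_1$ is implied by $p_2 p_3\Delta_{23}\le 2p_1 p_2\Delta_{12}$, i.e., after dividing by $p_2 p_3>0$, by exactly $p_1/p_3\ge \Delta_{23}/(2\Delta_{12})=\big(r(2)-r(3)\big)\big/\big(2(r(1)-r(2))\big)$. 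This is precisely where the threshold in the statement originates.

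Finally I would assemble the pieces: for every $t\ge t_0$ the threshold holds, so the core step gives $\pi_{\theta_t}(1)$ non-decreasing on $\{t\ge t_0\}$, whence $\inf_{t\ge t_0}\pi_{\theta_t}(1)=\pi_{\theta_{t_0}}(1)$; since $\pi_{\theta_{t_0}}(1)$ is among the values in the finite minimum over $1\le t\le t_0$, this yields $\inf_{t\ge 1}\pi_{\theta_t}(1)=\min_{1\le t\le t_0}\pi_{\theta_t}(1)$. I expect the main obstacle to be purely the bookkeeping of the core step — reducing the probability inequality $\pi_{\theta_{t+1}}(1)\ge\pi_{\theta_t}(1)$ to the clean scalar inequality $u_1\ge u_2$ and then matching the expansion of $u_2-u_1$ to the stated threshold; the strict monotonicity of the ratio and the persistence of the condition are comparatively routine.
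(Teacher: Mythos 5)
Your proof is correct, and although it rests on the same underlying mechanism as the paper's — once $\pi_{\theta_t}(1)/\pi_{\theta_t}(3)$ clears the threshold, the logit increment of action $1$ dominates those of the other actions, so $\pi_{\theta_t}(1)$ can never decrease again — both of your key steps are executed by genuinely different arguments. The paper derives the lemma from the machinery of \cref{lem:lower_bound_cT_softmax_special}: it shows, by a two-case analysis (according to whether $\pi_\theta(1)/\pi_\theta(3)$ exceeds $\frac{r(2)-r(3)}{r(1)-r(2)}$ or only the stated half of it), that the threshold condition places $\theta$ in the gradient-dominance region $\gR_1$, and then invokes the previously established invariance of $\gR_1$ under gradient updates together with the monotonicity of $\pi_{\theta_t}(a^*)$ inside $\gR_1$. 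You instead (i) obtain the implication ``threshold $\Rightarrow u_1 \ge u_2$'' (in your notation for the logit increments) in a single case-free computation, by expanding $u_2-u_1$, discarding the nonpositive term $-\eta\, p_1 p_3 \Delta_{13}$, and reading off exactly the stated ratio condition; and (ii) replace the $\gR_1$-invariance argument for persistence with the observation that $\theta_t(1)-\theta_t(3)$ increases at every step \emph{unconditionally} (because $u_1>0>u_3$ always, action $1$ being the best and action $3$ the worst), so the threshold, once met, holds for all later times. Your route is more self-contained and cleaner: it needs neither the case split nor Claim a) of \cref{lem:lower_bound_cT_softmax_special}, only the asymptotic convergence of \citet{AgKaLeMa19} to guarantee $t_0<\infty$ (which the paper also needs, implicitly). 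What the paper's heavier route buys is reuse: its invariance claim is stated for general $K$ and for all suboptimal actions simultaneously, which is what powers the $K$-action extension of this lemma given in the appendix, whereas your unconditional-monotonicity shortcut exploits the special structure that the tracked ratio compares the best action against the worst one and would not transfer as directly to that more general statement.
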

Note that the smaller $r(1) - r(2)$ and $\pi_{\theta_1}(1)$ are, the larger $t_0$ is, which potentially means $c$ in \cref{thm:final_rates_softmax_special} can be smaller.

\paragraph{Visualization.} Let $r = (1.0, 0.9, 0.1)^\top$. In \cref{fig:fig1}(a), 
 the region below the red line corresponds to
 $\gR=\left\{ \pi_\theta :  \pi_{\theta}(1) / \pi_{\theta}(3)  \ge (r(2) - r(3))/(2 \cdot (r(1) - r(2))) \right\}$.
Any globally convergent iteration will enter $\gR$ within finite time (the closure of $\gR$ contains $\pi^*$) and never leaves $\gR$ (this is the main idea in  \cref{lem:lower_bound_cT_softmax_special}). Subfigure (b) shows the behavior of the gradient updates with ``good'' ($\pi_{\theta_1} = (0.05, 0.01, 0.94)^\top$) and ``bad'' ($\pi_{\theta_1} = (0.01, 0.05, 0.94)^\top$) initial policies. While these are close to each other, the iterates behave quite differently (in both cases $\eta = 2/5$). From the good initialization, the iterates converge quickly: after $100$ iterations the distance to the optimal policy is already quite small. At the same time, starting from a ``bad'' initial value, the iterates are first  attracted toward a sub-optimal action. It takes more than $7000$ iterations for the algorithm to escape this sub-optimal corner! 
In subfigure (c), we see that $\pi_{\theta_t}(a^*)$ increases for the good initialization, while in subfigure (d), for the bad initialization, we see that it initially decreases. 
These experiments confirm that the dependence of the error bound in
\cref{thm:final_rates_softmax_special} on the initial values cannot be removed.

\paragraph{Non-unique optimal actions.} When the optimal action is non-unique, the arguments need to be slightly modified. Instead of using a single $\pi_{\theta}(a^*)$, we need to consider $\sum_{a^* \in \gA^*}{ \pi_\theta(a^*)}$, i.e., the sum of probabilities of all optimal actions. Details are given in the appendix.

\subsubsection{General MDPs}
\label{sec:policy_gradient_general_mdps}

For general MDPs, the optimization problem takes the form
\begin{align*}
\label{eq:value_function_objective}
    \max_{\theta : \gS \times \gA \to \sR}{V^{\pi_\theta}}(\rho) = \max_{\theta : \gS \times \gA \to \sR}{ \expectation_{s \sim \rho}{\sum_{a}{ \pi_\theta(a | s) \cdot Q^{\pi_\theta}(s, a) } } } .
\end{align*}
Here, as before, $\pi_\theta(\cdot|s) = \softmax(\theta(s,\cdot))$, $s\in \gS$.
Following \citet{AgKaLeMa19}, the values here are defined with respect to an initial state distribution $\rho$ which may not be the same as the initial state distribution $\mu$ used in the gradient updates (cf. \cref{alg:policy_gradient_softmax}), allowing for greater flexibility in our analysis.
While the initial state distributions do not play any role in the bandit case, here, in the multi-state case, they have a strong influence.
In particular, for the rest of this section, we will assume that the initial state distribution $\mu$ used in the gradient updates is bounded away from zero:
\begin{assumption}[Sufficient exploration]\label{ass:posinit} 
The initial state distribution satisfies $\min_s \mu(s)>0$.
\end{assumption}
\cref{ass:posinit} was also adapted by \citet{AgKaLeMa19}, which ensures ``sufficient exploration'' in the sense that the 
occupancy measure $d^\pi_\mu$ of any policy $\pi$ when started from $\mu$ will be guaranteed to be positive over the whole state space.
\citet{AgKaLeMa19} asked whether this assumption is necessary for convergence to global optimality. 
\begin{proposition}
There exists an MDP and $\mu$ with $\min_s \mu(s)=0$ such that 
there exists $\theta^*:\gS\times \gA \to [0,\infty]$ 
such that $\theta^*$ is the stationary point of
$\theta \mapsto V^{\pi_\theta}(\mu)$ while $\pi_{\theta^*}$ is not an optimal policy.
Furthermore, this stationary point is an attractor, hence, starting gradient ascent in a small enough vicinity of $\theta^*$ will make it converge to $\theta^*$.
\end{proposition}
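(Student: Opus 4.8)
The plan is to exhibit a small MDP in which one state receives zero occupancy under $\mu$, so that the softmax policy gradient (\cref{lem:policy_gradient_softmax}) is blind to the policy there, and to arrange the rewards so that the globally optimal action at that state can never be learned. Concretely, I would take $\gS=\{s_1,s_2\}$, $\gA=\{a_1,a_2\}$, both states absorbing under every action (so $\gP(s_i\,|\,s_i,a)=1$), $\mu=(1,0)$, and rewards $r(s_1,a_1)=1$, $r(s_1,a_2)=0$, $r(s_2,a_1)=0$, $r(s_2,a_2)=1$. The key structural feature is that $s_2$ is unreachable from $\mathrm{supp}(\mu)=\{s_1\}$, so that $d^{\pi_\theta}_\mu(s_2)=0$ for \emph{every} $\theta$, while the best action differs between the two states.

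First I would verify stationarity. By \cref{lem:policy_gradient_softmax}, $\partial V^{\pi_\theta}(\mu)/\partial\theta(s,a)\propto d^{\pi_\theta}_\mu(s)\,\pi_\theta(a|s)\,A^{\pi_\theta}(s,a)$. At $s_2$ the factor $d^{\pi_\theta}_\mu(s_2)=0$ annihilates every component, independently of the policy there. At $s_1$ the objective reduces to the bandit $V^{\pi_\theta}(\mu)=\pi_\theta(a_1|s_1)/(1-\gamma)$, whose unique maximizer is $\pi(a_1|s_1)=1$, attained as $\theta(s_1,a_1)\to\infty$; there the advantages vanish and $\pi_\theta(a_2|s_1)\,A^{\pi_\theta}(s_1,a_2)\to0$. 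Taking $\theta^*$ with $\theta^*(s_1,a_1)=\theta^*(s_2,a_1)=\infty$ and the remaining logits $0$ (all in $[0,\infty]$) therefore drives every gradient component to zero, so $\theta^*$ is a stationary point. It is not optimal: $\pi_{\theta^*}$ plays $a_1$ at $s_2$, giving $V^{\pi_{\theta^*}}(s_2)=0$, whereas $V^*(s_2)=1/(1-\gamma)$; hence for any $\rho$ with $\rho(s_2)>0$ we get $V^{\pi_{\theta^*}}(\rho)<V^*(\rho)$.

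For the attractor claim I would decouple the coordinates. Since $d^{\pi_\theta}_\mu(s_2)\equiv0$, the $s_2$-logits receive identically zero gradient at every step and stay frozen at their initial values; meanwhile the $s_1$-logits evolve exactly under \cref{update_rule:softmax_special} for the two-armed reward $(1,0)$ (up to rescaling the step size by $1/(1-\gamma)$), which converges to $\pi(a_1|s_1)=1$ by the global convergence established in \cref{thm:final_rates_softmax_special}. Choosing the initial vicinity of $\theta^*$ small enough that $\theta_1(s_2,a_1)>\theta_1(s_2,a_2)$ keeps $a_1$ dominant at $s_2$ throughout, so the iterates converge to a policy that is optimal on the visited state but permanently suboptimal at $s_2$; this is exactly the asserted convergence to $\theta^*$, with the frozen $s_2$-coordinates pinned by the initialization.

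The main obstacle is interpreting ``attractor'' correctly: because $s_2$ is never visited, $\theta^*$ is only \emph{neutrally} stable in the $s_2$-directions (the gradient there is exactly zero, so perturbations are neither amplified nor corrected) rather than strictly attracting. The honest statement is that the basin is attracting in the visited coordinates and invariant in the unvisited ones, so gradient ascent from any small neighborhood converges to a stationary policy that maximizes $V^{\pi}(\mu)$ yet remains globally suboptimal. Rigor then requires only two routine checks: confirming $d^{\pi_\theta}_\mu(s_2)=0$ from the transition structure (immediate, since the sole state in $\mathrm{supp}(\mu)$ is absorbing) and invoking the paper's own bandit convergence to obtain attraction in the $s_1$-coordinates. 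I would note that making $s_2$ \emph{reachable} from $s_1$ would destroy the example, since then $d^{\pi_\theta}_\mu(s_2)>0$ reintroduces a (small but eventually decisive) positive gradient at $s_2$ and the iterates escape; it is precisely the absence of dynamical reachability that forces the dependence on $\min_s\mu(s)>0$ in \cref{ass:posinit}.
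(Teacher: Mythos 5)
Your construction is exactly the paper's: a collection of absorbing (bandit) states each with a unique optimal action, an initial distribution $\mu$ that leaves one state out so that $d_\mu^{\pi_\theta}$ vanishes there and the corresponding gradient components are identically zero, and a stationary point $\theta^*$ with $+\infty$ logits on the optimal actions of the supported states and a suboptimal policy at the unvisited state; the paper's proof is your argument with $S$ such bandit states in place of two. Your added caveat that the unvisited coordinates are only neutrally stable---so gradient ascent started near $\theta^*$ converges to a nearby member of the family of such stationary points, with the frozen coordinates pinned at their initial values, rather than to $\theta^*$ exactly---is a fair and more careful reading of the ``attractor'' claim than the paper itself spells out.
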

The MDP of this proposition is $S$ bandit problems: Each state $s\in \gS$ under each action deterministically gives itself as the next state. The reward is selected so that in each $s$ there is a unique optimal action. If $\mu$ leaves out state $s$ (i.e., $\mu(s)=0$), clearly, the gradient of $\theta\mapsto V^{\pi_\theta}(\mu)$ w.r.t.  $\theta$ is zero regardless of the choice of $\theta$. Hence, any $\theta$ such that $\theta(s,a)=+\infty$ for $a$ optimal in state $s$ with $\mu(s)>0$ and $\theta(s,a)$ finite otherwise will satisfy the properties of the proposition. It remains open whether the sufficient exploration condition is necessary for unichain MDPs.

According to \cref{asmp:bounded_reward}, $r(s,a) \in [0,1]$, $Q(s, a) \in [ 0,  1/(1 - \gamma)] $, and hence the objective function is still smooth, as was also shown by \citet{AgKaLeMa19}:
\begin{lemma}[Smoothness]
\label{lem:smoothness_softmax_general}
$V^{\pi_\theta}(\rho)$ is $8 / (1 - \gamma)^3$-smooth.
\end{lemma}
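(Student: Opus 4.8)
The plan is to bound the spectral norm of the Hessian $\nabla^2_\theta V^{\pi_\theta}(\rho)$ uniformly in $\theta$, since $\|\nabla^2 f\|_2 \le \beta$ everywhere implies $\beta$-smoothness by Taylor's theorem with integral remainder. Because the Hessian is symmetric, $\|\nabla^2 V^{\pi_\theta}(\rho)\|_2 = \sup_{\|u\|_2 = 1} |u^\top \nabla^2 V^{\pi_\theta}(\rho)\, u|$, and for a fixed unit vector $u \in \sR^{\gS\times\gA}$ this quadratic form equals the scalar second derivative $\frac{d^2}{d\tau^2} V^{\pi_{\theta_\tau}}(\rho)\big|_{\tau=0}$ along the line $\theta_\tau := \theta + \tau u$. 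So it suffices to show $\big|\frac{d^2}{d\tau^2} V^{\pi_{\theta_\tau}}(\rho)\big| \le 8/(1-\gamma)^3$ for every $\theta$ and every unit $u$.

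First I would put the value into resolvent form. Writing $[P_\tau]_{s,s'} = \sum_a \pi_{\theta_\tau}(a|s)\,\gP(s'|s,a)$ for the induced state-to-state transition matrix and $[r_\tau]_s = \sum_a \pi_{\theta_\tau}(a|s)\,r(s,a)$ for the expected one-step reward vector, the Bellman equation gives $V^{\pi_{\theta_\tau}}(\rho) = \rho^\top M_\tau r_\tau$ with $M_\tau := (I - \gamma P_\tau)^{-1}$. Two facts drive the bound: $M_\tau = \sum_{k\ge0}(\gamma P_\tau)^k$ with $P_\tau$ row-stochastic gives $\|M_\tau\|_\infty \le 1/(1-\gamma)$ (induced $\infty$-norm), and $\|r_\tau\|_\infty \le 1$ by \cref{asmp:bounded_reward}. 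Differentiating $M_\tau$ via $\frac{d}{d\tau}A^{-1} = -A^{-1}A'A^{-1}$ yields $M_\tau' = \gamma M_\tau P_\tau' M_\tau$ and $M_\tau'' = 2\gamma^2 M_\tau P_\tau' M_\tau P_\tau' M_\tau + \gamma M_\tau P_\tau'' M_\tau$. Then by the product rule $\frac{d^2 V_\tau}{d\tau^2} = \rho^\top\big(M_\tau'' r_\tau + 2 M_\tau' r_\tau' + M_\tau r_\tau''\big)$, and since $\|\rho\|_1 = 1$ everything reduces (via $|\rho^\top v|\le\|v\|_\infty$) to bounding these three matrix/vector products in $\infty$-norm.

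The remaining ingredients are bounds on the $\tau$-derivatives of $P_\tau$ and $r_\tau$, which come from the softmax Jacobian $H(\pi_{\theta_\tau}(\cdot|s))$ of \cref{eq:H_matrix}. Restricting $u$ to state $s$ as $u_s := u(s,\cdot)$, we have $\frac{d}{d\tau}\pi_{\theta_\tau}(\cdot|s) = H(\pi_{\theta_\tau}(\cdot|s))\,u_s$, whose $\ell_1$-norm I would bound by $\|u_s\|_2$ using Cauchy--Schwarz on the variance representation $\sum_a \pi_a(u_{s,a}-\bar u_s)^2 \le \|u_s\|_2^2$; a second differentiation gives an analogous $O(\|u_s\|_2^2)$ bound on the $\ell_1$-norm of the second derivative. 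Since each row of $P_\tau'$ is an average over $\gP(\cdot|s,a)$ of these derivatives, $\|P_\tau'\|_\infty \le \max_s\big\|\frac{d}{d\tau}\pi_{\theta_\tau}(\cdot|s)\big\|_1 \le \max_s\|u_s\|_2 \le \|u\|_2 = 1$, and likewise $\|P_\tau''\|_\infty,\|r_\tau'\|_\infty,\|r_\tau''\|_\infty$ are all controlled by $\|u\|_2 \le 1$. Substituting $\|M_\tau\|_\infty\le 1/(1-\gamma)$, $\gamma<1$, and these derivative bounds into the three terms above gives a total of order $1/(1-\gamma)^3$.

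The main obstacle is purely the bookkeeping: all three terms contribute at the $1/(1-\gamma)^3$ scale (the leading contribution being $2\gamma^2\|M_\tau\|_\infty^3\|P_\tau'\|_\infty^2$ from $M_\tau''$), so one must track the softmax first/second-derivative constants tightly enough — and confirm that the per-state quantities $\|u_s\|_2$ aggregate to no more than $\|u\|_2=1$ rather than accumulating across states — to make the constants sum to exactly $8$ instead of a looser bound. None of these steps is conceptually hard, but obtaining the sharp constant $8/(1-\gamma)^3$ requires care in each inequality.
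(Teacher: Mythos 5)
Your proposal follows essentially the same route as the paper's own proof: directional second derivatives along $\theta + \tau u$ for unit $u$, the resolvent form $V^{\pi_{\theta_\tau}}(\rho) = \rho^\top M_\tau r_\tau$ with $\|M_\tau\|_\infty \le 1/(1-\gamma)$ and $\|r_\tau\|_\infty \le 1$, the product-rule expansion into the terms $M_\tau'' r_\tau$, $2M_\tau' r_\tau'$, $M_\tau r_\tau''$, and $\ell_1/\ell_\infty$ bounds on the softmax derivatives entering $P_\tau$ and $r_\tau$. The only deviation is cosmetic and in your favor: your Cauchy--Schwarz (variance) bound $\big\|\tfrac{d}{d\tau}\pi_{\theta_\tau}(\cdot|s)\big\|_1 \le \|u_s\|_2$ is tighter than the paper's triangle-inequality bound of $2\|u\|_2$, so the constant bookkeeping you flag as the remaining work closes with room to spare, since even the paper's looser per-term constants already sum to at most $8/(1-\gamma)^3$.
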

As mentioned in \cref{sec:policy_gradient_one_state_case}, smoothness and (uniform) \L{}ojasiewicz inequality are sufficient to prove a convergence rate. 
As noted by \citet{AgKaLeMa19}, the main difficulty is to establish a (uniform) \L{}ojasiewicz inequality for softmax parametrization. 
As it turns out, the results from the bandit case carry over to multi-state MDPs.

For stating this and the remaining results, we fix 
a \emph{deterministic} optimal policy $\pi^*$ 
and denote by $a^*(s)$ the action that $\pi^*$ selects in state $s$.
With this, the promised result on the non-uniform \L{}ojasiewicz inequality is as follows:
\begin{lemma}[Non-uniform \L{}ojasiewicz] We have,
\label{lem:lojasiewicz_softmax_general}
\begin{align*}
    \left\| \frac{\partial V^{\pi_\theta}(\mu)}{\partial \theta }\right\|_2 \ge \frac{ \min_s{ \pi_\theta(a^*(s)|s) } }{ \sqrt{S} \cdot  \left\| d_{\rho}^{\pi^*} / d_{\mu}^{\pi_\theta} \right\|_\infty } \cdot \left[ V^*(\rho) - V^{\pi_\theta}(\rho) \right]\,.
\end{align*}
\end{lemma}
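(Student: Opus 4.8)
The plan is to lower-bound the gradient norm by a carefully chosen slice of the gradient, namely the $S$ components indexed by the optimal actions $a^*(s)$, in direct analogy with the bandit argument behind \cref{lem:lojasiewicz_softmax_special}. The one genuinely new difficulty is that, unlike the bandit scalar $r(a^*)-\pi_\theta^\top r$ which is automatically non-negative, the optimal-action advantage $A^{\pi_\theta}(s,a^*(s))$ can be negative in individual states even though its weighted sum is not; I would resolve this by passing to positive parts.

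First I would rewrite the sub-optimality via the performance difference lemma. Since the fixed optimal policy $\pi^*$ is deterministic,
\begin{align*}
V^*(\rho) - V^{\pi_\theta}(\rho) = \frac{1}{1-\gamma}\sum_s d_\rho^{\pi^*}(s)\, A^{\pi_\theta}(s, a^*(s)).
\end{align*}
To control the mixed signs, I would bound $A^{\pi_\theta}(s,a^*(s)) \le [A^{\pi_\theta}(s,a^*(s))]_+$, where $[x]_+=\max\{x,0\}$; after this every subsequent one-sided inequality stays valid because all summands are now non-negative.

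Next I would change the measure from $d_\rho^{\pi^*}$ to the sampling occupancy $d_\mu^{\pi_\theta}$, using $d_\rho^{\pi^*}(s) \le \| d_\rho^{\pi^*}/d_\mu^{\pi_\theta}\|_\infty\, d_\mu^{\pi_\theta}(s)$ (legitimate since the summands are non-negative, and $d_\mu^{\pi_\theta}(s)>0$ by \cref{ass:posinit}), and then insert the optimal-action probability through the trivial bound $1 \le \pi_\theta(a^*(s)|s)/\min_{s'}\pi_\theta(a^*(s')|s')$. Writing $g_{s,a}:=\partial V^{\pi_\theta}(\mu)/\partial\theta(s,a)$, the resulting summand $d_\mu^{\pi_\theta}(s)\,\pi_\theta(a^*(s)|s)\,[A^{\pi_\theta}(s,a^*(s))]_+$ is, up to the $1/(1-\gamma)$ factor from \cref{lem:policy_gradient_softmax}, exactly the positive part of $g_{s,a^*(s)}$; bounding $[x]_+\le|x|$ then turns the whole quantity into $\sum_s |g_{s,a^*(s)}|$.

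Finally I would convert this $\ell_1$ sum over the $S$ optimal-action components into an $\ell_2$ statement by Cauchy--Schwarz, $\sum_s |g_{s,a^*(s)}| \le \sqrt{S}\,(\sum_s g_{s,a^*(s)}^2)^{1/2} \le \sqrt{S}\,\| \partial V^{\pi_\theta}(\mu)/\partial\theta\|_2$, the last inequality simply discarding the remaining non-optimal-action components. Collecting the accumulated factors $\| d_\rho^{\pi^*}/d_\mu^{\pi_\theta}\|_\infty$, $1/\min_s\pi_\theta(a^*(s)|s)$, and $\sqrt{S}$ (the two $1/(1-\gamma)$ factors cancel) and rearranging yields the claimed bound. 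I expect the passage to $[A]_+$ to be the only step requiring real care; everything else is the performance difference lemma, the explicit gradient formula of \cref{lem:policy_gradient_softmax}, and two standard norm inequalities.
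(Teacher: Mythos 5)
Your proof is correct and is essentially the paper's own argument run in reverse: the paper starts from the gradient norm and lower-bounds it by restricting to the optimal-action coordinates, applying Cauchy--Schwarz with the $\sqrt{S}$ factor, invoking the gradient formula of \cref{lem:policy_gradient_softmax}, changing measure via $\left\| d_{\rho}^{\pi^*}/d_{\mu}^{\pi_\theta} \right\|_\infty$, extracting $\min_s \pi_\theta(a^*(s)|s)$, and finishing with the performance difference lemma, whereas you traverse the identical chain of inequalities starting from the sub-optimality. Your positive-part device $[A^{\pi_\theta}(s,a^*(s))]_+$ is a cosmetic variant of the paper's use of the absolute value $\left| A^{\pi_\theta}(s,a^*(s)) \right|$ (both exploit $A \le [A]_+ \le |A|$ to handle states where the optimal action's advantage is negative), so the two proofs coincide in substance.
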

By \cref{ass:posinit}, $d_{\mu}^{\pi_\theta}$ is also bounded away from zero on the whole state space and thus the multiplier of the sub-optimality in the above inequality is positive.

Generalizing \cref{lem:lower_bound_cT_softmax_special},
we show that $\min_s{ \pi_{\theta_t}(a^*(s)|s) }$ is uniformly bounded away from zero:
\begin{lemma}
\label{lem:lower_bound_cT_softmax_general}
Let \cref{ass:posinit} hold. Using \cref{alg:policy_gradient_softmax}, we have,
$c:=\inf_{s\in \cS,t\ge 1} \pi_{\theta_t}(a^*(s)|s) > 0$.
\end{lemma}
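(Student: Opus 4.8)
The plan is to reduce the claim to a per-state statement and then use the asymptotic convergence result of \citet{AgKaLeMa19} to drive each optimal-action probability to one. Since $\cS$ is finite, $c=\min_{s\in\cS}\inf_{t\ge 1}\pi_{\theta_t}(a^*(s)\mid s)$, so it suffices to show $\inf_{t\ge 1}\pi_{\theta_t}(a^*(s)\mid s)>0$ for each fixed $s$; the minimum of finitely many positive numbers is then positive. For a single state this follows, exactly as in the bandit \cref{lem:lower_bound_cT_softmax_special}, from two facts: (i) $\pi_{\theta_t}(a^*(s)\mid s)>0$ for every finite $\theta_t$, and (ii) $\pi_{\theta_t}(a^*(s)\mid s)\to 1$. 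Granting (ii), there is a finite $T$ with $\pi_{\theta_t}(a^*(s)\mid s)>1/2$ for all $t>T$, while $\min_{1\le t\le T}\pi_{\theta_t}(a^*(s)\mid s)>0$, so the infimum over all $t$ is positive.

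The real work is establishing (ii). First I would upgrade the value convergence $V^{\pi_{\theta_t}}(\rho)\to V^*(\rho)$ to \emph{pointwise} convergence $V^{\pi_{\theta_t}}(s)\to V^*(s)$ for every $s$. A finite discounted MDP admits a policy $\pi^*$ that is simultaneously optimal in all states, so $V^{\pi_{\theta_t}}(s)\le V^*(s)$ for all $s$ and $t$; applying the convergence result with $\rho=\mu$ (whose support is all of $\cS$ by \cref{ass:posinit}) gives $\sum_s \mu(s)\,[V^*(s)-V^{\pi_{\theta_t}}(s)]\to 0$, a sum of nonnegative terms with strictly positive weights, whence each $V^*(s)-V^{\pi_{\theta_t}}(s)\to 0$. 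Propagating this through \cref{eq:state_action_value_function} yields $Q^{\pi_{\theta_t}}(s,a)\to Q^*(s,a)$ for all $(s,a)$, and in particular $Q^{\pi_{\theta_t}}(s,a)\le Q^*(s,a)\le V^*(s)$.

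Next I would convert value convergence into action-probability convergence through the nonnegative decomposition
\begin{align*}
    V^*(s)-V^{\pi_{\theta_t}}(s) = \sum_{a}\pi_{\theta_t}(a\mid s)\,\big[V^*(s)-Q^{\pi_{\theta_t}}(s,a)\big],
\end{align*}
every bracket of which is nonnegative by the previous display. Since the left-hand side tends to $0$, each summand tends to $0$; for a suboptimal action $a$ the bracket converges to $V^*(s)-Q^*(s,a)>0$, forcing $\pi_{\theta_t}(a\mid s)\to 0$. When $a^*(s)$ is the unique optimal action this gives $\pi_{\theta_t}(a^*(s)\mid s)=1-\sum_{a\ne a^*(s)}\pi_{\theta_t}(a\mid s)\to 1$, which is (ii).

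I expect the main obstacle to be the case of ties, i.e.\ states with several optimal actions, where the probability of the particular chosen $a^*(s)$ need not converge to $1$ (mass can leak to another optimal action). As in the bandit non-unique case, I would handle this by tracking the aggregate $\sum_{a\in\gA^*(s)}\pi_{\theta_t}(a\mid s)$, which the same decomposition drives to $1$, and by phrasing \cref{lem:lojasiewicz_softmax_general} in terms of this aggregate; the downstream argument is then unchanged. A secondary point to verify is that this use of \citet{AgKaLeMa19} is not circular—their asymptotic convergence is an external input, not a consequence of the present quantitative bound—and that \cref{ass:posinit} enters precisely to license the full-support extraction of pointwise value convergence.
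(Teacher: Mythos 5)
Your proposal is correct, but it takes a genuinely different route from the paper's proof. The paper proves this lemma by extending the invariant-region argument of \cref{lem:lower_bound_cT_softmax_special}: for each state $s$ it introduces the sets $\gR_1(s)$ (gradient dominance of $a^*(s)$), $\gR_2(s)$, $\gR_3(s)$ and $\gN_c(s)$, proves that $\gR_1(s)\cap\gR_2(s)\cap\gR_3(s)$ is invariant under the gradient update with $t\mapsto\pi_{\theta_t}(a^*(s)|s)$ non-decreasing inside it, proves the inclusion $\gN_c(s)\cap\gR_2(s)\cap\gR_3(s)\subset\gR_1(s)\cap\gR_2(s)\cap\gR_3(s)$, and invokes the asymptotic convergence of \citet{AgKaLeMa19} only to guarantee that the iterates reach $\gN_c(s)\cap\gR_2(s)\cap\gR_3(s)$ at some finite time $t_0(s)$, whence $\inf_{t\ge1}\pi_{\theta_t}(a^*(s)|s)=\min_{1\le t\le t_0(s)}\pi_{\theta_t}(a^*(s)|s)>0$. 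You bypass the invariance machinery entirely: you extract pointwise value convergence using the full support of $\mu$, propagate it to $Q^{\pi_{\theta_t}}\to Q^*$, and use the nonnegative decomposition $V^*(s)-V^{\pi_{\theta_t}}(s)=\sum_a\pi_{\theta_t}(a|s)\,[V^*(s)-Q^{\pi_{\theta_t}}(s,a)]$ to force $\pi_{\theta_t}(a|s)\to0$ for every strictly suboptimal $a$, hence $\pi_{\theta_t}(a^*(s)|s)\to1$, and conclude from positivity of the softmax at finite times. Both arguments rest on the same external input (asymptotic convergence under \cref{ass:posinit}) and both require the gaps $\Delta^*(s)=Q^*(s,a^*(s))-\max_{a\ne a^*(s)}Q^*(s,a)$ to be positive, i.e., unique optimal actions: the paper uses this silently when it writes $\Delta^*(s)>0$, while you flag it and sketch the aggregate fix, consistent with the paper's own treatment of ties in the bandit case. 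What the paper's longer route buys is structural information --- eventual monotonicity of $\pi_{\theta_t}(a^*(s)|s)$ and identification of the infimum with a minimum over an explicit finite prefix (mirroring \cref{prop:t_zero_softmax_special}) --- which helps interpret the constant $c$ appearing in \cref{thm:final_rates_softmax_general}. What your route buys is brevity and an explicit proof of the bridge from value convergence to $\pi_{\theta_t}(a^*(s)|s)\to1$, a step the paper asserts directly when citing \citet[Theorem 5.1]{AgKaLeMa19}, whose statement concerns convergence of values rather than of action probabilities.
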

Using \cref{lem:smoothness_softmax_general,lem:lojasiewicz_softmax_general,lem:lower_bound_cT_softmax_general}, we prove that softmax policy gradient converges to an optimal policy at a $O(1/t)$ rate in MDPs, just like what we have seen in the bandit case:
\begin{theorem}
\label{thm:final_rates_softmax_general}
Let \cref{ass:posinit} hold and
let $\{\theta_t\}_{t\ge 1}$ be generated using \cref{alg:policy_gradient_softmax} with $\eta = (1 - \gamma)^3/8$,
$c$ the positive constant from \cref{lem:lower_bound_cT_softmax_general}.
Then, for all $t\ge 1$,
\begin{align*}
    V^*(\rho) - V^{\pi_{\theta_t}}(\rho) \le \frac{16 S }{c^2(1-\gamma)^6 t} \cdot \bigg\| \frac{d_{\mu}^{\pi^*}}{\mu} \bigg\|_\infty^2 \cdot \bigg\| \frac{1}{\mu} \bigg\|_\infty\,.
\end{align*}
\end{theorem}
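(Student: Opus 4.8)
The plan is to follow the template established in the bandit case (\cref{lem:pseudo_rates_softmax_special,thm:final_rates_softmax_special}): combine the smoothness bound, the non-uniform \L{}ojasiewicz inequality, and the uniform lower bound $c$ on the optimal-action probabilities. The one genuinely new wrinkle is that \cref{alg:policy_gradient_softmax} ascends $V^{\pi_\theta}(\mu)$, whereas the quantity to control is the sub-optimality $V^*(\rho) - V^{\pi_{\theta_t}}(\rho)$ measured against a possibly different $\rho$. I would resolve this by first proving a $O(1/t)$ rate for the $\mu$-measured sub-optimality $\delta_t \coloneqq V^*(\mu) - V^{\pi_{\theta_t}}(\mu)$, and only transferring to $\rho$ at the very end.

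First, since $V^{\pi_\theta}(\mu)$ is $\beta$-smooth with $\beta = 8/(1-\gamma)^3$ (\cref{lem:smoothness_softmax_general}) and the step size is exactly $\eta = 1/\beta$, the standard ascent lemma gives
\begin{align*}
    V^{\pi_{\theta_{t+1}}}(\mu) - V^{\pi_{\theta_t}}(\mu) \ge \frac{1}{2\beta} \left\| \frac{\partial V^{\pi_{\theta_t}}(\mu)}{\partial \theta} \right\|_2^2 = \frac{(1-\gamma)^3}{16} \left\| \frac{\partial V^{\pi_{\theta_t}}(\mu)}{\partial \theta} \right\|_2^2 .
\end{align*}
Next I would instantiate the non-uniform \L{}ojasiewicz inequality (\cref{lem:lojasiewicz_softmax_general}) with the choice $\rho = \mu$, so that its left side is the same gradient while its right side involves $\delta_t$. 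Two estimates tame the constant: the uniform bound $\min_s \pi_{\theta_t}(a^*(s)|s) \ge c > 0$ from \cref{lem:lower_bound_cT_softmax_general}, and the elementary fact $d_\mu^{\pi_{\theta_t}}(s) \ge (1-\gamma)\mu(s)$ (keeping only the $t=0$ term of the discounted occupancy), which yields $\big\| d_\mu^{\pi^*}/d_\mu^{\pi_{\theta_t}} \big\|_\infty \le \frac{1}{1-\gamma} \big\| d_\mu^{\pi^*}/\mu \big\|_\infty$. Together these give
\begin{align*}
    \left\| \frac{\partial V^{\pi_{\theta_t}}(\mu)}{\partial \theta} \right\|_2 \ge \frac{c\,(1-\gamma)}{\sqrt{S}\cdot \big\| d_\mu^{\pi^*}/\mu \big\|_\infty} \cdot \delta_t .
\end{align*}

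Chaining the two displays (and using $\delta_t - \delta_{t+1} = V^{\pi_{\theta_{t+1}}}(\mu) - V^{\pi_{\theta_t}}(\mu)$) produces the scalar recursion $\delta_t - \delta_{t+1} \ge C\,\delta_t^2$ with $C = c^2 (1-\gamma)^5 / \big(16\,S\, \| d_\mu^{\pi^*}/\mu \|_\infty^2\big)$. I would solve it exactly as in the bandit case: monotonicity $\delta_{t+1} \le \delta_t$ turns the recursion into $1/\delta_{t+1} \ge 1/\delta_t + C$, so $1/\delta_t \ge 1/\delta_1 + C(t-1)$; since $\delta_1 \le 1/(1-\gamma)$ and $C \le 1-\gamma$ (using $\| d_\mu^{\pi^*}/\mu \|_\infty \ge 1$, which follows because the ratios average to one), this gives $\delta_t \le 1/(Ct)$, a clean $O(1/t)$ rate for the $\mu$-measured sub-optimality.

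Finally, I would transfer to $\rho$. Because the fixed $\pi^*$ is simultaneously optimal in every state, $V^*(s) - V^{\pi_{\theta_t}}(s) \ge 0$ for all $s$, and from $\rho(s) \le 1 \le \| 1/\mu \|_\infty \,\mu(s)$ one gets $V^*(\rho) - V^{\pi_{\theta_t}}(\rho) \le \| 1/\mu \|_\infty\, \delta_t$. Combining with $\delta_t \le 1/(Ct)$ yields the bound with $(1-\gamma)^5$ in the denominator, which is then relaxed to the stated $(1-\gamma)^6$ (valid since $\gamma < 1$). I expect the main obstacle to be exactly the $\mu$-versus-$\rho$ mismatch together with keeping the \L{}ojasiewicz coefficient bounded away from zero uniformly in $t$; this is precisely what \cref{lem:lower_bound_cT_softmax_general} (supplying $c>0$) and \cref{ass:posinit} (supplying $d_\mu^{\pi_{\theta_t}} \ge (1-\gamma)\mu > 0$) are there to provide.
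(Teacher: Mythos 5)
Your proposal is correct, and its engine is the same as the paper's: instantiate \cref{lem:lojasiewicz_softmax_general} at $\rho=\mu$, combine with the ascent step coming from \cref{lem:smoothness_softmax_general} and $\eta=1/\beta$, control the \L{}ojasiewicz coefficient via \cref{lem:lower_bound_cT_softmax_general} and $d_\mu^{\pi_\theta}(s)\ge(1-\gamma)\mu(s)$, and solve the recursion $\delta_t-\delta_{t+1}\ge C\,\delta_t^2$ for $\delta_t=V^*(\mu)-V^{\pi_{\theta_t}}(\mu)$. You deviate in two places. First, you solve the recursion by inversion ($1/\delta_{t+1}\ge 1/\delta_t+C$, telescoping, then checking $1/\delta_1\ge C$ via $\delta_1\le 1/(1-\gamma)$, $C\le (1-\gamma)^5/16$, and $\|d_\mu^{\pi^*}/\mu\|_\infty\ge 1$), whereas the paper re-runs the induction of \cref{lem:pseudo_rates_softmax_special}; these are interchangeable and your verification of the base condition is complete. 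Second, and more interestingly, your transfer from $\mu$ to $\rho$ is genuinely different: the paper routes it through the value sub-optimality lemma (\cref{lem:value_suboptimality}), bounding the occupancy ratio $d_\rho^{\pi_\theta}(s)/d_\mu^{\pi_\theta}(s)$ by $\frac{1}{1-\gamma}\|1/\mu\|_\infty$, which is exactly where the stated bound's sixth power of $1/(1-\gamma)$ comes from; you instead use the pointwise inequality $V^*(s)-V^{\pi_{\theta_t}}(s)\ge 0$ (valid since the fixed deterministic $\pi^*$ is optimal in every state) together with $\rho(s)\le 1\le \|1/\mu\|_\infty\,\mu(s)$, paying only $\|1/\mu\|_\infty$. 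Your route is more elementary and yields a bound with $(1-\gamma)^5$ in the denominator, strictly stronger than the theorem, which you then legitimately relax to $(1-\gamma)^6$ to match the statement; the paper's route costs the extra $1/(1-\gamma)$ but reuses machinery (the performance-difference/occupancy-ratio pattern) that the paper deploys again, e.g., in the entropy-regularized analogue \cref{thm:final_rates_entropy_general}. Both arguments are sound, so the proposal stands as a valid, marginally tighter proof.
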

As far as we know, 
this is the first convergence-rate result for softmax policy gradient for MDPs.
\begin{remark}
\cref{thm:final_rates_softmax_general} implies that
the iteration complexity of \cref{alg:policy_gradient_softmax} to achieve $O(\epsilon)$ sub-optimality is
$O\Big( \frac{S  }{c^2(1 - \gamma)^6 \epsilon} \cdot \Big\| \frac{d_{\mu}^{\pi^*}}{\mu} \Big\|_\infty^2 \cdot \Big\| \frac{ 1 }{ \mu } \Big\|_\infty \Big)$,
which, as a function of $\epsilon$, is better than the results of \citet{AgKaLeMa19} for 
{\em (i)} projected gradient ascent on the simplex
($O\Big( \frac{S A }{(1 - \gamma)^6 \epsilon^2} \cdot \Big\| \frac{ d_{\rho}^{\pi^*} }{ \mu } \Big\|_\infty^2 \Big)$) or for
{\em (ii)}
softmax policy gradient with relative-entropy regularization ($O\Big( \frac{S^2 A^2 }{(1 - \gamma)^6 \epsilon^2} \cdot \Big\| \frac{ d_{\rho}^{\pi^*} }{ \mu } \Big\|_\infty^2 \Big)$). 
The improved  dependence on $\epsilon$ (or $t$) in our result follows 
from \cref{lem:lojasiewicz_softmax_general,lem:lower_bound_cT_softmax_general} and a different proof technique utilized to prove \cref{thm:final_rates_softmax_general}, while we pay a price because our bound depends on $c$, which adds an extra dependence on the MDP as well as on the initialization of the algorithm.
\end{remark}

\section{Entropy Regularized Policy Gradient}
\label{sec:entropy_policy_gradient}

\citet{AgKaLeMa19} considered relative-entropy regularization in policy gradient to get an $O(1/\sqrt{t})$ convergence rate. As they note, relative-entropy is more ``agressive'' in penalizing small probabilities than the more ``common'' entropy regularizer (cf. Remark 5.5 in their paper)
 and it remains unclear whether this latter regularizer leads to an algorithm with the same rate.
In this section, we answer this positively and in fact prove a much better rate.
In particular, we show that 
entropy regularized policy gradient with the softmax parametrization enjoys a linear rate of $O(e^{-t})$.
In retrospect, perhaps this is unsurprising as entropy regularization bears a strong similarity to 
introducing a strongly convex regularizer in convex optimization, where this change is known 
to significantly improve the rate of convergence of first-order methods \citep[e.g.,][Chapter~2]{nesterov2018lectures}.

\subsection{Maximum Entropy RL}
In entropy regularized RL, or sometimes called maximum entropy RL, 
near-deterministic policies are penalized
\citep{WiPe91,mnih2016asynchronous,nachum2017bridging,haarnoja2018soft,mei2019principled},
which is achieved by modifying the value of a policy $\pi$ to
\begin{equation}
\label{eq:soft_value_function_objective}
	\tilde{V}^{\pi}(\rho) \coloneqq V^\pi(\rho) + \tau \cdot \sH(\rho, \pi)\,,
\end{equation}
where  
$\sH(\rho, \pi)$ is the ``discounted entropy'', defined as 
\begin{align}
\label{eq:discounted_entropy}
	\sH(\rho, \pi) \coloneqq \expectation_{\substack{s_0 \sim \rho, a_t \sim \pi(\cdot | s_t), \\ s_{t+1} \sim \gP( \cdot | s_t, a_t)}}{\left[ \sum_{t=0}^{\infty}{- \gamma^t \log{\pi(a_t | s_t)}} \right]}.
\end{align}
and
$\tau \ge 0$, the ``temperature'', determines the strength of the penalty.%
\footnote{
To better align with naming conventions in information-theory,
discounted entropy should be rather called the discounted action-entropy rate
as entropy itself in the literature on Markov chain information theory would normally refer
to the entropy of the stationary distribution of the chain, while entropy rate refers to what is being used here.
}
Clearly, the value of any policy can be obtained by adding an entropy penalty to the rewards (as proposed originally by \citet{WiPe91}). Hence,
similarly to \cref{lem:policy_gradient_softmax}, one can obtain the following expression for the gradient of the entropy regularized objective under the softmax policy parametrization:
\begin{lemma}
\label{lem:policy_gradient_entropy}
It holds that for all $(s,a)$,
\begin{align}
\label{eq:policy_gradient_entropy}
    \frac{\partial \tilde{V}^{\pi_\theta}(\mu)}{\partial \theta(s,a)} = \frac{1}{1-\gamma} \cdot d_{\mu}^{\pi_\theta}(s) \cdot \pi_\theta(a|s) \cdot \tilde{A}^{\pi_\theta}(s,a),
\end{align}
where $\tilde{A}^{\pi_\theta}(s, a)$ is the ``soft'' advantage function defined as
\begin{align}
     \tilde{A}^{\pi_\theta}(s, a) &\coloneqq \tilde{Q}^{\pi_\theta}(s, a) - \tau \log{\pi_\theta(a | s)} - \tilde{V}^{\pi_\theta}(s), \\
     \tilde{Q}^{\pi_\theta}(s, a) &\coloneqq r(s,a) + \gamma \sum_{s^\prime}{ \gP( s^\prime | s, a)  \tilde{V}^{{\pi_\theta}}(s^\prime) }.
\end{align}
\end{lemma}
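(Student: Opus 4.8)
The plan is to re-derive the policy gradient identity directly for the soft value function, rather than invoke \cref{thm:policy_gradient_theorem_general} as a black box. The obstacle to a clean black-box reduction is that the entropy penalty amounts to augmenting the reward by the \emph{policy-dependent} term $-\tau \log \pi_\theta(a|s)$, so the effective reward itself now varies with $\theta$ and the standard theorem does not directly apply. The resolution will hinge on a cancellation that makes this extra $\theta$-dependence harmless.

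First I would record the soft Bellman recursion implied by \cref{eq:soft_value_function_objective,eq:discounted_entropy},
\begin{align*}
\tilde{V}^{\pi_\theta}(s) = \sum_a \pi_\theta(a|s)\left[ \tilde{Q}^{\pi_\theta}(s,a) - \tau \log \pi_\theta(a|s) \right],
\end{align*}
with $\tilde{Q}^{\pi_\theta}$ as defined in the statement. Differentiating both sides with respect to $\theta$ via the product rule yields three groups of terms: one from differentiating the factor $\pi_\theta(a|s)$ against the bracket; one from differentiating $\tilde{Q}^{\pi_\theta}$, which, since $r$ is $\theta$-independent, only propagates $\gamma \sum_{s'} \gP(s'|s,a)\,\partial \tilde{V}^{\pi_\theta}(s')/\partial \theta$; and one from differentiating the explicit $-\tau \log \pi_\theta(a|s)$ factor.

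The key step is that the third group collapses. Since $\partial \log \pi_\theta(a|s)/\partial \theta = (1/\pi_\theta(a|s))\,\partial \pi_\theta(a|s)/\partial \theta$, the contribution $\tau \sum_a \pi_\theta(a|s)\,\partial \log \pi_\theta(a|s)/\partial \theta$ equals $\tau \sum_a \partial \pi_\theta(a|s)/\partial \theta$, which vanishes because the action probabilities sum to one. Hence the explicit entropy derivative adds nothing beyond what is already carried through the $-\tau \log \pi_\theta$ sitting inside the bracket. What remains is a linear recursion $\partial \tilde{V}^{\pi_\theta}(s)/\partial \theta = g(s) + \gamma \sum_{s'} P^{\pi_\theta}(s'|s)\,\partial \tilde{V}^{\pi_\theta}(s')/\partial \theta$, where $g(s) = \sum_a (\partial \pi_\theta(a|s)/\partial \theta)[\tilde{Q}^{\pi_\theta}(s,a) - \tau \log \pi_\theta(a|s)]$ and $P^{\pi_\theta}(s'|s) = \sum_a \pi_\theta(a|s)\,\gP(s'|s,a)$ is the induced state transition kernel.

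Finally I would unroll this recursion exactly as in the proof of the policy gradient theorem, recognizing the discounted visitation weights to obtain $\partial \tilde{V}^{\pi_\theta}(\mu)/\partial \theta = \frac{1}{1-\gamma}\,\expectation_{s \sim d_\mu^{\pi_\theta}}[g(s)]$. Specializing to a single coordinate $\theta(s,a)$, only the logits of state $s$ affect $\pi_\theta(\cdot|s)$, so every cross-state term drops and the softmax Jacobian $H(\pi_\theta(\cdot|s))$ enters. The centering identity $\sum_{a'}(\partial \pi_\theta(a'|s)/\partial \theta(s,a))\,f(a') = \pi_\theta(a|s)[f(a) - \sum_{a'}\pi_\theta(a'|s)f(a')]$, applied with $f(a') = \tilde{Q}^{\pi_\theta}(s,a') - \tau\log\pi_\theta(a'|s)$, produces exactly $\pi_\theta(a|s)\,\tilde{A}^{\pi_\theta}(s,a)$, since $\sum_{a'}\pi_\theta(a'|s)f(a') = \tilde{V}^{\pi_\theta}(s)$ by the soft Bellman equation. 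Multiplying by $d_\mu^{\pi_\theta}(s)/(1-\gamma)$ then gives the claimed expression. I expect the vanishing of the explicit entropy-gradient term to be the crux; the rest mirrors the unregularized derivation.
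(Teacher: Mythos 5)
Your proposal is correct and follows essentially the same route as the paper's proof: differentiate the soft Bellman equation by the product rule, use $\tau \sum_a \partial \pi_\theta(a|s)/\partial\theta = 0$ (probabilities sum to one) to kill the explicit entropy-derivative term, unroll the resulting linear recursion into the discounted visitation distribution $d_\mu^{\pi_\theta}$, and finish with the softmax centering identity so that $\sum_{a'} \pi_\theta(a'|s)\left[\tilde{Q}^{\pi_\theta}(s,a') - \tau \log \pi_\theta(a'|s)\right] = \tilde{V}^{\pi_\theta}(s)$ yields the soft advantage. You also correctly identified the same crux the paper relies on, namely the vanishing of the policy-dependent reward's extra gradient contribution.
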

\subsection{Convergence Rates}
As in the non-regularized case,  to gain insight, we first consider MDPs with a single state and $\gamma=0$.

\subsubsection{Bandit Case}
\label{sec:entropy_policy_gradient_one_state_case}

In the one-state case with $\gamma=0$,
\cref{eq:soft_value_function_objective} reduces to maximizing the entropy-regularized reward,
\begin{align}
\label{eq:maximum_entropy_reward}
    \max_{\theta : \gA \to \sR}{ \expectation_{a \sim \pi_{\theta}}{ \left[ r(a) - \tau \log{\pi_\theta(a)} \right] } }.
\end{align}
Again, \cref{eq:maximum_entropy_reward} is a non-concave function of $\theta$. In this case, regularized policy gradient reduces to
\begin{align}
    \frac{ d \{ \pi_\theta^\top ( r - \tau \log{\pi_\theta}) \} }{d \theta} = H(\pi_\theta) (r - \tau \log{\pi_\theta}),
\end{align}
where $H(\pi_\theta)$ is the same as in \cref{eq:H_matrix}. Using the above gradient in \cref{alg:policy_gradient_softmax} we have the following update rule:
\begin{update_rule}[Softmax policy gradient, maximum entropy reward]
\label{update_rule:entropy_special}
$\theta_{t+1} \gets \theta_t + \eta \cdot H(\pi_{\theta_t}) (r - \tau \log{\pi_{\theta_t}})$.
\end{update_rule}
Due to the presence of regularization, the optimal solution will be biased with the bias disappearing as $\tau \to 0$:
\paragraph{Softmax optimal policy.} $\pi_\tau^* \coloneqq \softmax(r/ \tau)$ is the optimal solution of \cref{eq:maximum_entropy_reward}. 
\begin{remark}
\label{rmk:biased_softmax_optimal_policy}
At this stage, we could use arguments similar to those of  \cref{sec:policy_gradient} to show the $O(1/t)$ convergence of $\pi_{\theta_t}$ to $\pi_\tau^*$. However, we can use an alternative idea to show that entropy-regularized policy gradient converges significantly faster. The issue of bias will be discussed later. 
\end{remark}
Our alternative idea is to show that \cref{update_rule:entropy_special} defines a contraction but with a contraction coefficient that depends on the parameter that the update is applied to:
\begin{lemma}[Non-uniform contraction]
\label{lem:contraction_entropy_special}
Using \cref{update_rule:entropy_special} with $\tau \eta \le 1$, $\forall t > 0$,
\begin{align}
    \| \zeta_{t+1} \|_2 \le \left(1 - \tau \eta \cdot \min_{a}  \pi_{\theta_t}(a)  \right) \cdot \| \zeta_{t} \|_2,
\end{align}
where $\zeta_t \coloneqq \tau \theta_{t} - r - \frac{(\tau \theta_{t} - r)^\top \rvone}{K} \cdot \rvone$.
\end{lemma}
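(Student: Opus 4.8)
The plan is to show that \cref{update_rule:entropy_special} acts as a linear map on the shifted logits $\tau \theta_t - r$ which, once the component along $\rvone$ is projected out (this is exactly what the definition of $\zeta_t$ accomplishes), becomes a genuine contraction on the subspace $\rvone^\perp$, with contraction coefficient governed by the smallest nontrivial eigenvalue of $H(\pi_{\theta_t})$.

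First I would rewrite the update purely in terms of $\tau\theta_t - r$. Since $\log \pi_{\theta_t}(a) = \theta_t(a) - \log Z_t$ with $Z_t = \sum_{a'} \exp\{\theta_t(a')\}$, we have $r - \tau \log \pi_{\theta_t} = -(\tau\theta_t - r) + \tau \log Z_t \cdot \rvone$. The crucial algebraic fact is that $H(\pi)\rvone = \diagonalmatrix(\pi)\rvone - \pi\,(\pi^\top \rvone) = \pi - \pi = \vzero$, so the $\tau\log Z_t\cdot\rvone$ term is annihilated by $H(\pi_{\theta_t})$. Multiplying \cref{update_rule:entropy_special} by $\tau$ and subtracting $r$ then collapses to the clean linear recursion $\tau\theta_{t+1} - r = (I - \tau\eta\, H(\pi_{\theta_t}))(\tau\theta_t - r)$.

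Next I would project onto $\rvone^\perp$. With $P = I - \tfrac{1}{K}\rvone\rvone^\top$ so that $\zeta_t = P(\tau\theta_t - r)$, I would use that $H(\pi)$ is symmetric with $H(\pi)\rvone = \vzero$; hence $\rvone$ lies in its kernel and its range lies in $\rvone^\perp$, giving $P\, H(\pi) = H(\pi)\, P = H(\pi)$. Applying $P$ to the recursion and noting that $H(\pi_{\theta_t})(\tau\theta_t - r) = H(\pi_{\theta_t})\zeta_t$ yields $\zeta_{t+1} = (I - \tau\eta\, H(\pi_{\theta_t}))\zeta_t$, with $\zeta_t \in \rvone^\perp$.

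Finally I would bound the operator norm of $I - \tau\eta\, H(\pi_{\theta_t})$ on the invariant subspace $\rvone^\perp$. Because $H(\pi)$ is symmetric and positive semidefinite with $v^\top H(\pi) v = \Var_{a\sim\pi}[v(a)]$, every eigenvalue $\lambda$ of its restriction to $\rvone^\perp$ satisfies $\lambda \le \max_a \pi(a) \le 1$; together with $\tau\eta \le 1$ this forces $1 - \tau\eta\lambda \ge 0$, so the restricted spectral norm equals $1 - \tau\eta\lambda_{\min}^+$, where $\lambda_{\min}^+$ is the smallest such eigenvalue. The main obstacle — the only nontrivial estimate — is the lower bound $\lambda_{\min}^+ \ge \min_a\pi(a)$: for any $v \perp \rvone$ with $\|v\|_2 = 1$, writing $\bar v = \sum_a \pi(a) v(a)$ and using $\sum_a v(a) = 0$ gives $\sum_a (v(a) - \bar v)^2 = \|v\|_2^2 + K\bar v^2 \ge 1$, whence $\Var_\pi[v] = \sum_a \pi(a)(v(a)-\bar v)^2 \ge \min_a \pi(a)$. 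Combining these bounds delivers $\|\zeta_{t+1}\|_2 \le (1 - \tau\eta \min_a \pi_{\theta_t}(a))\,\|\zeta_t\|_2$, as claimed.
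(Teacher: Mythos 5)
Your proof is correct, and it reaches the paper's central identity $\zeta_{t+1} = (\identitymatrix - \tau\eta \cdot H(\pi_{\theta_t}))\,\zeta_t$ by essentially the same algebra: the paper tracks the mean-shift correction term explicitly and kills it using $H(\pi_{\theta_t})^\top \rvone = H(\pi_{\theta_t})\rvone = \rvzero$, while your projector $P = \identitymatrix - \frac{1}{K}\rvone\rvone^\top$ together with $PH = HP = H$ is a tidier packaging of exactly that cancellation. Where you genuinely depart from the paper is in the spectral step. The paper delegates the norm bound to \cref{lem:norm_decay_entropy_special}, whose proof expands $\zeta_t$ in an eigenbasis of $H(\pi_{\theta_t})$ and imports the eigenvalue interlacing theorem of \citet{golub1973some} (recorded as \cref{lem:golub_rank_one_perturb}): $\lambda_1 = 0$ and $\pi(i-1) \le \lambda_i \le \pi(i)$, so in particular $\lambda_2 \ge \min_a \pi(a)$ and $\lambda_K \le 1$. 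You instead prove the two needed spectral facts from scratch via the Rayleigh quotient $v^\top H(\pi) v = \Var_{a\sim\pi}[v(a)]$: the upper bound $\lambda_{\max} \le \max_a \pi(a) \le 1$ is immediate, and the crucial lower bound $\lambda \ge \min_a \pi(a)$ for the restriction to $\rvone^\perp$ follows from your observation that $v \perp \rvone$ and $\|v\|_2 = 1$ force $\sum_a (v(a) - \bar v)^2 = 1 + K \bar v^2 \ge 1$, whence $\Var_\pi[v] \ge \min_a \pi(a)$. This buys self-containedness (no appeal to the interlacing literature, and arguably a more transparent reason why the minimum action probability is the right contraction modulus) at the cost of the full interlacing picture the paper gets for free. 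Both arguments correctly exploit $\tau\eta \le 1$ together with $\lambda \le 1$ to keep $\identitymatrix - \tau\eta \cdot H(\pi_{\theta_t})$ positive semidefinite on the invariant subspace $\rvone^\perp$, so that its restricted operator norm is $1 - \tau\eta\,\lambda_{\min}^{+} \le 1 - \tau\eta \cdot \min_a \pi_{\theta_t}(a)$, which is the claimed contraction factor.
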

This lemma immediately implies the following bound:
\begin{lemma}
\label{lem:matching_entropy_special}
Using \cref{update_rule:entropy_special} with $\tau \eta \le 1$, $\forall t > 0$,
\begin{align}
    \| \zeta_{t} \|_2 \le \frac{ 2 ( \tau \norm{\theta_1}_\infty +1 ) \sqrt{K} }{\exp\left\{ \tau \eta \sum_{s=1}^{t-1}{ \min_{a}{ \pi_{\theta_s}(a) } }  \right\}}\,.
\end{align}
\end{lemma}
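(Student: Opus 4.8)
The plan is to iterate the non-uniform contraction of \cref{lem:contraction_entropy_special}, convert the resulting product into the exponential factor in the denominator via the elementary inequality $1-x\le e^{-x}$, and finally bound the base term $\|\zeta_1\|_2$ using \cref{asmp:bounded_reward}.

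First I would telescope the contraction. Applying \cref{lem:contraction_entropy_special} at the steps $s=1,\dots,t-1$ and multiplying the resulting bounds gives
\begin{align*}
    \|\zeta_t\|_2 \le \|\zeta_1\|_2 \cdot \prod_{s=1}^{t-1}\left(1 - \tau\eta\cdot\min_a \pi_{\theta_s}(a)\right).
\end{align*}
Each factor lies in $[0,1]$: it is nonnegative because $\tau\eta\le 1$ and $\min_a\pi_{\theta_s}(a)\le 1$, and at most $1$ because the minimum probability is nonnegative. For $t=1$ the product is empty and equals $1$, matching the claim.

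Next I would pass to an exponential. Applying $1-x\le e^{-x}$, valid for all real $x$, to each factor yields
\begin{align*}
    \prod_{s=1}^{t-1}\left(1 - \tau\eta\cdot\min_a \pi_{\theta_s}(a)\right)
    \le \exp\left\{ -\tau\eta\sum_{s=1}^{t-1}\min_a \pi_{\theta_s}(a) \right\},
\end{align*}
which is precisely the reciprocal of the exponential appearing in the denominator of the stated bound.

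It then remains to bound the base term by $\|\zeta_1\|_2 \le 2(\tau\|\theta_1\|_\infty+1)\sqrt{K}$. Writing $y \coloneqq \tau\theta_1 - r$ so that $\zeta_1 = y - \frac{y^\top\rvone}{K}\rvone$, the triangle inequality together with $\|y\|_2\le\sqrt{K}\|y\|_\infty$, $\|\rvone\|_2=\sqrt{K}$ and $|y^\top\rvone/K|\le\|y\|_\infty$ gives $\|\zeta_1\|_2 \le 2\sqrt{K}\|y\|_\infty$; then \cref{asmp:bounded_reward} gives $\|r\|_\infty\le 1$ and hence $\|y\|_\infty\le\tau\|\theta_1\|_\infty+1$. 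Combining the three steps proves the lemma. Given \cref{lem:contraction_entropy_special}, this argument is essentially mechanical, so there is no genuine obstacle here: all the difficulty is concentrated in the per-step contraction, which we take as given. The only points needing care are pinning down the constant $2$ (the crude triangle-inequality bound, as opposed to the sharper observation that $y\mapsto y-\frac{y^\top\rvone}{K}\rvone$ is an orthogonal projection, which would yield only $\sqrt{K}$) and checking the empty-product base case $t=1$.
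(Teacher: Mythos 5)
Your proposal is correct and follows essentially the same route as the paper: iterate the contraction of \cref{lem:contraction_entropy_special}, absorb each factor via $1-x\le e^{-x}$, and bound $\|\zeta_1\|_2\le 2(\tau\|\theta_1\|_\infty+1)\sqrt{K}$ (the paper does this last step with the triangle inequality plus Cauchy--Schwarz in $\ell_2$, you with $\ell_\infty$ bounds — an immaterial difference). Your side remark that the centering map is an orthogonal projection, which would sharpen the constant from $2\sqrt{K}$ to $\sqrt{K}$, is also correct.
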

Similarly to \cref{lem:lower_bound_cT_softmax_special}, we can show that the minimum action probability can be lower bounded by its initial value.
\begin{lemma}
\label{lem:lower_bound_min_prob_entropy_special}
There exists $c=c(\tau,K,\norm{\theta_1}_\infty)>0$,  
such that for all $t \ge 1$, $\min_{a}{ \pi_{\theta_t}(a) } \ge c$.
Thus, $\sum_{s=1}^{t-1}{ \min_{a}{ \pi_{\theta_s}(a) } } \ge c \cdot (t-1) $.
\end{lemma}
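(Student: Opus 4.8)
The plan is to leverage the non-uniform contraction of \cref{lem:contraction_entropy_special} to first control $\|\zeta_t\|_2$ uniformly in $t$, and then to translate this control into a lower bound on $\min_a \pi_{\theta_t}(a)$ via the shift-invariance of the softmax map. The essential observation is that the contraction factor $1 - \tau\eta \cdot \min_a \pi_{\theta_t}(a)$ appearing in \cref{lem:contraction_entropy_special} is always at most $1$ (since $\tau \eta \le 1$ and $\min_a \pi_{\theta_t}(a) \ge 0$), so $\|\zeta_t\|_2$ is non-increasing in $t$. Consequently $\|\zeta_t\|_2 \le \|\zeta_1\|_2$ for all $t$, and since $\zeta_1$ is the orthogonal projection of $\tau\theta_1 - r$ onto the hyperplane orthogonal to $\rvone$, one has $\|\zeta_1\|_2 \le \|\tau\theta_1 - r\|_2 \le \sqrt{K}\,(\tau \|\theta_1\|_\infty + 1) =: B$, a quantity depending only on $\tau$, $K$, and $\|\theta_1\|_\infty$.

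Next I would convert the bound $\|\zeta_t\|_2 \le B$ into a bound on the spread of the logits. By the definition of $\zeta_t$ we have $\tau\theta_t - r = \zeta_t + \bar c_t \rvone$ with $\bar c_t = (\tau\theta_t - r)^\top \rvone / K$, hence $\theta_t = (r + \zeta_t)/\tau + (\bar c_t/\tau)\, \rvone$. Because $\softmax$ is invariant to adding a multiple of $\rvone$, this yields $\pi_{\theta_t} = \softmax\big( (r + \zeta_t)/\tau \big)$, so the offset $\bar c_t$ drops out entirely. Writing $x_a = (r(a) + \zeta_t(a))/\tau$, the spread of the exponents is controlled by $\max_a x_a - \min_a x_a \le \tfrac{1}{\tau}\big(\mathrm{range}(r) + \mathrm{range}(\zeta_t)\big) \le \tfrac{1}{\tau}\big(1 + 2\|\zeta_t\|_\infty\big) \le \tfrac{1}{\tau}(1 + 2B)$, using $r \in [0,1]^K$ and $\|\zeta_t\|_\infty \le \|\zeta_t\|_2 \le B$.

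From here the lower bound is immediate: for any action $a$,
\[
\pi_{\theta_t}(a) = \frac{e^{x_a}}{\sum_{a'} e^{x_{a'}}} \ge \frac{e^{\min_a x_a}}{K \, e^{\max_a x_a}} = \frac{1}{K}\, e^{-(\max_a x_a - \min_a x_a)} \ge \frac{1}{K}\, e^{-(1+2B)/\tau} =: c,
\]
and this $c$ depends only on $\tau$, $K$, and $\|\theta_1\|_\infty$, as required. The second claim follows trivially by summing the uniform bound, giving $\sum_{s=1}^{t-1} \min_a \pi_{\theta_s}(a) \ge c\,(t-1)$.

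I do not expect a serious obstacle here once \cref{lem:contraction_entropy_special} is in hand; in contrast to the unregularized case (\cref{lem:lower_bound_cT_softmax_special}), no asymptotic-convergence input or invariant-region argument is needed, precisely because entropy regularization forces $\|\zeta_t\|_2$ to be monotonically non-increasing toward the interior optimum $\pi_\tau^*$. The only point requiring care is the clean separation of the shift-invariant part, i.e. verifying that $\bar c_t \rvone$ contributes nothing to $\pi_{\theta_t}$, so that the entire estimate rests on the uniformly bounded component $\zeta_t$.
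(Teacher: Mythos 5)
Your proof is correct and follows essentially the same route as the paper's: a uniform bound $\|\zeta_t\|_2 \le \|\zeta_1\|_2$ obtained from \cref{lem:contraction_entropy_special} (the contraction factor being at most $1$), converted into a bound on the spread of the effective logits, and then the elementary estimate $\min_a \pi_{\theta_t}(a) \ge \frac{1}{K} e^{-\mathrm{spread}}$. Your bookkeeping is marginally tighter — using non-expansiveness of the orthogonal projection gives $\|\zeta_1\|_2 \le \sqrt{K}(\tau\|\theta_1\|_\infty + 1)$ rather than the paper's factor-$2$ bound, and softmax shift-invariance ($\pi_{\theta_t} = \softmax((r+\zeta_t)/\tau)$) streamlines the logit comparison — but the decomposition, the key lemma, and the structure of the final estimate are the same.
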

A closed-form expression for $c$ is given in the appendix.
Note that when $\tau=0$ (no regularization), the result would no longer hold true.
The key here 
is that $\min_a \pi_{\theta_t}(a) \to \min_a \pi^*_\tau(a)>0$ as $t\to\infty$
and the latter inequality holds thanks to $\tau>0$. 
From \cref{lem:matching_entropy_special,lem:lower_bound_min_prob_entropy_special}, it follows that entropy regularized softmax policy gradient enjoys a linear convergence rate:
\begin{theorem}
\label{thm:rates_entropy_special}
Using \cref{update_rule:entropy_special} with $\eta \le 1 / \tau$, for all $t \ge 1$,
\begin{align}
    \tilde{\delta}_t &\le \frac{ 2 ( \tau \norm{\theta_1}_\infty +1 )^2 K / \tau }{\exp\left\{ 2 \tau \eta \cdot c \cdot (t-1) \right\}},
\end{align}
where $\tilde{\delta}_t \coloneqq { \pi_\tau^* }^\top \left( r - \tau \log{ \pi_\tau^* } \right)  - \pi_{\theta_t}^\top \left( r - \tau \log{ \pi_{\theta_t} } \right)$ and $c>0$ is from \cref{lem:lower_bound_min_prob_entropy_special}.
\end{theorem}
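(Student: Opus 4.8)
The plan is to reduce the entire statement to a single quadratic sub-optimality bound,
\[
\tilde{\delta}_t \le \tfrac{1}{2\tau}\,\|\zeta_t\|_2^2,
\]
and then feed in the two quantitative lemmas already established. Once this quadratic bound is in hand, \cref{lem:matching_entropy_special} gives $\|\zeta_t\|_2^2 \le 4(\tau\norm{\theta_1}_\infty+1)^2 K \big/ \exp\{2\tau\eta \sum_{s=1}^{t-1}\min_a \pi_{\theta_s}(a)\}$, and \cref{lem:lower_bound_min_prob_entropy_special} lower bounds the exponent by $2\tau\eta c(t-1)$; substituting both into $\tilde{\delta}_t \le \tfrac{1}{2\tau}\|\zeta_t\|_2^2$ produces exactly the claimed rate, with the squared numerator and doubled exponent appearing precisely because of the square. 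So the whole proof rests on the quadratic sub-optimality inequality.

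\textbf{Step 1: rewrite the soft sub-optimality as a KL divergence.} Using $\pi_\tau^* = \softmax(r/\tau)$, write $r(a) = \tau\log\pi_\tau^*(a) + \tau\log Z$ with $Z = \sum_{a'}\exp(r(a')/\tau)$. Substituting this into both terms of $\tilde{\delta}_t = {\pi_\tau^*}^\top(r-\tau\log\pi_\tau^*) - \pi_{\theta_t}^\top(r-\tau\log\pi_{\theta_t})$, the $\tau\log Z$ contributions cancel (each policy sums to one) and the remainder collapses to $\tilde{\delta}_t = \tau\sum_a \pi_{\theta_t}(a)\log\frac{\pi_{\theta_t}(a)}{\pi_\tau^*(a)} = \tau\cdot\KL(\pi_{\theta_t}\,\|\,\pi_\tau^*)$.

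\textbf{Step 2: control the KL by $\|\zeta_t\|_2^2$.} The clean way is to recognize the KL between two softmax distributions as the Bregman divergence of the log-sum-exp potential $\phi(x) = \log\sum_a \exp(x(a))$: one verifies $\KL(\softmax(\theta_t)\,\|\,\softmax(r/\tau)) = \phi(r/\tau) - \phi(\theta_t) - \langle \softmax(\theta_t), r/\tau - \theta_t\rangle$, whose Hessian is exactly $\nabla^2\phi = H(\softmax(\cdot))$, the $H$ matrix of \cref{eq:H_matrix}. A mean-value Taylor expansion then gives this Bregman divergence as $\tfrac12 (r/\tau - \theta_t)^\top H(\pi_\xi)(r/\tau - \theta_t)$ for an intermediate $\pi_\xi = \softmax(\xi)$. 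Here the definition of $\zeta_t$ pays off: since $H(\pi)\mathbf{1} = \mathbf{1}^\top H(\pi) = 0$, the constant component of $r - \tau\theta_t$ is annihilated, so $(r/\tau - \theta_t)^\top H(\pi_\xi)(r/\tau - \theta_t) = \tfrac{1}{\tau^2}\zeta_t^\top H(\pi_\xi)\zeta_t$. Finally the spectral bound $H(\pi) = \diagonalmatrix(\pi) - \pi\pi^\top \preceq \diagonalmatrix(\pi)\preceq \identitymatrix$ yields $\zeta_t^\top H(\pi_\xi)\zeta_t \le \|\zeta_t\|_2^2$, whence $\KL(\pi_{\theta_t}\,\|\,\pi_\tau^*) \le \tfrac{1}{2\tau^2}\|\zeta_t\|_2^2$ and therefore $\tilde{\delta}_t \le \tfrac{1}{2\tau}\|\zeta_t\|_2^2$.

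The main obstacle is Step 2, and within it the careful bookkeeping of the \emph{centering}. The sub-optimality naturally wants to be expressed in the raw logit gap $r/\tau - \theta_t$, whereas the quantitative convergence of \cref{lem:contraction_entropy_special,lem:matching_entropy_special} is stated for the centered quantity $\zeta_t$; reconciling the two requires simultaneously the shift-invariance of $\softmax$ and the fact that $H(\pi)$ kills constant vectors, so that only the centered part contributes to the quadratic form. The remaining pieces — the KL identity of Step 1 and the uniform spectral bound $\lambda_{\max}(H(\pi))\le 1$ — are routine once this structural observation is in place, and combining the quadratic bound with \cref{lem:matching_entropy_special,lem:lower_bound_min_prob_entropy_special} then gives the stated linear rate.
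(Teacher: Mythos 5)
Your proposal is correct, and it follows the same overall skeleton as the paper's proof: both first collapse the soft sub-optimality to $\tilde{\delta}_t = \tau\cdot\KL(\pi_{\theta_t}\,\|\,\pi_\tau^*)$ (your partition-function cancellation is exactly the paper's observation that $r - \tau\log\pi_\tau^*$ is a constant vector, hence orthogonal to $\pi_\tau^* - \pi_{\theta_t}$), and both then finish by plugging \cref{lem:matching_entropy_special,lem:lower_bound_min_prob_entropy_special} into a quadratic bound $\tilde{\delta}_t \le \tfrac{1}{2\tau}\|\zeta_t\|_2^2$. Where you genuinely diverge is the middle step. The paper obtains the quadratic bound from its KL--logit inequality (\cref{lem:kl_logit_inequality}), proved via the $\ell_1$ strong convexity of negative entropy (a Pinsker-type bound) plus H{\"o}lder and the elementary inequality $ax - bx^2 \le a^2/(4b)$, which yields the slightly stronger statement $\KL(\pi_{\theta_t}\,\|\,\pi_\tau^*) \le \tfrac12\|\theta_t - r/\tau - c\cdot\rvone\|_\infty^2$ in the $\ell_\infty$ norm, afterwards relaxed to $\ell_2$. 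You instead identify the KL as the Bregman divergence of the log-sum-exp potential, apply Taylor with a mean-value remainder, note that the Hessian is exactly the $H$ matrix of \cref{eq:H_matrix}, use $H(\pi)\rvone = \rvzero$ to pass to the centered vector $\zeta_t$, and bound the quadratic form by $\lambda_{\max}(H(\pi_\xi)) \le 1$ (consistent with \cref{lem:golub_rank_one_perturb}). Your route is arguably more structural — it exposes why the centering in $\zeta_t$ is the right object (the Hessian annihilates constants) and reuses the $H$-matrix spectral facts already in the paper — at the cost of landing directly in the weaker $\ell_2$ bound; nothing is lost, however, because \cref{lem:matching_entropy_special} controls $\|\zeta_t\|_2$ anyway, and the resulting constants match the theorem exactly.
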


\subsubsection{General MDPs}
\label{sec:entropy_policy_gradient_general_mdps}

For general MDPs, the problem is to maximize $\tilde{V}^{\pi_\theta}(\rho)$ in \cref{eq:soft_value_function_objective}. The softmax optimal policy $\pi_\tau^*$ is known to satisfy the following consistency conditions \citep{nachum2017bridging}:
\begin{align}
\label{eq:path_consistency_conditions_1}
    \pi_\tau^*(a | s) &= \exp\left\{ ( \tilde{Q}^{\pi_\tau^*}(s, a) - \tilde{V}^{\pi_\tau^*}(s) ) / \tau \right\}, \\
\label{eq:path_consistency_conditions_2}
    \tilde{V}^{\pi_\tau^*}(s) &= \tau \log{ \sum_{a} \exp\left\{ \tilde{Q}^{\pi_\tau^*}(s, a) / \tau \right\}}.
\end{align}
Using a somewhat lengthy calculation, we show that the discounted entropy in \cref{eq:discounted_entropy} is smooth:
\begin{lemma}[Smoothness]
\label{lem:smoothness_entropy_general}
$\sH(\rho, \pi_\theta)$ is $(4 + 8 \log{A}) /(1-\gamma)^3$-smooth, where $A \coloneqq | \gA |$ is the total number of actions.
\end{lemma}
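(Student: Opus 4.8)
The plan is to bound the spectral norm of the Hessian of $\theta \mapsto \sH(\rho, \pi_\theta)$ uniformly in $\theta$. Since a twice-differentiable $f$ is $\beta$-smooth as soon as $\big\| \frac{d^2 f}{d \theta^2} \big\|_2 \le \beta$ everywhere, it suffices to fix an arbitrary unit vector $u$ and bound $|S''(0)|$ uniformly over $\theta$ and $u$, where $S(\alpha) \coloneqq \sH(\rho, \pi_{\theta + \alpha u})$. (That $S$ is twice differentiable is immediate, as $\pi_\theta$ is a smooth function of $\theta$.)

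First I would rewrite the discounted entropy in ``value-function'' form, $\sH(\rho, \pi_\theta) = \frac{1}{1-\gamma}\, \expectation_{s \sim d_{\rho}^{\pi_\theta}}\big[ \sH(\pi_\theta(\cdot|s)) \big]$, where $\sH(\pi_\theta(\cdot|s)) = -\sum_a \pi_\theta(a|s) \log \pi_\theta(a|s) \in [0, \log A]$ is the per-state Shannon entropy. Thus $\sH(\rho, \pi_\theta)$ is a discounted value whose per-step ``reward'' is $r_\theta(s,a) = -\log \pi_\theta(a|s)$, with two crucial features: the accumulated reward magnitude is effectively $\log A /(1-\gamma)$ through the entropy bound, and---unlike the ordinary value function---the reward itself depends on $\theta$.

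Differentiating $S$ twice along $u$ then produces three groups of terms, according to how the two derivatives distribute between the discounted occupancy $d_{\rho}^{\pi_\theta}$ and the reward $r_\theta$: \emph{(i)} both derivatives on the occupancy with the reward frozen; \emph{(ii)} cross terms; and \emph{(iii)} both derivatives on the reward with the occupancy frozen. Group \emph{(i)} is structurally the Hessian appearing in the proof of the smoothness of $V^{\pi_\theta}(\rho)$ (\cref{lem:smoothness_softmax_general}), except that the accumulated reward magnitude $1/(1-\gamma)$ is replaced by the entropy bound $\log A$; reusing the bounds on the first and second directional derivatives of the state-visitation distribution and summing over the discounted horizon contributes $8 \log A / (1-\gamma)^3$. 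Groups \emph{(ii)} and \emph{(iii)} are the genuinely new contributions, absent when the reward is $\theta$-independent; I would control them using the explicit softmax derivative $\frac{\partial \log\pi_\theta(a'|s)}{\partial \theta(s,a)} = \mathbb{1}[a=a'] - \pi_\theta(a|s)$ (a bounded first derivative of the reward) together with the fact that the second derivative of $r_\theta$ in $\theta(s,\cdot)$ is governed by the softmax Jacobian $H(\pi_\theta)$, whose norm is bounded. These additive curvature terms contribute the remaining $4/(1-\gamma)^3$.

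The main obstacle is bookkeeping the $\theta$-dependent reward cleanly, since $-\log\pi_\theta(a|s)$ is itself unbounded as $\pi_\theta(a|s) \to 0$. The resolution is that in every term this factor appears weighted by $\pi_\theta(a|s)$---both from the action expectation and from the entries of $H(\pi_\theta)$---and the resulting weighted quantities $\pi_\theta(a|s)\,(\log\pi_\theta(a|s))^2$ and $\sum_a \pi_\theta(a|s)\,|\log\pi_\theta(a|s)|$ are bounded by an absolute constant and by $\log A$, respectively. Collecting the three groups together with the geometric factors of $1/(1-\gamma)$ (two from occupancy derivatives, one from the discounted sum) and combining constants then yields $\beta \le (4 + 8\log A)/(1-\gamma)^3$.
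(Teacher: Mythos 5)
Your proposal is correct and follows essentially the same route as the paper's proof: the paper likewise writes the discounted entropy in occupancy-times-per-state-entropy form ($\sH^{\pi_{\theta_\alpha}}(s) = e_s^\top M(\alpha)\, h_{\theta_\alpha}$), bounds directional second derivatives by distributing the two derivatives over the occupancy and the $\theta$-dependent entropy ``reward'', reuses the $P$-derivative and $M$-norm bounds from \cref{lem:smoothness_softmax_general}, and controls $h_{\theta_\alpha}$ and its first and second derivatives exactly through the $\pi$-weighted $\log\pi$ bounds you describe. The only (harmless) imprecision is your allocation of constants: the cross terms of your group \emph{(ii)} involve the first derivative of the entropy reward and hence also carry a $\log A$ factor, so they must be absorbed into the $8\log A/(1-\gamma)^3$ budget together with group \emph{(i)} (which leaves enough slack for this), while it is only the reward-Hessian term of group \emph{(iii)} that supplies the $\log A$-free part requiring the extra $4/(1-\gamma)^3$ --- this is precisely how the paper's final combination of terms proceeds.
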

Our next key result shows that the augmented value function $\tilde{V}^{\pi_\theta}(\rho)$ satisfies a ``better type'' of \L{}ojasiewicz inequality:
\begin{lemma}[Non-uniform \L{}ojasiewicz] 
\label{lem:lojasiewicz_entropy_general}
Suppose $\mu(s) > 0$ for all state $s \in \gS$. Then,
\begin{align}
    \bigg\| \frac{\partial \tilde{V}^{{\pi_\theta}}(\mu) }{\partial \theta} \bigg\|_2 \ge C(\theta) \cdot \left[ \tilde{V}^{\pi_\tau^*}(\rho) - \tilde{V}^{{\pi_\theta}}(\rho) \right]^{\frac{1}{2}},
\end{align}
where 
\begin{align*}
    C(\theta) \coloneqq \frac{\sqrt{2 \tau}}{\sqrt{S}} \cdot \min_{s}{\sqrt{ \mu(s) } } \cdot \min_{s,a}{ \pi_\theta(a | s)  } \cdot \bigg\| \frac{d_{\rho}^{\pi_\tau^*} }{ d_{\mu}^{\pi_\theta}} \bigg\|_\infty^{-\frac{1}{2}}.
\end{align*}
\end{lemma}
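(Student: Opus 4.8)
The target has \L{}ojasiewicz exponent $\tfrac12$, which signals a Polyak--\L{}ojasiewicz (PL) inequality: it is the \emph{square} of the gradient norm that should control the sub-optimality, and the extra power is exactly what entropy regularization buys through strong convexity. The plan is to (i) reduce the gradient norm to a sum of per-state soft-advantage norms, (ii) bound the sub-optimality by a discounted sum of per-state soft-Bellman gaps, (iii) prove a per-state PL inequality linking the two, and (iv) chain everything. For (i), Lemma~\ref{lem:policy_gradient_entropy} gives $\partial\tilde{V}^{\pi_\theta}(\mu)/\partial\theta(s,a) = \tfrac{1}{1-\gamma}\, d_\mu^{\pi_\theta}(s)\,\pi_\theta(a|s)\,\tilde{A}^{\pi_\theta}(s,a)$, so squaring, summing over $(s,a)$, and pulling out the two minima yields
\[
\Big\| \frac{\partial \tilde{V}^{\pi_\theta}(\mu)}{\partial \theta} \Big\|_2^2 \ge \frac{ \min_{s,a} \pi_\theta(a|s)^2 \cdot \min_s d_\mu^{\pi_\theta}(s) }{(1-\gamma)^2} \sum_s d_\mu^{\pi_\theta}(s)\, \big\| \tilde{A}^{\pi_\theta}(s,\cdot) \big\|_2^2 .
\]
The factor $\min_{s,a}\pi_\theta(a|s)$ is the non-uniform part of $C(\theta)$, and $\min_s d_\mu^{\pi_\theta}(s)\ge (1-\gamma)\min_s\mu(s)$ will later supply $\min_s\sqrt{\mu(s)}$.

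For (ii), I would first establish a soft performance difference lemma by the same telescoping argument used for Lemma~\ref{lem:policy_gradient_softmax}, but with the reward replaced by $r-\tau\log\pi$:
\[
\tilde{V}^{\pi_\tau^*}(\rho) - \tilde{V}^{\pi_\theta}(\rho) = \frac{1}{1-\gamma}\, \expectation_{s\sim d_\rho^{\pi_\tau^*}} \Big[ \sum_a \pi_\tau^*(a|s)\, \tilde{A}^{\pi_\theta}(s,a) - \tau\, \KL\big( \pi_\tau^*(\cdot|s) \,\|\, \pi_\theta(\cdot|s) \big) \Big].
\]
Expanding $\tilde{A}^{\pi_\theta}$ and cancelling the $\log\pi_\theta$ terms against the KL, the bracket collapses to $\sum_a \pi_\tau^*(a|s)\big[\tilde{Q}^{\pi_\theta}(s,a) - \tau\log\pi_\tau^*(a|s)\big] - \tilde{V}^{\pi_\theta}(s)$, which by the variational identity $\max_{p\in\Delta}\sum_a p(a)\big[\tilde{Q}^{\pi_\theta}(s,a)-\tau\log p(a)\big] = \tau\log\sum_a \exp\{\tilde{Q}^{\pi_\theta}(s,a)/\tau\} =: \tilde{V}^*_s$ is at most $\delta_s := \tilde{V}^*_s - \tilde{V}^{\pi_\theta}(s) \ge 0$. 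Hence $\tilde{V}^{\pi_\tau^*}(\rho) - \tilde{V}^{\pi_\theta}(\rho) \le \tfrac{1}{1-\gamma}\sum_s d_\rho^{\pi_\tau^*}(s)\,\delta_s$, a discounted sum of one-step soft-greedy gaps.

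Step (iii) is the \emph{crux}. Observe that $\delta_s = \max_{p\in\Delta} F_s(p) - F_s(\pi_\theta(\cdot|s))$ for $F_s(p) := \sum_a p(a)\tilde{Q}^{\pi_\theta}(s,a) - \tau\sum_a p(a)\log p(a)$, and that $-F_s$ is $\tau$-strongly convex: the negative entropy $p\mapsto\sum_a p(a)\log p(a)$ is $1$-strongly convex with respect to $\|\cdot\|_1$ (Pinsker), hence also with respect to $\|\cdot\|_2$. Its maximizer $\bar{\pi}_s=\softmax(\tilde{Q}^{\pi_\theta}(s,\cdot)/\tau)$ lies in the interior of $\Delta$, so the standard implication ``strong convexity $\Rightarrow$ PL'' on the affine hull $\{v:\sum_a v(a)=0\}$ gives $\delta_s \le \tfrac{1}{2\tau}\|P\,\nabla F_s(\pi_\theta(\cdot|s))\|_2^2$, where $P$ projects onto that hull. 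Since $\nabla F_s(\pi_\theta(\cdot|s))(a) = \tilde{Q}^{\pi_\theta}(s,a) - \tau\log\pi_\theta(a|s) - \tau = \tilde{A}^{\pi_\theta}(s,a) + \text{const}$, and $P$ removes the constant while not increasing the norm, we obtain the per-state PL inequality
\[
\delta_s \le \frac{1}{2\tau}\, \big\| \tilde{A}^{\pi_\theta}(s,\cdot) \big\|_2^2 .
\]
This is exactly where entropy lifts the \L{}ojasiewicz degree from the $1$ of Lemma~\ref{lem:lojasiewicz_softmax_general} to $\tfrac12$: strong concavity of the entropy turns a bound that is linear in the gap into one that is quadratic in the gradient.

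Finally, for (iv) I would combine the three displays. The per-state PL bound and a change of measure $d_\rho^{\pi_\tau^*}(s) \le \|d_\rho^{\pi_\tau^*}/d_\mu^{\pi_\theta}\|_\infty\, d_\mu^{\pi_\theta}(s)$ give $\tilde{V}^{\pi_\tau^*}(\rho) - \tilde{V}^{\pi_\theta}(\rho) \le \tfrac{1}{2\tau(1-\gamma)}\|d_\rho^{\pi_\tau^*}/d_\mu^{\pi_\theta}\|_\infty \sum_s d_\mu^{\pi_\theta}(s)\|\tilde{A}^{\pi_\theta}(s,\cdot)\|_2^2$; substituting the Step-(i) gradient bound and $\min_s d_\mu^{\pi_\theta}(s)\ge(1-\gamma)\min_s\mu(s)$ controls the right-hand side by a constant multiple of $\|\partial\tilde{V}^{\pi_\theta}(\mu)/\partial\theta\|_2^2$. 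Rearranging and taking square roots produces $\|\partial\tilde{V}^{\pi_\theta}(\mu)/\partial\theta\|_2 \ge C(\theta)\,[\tilde{V}^{\pi_\tau^*}(\rho) - \tilde{V}^{\pi_\theta}(\rho)]^{1/2}$ with $C(\theta)$ collecting the factors $\sqrt{2\tau}$, $\min_s\sqrt{\mu(s)}$, $\min_{s,a}\pi_\theta(a|s)$ and $\|d_\rho^{\pi_\tau^*}/d_\mu^{\pi_\theta}\|_\infty^{-1/2}$ (the $1/\sqrt{S}$ arises from a Cauchy--Schwarz step passing from the per-state sum to the Euclidean gradient norm over the $S$ states). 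I expect the genuine difficulty to be concentrated in Step (iii): making the strong-convexity/PL argument rigorous on the constrained simplex --- interiority of $\bar{\pi}_s$, the correct (self-dual $\ell_2$) norm, and the identification of the projected gradient with $\tilde{A}^{\pi_\theta}(s,\cdot)$ --- while the remaining steps are telescoping and term-wise estimates.
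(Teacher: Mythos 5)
Your proof is correct, and it shares the paper's skeleton: expand the sub-optimality over $d_\rho^{\pi_\tau^*}$, bound the per-state bracket by the soft-greedy value (your $\delta_s$ is exactly the paper's $\tau\cdot\KL(\pi_\theta(\cdot|s)\,\|\,\bar{\pi}_\theta(\cdot|s))$ with $\bar{\pi}_\theta(\cdot|s)=\softmax(\tilde{Q}^{\pi_\theta}(s,\cdot)/\tau)$), exploit strong convexity of negative entropy, change measure to $d_\mu^{\pi_\theta}$, and extract the $\min_{s,a}\pi_\theta(a|s)$ factor from the gradient. Where you genuinely differ is in how the pieces are combined. The paper takes square roots early, so it must pass from $\big[\sum_s d(s)\,x_s^2\big]^{1/2}$ to $\sum_s \sqrt{d(s)}\,x_s$, and then lower-bound the gradient norm by $\tfrac{1}{\sqrt{S}}\sum_s\|\partial \tilde{V}^{\pi_\theta}(\mu)/\partial\theta(s,\cdot)\|_2$ via Cauchy--Schwarz, paying the $1/\sqrt{S}$; its per-state step goes through the $\ell_\infty$ KL--logit inequality (\cref{lem:kl_logit_inequality}) plus the spectral bound $\|H(\pi)x\|_2\ge\min_a\pi(a)\,\|x\|_2$ on centered vectors (\cref{lem:norm_decay_entropy_special}). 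You instead keep everything squared --- a true PL-style matching of $\sum_s d_\mu^{\pi_\theta}(s)\,\|\tilde{A}^{\pi_\theta}(s,\cdot)\|_2^2$ on both sides --- and prove the per-state inequality $\delta_s\le\tfrac{1}{2\tau}\|\tilde{A}^{\pi_\theta}(s,\cdot)\|_2^2$ from $\tau$-strong convexity of $-F_s$ restricted to the affine hull, with the projected gradient equal to $\tilde{A}^{\pi_\theta}(s,\cdot)$ up to a constant shift that the projection annihilates (and the projection is non-expansive). Both per-state arguments ultimately rest on Pinsker, so they are equivalent in substance; but your quadratic matching never needs the cross-state Cauchy--Schwarz, and chasing your own constants yields
\begin{align*}
    \bigg\| \frac{\partial \tilde{V}^{{\pi_\theta}}(\mu) }{\partial \theta} \bigg\|_2 \ge \sqrt{S}\cdot C(\theta) \cdot \left[ \tilde{V}^{\pi_\tau^*}(\rho) - \tilde{V}^{{\pi_\theta}}(\rho) \right]^{\frac{1}{2}},
\end{align*}
which is stronger than the stated lemma by a factor of $\sqrt{S}$ and implies it since $S\ge 1$.

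Two small corrections to your write-up, neither of which is a gap. First, your closing parenthetical --- that the $1/\sqrt{S}$ in $C(\theta)$ ``arises from a Cauchy--Schwarz step'' --- contradicts your own argument: no such step occurs in your chain, and the honest statement is that your bound has no $\sqrt{S}$ loss at all; just invoke $S\ge 1$ to recover the stated $C(\theta)$. Second, the soft performance-difference identity in step (ii) comes from the telescoping argument behind the performance difference lemma (\cref{lem:performance_difference_general}), not from \cref{lem:policy_gradient_softmax}; the identity itself is correct --- the KL term cancels the $\log\pi_\theta$ inside $\tilde{A}^{\pi_\theta}$ exactly as you say, recovering the expression $\frac{1}{1-\gamma}\sum_s d_\rho^{\pi_\tau^*}(s)\big[\sum_a \pi_\tau^*(a|s)\big(\tilde{Q}^{\pi_\theta}(s,a)-\tau\log\pi_\tau^*(a|s)\big)-\tilde{V}^{\pi_\theta}(s)\big]$ that the paper derives inline.
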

The main difference to the previous versions of the non-uniform
\L{}ojasiewicz inequality
  is that the sub-optimality gap appears under the square root.
For small sub-optimality gaps this means that the gradient must be larger -- a stronger ``signal''.
Next, we show that action probabilities are still uniformly bounded away from zero:
\begin{lemma}
\label{lem:lower_bound_min_prob_entropy_general}
Using \cref{alg:policy_gradient_softmax} with 
the entropy regularized objective, 
we have $c:=\inf_{t \ge 1} \min_{s,a} { \pi_{\theta_t}(a | s) } > 0$.
\end{lemma}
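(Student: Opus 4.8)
The plan is to reduce the claim to two facts: that every limit point of the policy sequence $\{\pi_{\theta_t}\}$ lies in the interior of the product of simplices, and that the unique interior limit (necessarily $\pi_\tau^*$) has minimum probability bounded away from zero. First I would invoke smoothness: by \cref{lem:smoothness_softmax_general,lem:smoothness_entropy_general} the objective $\theta\mapsto\tilde{V}^{\pi_\theta}(\mu)$ is $\beta$-smooth, so running \cref{alg:policy_gradient_softmax} with $\eta\le 1/\beta$ gives the standard ascent guarantee $\tilde{V}^{\pi_{\theta_{t+1}}}(\mu)\ge\tilde{V}^{\pi_{\theta_t}}(\mu)+\tfrac{\eta}{2}\|\partial\tilde{V}^{\pi_{\theta_t}}(\mu)/\partial\theta\|_2^2$. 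Since $\tilde{V}^{\pi_\theta}(\mu)$ is bounded above, this forces $\sum_t\|\partial\tilde{V}^{\pi_{\theta_t}}(\mu)/\partial\theta\|_2^2<\infty$, hence $\|\partial\tilde{V}^{\pi_{\theta_t}}(\mu)/\partial\theta\|_2\to0$ and $\theta_{t+1}-\theta_t\to0$. Under \cref{ass:posinit} one has $d_\mu^{\pi_{\theta_t}}(s)\ge(1-\gamma)\mu(s)>0$, so by \cref{lem:policy_gradient_entropy} any interior limit point $\bar\pi$ satisfies $\tilde{A}^{\bar\pi}(s,a)=0$ everywhere, which is exactly the consistency condition \cref{eq:path_consistency_conditions_1} and forces $\bar\pi=\pi_\tau^*$. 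From \cref{eq:path_consistency_conditions_1} together with the bound $\tilde{Q}^{\pi_\tau^*}(s,a),\tilde{V}^{\pi_\tau^*}(s)\in[0,R]$, $R:=(1+\tau\log A)/(1-\gamma)$ (bounded reward by \cref{asmp:bounded_reward}, plus bounded entropy rate), one reads off $\min_{s,a}\pi_\tau^*(a|s)\ge e^{-R/\tau}>0$. Granting interior convergence, the conclusion follows exactly as in the bandit case: $\min_{s,a}\pi_{\theta_t}(a|s)\to\min_{s,a}\pi_\tau^*(a|s)>0$, so the tail is bounded below by half this value, while each of the finitely many early iterates has $\min_{s,a}\pi_{\theta_t}(a|s)>0$ (softmax is strictly positive); taking the infimum gives $c>0$.

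The step that does the real work is ruling out boundary limit points, and here $\theta$-stationarity alone is useless: if $\bar\pi(a_0|s_0)=0$ then $\tilde{A}^{\bar\pi}(s_0,a_0)=+\infty$ through the $-\tau\log\pi_\theta(a_0|s_0)$ term, yet the product $\pi_\theta(a_0|s_0)\tilde{A}^{\bar\pi}(s_0,a_0)$ still vanishes because $x\log(1/x)\to0$, so the first-order condition holds on the boundary too. The mechanism I would exploit instead is the self-correcting sign of the entropy gradient. Writing $\tilde{A}^{\pi_{\theta_t}}(s,a)=-\tau\log\pi_{\theta_t}(a|s)+\big(\tilde{Q}^{\pi_{\theta_t}}(s,a)-\tilde{V}^{\pi_{\theta_t}}(s)\big)$ with the bracketed term in $[-R,R]$, one sees that whenever $\pi_{\theta_t}(a|s)<p^\ast:=e^{-R/\tau}$ the increment $\eta\,\partial\tilde{V}^{\pi_{\theta_t}}(\mu)/\partial\theta(s,a)$ is strictly positive, so the logit $\theta_t(s,a)$ is pushed up; moreover it is uniformly bounded, $|\eta\,\partial\tilde{V}^{\pi_{\theta_t}}(\mu)/\partial\theta(s,a)|\le\kappa:=\tfrac{\eta}{1-\gamma}(\tau/e+R)$, using $x\log(1/x)\le1/e$. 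The aim is to turn this restoring push into a uniform lower bound on the per-state logit gap $\max_{a'}\theta_t(s,a')-\theta_t(s,a)$, which immediately yields $\pi_{\theta_t}(a|s)\ge A^{-1}e^{-(\max_{a'}\theta_t(s,a')-\theta_t(s,a))}$ bounded below, mirroring how bounded $\|\zeta_t\|_2$ controlled the logit spread in the bandit proof of \cref{lem:lower_bound_min_prob_entropy_special}.

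I expect the coupling through the normalizer to be the main obstacle. The sign of the increment controls the raw logit $\theta_t(s,a)$, but $\pi_{\theta_t}(a|s)=e^{\theta_t(s,a)}/\sum_{a'}e^{\theta_t(s,a')}$ can still dip if the competing logits rise faster, and because the ascent guarantee only gives $\sum_t\|\partial\tilde{V}^{\pi_{\theta_t}}(\mu)/\partial\theta\|_2^2<\infty$ (not summability of the norms themselves) one cannot bound the cumulative drift of $\log\pi_{\theta_t}(a|s)$ by magnitudes alone — the sign structure above must be used in an essential way. In the bandit case this difficulty dissolved because \cref{lem:contraction_entropy_special} exhibits an exact contraction of the centered vector $\zeta_t$, pinning the logit spread for free; there is no such clean contraction for MDPs, since the soft advantages couple across states through $\tilde{V}^{\pi_\theta}$. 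The crux is therefore a discrete barrier argument: track $\log\pi_{\theta_t}(a|s)$ and show that whenever it drops below $\log p^\ast$ the positive increment on $\theta_t(s,a)$, combined with the one-step control on $\log\sum_{a'}e^{\theta_t(s,a')}$ (which is $1$-Lipschitz in the logits, hence moves by at most $\kappa$ per step), prevents it from drifting to $-\infty$, yielding a floor of order $\min\{\min_{s,a}\log\pi_{\theta_1}(a|s),\ \log p^\ast-O(\kappa)\}$. Once established, this uniform $c>0$ is exactly what makes the constant $C(\theta_t)$ in \cref{lem:lojasiewicz_entropy_general} uniformly bounded away from zero, which is the role this lemma plays in the subsequent linear-rate proof.
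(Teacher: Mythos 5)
You correctly isolate the crux---ruling out drift of some $\pi_{\theta_t}(a|s)$ toward zero---and your key mechanism is the same one the paper exploits: since $\tilde{Q}^{\pi_\theta}(s,a)\in[0,R]$ and $\tilde{V}^{\pi_\theta}(s)\in[0,R]$ with $R=(1+\tau\log A)/(1-\gamma)$, the soft advantage $\tilde{A}^{\pi_{\theta_t}}(s,a)=-\tau\log\pi_{\theta_t}(a|s)+\big(\tilde{Q}^{\pi_{\theta_t}}(s,a)-\tilde{V}^{\pi_{\theta_t}}(s)\big)$ is nonnegative whenever $\pi_{\theta_t}(a|s)\le p^*:=e^{-R/\tau}$, so the logit of any sufficiently improbable action cannot decrease. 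However, your barrier argument as stated has a genuine gap: nonnegativity of the own-logit increment plus per-step $\kappa$-Lipschitz control of $\log\sum_{a'}e^{\theta_t(s,a')}$ do \emph{not} prevent $\log\pi_{\theta_t}(a|s)$ from drifting to $-\infty$. During an excursion below the barrier you only get $\log\pi_{\theta_{t+1}}(a|s)\ge\log\pi_{\theta_t}(a|s)-\kappa$ per step, and nothing you cite bounds the \emph{cumulative} growth of the normalizer. Concretely, the constraints you impose (every increment in $[-\kappa,\kappa]$; increment $\ge 0$ whenever the probability is below $p^*$) admit the following trajectory in a two-action state: $\theta_t(s,a_1)$ increases by $\kappa$ at every step while $\theta_t(s,a_0)$ is frozen; all your constraints hold, yet $\pi_{\theta_t}(a_0|s)\to 0$. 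So the claimed floor $\min\{\min_{s,a}\log\pi_{\theta_1}(a|s),\ \log p^*-O(\kappa)\}$ does not follow from the stated ingredients.

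The missing ingredient is the per-state zero-sum structure of the softmax gradient, $\sum_a \partial\tilde{V}^{\pi_\theta}(\mu)/\partial\theta(s,a)=0$, which makes $\sum_a\theta_t(s,a)$ invariant in $t$; this is precisely what kills the counterexample, since one logit cannot rise forever unless another falls forever, and a falling logit eventually has probability below $p^*$, after which it cannot fall. Quantitatively, invariance gives $\log\sum_{a'}e^{\theta_t(s,a')}\ge S_1/A+\log A$ with $S_1:=\sum_{a'}\theta_1(s,a')$ (by Jensen), so any step at which a logit decreases starts from level at least $S_1/A+\log A+\log p^*$; by induction every logit is bounded below by $\min\big\{\min_{a'}\theta_1(s,a'),\ S_1/A+\log A+\log p^*-\kappa\big\}$, and then invariance bounds every logit (hence the normalizer) above, yielding the uniform floor on $\min_{s,a}\pi_{\theta_t}(a|s)$. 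Note this is also the fact the paper leans on: its proof is asymptotic---monotone convergence of $\tilde{V}$, an assumed limit $\pi_{\theta_\infty}$, then a contradiction combining the same sign argument with the zero-sum identity (the surviving actions' logit sum must be non-increasing yet diverge), and finally identification of the interior limit with $\pi_\tau^*$ via the consistency conditions, which matches your first paragraph. If you add the zero-sum ingredient, your route is actually a genuinely different and in one respect stronger argument: it is non-asymptotic and produces an explicit constant $c$ in the spirit of the bandit case (\cref{lem:lower_bound_min_prob_entropy_special}), rather than relying on convergence of the policy sequence.
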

With \cref{lem:smoothness_entropy_general,lem:lojasiewicz_entropy_general,lem:lower_bound_min_prob_entropy_general}, we show a $O(e^{-t})$ rate for entropy regularized policy gradient in general MDPs:
\begin{theorem}
\label{thm:final_rates_entropy_general}
Suppose $\mu(s) > 0$ for all state $s$. Using \cref{alg:policy_gradient_softmax} with the entropy regularized objective and softmax parametrization and $\eta = (1 - \gamma)^3/(8 + \tau ( 4 + 8 \log{A}))$,
there exists a constant $C>0$ such that
 for all $t \ge 1$,
\begin{align*}
    \tilde{V}^{\pi_\tau^*}(\rho) - \tilde{V}^{\pi_{\theta_t}}(\rho) \le \bigg\| \frac1\mu \bigg\|_\infty  \cdot \frac{1 + \tau \log{A}}{(1 - \gamma)^2}
    \, \cdot \, e^{-C(t-1)}
    \,.
\end{align*}
\end{theorem}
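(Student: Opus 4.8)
The plan is to run the standard ``smoothness $+$ gradient domination'' recipe for gradient ascent, but driven by the non-uniform \L{}ojasiewicz inequality of \cref{lem:lojasiewicz_entropy_general} (whose exponent $1/2$ is exactly what upgrades a polynomial rate to a linear one), and then to transport the resulting guarantee from the sampling distribution $\mu$ to the evaluation distribution $\rho$. Throughout I would track the $\mu$-measured sub-optimality $\delta_t := \tilde{V}^{\pi_\tau^*}(\mu) - \tilde{V}^{\pi_{\theta_t}}(\mu) \ge 0$ and aim to show $\delta_t \le e^{-C(t-1)}\,\delta_1$ for a positive, $t$-independent constant $C$.

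First I would pin down the smoothness constant. Since $\tilde{V}^{\pi_\theta}(\mu) = V^{\pi_\theta}(\mu) + \tau \cdot \sH(\mu, \pi_\theta)$, adding the bound of \cref{lem:smoothness_softmax_general} ($8/(1-\gamma)^3$) to $\tau$ times the bound of \cref{lem:smoothness_entropy_general} shows $\theta \mapsto \tilde{V}^{\pi_\theta}(\mu)$ is $\beta$-smooth with $\beta = (8 + \tau(4 + 8\log A))/(1-\gamma)^3$; this is precisely why the step size is $\eta = 1/\beta$. Smoothness with $\eta = 1/\beta$ gives the usual ascent guarantee $\delta_t - \delta_{t+1} \ge \tfrac{1}{2\beta}\,\big\| \partial \tilde{V}^{\pi_{\theta_t}}(\mu)/\partial\theta \big\|_2^2$. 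Applying \cref{lem:lojasiewicz_entropy_general} with the evaluation distribution set equal to $\mu$ yields $\|\cdots\|_2^2 \ge C(\theta_t)^2\,\delta_t$, so $\delta_{t+1} \le (1 - C(\theta_t)^2/(2\beta))\,\delta_t$.

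The crux is to replace $C(\theta_t)$ by a uniform positive lower bound $\bar{C}$. All factors of $C(\theta)$ are fixed constants except $\min_{s,a}\pi_{\theta_t}(a|s)$ and the mismatch ratio $\|d_\mu^{\pi_\tau^*}/d_\mu^{\pi_{\theta_t}}\|_\infty$. The former is bounded below by the constant $c>0$ of \cref{lem:lower_bound_min_prob_entropy_general} (this is the step that genuinely fails without regularization, since only $\tau>0$ keeps $\min_{s,a}\pi_\tau^*(a|s)>0$ and hence the iterates away from the boundary). The ratio is controlled from above using $d_\mu^{\pi_{\theta_t}}(s) \ge (1-\gamma)\mu(s)$ together with $d_\mu^{\pi_\tau^*}(s)\le 1$, giving a $t$-independent bound. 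Hence $C(\theta_t) \ge \bar{C} > 0$, and with $1 - x \le e^{-x}$ the recursion $\delta_{t+1} \le (1 - \bar{C}^2/(2\beta))\,\delta_t$ unrolls to $\delta_t \le e^{-C(t-1)}\,\delta_1$ with $C := \bar{C}^2/(2\beta)$.

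Finally I would transfer from $\mu$ to $\rho$ and bound $\delta_1$. Because $\pi_\tau^*$ is the soft-optimal policy and is optimal \emph{from every state}, the per-state gaps $g_t(s) := \tilde{V}^{\pi_\tau^*}(s) - \tilde{V}^{\pi_{\theta_t}}(s)$ are all nonnegative; therefore $\tilde{V}^{\pi_\tau^*}(\rho) - \tilde{V}^{\pi_{\theta_t}}(\rho) = \sum_s \rho(s) g_t(s) \le \|1/\mu\|_\infty \sum_s \mu(s) g_t(s) = \|1/\mu\|_\infty\,\delta_t$, which produces the advertised $\|1/\mu\|_\infty$ prefactor. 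Bounding the initial gap by the range of the soft value, $\delta_1 \le (1+\tau\log A)/(1-\gamma)$ (using $V^{\pi}\le 1/(1-\gamma)$ and discounted entropy $\le \log A/(1-\gamma)$), and absorbing slack into the constants, yields the stated bound. The main obstacle is the uniform lower bound on the \L{}ojasiewicz constant $C(\theta_t)$: everything downstream is mechanical, but without the guarantee that $\min_{s,a}\pi_{\theta_t}(a|s)$ stays bounded away from zero (\cref{lem:lower_bound_min_prob_entropy_general}) the contraction factor would degrade to $1$ and the linear rate would collapse, exactly as it must in the unregularized case.
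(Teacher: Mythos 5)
Your proof is correct, and its engine coincides exactly with the paper's: the smoothness constant $\beta = (8+\tau(4+8\log A))/(1-\gamma)^3$ obtained by adding \cref{lem:smoothness_softmax_general} and $\tau$ times \cref{lem:smoothness_entropy_general}, the ascent lemma with $\eta = 1/\beta$, \cref{lem:lojasiewicz_entropy_general} applied with evaluation distribution $\mu$, the uniform lower bound on $\min_{s,a}\pi_{\theta_t}(a|s)$ from \cref{lem:lower_bound_min_prob_entropy_general} together with $d_{\mu}^{\pi_{\theta_t}}(s)\ge(1-\gamma)\mu(s)$ to control the mismatch ratio, the unrolling via $1-x\le e^{-x}$, and the initial-gap bound $\tilde\delta_1 \le (1+\tau\log A)/(1-\gamma)$. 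The one place you genuinely diverge is the transfer from $\mu$ to $\rho$: the paper routes this through the soft sub-optimality lemma (\cref{lem:soft_suboptimality}), writing both gaps as $d^{\pi_{\theta_t}}$-weighted KL divergences and comparing the weights, which costs a factor $\frac{1}{1-\gamma}\big\| \frac1\mu \big\|_\infty$; you instead use that $\pi_\tau^*$ is soft-optimal from every state, so the per-state gaps $g_t(s)$ are nonnegative, and then $\rho(s) \le 1 \le \big\| \frac1\mu \big\|_\infty \mu(s)$ gives the transfer at cost only $\big\| \frac1\mu \big\|_\infty$. Your route is more elementary and in fact tighter by a factor of $\frac{1}{1-\gamma}$, so the stated bound (which carries $\frac{1}{(1-\gamma)^2}$) follows a fortiori. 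One small point should be made explicit rather than asserted: the state-wise optimality of $\pi_\tau^*$, i.e.\ $\tilde V^{\pi_\tau^*}(s) \ge \tilde V^{\pi}(s)$ for every $s$ and every $\pi$, is not a definition but a fact; it follows, e.g., from \cref{lem:soft_suboptimality} applied with $\rho$ a point mass at $s$ (the right-hand side there is a nonnegative KL expression), or from the consistency conditions of \citet{nachum2017bridging}, so it is available in the paper's toolkit and just needs a citation.
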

The value of the constant $C$ in this theorem appears in the proof of the result in the appendix in a closed form.

\subsubsection{Controlling the Bias} 
As noted in \cref{rmk:biased_softmax_optimal_policy}, $\pi_\tau^*$ is biased, i.e., $\pi_\tau^* \not= \pi^*$ for fixed $\tau > 0$. We discuss two possible approaches to deal with the bias, but much remains to be done to properly address the bias.
For simplicity, we consider the bandit case.
\paragraph{A two-stage approach.}
Note that for any fixed $\tau>0$, $\pi_\tau^*(a^*) \ge \pi_\tau^*(a)$ for all $a \not= a^*$. Therefore, using policy gradient with $\pi_{\theta_1} = \pi_\tau^*$, we have $\pi_{\theta_t}(a^*) \ge c_t \ge 1/K$. This suggests a two-stage method: first, 
to ensure $\pi_{\theta_t}(a^*) \ge \max_a \pi_{\theta_t}(a)$, use entropy-regularized policy gradient some iterations and then turn off regularization.
\begin{theorem}
\label{thm:rates_two_stage_special}
Denote $\Delta = r(a^*) - \max_{a \not= a^*}{ r(a) } > 0$. Using \cref{update_rule:entropy_special} for $t_1 \in O( e^{ 1/ \tau }  \cdot \log{( \frac{ \tau + 1}{\Delta } } ) )$ iterations and then \cref{update_rule:softmax_special} for $t_2 \ge 1$ iterations, we have,
\begin{align}
    ( \pi^* - \pi_{\theta_t} )^\top r \le 5 / ( {C}^2 \cdot t_2),
\end{align}
where $t = t_1 + t_2$, and $C \in [1/K, 1)$.
\end{theorem}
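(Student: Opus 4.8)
The plan is to split the analysis along the two stages and to port the $O(1/t)$ bandit bound of \cref{thm:final_rates_softmax_special} to the second stage, where the only thing the first stage must buy us is a favorable starting condition. \textbf{Stage two (the core).} I would first isolate the structural fact that makes the unregularized phase work: \cref{update_rule:softmax_special} preserves the property that $a^*$ is a (weakly) most probable action. Writing $\bar r_t = \pi_{\theta_t}^\top r$ and using that for softmax $\pi_{\theta_t}(a^*)\ge\pi_{\theta_t}(a)$ iff $\theta_t(a^*)\ge\theta_t(a)$, I would examine
\[
\theta_{t+1}(a^*)-\theta_{t+1}(a)=\big(\theta_t(a^*)-\theta_t(a)\big)+\eta\big(\pi_{\theta_t}(a^*)(r(a^*)-\bar r_t)-\pi_{\theta_t}(a)(r(a)-\bar r_t)\big),
\]
and show the bracketed gradient difference is nonnegative by splitting on $r(a)\le\bar r_t$ (where $r(a^*)=\max_a r(a)\ge\bar r_t$ makes both terms help) versus $r(a)>\bar r_t$ (where $\pi_{\theta_t}(a^*)\ge\pi_{\theta_t}(a)$ and $r(a^*)-\bar r_t\ge r(a)-\bar r_t>0$ dominate). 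By induction, once $a^*$ is the argmax it stays the argmax, so $\pi_{\theta_t}(a^*)=\max_a\pi_{\theta_t}(a)\ge 1/K$ for every stage-two iterate.

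Consequently, if stage two is started from a logit at which $a^*$ is already the most probable action, then $C:=\inf_{t>t_1}\pi_{\theta_t}(a^*)\ge 1/K$, and trivially $C<1$, giving $C\in[1/K,1)$. Re-indexing so that the stage-two initial policy plays the role of the initialization in \cref{sec:policy_gradient_one_state_case}, applying \cref{thm:final_rates_softmax_special} (equivalently the pseudo-rate \cref{lem:pseudo_rates_softmax_special}, whose $c_{t_2}\ge C$) with $\eta=2/5$ over $t_2$ steps yields $(\pi^*-\pi_{\theta_{t_1+t_2}})^\top r\le 5/(C^2 t_2)$, which is exactly the claimed bound. Note this route avoids needing monotonicity of $\pi_{\theta_t}(a^*)$: merely keeping $a^*$ as the argmax already forces $\pi_{\theta_t}(a^*)\ge 1/K$.

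\textbf{Stage one (the quantitative part).} It remains to choose $t_1$ so that $a^*$ is the most probable action when stage two begins. Since $\pi_\tau^*=\softmax(r/\tau)$ has $a^*$ strictly dominant, I would quantify the margin in logit space using $\zeta_t$ from \cref{lem:contraction_entropy_special}: because the common shift cancels, $\tau(\theta_t(a^*)-\theta_t(a))=(r(a^*)-r(a))+(\zeta_t(a^*)-\zeta_t(a))\ge\Delta-2\norm{\zeta_t}_2$, so $\norm{\zeta_t}_2<\Delta/2$ already forces $\theta_t(a^*)>\theta_t(a)$ for all $a\ne a^*$, i.e.\ $a^*$ is the argmax. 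Plugging this threshold into the linear-rate bound of \cref{lem:matching_entropy_special} together with the uniform lower bound $\min_a\pi_{\theta_s}(a)\ge c$ of \cref{lem:lower_bound_min_prob_entropy_special} gives the requirement $t_1-1\gtrsim \frac{1}{\tau\eta c}\log\!\big(\tfrac{4(\tau\norm{\theta_1}_\infty+1)\sqrt K}{\Delta}\big)$.

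\textbf{Main obstacle.} The delicate step is making the $e^{1/\tau}$ factor explicit. The constant $c$ of \cref{lem:lower_bound_min_prob_entropy_special} tracks $\min_a\pi_\tau^*(a)$, and since rewards lie in $[0,1]$ one has $\min_a\pi_\tau^*(a)\ge\frac1K e^{-1/\tau}$, whence $1/c\lesssim K e^{1/\tau}$; feeding this into the display above, with $\tau\eta=O(1)$, produces $t_1\in O\!\big(e^{1/\tau}\log(\tfrac{\tau+1}{\Delta})\big)$. I expect the only real friction to be the bookkeeping that turns ``$\norm{\zeta_{t_1}}_2<\Delta/2$'' into this clean $O$-statement — handling the $\norm{\theta_1}_\infty$ dependence and absorbing $K$-factors into constants — whereas the stage-two preservation argument is elementary once the case split above is set up.
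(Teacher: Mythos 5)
Your proposal is correct and follows essentially the same route as the paper's proof: stage one uses the contraction machinery (\cref{lem:contraction_entropy_special,lem:matching_entropy_special,lem:lower_bound_min_prob_entropy_special}) to force $\|\zeta_{t_1}\|_2 \le \Delta/2$ and hence make $a^*$ the most probable action at the hand-off, and stage two shows this property is preserved by \cref{update_rule:softmax_special} so that $c_{t_2} \ge 1/K$ and \cref{lem:pseudo_rates_softmax_special} yields $5/(C^2 t_2)$ — your direct case-split invariance argument is exactly the mechanism the paper invokes through its $\gR_2 \subset \gR_1$ and $\gR_1$-invariance claims. The one imprecision, $1/c \lesssim K e^{1/\tau}$, understates the closed-form constant of \cref{lem:lower_bound_min_prob_entropy_special} (which carries an extra $e^{4(\norm{\theta_1}_\infty + 1/\tau)\sqrt{K}}$ factor), but the paper's own $O(e^{1/\tau} \cdot \log(\frac{\tau+1}{\Delta}))$ statement suppresses the same factor, so your conclusion matches the theorem as stated.
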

This approach removes the nasty dependence on the choice of the initial parameters.
While this dependence is also removed if we initialize with the uniform policy, uniform initialization is insufficient
if only noisy estimates of the gradients are available. However, we leave the study of this case for future work.
An obvious problem with this approach is that $\Delta$ is unknown. 
This can be helped by exiting the first phase when we detect ``convergence'' e.g. by detecting that the relative change of the policy is small.
\paragraph{Decreasing the penalty.} 
Another simple idea is to decrease the strength of regularization, e.g., set $\tau_t \in O( 1/ \log{t} )$. 
Consider the following update, which is a slight variation of the previous one:
\begin{update_rule}
\label{update_rule:decaying_entropy_special}
$\theta_{t+1} \gets \frac{\tau_t }{ \tau_{t+1}} \cdot ( \theta_t + \eta_t \cdot H(\pi_{\theta_t}) (r - \tau_t \log{\pi_{\theta_t}} ) )$.
\end{update_rule}
The rationale for the scaling factor is that it allows one to prove a variant of
 \cref{lem:contraction_entropy_special}.
While this is promising, the proof cannot be finished as before.
The difficulty is that $\pi_{\theta_t}\to \pi^*$ (which is what we want to achieve) implies that 
$\min_{a}{ \pi_{\theta_t}(a) } \to 0$, which prevents the use of our previous proof technique. We show the following partial results.
\begin{theorem}
\label{thm:rates_decaying_entropy_special}
Using \cref{update_rule:decaying_entropy_special} with $\tau_t = \frac{\alpha \cdot \Delta}{\log{t}}$ for $t \ge 2$, where $\alpha > 0$, and $\eta_t = 1/\tau_t$, we have, for all $t \ge 1$,
\begin{align*}
    ( \pi^* - \pi_{\theta_t} )^\top r \le \frac{K}{t^{1/\alpha}} + \frac{ C \cdot  \log{t}  }{\exp\{ \sum_{s=1}^{t-1}{ \min_{a}{ \pi_{\theta_s}(a) } }  \}},
\end{align*}
where $C \coloneqq \frac{2 ( \tau_1 \norm{\theta_1}_\infty +1 ) \sqrt{K}}{\alpha \cdot \Delta}$.
\end{theorem}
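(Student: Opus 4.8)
The plan is to split the sub-optimality at the current temperature $\tau_t$ into a \emph{bias} term and an \emph{optimization} term,
\[
(\pi^* - \pi_{\theta_t})^\top r = \underbrace{(\pi^* - \pi_{\tau_t}^*)^\top r}_{\text{bias}} + \underbrace{(\pi_{\tau_t}^* - \pi_{\theta_t})^\top r}_{\text{optimization error}},
\]
where $\pi_{\tau_t}^* = \softmax(r/\tau_t)$ is the maximum-entropy optimum at temperature $\tau_t$; the first term will produce the $K/t^{1/\alpha}$ summand and the second the $(C\log t)/\exp\{\cdots\}$ summand (the case $t=1$ is trivial, as the right-hand side is at least $K$). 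For the bias, uniqueness of the optimal action gives $\pi^* = e_{a^*}$, so $(\pi^* - \pi_{\tau_t}^*)^\top r = \sum_{a\ne a^*}\pi_{\tau_t}^*(a)\,(r(a^*)-r(a))$. Bounding $r(a^*)-r(a)\le 1$ and, for suboptimal $a$, $\pi_{\tau_t}^*(a)\le e^{(r(a)-r(a^*))/\tau_t}\le e^{-\Delta/\tau_t}$ (the softmax denominator is at least $e^{r(a^*)/\tau_t}$ and $r(a^*)-r(a)\ge\Delta$), the bias is at most $K e^{-\Delta/\tau_t}$. Substituting $\tau_t=\alpha\Delta/\log t$ yields $e^{-\Delta/\tau_t}=t^{-1/\alpha}$, i.e.\ bias $\le K/t^{1/\alpha}$.

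The optimization term is controlled through the contraction variant hinted at after \cref{update_rule:decaying_entropy_special}. Multiplying the update by $\tau_{t+1}$ gives $\tau_{t+1}\theta_{t+1}-r = (\tau_t\theta_t-r) + \tau_t\eta_t\,H(\pi_{\theta_t})(r-\tau_t\log\pi_{\theta_t})$, which is exactly the fixed-temperature recursion with $\tau$ replaced by $\tau_t$ — this is precisely what the scaling factor $\tau_t/\tau_{t+1}$ achieves. Using $H(\pi_{\theta_t})\rvone=\vzero$ and $\log\pi_{\theta_t}=\theta_t+c\,\rvone$ for a scalar $c$, one rewrites $H(\pi_{\theta_t})(r-\tau_t\log\pi_{\theta_t})=-H(\pi_{\theta_t})\zeta_t$; since both $\zeta_t$ and $H(\pi_{\theta_t})\zeta_t$ lie in $\rvone^\perp$, the recursion collapses to $\zeta_{t+1}=(\identitymatrix-\tau_t\eta_t H(\pi_{\theta_t}))\zeta_t$. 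With $\eta_t=1/\tau_t$ (so $\tau_t\eta_t=1$), the eigenvalue argument of \cref{lem:contraction_entropy_special} gives $\norm{\zeta_{t+1}}_2\le(1-\min_a\pi_{\theta_t}(a))\norm{\zeta_t}_2$, and telescoping with $1-x\le e^{-x}$ yields, as in \cref{lem:matching_entropy_special}, $\norm{\zeta_t}_2\le\norm{\zeta_1}_2\,\exp\{-\sum_{s=1}^{t-1}\min_a\pi_{\theta_s}(a)\}$ with $\norm{\zeta_1}_2\le\sqrt{K}\,(\tau_1\norm{\theta_1}_\infty+1)$.

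To convert this into a reward gap, I use that softmax is shift-invariant and $\tau_t\theta_t-(r+\zeta_t)\propto\rvone$, so $\pi_{\theta_t}=\softmax((r+\zeta_t)/\tau_t)$ while $\pi_{\tau_t}^*=\softmax(r/\tau_t)$. Interpolating the softmax input along $(r+\lambda\zeta_t)/\tau_t$, $\lambda\in[0,1]$, and using the Jacobian identity \cref{eq:H_matrix}, I would write $(\pi_{\tau_t}^*-\pi_{\theta_t})^\top r = -\tfrac{1}{\tau_t}\int_0^1 \Cov_{g(\lambda)}(r,\zeta_t)\,d\lambda$, where $g(\lambda)=\softmax((r+\lambda\zeta_t)/\tau_t)$ and $r^\top H(g)\zeta_t=\Cov_g(r,\zeta_t)$. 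Cauchy--Schwarz together with $\Var_g(r)\le\tfrac14$ (bounded rewards) and $\Var_g(\zeta_t)\le\norm{\zeta_t}_2^2$ bounds each integrand by $\tfrac12\norm{\zeta_t}_2$, so the optimization error is at most $\norm{\zeta_t}_2/(2\tau_t)$. Using $1/\tau_t=\log t/(\alpha\Delta)$ and the bound on $\norm{\zeta_t}_2$ gives exactly the second summand with constant $C=2(\tau_1\norm{\theta_1}_\infty+1)\sqrt{K}/(\alpha\Delta)$; adding the bias finishes the estimate.

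The step I expect to be delicate is this last coupling of the two error sources, and it is also the reason the result remains \emph{partial}: the denominator $\exp\{\sum_{s<t}\min_a\pi_{\theta_s}(a)\}$ cannot be turned into a concrete rate because the analogue of \cref{lem:lower_bound_min_prob_entropy_special} fails here. In the fixed-$\tau$ analysis $\min_a\pi_{\theta_t}(a)$ stayed bounded away from $0$ because $\pi_{\theta_t}\to\pi_\tau^*$ has full support; but driving $\tau_t\to 0$ forces $\pi_{\theta_t}\to\pi^*=e_{a^*}$, so $\min_a\pi_{\theta_t}(a)\to 0$ and the per-step contraction coefficient degrades. Lower-bounding the growth of $\sum_{s<t}\min_a\pi_{\theta_s}(a)$ — equivalently, showing the exponential contraction outpaces the cooling-induced loss of support — is exactly the obstacle that prevents this from becoming a clean global convergence rate.
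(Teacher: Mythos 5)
Your proof is correct, and its skeleton is the same as the paper's: the identical bias/tracking decomposition $(\pi^*-\pi_{\theta_t})^\top r=(\pi^*-\pi_{\tau_t}^*)^\top r+(\pi_{\tau_t}^*-\pi_{\theta_t})^\top r$, the same observation that the rescaling $\tau_t/\tau_{t+1}$ in \cref{update_rule:decaying_entropy_special} collapses the recursion to $\zeta_{t+1}=(\identitymatrix-\tau_t\eta_t H(\pi_{\theta_t}))\zeta_t$ on $\rvone^\perp$, and the same telescoped contraction via \cref{lem:norm_decay_entropy_special}. You deviate in two sub-steps, both validly. For the bias, you bound $\pi_{\tau_t}^*(a)\le e^{-\Delta/\tau_t}$ directly from the softmax denominator, which is more elementary than the paper's manipulation (the paper keeps the denominator $e^{r(a^*)/\tau_t}+\max_{a\ne a^*}e^{r(a)/\tau_t}$ and applies H{\"o}lder to land on $\sum_{a\ne a^*}\Delta(a)/(e^{\Delta(a)/\tau_t}+1)$); both give $K/t^{1/\alpha}$. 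For converting $\|\zeta_t\|_2$ into a reward gap, the paper goes through $\|\pi_{\tau_t}^*-\pi_{\theta_t}\|_1\le\frac1{\tau_t}\|\zeta_t\|_2$ using the Pinsker-based \cref{lem:policy_logit_inequality_special}, whereas you interpolate along $\lambda\mapsto\softmax((r+\lambda\zeta_t)/\tau_t)$ and bound the covariance integrand by Cauchy--Schwarz, obtaining $\|\zeta_t\|_2/(2\tau_t)$; this avoids the KL machinery and is in fact tighter by a constant factor (together with your projection bound $\|\zeta_1\|_2\le\sqrt{K}(\tau_1\|\theta_1\|_\infty+1)$, you prove the claim with roughly $C/4$ in place of $C$, which of course still implies the stated inequality, though your phrase ``exactly the second summand'' overstates the match). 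Your closing diagnosis of why the result stays partial --- that $\tau_t\to 0$ forces $\min_a\pi_{\theta_t}(a)\to 0$, so no analogue of \cref{lem:lower_bound_min_prob_entropy_special} is available --- is exactly the paper's position as well.
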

The final rates then depend on how fast $\min_{a}{ \pi_{\theta_t}(a) }$ diminishes as function of $t$. We conjecture that the rate in some cases degenerates to $O( \frac{\log{t}}{t^{1/\alpha}})$, which is strictly faster than $O(1/t)$ in non-regularized case when $\alpha \in (0,1)$ and is observed in simulations in the appendix. We leave it as an open problem to study decaying entropy in general MDPs.


\section{Does Entropy Regularization Really Help?}
\label{sec:theoretical_understanding_entropy}
The previous section indicated that entropy regularization may speed up convergence.
In addition, ample empirical evidence suggest that this may be the case
\citep[e.g.,][]{WiPe91,mnih2016asynchronous,nachum2017bridging,haarnoja2018soft,mei2019principled}.
In this section, we aim to provide new insights into why entropy may help policy optimization, taking an optimization perspective. 

We start by establishing a lower bound that shows that the $O(1/t)$ rate we established earlier for softmax policy gradient without entropy regularization cannot be improved. Next, we introduce the notion of \L{}ojasiewicz degree, which we show to increase in the presence of entropy regularization.
We then connect a higher degree to faster convergence rates.
Note that our proposal to view entropy regularization as an optimization aid is somewhat conflicting with the more common explanation that entropy regularization helps by encouraging exploration. While it is definitely true that entropy regularization encourages exploration, the form of exploration it encourages is not sensitive to epistemic uncertainty and as such it fails to provide a satisfactory solution to the exploration problem \citep[e.g.,][]{o2020making}.

\subsection{Lower Bounds}
The purpose of this section is to establish that the $O(1/t)$ rates established earlier for unpenalized policy gradient is tight.
To get lower bounds, we need to show that progress in every iteration cannot be too large. 
This holds when we can reverse the inequality in the \L{}ojasiewicz inequality.
To this regard, in bandit problems we have the following result:
\begin{lemma}[Reversed \L{}ojasiewicz]
\label{lem:reverse_lojasiewicz_softmax_special}
Take any $r\in [0,1]^K$.
Denote $\Delta = r(a^*) - \max_{a \not= a^*}{ r(a) } > 0$. Then,
\begin{align}
    \bigg\| \frac{d \pi_\theta^\top r}{d \theta} \bigg\|_2 \le (\sqrt{2} / \Delta) \cdot (\pi^* - \pi_\theta)^\top r.
\end{align}
\end{lemma}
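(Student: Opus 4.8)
The plan is to bound the squared gradient norm directly and then take a square root at the very end. Using the closed form of the bandit gradient (the single-state, $\gamma=0$ specialization of \cref{lem:policy_gradient_softmax}), I would start from
\begin{align*}
    \left\| \frac{d \pi_\theta^\top r}{d \theta} \right\|_2^2 = \sum_a \pi_\theta(a)^2 \cdot (r(a) - \pi_\theta^\top r)^2 .
\end{align*}
Write $\delta \coloneqq (\pi^* - \pi_\theta)^\top r \ge 0$ for the sub-optimality gap. Since $a^*$ is the unique maximizer, $\pi^*$ puts all its mass on $a^*$, so $\delta = r(a^*) - \pi_\theta^\top r$. The strategy is to split the sum into the $a=a^*$ term and the remaining terms, and to bound each by a constant multiple of $\delta^2$.

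For the optimal action the factor $r(a^*) - \pi_\theta^\top r$ is exactly $\delta$, so that term equals $\pi_\theta(a^*)^2 \delta^2 \le \delta^2$, using $\pi_\theta(a^*) \le 1$. This part is immediate and contributes the ``$1$'' in the final constant.

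The crux is controlling $\sum_{a\ne a^*} \pi_\theta(a)^2 \, (r(a) - \pi_\theta^\top r)^2$ \emph{without} reintroducing a dependence on $\pi_\theta(a^*)$ -- which is precisely what makes the forward \L{}ojasiewicz inequality of \cref{lem:lojasiewicz_softmax_special} non-uniform, so the main obstacle is avoiding exactly that here. I would exploit the reward gap: since $r(a^*) - r(a) \ge \Delta$ for every $a \ne a^*$, we have $\delta = \sum_a \pi_\theta(a) \, (r(a^*) - r(a)) \ge \Delta \sum_{a\ne a^*} \pi_\theta(a)$, hence $\sum_{a\ne a^*} \pi_\theta(a) \le \delta / \Delta$. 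Combining the elementary inequality $\sum_{a\ne a^*} \pi_\theta(a)^2 \le \big( \sum_{a\ne a^*} \pi_\theta(a) \big)^2$ with the crude bound $(r(a) - \pi_\theta^\top r)^2 \le 1$ (valid because $r(a), \pi_\theta^\top r \in [0,1]$) then yields $\sum_{a\ne a^*} \pi_\theta(a)^2 \, (r(a) - \pi_\theta^\top r)^2 \le \delta^2 / \Delta^2$.

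Adding the two pieces gives squared norm at most $\delta^2 (1 + 1/\Delta^2)$. Finally, since rewards lie in $[0,1]$ we have $\Delta \le 1$, so $1 \le 1/\Delta^2$ and thus $1 + 1/\Delta^2 \le 2/\Delta^2$; taking the square root produces the claimed $(\sqrt{2}/\Delta) \cdot \delta$ bound. The only delicate step is the suboptimal-action estimate: the key idea is to trade the square on $\pi_\theta(a)$ for a single power via $\sum x_i^2 \le (\sum x_i)^2$ and then convert $\sum_{a\ne a^*} \pi_\theta(a)$ into $\delta/\Delta$ using the gap $\Delta$, which is exactly what makes the final constant depend only on $\Delta$ rather than on the current policy.
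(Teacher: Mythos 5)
Your proof is correct and takes essentially the same route as the paper's: both split the gradient norm into the $a^*$ term and the suboptimal-action terms, bound the reward deviations $|r(a)-\pi_\theta^\top r|$ by $1$, use $\sum_{a\ne a^*}\pi_\theta(a)^2 \le \big(\sum_{a\ne a^*}\pi_\theta(a)\big)^2$, and convert $\sum_{a\ne a^*}\pi_\theta(a)$ into $\delta/\Delta$ via the reward gap. The only difference is bookkeeping: the paper bounds both pieces by $\big(\sum_{a\ne a^*}\pi_\theta(a)\big)^2$ and applies the gap once at the end, whereas you bound the $a^*$ piece by $\delta^2$ and absorb it via $\Delta \le 1$; the constants agree.
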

Using this result gives the desired lower bound:
\begin{theorem}[Lower bound]
\label{thm:lower_bound_softmax_special}
Take any $r\in [0,1]^K$.
For large enough $t \ge 1$, using \cref{update_rule:softmax_special} with learning rate $\eta_t \in ( 0 , 1]$,
\begin{align}
    (\pi^* - \pi_{\theta_t})^\top r \ge \Delta^2 / (6 \cdot t ).
\end{align}
\end{theorem}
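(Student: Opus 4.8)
The plan is to show that the sub-optimality $\delta_t \coloneqq (\pi^* - \pi_{\theta_t})^\top r$ cannot decrease faster than $\Omega(1/t)$ by establishing a recursive inequality of the form $\delta_{t+1} \ge \delta_t - c\,\delta_t^2$ for a suitable constant $c$, and then converting this into an explicit lower bound. First I would track how $\delta_t$ changes across a single application of \cref{update_rule:softmax_special}. Because $\pi_{\theta_t}(a^*) \to 1$ by the asymptotic convergence result of \citet{AgKaLeMa19}, for large enough $t$ the optimal action probability is close to one, so the update moves the iterate toward $\pi^*$; the question is how much progress is made in each step. I expect to write $\delta_{t+1} - \delta_t = (\pi_{\theta_t} - \pi_{\theta_{t+1}})^\top r$ and bound this \emph{difference} from below (equivalently, bound the decrease in sub-optimality from above).

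The key mechanism is the Reversed \L{}ojasiewicz inequality, \cref{lem:reverse_lojasiewicz_softmax_special}, which gives $\left\| \frac{d \pi_{\theta_t}^\top r}{d \theta} \right\|_2 \le (\sqrt{2}/\Delta)\,\delta_t$. Since the step in logit space is $\theta_{t+1} - \theta_t = \eta_t \cdot \frac{d \pi_{\theta_t}^\top r}{d \theta}$, this controls the size of the parameter update by $\delta_t$ itself. I would then use the smoothness of $\theta \mapsto \pi_\theta^\top r$ (\cref{lem:smoothness_softmax_special}, with $\beta \le 5/2$) to relate the one-step \emph{decrease} in sub-optimality to the squared step length: by the smoothness upper bound, $\pi_{\theta_{t+1}}^\top r - \pi_{\theta_t}^\top r \le \langle \frac{d\pi_{\theta_t}^\top r}{d\theta}, \theta_{t+1}-\theta_t\rangle + \frac{\beta}{2}\|\theta_{t+1}-\theta_t\|_2^2 \le \eta_t \|\nabla\|_2^2 + \frac{\beta}{2}\eta_t^2 \|\nabla\|_2^2$, where $\|\nabla\|_2 = \|\frac{d\pi_{\theta_t}^\top r}{d\theta}\|_2$. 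Substituting the reversed \L{}ojasiewicz bound on $\|\nabla\|_2$ yields $\delta_t - \delta_{t+1} \le (\eta_t + \tfrac{\beta}{2}\eta_t^2)\cdot (2/\Delta^2)\,\delta_t^2$, hence $\delta_{t+1} \ge \delta_t - c\,\delta_t^2$ with $c$ an explicit constant (using $\eta_t \le 1$ and $\beta \le 5/2$ to absorb the step-size factor into the constant). The final step is a standard lemma on recursions of the form $\delta_{t+1} \ge \delta_t - c\,\delta_t^2$: such a sequence stays above $1/(c t + C')$ for appropriate constants, giving $\delta_t \ge \Delta^2/(6t)$ after optimizing the numerical constants. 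Note the role of ``large enough $t$'': we need $c\,\delta_t \le 1$ so the recursion is well-behaved (the map $x \mapsto x - cx^2$ is increasing on the relevant range), which holds once $\delta_t$ is small, i.e., once convergence has brought the iterate close enough to $\pi^*$.

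The main obstacle I anticipate is the book-keeping in the recursion-to-rate conversion and pinning down the exact constant $6$. Turning $\delta_{t+1} \ge \delta_t - c\delta_t^2$ into a clean lower bound $\delta_t \ge \Delta^2/(6t)$ requires care: one typically considers the reciprocal $1/\delta_t$, shows $1/\delta_{t+1} \le 1/\delta_t + c/(1 - c\delta_t)$ or a similar bound, and sums telescopically, which demands that $c\delta_t$ stay bounded below $1$ throughout the tail — this is exactly where the ``for large enough $t$'' qualifier earns its place and where the asymptotic guarantee $\pi_{\theta_t}(a^*)\to 1$ is needed to certify that $\delta_t$ is eventually small. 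Matching the precise numerical constant $\Delta^2/6$ will require carefully optimizing the smoothness constant, the factor of $\sqrt{2}$ from \cref{lem:reverse_lojasiewicz_softmax_special} (squared to give $2$), and the step-size dependence through $\eta_t + \tfrac{\beta}{2}\eta_t^2$; I expect the bulk of the routine-but-delicate work to lie there rather than in any conceptual difficulty.
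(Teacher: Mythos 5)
Your proposal is correct and follows essentially the same route as the paper's proof: smoothness (\cref{lem:smoothness_softmax_special}) plus the reversed \L{}ojasiewicz inequality (\cref{lem:reverse_lojasiewicz_softmax_special}) give the one-step bound $\delta_t - \delta_{t+1} \le (\eta_t + \tfrac{5}{4}\eta_t^2)\cdot\tfrac{2}{\Delta^2}\,\delta_t^2 \le \tfrac{9}{2\Delta^2}\,\delta_t^2$, which is then converted to $\delta_t \ge \Delta^2/(6t)$ by telescoping the reciprocals, with $\delta_t \to 0$ justifying the ``large enough $t$'' restriction. The only cosmetic difference is that the paper controls the correction factor by proving $\delta_t \le \tfrac{10}{9}\,\delta_{t+1}$ via contradiction, whereas you bound $c/(1-c\delta_t)$ directly using the smallness of $\delta_t$ in the tail --- these are equivalent bookkeeping.
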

Note that \cref{thm:lower_bound_softmax_special} is a special case of general MDPs. Next, we strengthen this result and show that the $\Omega(1/t)$ lower bound also holds for \emph{any} MDP:
\begin{theorem}[Lower bound]
\label{thm:lower_bound_softmax_general}
Take any MDP.
For large enough $t \ge 1$, using  \cref{alg:policy_gradient_softmax} with $\eta_t \in ( 0, 1] $,
\begin{align}
    V^*(\mu) - V^{\pi_{\theta_t}}(\mu) \ge \frac{ (1- \gamma)^5 \cdot (\Delta^*)^2}{12 \cdot t},
\end{align}
where $\Delta^* \coloneqq \min_{s \in \gS, a \not= a^*(s)}\{ Q^*(s, a^*(s)) - Q^*(s, a) \} > 0$ is the optimal value gap of the MDP.
\end{theorem}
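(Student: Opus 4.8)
The plan is to mirror the bandit lower bound \cref{thm:lower_bound_softmax_special}: I will establish a \emph{reversed} \L{}ojasiewicz inequality that upper bounds the gradient norm by the value sub-optimality, feed this into the smoothness bound to show that each iteration makes progress at most quadratic in the current sub-optimality, and then unroll the resulting recursion $\delta_{t+1}\ge \delta_t - C\,\delta_t^2$ into an $\Omega(1/t)$ lower bound. Throughout write $\delta_t \coloneqq V^*(\mu) - V^{\pi_{\theta_t}}(\mu)$.

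\textbf{Step 1 (reversed \L{}ojasiewicz for MDPs).} Applying the performance difference lemma with $\pi_{\theta_t}$ in the role that supplies the occupancy measure, and using that $\pi^*$ is deterministic with every sub-optimal gap at least $\Delta^*$, I obtain the exact decomposition
\begin{align*}
\delta_t &= \frac{1}{1-\gamma}\sum_s d_\mu^{\pi_{\theta_t}}(s) \sum_{a \ne a^*(s)} \pi_{\theta_t}(a|s)\big( Q^*(s,a^*(s)) - Q^*(s,a) \big) \\
&\ge \frac{\Delta^*}{1-\gamma}\sum_s d_\mu^{\pi_{\theta_t}}(s)\big( 1 - \pi_{\theta_t}(a^*(s)|s) \big).
\end{align*}
The crucial point is that the \emph{same} weights $d_\mu^{\pi_{\theta_t}}(s)$ reappear when I bound the gradient. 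From \cref{lem:policy_gradient_softmax} and $\|\cdot\|_2 \le \|\cdot\|_1$, the gradient norm is at most $\frac{1}{1-\gamma}\sum_s d_\mu^{\pi_{\theta_t}}(s)\sum_a \pi_{\theta_t}(a|s)\,|A^{\pi_{\theta_t}}(s,a)|$. Since advantages average to zero under $\pi_{\theta_t}$, we have $\sum_a \pi_{\theta_t}(a|s)\,|A^{\pi_{\theta_t}}(s,a)| = 2\sum_{a:A^{\pi_{\theta_t}}(s,a)>0}\pi_{\theta_t}(a|s)\,A^{\pi_{\theta_t}}(s,a)$; and for large enough $t$ the asymptotic convergence guaranteed by \cref{thm:final_rates_softmax_general} forces $A^{\pi_{\theta_t}}(s,a)\to Q^*(s,a)-V^*(s)\le -\Delta^*<0$ for every sub-optimal $a$, so $a^*(s)$ is the unique positive-advantage action in each state. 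Bounding the remaining sub-optimal advantages by $1/(1-\gamma)$ collapses the inner sum to $\tfrac{2}{1-\gamma}\big(1-\pi_{\theta_t}(a^*(s)|s)\big)$, giving
\begin{align*}
\Big\| \frac{\partial V^{\pi_{\theta_t}}(\mu)}{\partial \theta} \Big\|_2 \le \frac{2}{(1-\gamma)^2}\sum_s d_\mu^{\pi_{\theta_t}}(s)\big( 1 - \pi_{\theta_t}(a^*(s)|s) \big) \le \frac{2}{(1-\gamma)\,\Delta^*}\,\delta_t ,
\end{align*}
where in the last step the matching $d_\mu^{\pi_{\theta_t}}$ weights cancel against Step 1, so that—unlike in the upper-bound analysis—no $\|1/\mu\|_\infty$ factor survives.

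\textbf{Steps 2--3 (smoothness and recursion).} With $\beta = 8/(1-\gamma)^3$ from \cref{lem:smoothness_softmax_general} and any $\eta_t\in(0,1]$, the smoothness inequality applied to the ascent step $\theta_{t+1}=\theta_t+\eta_t\,\partial V^{\pi_{\theta_t}}(\mu)/\partial\theta$ yields
\begin{align*}
\delta_t - \delta_{t+1} = V^{\pi_{\theta_{t+1}}}(\mu) - V^{\pi_{\theta_t}}(\mu) \le \Big( \eta_t + \frac{\beta \eta_t^2}{2} \Big)\Big\| \frac{\partial V^{\pi_{\theta_t}}(\mu)}{\partial \theta} \Big\|_2^2 \le \frac{C'}{(1-\gamma)^5 (\Delta^*)^2}\, \delta_t^2 .
\end{align*}
Because $\delta_t\to 0$, for $t$ large enough the factor $C'\delta_t/\big((1-\gamma)^5(\Delta^*)^2\big)$ drops below $1$, so I may rearrange $\delta_{t+1}\ge \delta_t\big(1 - C'\delta_t/((1-\gamma)^5(\Delta^*)^2)\big)$, take reciprocals, and telescope $1/\delta_{t+1} - 1/\delta_t \le C'/\big((1-\gamma)^5(\Delta^*)^2\big)$; summing gives $\delta_t \ge (1-\gamma)^5(\Delta^*)^2/(12\,t)$ after careful tracking of constants (the $12$, versus $6$ in the bandit case, reflects the extra factor lost to the state-occupancy weighting).

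\textbf{Main obstacle.} The delicate part is Step 1: I must arrange the performance-difference decomposition so the occupancy weights cancel (avoiding any $\mu$-dependence in the constant), and I must justify the advantage-sign bookkeeping, which only holds once $\pi_{\theta_t}$ is close enough to $\pi^*$—hence the ``for large enough $t$'' qualifier, which I discharge via the asymptotic convergence already established in \cref{thm:final_rates_softmax_general}.
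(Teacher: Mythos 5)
Your overall strategy is exactly the paper's: a reversed \L{}ojasiewicz inequality, fed into the smoothness bound to get $\delta_t - \delta_{t+1} \le C\,\delta_t^2$, followed by reciprocal telescoping. The paper's \cref{lem:reverse_lojasiewicz_softmax_general} plays the role of your Step~1, and its proof of \cref{thm:lower_bound_softmax_general} is your Steps~2--3 almost verbatim (including the reliance on \cref{thm:final_rates_softmax_general} for $\delta_t \to 0$, so you are on equal footing there). The substantive difference is how Step~1 is obtained. The paper's derivation is purely algebraic and holds for \emph{every} $\theta$: starting from $\| \cdot \|_2 \le \|\cdot\|_1$ over states, it splits off the $a^*(s)$ coordinate inside the squared gradient norm, uses the identity-based bound $|A^{\pi_\theta}(s,a^*(s))| \le \frac{1}{1-\gamma}\sum_{a\ne a^*(s)}\pi_\theta(a|s)$ together with $|A^{\pi_\theta}(s,a)|\le 1/(1-\gamma)$ for $a \ne a^*(s)$, and applies $\|x\|_2\le\|x\|_1$ again, yielding the constant $\sqrt{2}/\big((1-\gamma)\Delta^*\big)$. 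Your derivation instead goes through advantage-sign bookkeeping, which is only valid for large $t$; that is admissible since the claim is asymptotic, but it costs you a factor: you get $2/\big((1-\gamma)\Delta^*\big)$ rather than $\sqrt{2}/\big((1-\gamma)\Delta^*\big)$.

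That lost factor is where your proof stops matching the statement. With your Step~1, the per-step bound becomes $\delta_t - \delta_{t+1} \le \frac{20}{(1-\gamma)^5(\Delta^*)^2}\,\delta_t^2$ (using $\eta_t + 4\eta_t^2/(1-\gamma)^3 \le 5/(1-\gamma)^3$), and after telescoping you obtain a denominator of roughly $21$--$22$, i.e.\ $\delta_t \ge (1-\gamma)^5(\Delta^*)^2/(22\,t)$, which does \emph{not} imply the stated inequality --- for a lower bound, a smaller constant is a strictly weaker claim, so "careful tracking of constants" cannot rescue the $12$ from your intermediate bounds. Your parenthetical explanation of the $12$ is also incorrect: as you yourself observe, the occupancy weights cancel exactly, so nothing is "lost to the state-occupancy weighting"; in the paper the $12$ arises as $10\cdot\frac{11}{10}+1$, where $10 = \frac{5}{(1-\gamma)^3}\cdot\frac{(\sqrt{2})^2}{(1-\gamma)^2(\Delta^*)^2}\cdot(1-\gamma)^5(\Delta^*)^2$ combines the MDP smoothness constant $8/(1-\gamma)^3$ from \cref{lem:smoothness_softmax_general} with the $\sqrt{2}$ of the reversed \L{}ojasiewicz inequality. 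The fix is simply to replace your Step~1 with the paper's sharper, non-asymptotic bound: inside the squared gradient norm bound the $a^*(s)$ term by $\frac{1}{(1-\gamma)^2}\big[\sum_{a\ne a^*(s)}\pi_\theta(a|s)\big]^2$ and the remaining terms by $\frac{1}{(1-\gamma)^2}\sum_{a\ne a^*(s)}\pi_\theta(a|s)^2$, then use $\|x\|_2 \le \|x\|_1$; the rest of your argument then goes through with the stated constants.
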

\begin{remark}
Our convergence rates in \cref{sec:policy_gradient} match the lower bounds up to constant. However, the constant gap is large, e.g., $K^2$ in \cref{thm:rates_uniform_softmax_special}, and $\Delta^2$ in \cref{thm:lower_bound_softmax_special}.
The gap is because the reversed \L{}ojasiewicz inequality of 
\cref{lem:reverse_lojasiewicz_softmax_special} uses $\Delta$, which is unavoidable when $\pi_\theta$ is close to $\pi^*$. 
We leave it as an open problem to close this gap.
\end{remark}
With the lower bounds established, we confirm that entropy regularization helps policy optimization by speeding up convergence, though the question remains as to the mechanism by which the improved convergence rate manifests itself.


\subsection{Non-uniform \L{}ojasiewicz Degree}
To gain further insight into how entropy regularization helps, we introduce the non-uniform \L{}ojasiewicz degree:
\begin{definition}[Non-uniform \L{}ojasiewicz degree]
\label{def:non_uniform_lojasiewicz_degree}
A function $f : \gX \to \sR$ has \L{}ojasiewicz degree $\xi \in [0, 1]$ if\footnote{In literature \citep{lojasiewicz1963propriete}, $C$ cannot depend on $x$.
Based on the examples we have seen, we relax this requirement.
}
\begin{align}
    \left\| \nabla_x f(x) \right\|_2 \ge C(x) \cdot \left| f(x) - f(x^*) \right|^{1-\xi},
\end{align}
$\forall x \in \gX$, where $C(x) > 0$ holds for all $x\in \gX$.
\end{definition}
The uniform degree, where $C(x)$ is a positive constant, has previously been connected to convergence speed in the optimization literature.
\citet{barta2017rate} studied this effect for first-, while 
\citet{nesterov2006cubic,zhou2018convergence} studied this for second-order methods.
As noted beforehand, a larger degree (smaller exponent of the sub-optimality) is expected to improve the convergence speed of algorithms that rely on gradient information.
Intuitively, we expect this to continue to hold for the non-uniform \L{}ojasiewicz degree as well.
With this, we now study what \L{}ojasiewicz degrees can one obtain with and without entropy regularization.

Our first result shows that the \L{}ojasiewicz degree of the expected reward objective (in bandits) cannot be positive:
\begin{proposition}
\label{prop:lojasiewicz_degree_softmax}
Let $r\in [0,1]^K$ be arbitrary and consider $\theta\mapsto \expectation_{a \sim \pi_{\theta}}{ [ r(a) ] }$.
The non-uniform \L{}ojasiewicz degree of this map with constant $C(\theta) = \pi_\theta(a^*)$ is zero.
\end{proposition}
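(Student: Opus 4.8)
The plan is to read off the two halves hidden in \cref{def:non_uniform_lojasiewicz_degree}: with the prescribed constant $C(\theta)=\pi_\theta(a^*)$, the map $\theta\mapsto f(\theta):=\pi_\theta^\top r$ has degree exactly $0$ iff (i) the defining inequality holds at exponent $\xi=0$ for every $\theta$, and (ii) for each $\xi>0$ it fails at some $\theta$, so that no larger exponent is admissible. Half (i) is immediate: at $\xi=0$ the required inequality is
\begin{align*}
    \Big\| \frac{d \pi_\theta^\top r}{d \theta} \Big\|_2 \ge \pi_\theta(a^*)\cdot\big[(\pi^*-\pi_\theta)^\top r\big],
\end{align*}
which is precisely the non-uniform \L{}ojasiewicz inequality of \cref{lem:lojasiewicz_softmax_special}. (As in that lemma I take $a^*$ to be the unique maximizing action; the degenerate case is analogous, using the sum of optimal-action probabilities.) Hence the degree is at least $0$, and everything reduces to establishing half (ii).

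For half (ii), write $g(\theta):=\tfrac{d \pi_\theta^\top r}{d\theta}$ for the gradient and $\delta(\theta):=(\pi^*-\pi_\theta)^\top r = f(\theta^*)-f(\theta)\ge 0$ for the sub-optimality, and fix an arbitrary $\xi\in(0,1]$. The engine of the argument is the \emph{reversed} \L{}ojasiewicz inequality of \cref{lem:reverse_lojasiewicz_softmax_special}, which caps the gradient by a \emph{first} power of the sub-optimality,
\begin{align*}
    \| g(\theta) \|_2 \le \frac{\sqrt 2}{\Delta}\cdot \delta(\theta), \qquad \Delta = r(a^*)-\max_{a\neq a^*} r(a) > 0,
\end{align*}
whereas the degree-$\xi$ inequality demands the larger, \emph{fractional} power $\pi_\theta(a^*)\cdot\delta(\theta)^{1-\xi}$ on the right-hand side.

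The conclusion then follows by driving $\pi_\theta$ to the optimal vertex, e.g. along $\theta_n(a^*)=n$, $\theta_n(a)=0$ for $a\neq a^*$, so that $\pi_{\theta_n}(a^*)\to 1$ and hence $\delta(\theta_n)\to 0^+$ while remaining strictly positive for every finite $n$. Combining the two displays,
\begin{align*}
    \| g(\theta_n)\|_2 \le \frac{\sqrt 2}{\Delta}\,\delta(\theta_n)^{\xi}\cdot \delta(\theta_n)^{1-\xi},
\end{align*}
and since $\tfrac{\sqrt2}{\Delta}\,\delta(\theta_n)^{\xi}\to 0$ for $\xi>0$ while $\pi_{\theta_n}(a^*)\to 1$, for all large $n$ we obtain $\|g(\theta_n)\|_2 < \pi_{\theta_n}(a^*)\cdot\delta(\theta_n)^{1-\xi}$, violating the degree-$\xi$ inequality. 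This rules out every $\xi>0$ and pins the degree at $0$. The only substantive ingredient is the gradient upper bound from \cref{lem:reverse_lojasiewicz_softmax_special}; the rest is the elementary fact that $\delta$ cannot dominate $\delta^{1-\xi}$ as $\delta\to 0$, so I expect the main (and minor) obstacle to be merely verifying $\delta(\theta_n)>0$ together with the stated limits, which is routine.
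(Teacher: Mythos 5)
Your proposal is correct, and it runs on the same engine as the paper's proof: the reversed \L{}ojasiewicz inequality of \cref{lem:reverse_lojasiewicz_softmax_special} caps the gradient by a \emph{first} power of the sub-optimality $\delta=(\pi^*-\pi_\theta)^\top r$, so along any sequence with $\delta\to 0^+$ while $C(\theta)=\pi_\theta(a^*)$ stays bounded away from zero, the degree-$\xi$ inequality must fail for every $\xi>0$. The difference is one of scope and packaging. The paper instantiates the argument on a single hand-picked instance, $r=(0.6,0.4,0.2)^\top$ with $\pi_\theta=(1-3\epsilon,2\epsilon,\epsilon)^\top$ and $\epsilon\to 0^+$, carrying explicit numerical constants (which are in fact slightly off there: for that example $\Delta=0.2$, not $2$ as used in the display, though only the order in $\epsilon$ matters, so its conclusion stands). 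You instead run the argument uniformly over every $r$ with a unique maximizing action by sending $\theta_n(a^*)=n\to\infty$, which is more faithful to the proposition's literal ``arbitrary $r$'' phrasing and avoids fragile arithmetic; you also supply the other half explicitly --- that the inequality does hold at $\xi=0$, via \cref{lem:lojasiewicz_softmax_special} --- which the paper leaves implicit. One caveat worth recording: ``arbitrary $r$'' cannot include ties at the maximum. If $r$ is constant then $\delta\equiv 0$ and the degree-$\xi$ inequality holds vacuously for every $\xi$, so the claim itself needs a unique $a^*$ (or at least a positive gap after grouping the optimal actions); both you and the paper assume this tacitly, and your parenthetical remark that the tied case ``is analogous'' silently requires extending \cref{lem:reverse_lojasiewicz_softmax_special} to that setting (replacing $\pi_\theta(a^*)$ by $\sum_{a^*\in\gA^*}\pi_\theta(a^*)$ and $\Delta$ by the gap to the best non-optimal action), which is routine but not stated in the paper.
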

Note that according to \cref{rmk:necessary_dependence_optimal_action_prob}, it is necessary that $C(\theta)$ depends on $\pi_\theta(a^*)$. The difference between \cref{prop:lojasiewicz_degree_softmax} and the reversed \L{}ojasiewicz inequality of  \cref{lem:reverse_lojasiewicz_softmax_special} is subtle. \cref{lem:reverse_lojasiewicz_softmax_special} is a condition that implies impossibility to get rates faster than $O(1/t)$, while \cref{prop:lojasiewicz_degree_softmax} says it is not  sufficient to get rates faster than $O(1/t)$ \textit{using the same technique as in \cref{lem:pseudo_rates_softmax_special}}. However, this does not preclude that other techniques could give faster rates.

Next, we show that the \L{}ojasiewicz degree of the entropy-regularized expected reward objective is at least $1/2$:
\begin{proposition}
\label{prop:lojasiewicz_degree_entropy}
Fix $\tau>0$.
With $C(\theta) = \sqrt{2 \tau} \cdot  \min_{a}{ \pi_\theta(a) }$, the \L{}ojasiewicz degree of $\theta \mapsto \expectation_{a \sim \pi_{\theta}}{ \left[ r(a) - \tau \log{\pi_\theta(a)} \right] }$ is at least $1/2$.
\end{proposition}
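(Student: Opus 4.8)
The plan is to argue directly in the bandit setting; as a consistency check, observe that the claim is exactly the specialization of \cref{lem:lojasiewicz_entropy_general} to $S=1$, $\gamma=0$, with $\rho=\mu$ a point mass, where $d_\rho^{\pi_\tau^*}=d_\mu^{\pi_\theta}=1$ and that lemma's constant collapses to $\sqrt{2\tau}\cdot\min_a\pi_\theta(a)$, so one could simply invoke it. For transparency I would instead argue from scratch. Write $f(\theta)=\pi_\theta^\top(r-\tau\log\pi_\theta)$ and $g(a)=r(a)-\tau\log\pi_\theta(a)$, so that by \cref{update_rule:entropy_special} the gradient is $\frac{df}{d\theta}=H(\pi_\theta)g$, with coordinates $\frac{df}{d\theta(a)}=\pi_\theta(a)\,(g(a)-\pi_\theta^\top g)$. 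Hence
\begin{align*}
\Big\|\tfrac{df}{d\theta}\Big\|_2^2=\sum_a\pi_\theta(a)^2(g(a)-\pi_\theta^\top g)^2\ge\big(\min_a\pi_\theta(a)\big)^2\sum_a(g(a)-\pi_\theta^\top g)^2.
\end{align*}

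Next I would perform two reductions. First, since $\pi_\tau^*=\softmax(r/\tau)$ gives $r(a)=\tau\log\pi_\tau^*(a)+\tau\log Z$ with $Z=\sum_{a'}e^{r(a')/\tau}$, a short calculation yields the clean identity $f(\pi_\tau^*)-f(\pi_\theta)=\tau\,\KL(\pi_\theta\,\|\,\pi_\tau^*)$; call this sub-optimality $\delta$. Second, writing $\ell(a)=\log(\pi_\theta(a)/\pi_\tau^*(a))$ and $D=\KL(\pi_\theta\,\|\,\pi_\tau^*)=\pi_\theta^\top\ell$, the same substitution gives $g(a)-\pi_\theta^\top g=\tau\,(D-\ell(a))$, so $\sum_a(g(a)-\pi_\theta^\top g)^2=\tau^2\sum_a(\ell(a)-D)^2$. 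Combining these, it remains only to prove the scalar inequality
\begin{align*}
\sum_a(\ell(a)-D)^2\ge 2D=2\,\KL(\pi_\theta\,\|\,\pi_\tau^*),
\end{align*}
because then $\big\|\tfrac{df}{d\theta}\big\|_2^2\ge(\min_a\pi_\theta(a))^2\,\tau^2\cdot 2D=2\tau\,(\min_a\pi_\theta(a))^2\,\delta$, which is exactly $\big\|\tfrac{df}{d\theta}\big\|_2\ge\sqrt{2\tau}\,\min_a\pi_\theta(a)\cdot\delta^{1/2}$, i.e. degree $\xi=1/2$ with the stated $C(\theta)=\sqrt{2\tau}\,\min_a\pi_\theta(a)$.

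The main obstacle is this last inequality, which I would establish by a cumulant (exponential-tilting) argument. Put $s_a=\ell(a)-D$; the normalization $\sum_a\pi_\tau^*(a)=1$ rewrites as $\sum_a\pi_\theta(a)e^{-\ell(a)}=1$, hence $\sum_a\pi_\theta(a)e^{-s_a}=e^{D}$, while $\sum_a\pi_\theta(a)s_a=0$ by definition of $D$. Thus the cumulant generating function $K(\lambda)=\log\sum_a\pi_\theta(a)e^{\lambda s_a}$ satisfies $K(0)=K'(0)=0$ and $K(-1)=D$. Taylor's theorem with integral remainder then gives $D=K(-1)=\int_0^1(1-v)K''(-v)\,dv$, and since $K''(-v)$ is the variance of $s$ under the tilted law $\pi_\theta^{(-v)}(a)\propto\pi_\theta(a)e^{-v s_a}$, it is bounded by $K''(-v)\le\sum_a\pi_\theta^{(-v)}(a)s_a^2\le\sum_a s_a^2$; as $\int_0^1(1-v)\,dv=\tfrac12$ this gives $D\le\tfrac12\sum_a s_a^2$, which is the required inequality. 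I expect the only delicate point to be this tilted-variance bound (the step $K''(-v)\le\sum_a s_a^2$, which relies on each tilted probability being at most $1$); note it is tight to leading order as $\pi_\theta\to\pi_\tau^*$, which matches the factor $2$ and indicates the exponent $1/2$ is the best obtainable along this route.
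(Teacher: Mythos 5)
Your proof is correct, and it takes a genuinely different route from the paper's. Both arguments start from the same identity, namely that the soft sub-optimality equals $\tau \cdot \KL(\pi_\theta \,\|\, \pi_\tau^*)$, but they diverge immediately afterwards. The paper proceeds by (i) bounding this KL divergence by $\frac{1}{2\tau}\big\| r - \tau\theta - \frac{(r-\tau\theta)^\top \rvone}{K}\cdot\rvone\big\|_\infty^2$ via \cref{lem:kl_logit_inequality} (which rests on the $\ell_1$-strong convexity of negative entropy), and (ii) lower-bounding the gradient norm $\|H(\pi_\theta)(r-\tau\theta)\|_2$ by $\min_a \pi_\theta(a)$ times the $\ell_2$-norm of the uniformly centered vector, using the spectral facts about $H(\pi_\theta)$ (\cref{lem:golub_rank_one_perturb,lem:norm_decay_entropy_special}). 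You replace both ingredients: the gradient norm is lower-bounded coordinatewise, $\|H(\pi_\theta)g\|_2^2=\sum_a\pi_\theta(a)^2(g(a)-\pi_\theta^\top g)^2\ge(\min_a\pi_\theta(a))^2\sum_a(g(a)-\pi_\theta^\top g)^2$ with $g=r-\tau\log\pi_\theta$ (no spectral theory, and with the natural $\pi_\theta$-mean centering rather than the uniform-mean centering), and the analytic heart becomes the scalar inequality $\sum_a(\ell(a)-D)^2\ge 2D$, where $\ell(a)=\log(\pi_\theta(a)/\pi_\tau^*(a))$ and $D=\KL(\pi_\theta\|\pi_\tau^*)$. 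I checked your tilting argument for that inequality and it is sound: with $s_a=\ell(a)-D$ and $K(\lambda)=\log\sum_a\pi_\theta(a)e^{\lambda s_a}$, the relations $\sum_a\pi_\theta(a)s_a=0$ and $\sum_a\pi_\theta(a)e^{-\ell(a)}=\sum_a\pi_\tau^*(a)=1$ give $K(0)=K'(0)=0$ and $K(-1)=D$; Taylor with integral remainder yields $D=\int_0^1(1-v)K''(-v)\,dv$; and $K''(-v)$, being a variance under the tilted law, is at most the corresponding second moment, which is at most $\sum_a s_a^2$ since tilted probabilities lie in $[0,1]$; the constants then assemble exactly to $C(\theta)=\sqrt{2\tau}\cdot\min_a\pi_\theta(a)$. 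Your opening observation is also valid: the proposition is precisely the $S=1$, $\gamma=0$ specialization of \cref{lem:lojasiewicz_entropy_general}, whose constant collapses to $\sqrt{2\tau}\cdot\min_a\pi_\theta(a)$, so citing that lemma would be a legitimate (if much heavier) one-line proof. As for what each approach buys: yours is self-contained and elementary, needing neither the KL--logit inequality nor the $H$-matrix spectrum bounds; the paper's route reuses exactly those two lemmas, which it needs anyway for the MDP-level \L{}ojasiewicz inequality and the linear-rate analysis, so within the paper that proof comes nearly for free.
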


\section{Conclusions and Future Work}
\label{sec:conclusions_future_work}

We set out to study the convergence speed of 
softmax policy gradient methods with and without entropy regularization in the tabular setting.
Here, the error is measured in terms of the sub-optimality of the policy obtained after some number of updates.
Our main findings is that without entropy regularization, the rate is $\Theta(1/t)$, which is faster than rates previously obtained. Our analysis also uncovered an unpleasant dependence on the initial parameter values. 
With entropy regularization, the rate becomes linear, where now the constant in the exponent is influenced by the initial choice of parameters. Thus, our analysis shows that entropy regularization substantially changes the rate at which gradient methods converge.
Our main technical innovation is the introduction of a non-uniform variant of the \L{}ojasiewicz inequality.
Our work leaves open a number of interesting questions:
While we have some lower bounds, there remains some gaps to be filled between the lower and upper bounds.
Other interesting directions are extending the results for alternative (e.g., restricted) policy parametrizations or studying policy gradient when the gradient must be estimated from data.
One also expects that non-uniform \L{}ojasiewicz  inequalities and the \L{}ojasiewicz degree could also be put to good use in other areas of non-convex optimization.

\section*{Acknowledgements}
Jincheng Mei would like to thank Bo Dai and Lihong Li for
helpful discussions and for providing feedback on a draft of
this manuscript. Jincheng Mei would like to thank Ruitong Huang for enlightening early discussions.
Csaba Szepesv\'ari gratefully acknowledges funding  from 
the Canada CIFAR AI Chairs Program, Amii and NSERC.


\bibliography{all}
\bibliographystyle{icml2020}


\appendix
\onecolumn

The appendix is organized as follows.
\begin{itemize}
    \item \cref{sec:proofs}: proofs for the technical results in the main paper.
    \begin{itemize}
        \item \cref{sec:proofs_policy_gradient}: proofs for the results of softmax policy gradient in \cref{sec:policy_gradient}.
        \begin{itemize}
            \item \cref{sec:proofs_policy_gradient_priliminaries}: Preliminaries.
            \item \cref{sec:proofs_policy_gradient_bandits}: One-state MDPs (bandits).
            \item \cref{sec:proofs_policy_gradient_general_mdps}: General MDPs.
        \end{itemize}
        \item \cref{sec:proofs_entropy_policy_gradient}: proofs for the results of entropy regularized softmax policy gradient in \cref{sec:entropy_policy_gradient}.
        \begin{itemize}
            \item \cref{sec:proofs_entropy_policy_gradient_priliminaries}: Preliminaries.
            \item \cref{sec:proofs_entropy_policy_gradient_bandits}: One-state MDPs (bandits).
            \item \cref{sec:proofs_entropy_policy_gradient_general_mdps}: General MDPs.
            \item \cref{sec:proofs_entropy_policy_gradient_decaying_entropy_bandits}: Two-stage and decaying entropy regularization.
        \end{itemize}
        \item \cref{sec:proofs_theoretical_understanding_entropy}: proofs for \cref{sec:theoretical_understanding_entropy} (does entropy regularization really help?).
        \begin{itemize}
            \item \cref{sec:proofs_theoretical_understanding_entropy_bandits}: One-state MDPs (bandits).
            \item \cref{sec:proofs_theoretical_understanding_entropy_general_mdps}: General MDPs.
            \item \cref{sec:proofs_theoretical_understanding_entropy_lojasiewicz_degree}: Non-uniform \L{}ojasiewicz degree.
        \end{itemize}
    \end{itemize}
    \item \cref{sec:supporting_lemmas}: miscellaneous extra supporting results that are not mentioned in the main paper.
    \item \cref{sec:remark_sub_optimality_guarantees}: 
    further remarks on sub-optimality guarantees for other entropy-based RL methods beyond those presented in the main paper.
    \item \cref{sec:simulation_results}: simulation results to verify the convergence rates, which are not presented in the main paper.
\end{itemize}

\section{Proofs}
\label{sec:proofs}

\subsection{Proofs for \cref{sec:policy_gradient}: softmax parametrization}
\label{sec:proofs_policy_gradient}

\subsubsection{Preliminaries}
\label{sec:proofs_policy_gradient_priliminaries}

\textbf{\cref{lem:policy_gradient_softmax}.}
Consider the map $\theta \mapsto V^{\pi_\theta}(\mu)$ where 
$\theta\in \R^{\gS\times \gA}$ and
$\pi_\theta(\cdot|s) = \softmax(\theta(s,\cdot))$.
The derivative of this map satisfies
\begin{align}
    \frac{\partial V^{\pi_\theta}(\mu)}{\partial \theta(s,a)} = \frac{1}{1-\gamma} \cdot d_{\mu}^{\pi_\theta}(s) \cdot \pi_\theta(a|s) \cdot A^{\pi_\theta}(s,a).
\end{align}
Note that this is given as \citet[Lemma C.1]{AgKaLeMa19}; we include a proof for completeness.
\begin{proof}
According to the policy gradient theorem (\cref{thm:policy_gradient_theorem_general}),
\begin{align}
    \frac{\partial V^{\pi_\theta}(\mu)}{\partial \theta} = \frac{1}{1-\gamma} \expectation_{s^\prime \sim d_{\mu}^{\pi_\theta} } { \left[ \sum_{a}  \frac{\partial \pi_\theta(a | s^\prime)}{\partial \theta} \cdot Q^{\pi_\theta}(s^\prime,a) \right] }.
\end{align}
For $s^\prime \not= s$, $\frac{\partial \pi_\theta(a | s^\prime)}{\partial \theta(s, \cdot)} = \rvzero$ since $\pi_\theta(a | s^\prime)$ does not depend on $\theta(s, \cdot)$. Therefore,
\begin{align}
    \frac{\partial V^{\pi_\theta}(\mu)}{\partial \theta(s, \cdot)} &= \frac{1}{1-\gamma} \cdot d_{\mu}^{\pi_\theta}(s) \cdot { \left[ \sum_{a} \frac{\partial \pi_\theta(a | s)}{\partial \theta(s, \cdot)} \cdot Q^{\pi_\theta}(s,a) \right] } \\
    &= \frac{1}{1-\gamma} \cdot d_{\mu}^{\pi_\theta}(s) \cdot \left( \frac{d \pi(\cdot | s)}{d \theta(s, \cdot)} \right)^\top Q^{\pi_\theta}(s,\cdot) \\
    &= \frac{1}{1-\gamma} \cdot d_{\mu}^{\pi_\theta}(s) \cdot H(\pi_\theta(\cdot | s)) Q^{\pi_\theta}(s,\cdot). \qquad \left( \text{using \cref{eq:H_matrix}} \right)
\end{align}
Since $H(\pi_\theta(\cdot | s)) = \diagonalmatrix( \pi_\theta(\cdot | s) ) - \pi_\theta(\cdot | s) \pi_\theta(\cdot | s)^\top$, for each component $a$, we have
\begin{align}
    \frac{\partial V^{\pi_\theta}(\mu)}{\partial \theta(s, a)} &= \frac{1}{1-\gamma} \cdot d_{\mu}^{\pi_\theta}(s) \cdot \pi_\theta(a | s) \cdot \left[ Q^{\pi_\theta}(s, a) - \sum_{a}{ \pi_\theta(a | s) \cdot Q^{\pi_\theta}(s, a) } \right] \\
    &= \frac{1}{1-\gamma} \cdot d_{\mu}^{\pi_\theta}(s) \cdot \pi_\theta(a | s) \cdot \left[ Q^{\pi_\theta}(s, a) - V^{\pi_\theta}(s) \right] \qquad \left( \text{using } V^{\pi_\theta}(s) = \sum_{a}{ \pi_\theta(a | s) \cdot Q^{\pi_\theta}(s, a) } \right) \\
    &= \frac{1}{1-\gamma} \cdot d_{\mu}^{\pi_\theta}(s) \cdot \pi_\theta(a | s) \cdot A^{\pi_\theta}(s, a). \qedhere
\end{align}
\end{proof}

\subsubsection{Proofs for softmax parametrization in bandits}
\label{sec:proofs_policy_gradient_bandits}

\textbf{\cref{prop:non_concave_softmax_expected_reward}.}
On some problems, $\theta\mapsto \expectation_{a \sim \pi_{\theta}}{ [ r(a) ] }$ is a non-concave function over $\R^K$.
\begin{proof}
Consider the following example: $r = (1, 9/10, 1/10)^\top$, $\theta_1 = (0, 0, 0)^\top$, $\pi_{\theta_1} = \softmax(\theta_1) = (1/3, 1/3, 1/3)^\top$, $\theta_2 = (\ln{9}, \ln{16}, \ln{25})^\top$, and $\pi_{\theta_2} = \softmax(\theta_2) = (9/50, 16/50, 25/50)^\top$. We have,
\begin{align}
    \frac{1}{2} \cdot \left( \pi_{\theta_1}^\top r + \pi_{\theta_2}^\top r \right) = \frac{1}{2} \cdot  \left(\frac{2}{3} + \frac{259}{500} \right) = \frac{1777}{3000} = \frac{14216}{24000}.
\end{align}
On the other hand, defining  $\bar{\theta} = \frac{1}{2} \cdot \left( \theta_1 + \theta_2 \right) = \left( \ln{3}, \ln{4}, \ln{5} \right)^\top$ we have $\pi_{\bar{\theta}} = \softmax(\bar{\theta}) = \left( 3/12, 4/12, 5/12 \right)^\top$ and
\begin{align}
    \pi_{\bar{\theta}}^\top r = \frac{71}{120} = \frac{14200}{24000}.
\end{align}
Since $\frac{1}{2} \cdot \left( \pi_{\theta_1}^\top r + \pi_{\theta_2}^\top r \right) > \pi_{\bar{\theta}}^\top r$, $\theta \mapsto \expectation_{a \sim \pi_{\theta}(\cdot)}{ [ r(a) ] }$ is a non-concave function of $\theta$.
\end{proof}

\textbf{\cref{lem:smoothness_softmax_special}} (Smoothness)\textbf{.} Let $\pi_\theta = \softmax(\theta)$ and $\pi_{\theta^\prime} = \softmax(\theta^\prime)$. For any $r \in \left[ 0, 1\right]^K$, $\theta \mapsto \pi_\theta^\top r$ is $5/2$-smooth, i.e.,
\begin{align}
    \left| ( \pi_{\theta^\prime} - \pi_\theta)^\top r - \Big\langle \frac{d \pi_\theta^\top r}{d \theta}, \theta^\prime - \theta \Big\rangle \right| \le \frac{5}{4} \cdot \| \theta^\prime - \theta \|_2^2.
\end{align}
\begin{proof}
Let  $S:=S(r,\theta)\in \R^{K\times K}$ be 
the second derivative of the value map $\theta \mapsto \pi_\theta^\top r$.
By Taylor's theorem, it suffices to show that the spectral radius of $S$ (regardless of $r$ and $\theta$) is bounded by $5/2$.
Now, by its definition we have
\begin{align}
    S &= \frac{d }{d \theta } \left\{ \frac{d \pi_\theta^\top r}{d \theta} \right\} \\
    &= \frac{d }{d \theta } \left\{ H(\pi_\theta) r \right\} 
    \qquad\left( \text{using \cref{eq:H_matrix}} \right) \\
    &= \frac{d }{d \theta } \left\{ ( \diagonalmatrix(\pi_\theta) - \pi_\theta \pi_\theta^\top) r \right\}.
\end{align}
Continuing with our calculation fix $i, j \in [K]$. Then, 
\begin{align}
    S_{i, j} &= \frac{d \{ \pi_\theta(i) \cdot  ( r(i) - \pi_\theta^\top r ) \} }{d \theta(j)} \\
    &= \frac{d \pi_\theta(i) }{d \theta(j)} \cdot ( r(i) - \pi_\theta^\top r ) + \pi_\theta(i) \cdot \frac{d \{ r(i) - \pi_\theta^\top r \} }{d \theta(j)} \\
    &= (\delta_{ij} \pi_\theta(j) -  \pi_\theta(i) \pi_\theta(j) ) \cdot ( r(i) - \pi_\theta^\top r ) - \pi_\theta(i) \cdot ( \pi_\theta(j) r(j) - \pi_\theta(j) \pi_\theta^\top r ) \\
    &= \delta_{ij} \pi_\theta(j) \cdot ( r(i) - \pi_\theta^\top r ) -  \pi_\theta(i) \pi_\theta(j) \cdot ( r(i) - \pi_\theta^\top r ) - \pi_\theta(i) \pi_\theta(j) \cdot ( r(j) -  \pi_\theta^\top r ),
\end{align}
where
\begin{align}
\label{eq:delta_ij_notation}
    \delta_{ij} = \begin{cases}
		1, & \text{if } i = j, \\
		0, & \text{otherwise}
	\end{cases}
\end{align}
is Kronecker's $\delta$-function.
To show the bound on 
the spectral radius of $S$, pick $y \in \sR^K$. Then,
\begin{align}
    \left| y^\top S y \right| &= \left| \sum\limits_{i=1}^{K}{ \sum\limits_{j=1}^{K}{ S_{i,j} y(i) y(j)} } \right| \\
    &= \left| \sum_{i}{ \pi_\theta(i) ( r(i) - \pi_\theta^\top r ) y(i)^2 } - 2 \sum_{i} \pi_\theta(i) ( r(i) - \pi_\theta^\top r ) y(i) \sum_{j} \pi_\theta(j) y(j) \right| \\
    &= \left| \left( H(\pi_\theta) r \right)^\top \left( y \odot y \right) - 2 \cdot \left( H(\pi_\theta) r \right)^\top y \cdot \left( \pi_\theta^\top y \right) \right| \\
    &\le \left\| H(\pi_\theta) r \right\|_\infty \cdot \left\| y \odot y \right\|_1 + 2 \cdot \left\| H(\pi_\theta) r \right\|_1 \cdot \left\| y \right\|_\infty \cdot \left\| \pi_\theta \right\|_1 \cdot \left\| y \right\|_\infty,
\end{align}
where $\odot$ is Hadamard (component-wise) product, and the last inequality uses H{\" o}lder's inequality together with the triangle inequality. Note that $\| y \odot y \|_1 = \| y \|_2^2$, $\| \pi_\theta \|_1 = 1$, and $\| y \|_\infty \le \| y \|_2$. For $i\in [K]$, denote by $H_{i,:}(\pi_\theta)$ the $i$-th row of $H(\pi_\theta)$ as a row vector. Then,
\begin{align}
    \left\| H_{i,:}(\pi_\theta) \right\|_1 &=  \pi_\theta(i) - \pi_\theta(i)^2 + \pi_\theta(i) \cdot \sum_{j \not= i}{ \pi_\theta(j) } \\
    &= \pi_\theta(i) - \pi_\theta(i)^2 +  \pi_\theta(i) \cdot ( 1 - \pi_\theta(i) ) \\
    &= 2 \cdot \pi_\theta(i) \cdot ( 1 - \pi_\theta(i) ) \\
    &\le 1/2. \qquad\left( \text{using that } x \cdot (1 - x ) \le 1/4 \text{ holds for } x \in [0, 1] \right)
\end{align}
On the other hand,
\begin{align}
\label{eq:H_matrix_r_1_norm_upper_bound_special}
    \left\| H(\pi_\theta) r \right\|_1 &= \sum_{i}{ \pi_\theta(i) \cdot \left| r(i) - \pi_\theta^\top r \right| } 
    \\
    &\le \max_{i}{ \left| r(i) - \pi_\theta^\top r \right| } \\
    &\le 1. \qquad\left( \text{using } r \in \left[ 0, 1\right]^K \right)
\end{align}
Therefore we have,
\begin{align}
\label{eq:H_matrix_maximum_eigenvalue}
    \left| y^\top S(r, \theta) y \right| &\le \left\| H(\pi_\theta) r \right\|_\infty \cdot \left\| y \right\|_2^2 + 2 \cdot \left\| H(\pi_\theta) r \right\|_1 \cdot \left\| y \right\|_2^2 \\
    &= \max_{i} \left| \left( H_{i,:}(\pi_\theta) \right)^\top r \right| \cdot \left\| y \right\|_2^2 + 2 \cdot \left\| H(\pi_\theta) r \right\|_1 \cdot \left\| y \right\|_2^2 \\
    &\le \max_{i} \left\| H_{i,:}(\pi_\theta) \right\|_1 \cdot \left\| r \right\|_\infty \cdot \left\| y \right\|_2^2 + 2 \cdot 1 \cdot \left\| y \right\|_2^2 \\
    &\le (1/2 + 2) \cdot \left\| y \right\|_2^2 = 5/2 \cdot \left\| y \right\|_2^2,
\end{align}
finishing the proof.
\end{proof}

\textbf{\cref{lem:lojasiewicz_softmax_special}} (Non-uniform \L{}ojasiewicz)\textbf{.}
Assume $r$ has a single maximizing action $a^*$. Let $\pi^* \coloneqq \argmax_{\pi \in \Delta}{ \pi^\top r}$, and $ \pi_\theta =\softmax(\theta)$. Then, for any $\theta$,
\begin{align}
    \left\| \frac{d \pi_\theta^\top r}{d \theta} \right\|_2 \ge \pi_\theta(a^*) \cdot ( \pi^* - \pi_\theta )^\top r\,.
\end{align}
When there are multiple optimal actions,  we have
\begin{align}
    \left\| \frac{d \pi_\theta^\top r}{d \theta} \right\|_2 \ge \frac{1}{ \sqrt{| \gA^* |} } \cdot \left[ \sum_{a^* \in \gA^*}{ \pi_\theta(a^*) } \right] \cdot ( \pi^* - \pi_\theta )^\top r,
\end{align}
where $\gA^* = \{ a^* : r(a^*) = \max_{a}{ r(a) } \}$ is the set of optimal actions.
\begin{proof}
We give the proof for the general case, as the case of a single maximizing action is a corollary to this case. Using the expression we got for the gradient earlier, 
\begin{align}
    \left\| \frac{d \pi_\theta^\top r}{d \theta} \right\|_2 &\ge \left( \sum_{a^* \in \gA^* }{\left[ \pi_\theta(a^*) \cdot (r(a^*) - \pi_\theta^\top r) \right]^2} \right)^\frac{1}{2} \\
    &\ge \frac{1}{ \sqrt{| \gA^* |} } \sum_{a^* \in \gA^* }{ \pi_\theta(a^*) \cdot (r(a^*) - \pi_\theta^\top r) } \qquad\left(\text{by Cauchy-Schwarz}\right) \\
    &= \frac{1}{ \sqrt{| \gA^* |} } \cdot \left[ \sum_{a^* \in \gA^*}{ \pi_\theta(a^*) } \right] \cdot ( \pi^* - \pi_\theta )^\top r. \qedhere
\end{align}
\end{proof}

For the remaining results in this section, for simplicity, we assume that $\gA^* = \{a^*\}$, i.e., there is a unique optimal action $a^*$.

\textbf{\cref{lem:pseudo_rates_softmax_special}} (Pseudo-rate)\textbf{.}
Let $\pi_{\theta_t} = \softmax(\theta_t)$, and $c_t = \min_{1 \le s \le t}{ \pi_{\theta_s}(a^*) }$.
Using \cref{update_rule:softmax_special} with $\eta = 2/5$, for all $t \ge 1$,
\begin{align}
\label{eq:pseudo_regret}
    ( \pi^* - \pi_{\theta_t} )^\top r &\le 5 / ( t \cdot c_t^2), \qquad \text{and} \\
    \sum_{t=1}^{T}{ ( \pi^* - \pi_{\theta_t} )^\top r} &\le \min\left\{ \sqrt{5 T} / c_T , \ ( 5 \log{T} ) / c_T^2 + 1\right\}.
\end{align}
\begin{proof}
According to \cref{lem:smoothness_softmax_special},
\begin{align}
    \left| ( \pi_{\theta_{t+1}} - \pi_{\theta_t})^\top r - \Big\langle \frac{d \pi_{\theta_t}^\top r}{d \theta_t}, \theta_{t+1} - \theta_t \Big\rangle \right| \le \frac{5}{4} \cdot \| \theta_{t+1} - \theta_t \|_2^2,
\end{align}
which implies
\begin{align}
\label{eq:smoothness_progress_special}
    \pi_{\theta_t}^\top r - \pi_{\theta_{t+1}}^\top r &\le - \Big\langle \frac{d \pi_{\theta_t}^\top r}{d \theta_t}, \theta_{t+1} - \theta_{t} \Big\rangle + \frac{5}{4} \cdot \| \theta_{t+1} - \theta_{t} \|_2^2 \\
    &= - \eta \cdot \left\| \frac{d \pi_{\theta_t}^\top r}{d \theta_t} \right\|_2^2 + \frac{5}{4} \cdot \eta^2 \cdot \left\| \frac{d \pi_{\theta_t}^\top r}{d \theta_t} \right\|_2^2 
    \qquad\left(\text{using } \theta_{t+1} = \theta_t + \eta \cdot \frac{d \pi_{\theta_t}^\top r}{d \theta_t} \right) \\
    &= - \frac{1}{5} \cdot \left\| \frac{d \pi_{\theta_t}^\top r}{d \theta_t} \right\|_2^2 
    \qquad\qquad \left(\text{using } \eta = 2/5 \right) \\
    &\le - \frac{1}{5} \cdot \left[ \pi_{\theta_t}(a^*) \cdot ( \pi^* - \pi_{\theta_t} )^\top r \right]^2 
    \qquad\left(\text{by \cref{lem:lojasiewicz_softmax_special}} \right) \\
    &\le - \frac{c_t^2}{5} \cdot \left[ ( \pi^* - \pi_{\theta_t} )^\top r \right]^2, 
    \qquad\left(\text{by the definition of } c_t \right)
\end{align}
which is equivalent to
\begin{align}
\label{eq:delta_iteration}
    ( \pi^* - \pi_{\theta_{t+1}} )^\top r - ( \pi^* - \pi_{\theta_t} )^\top r \le - \frac{c_t^2}{5} \cdot \left[ ( \pi^* - \pi_{\theta_t} )^\top r \right]^2.
\end{align}
Let $\delta_t = ( \pi^* - \pi_{\theta_t} )^\top r$. 
To prove the first part, we need to show 
 that $\delta_t \le \frac{5}{c_t^2} \cdot \frac{1}{t}$ holds for any $t\ge 1$.
We prove this by induction on $t$.

\noindent Base case:
Since $\delta_t \le 1$ and $c_t \in (0,1)$, the result trivially holds up to $t\le 5$. 

\noindent Inductive step:
Now, let $t\ge 2$ and suppose that $\delta_t \le \frac{5}{c_t^2} \cdot \frac{1}{t}$. 
Consider $f_t : \sR \to \sR$
defined using $f_t(x) = x - \frac{c_t^2}{5} \cdot x^2$. 
We have that $f_t$ is monotonically increasing in $\big[0, \frac{5}{2 \cdot c_t^2} \big]$.
Hence,
\begin{align}
\label{eq:one_over_t_induction}
    \delta_{t+1} &\le f_t(\delta_{t})
    \qquad\qquad \left(\text{by \cref{eq:delta_iteration}} \right)
    \\
    &\le f_t\left( \frac{5}{c_t^2} \cdot \frac{1}{t} \right) 
    \qquad\left(\text{using } \delta_t \le \frac{5}{c_t^2} \cdot \frac{1}{t} \le \frac{5}{2 \cdot c_t^2}, \ t \ge 2 \right) \\
    &= \frac{5}{c_t^2} \cdot \left( \frac{1}{t} - \frac{1}{t^2} \right) \\
    &\le \frac{5}{c_t^2} \cdot \frac{1}{t+1} \\
    &\le \frac{5}{c_{t+1}^2} \cdot \frac{1}{t+1},
    \qquad\left(\text{using } c_t \ge c_{t+1} > 0 \right)
\end{align}
which completes the induction and the proof of the first part of the lemma.

For the second part, summing up $\delta_t \le \frac{5}{c_t^2} \cdot \frac{1}{t} \le \frac{5}{c_T^2} \cdot \frac{1}{t}$, we have
\begin{align}
     \sum_{t=1}^{T}{ ( \pi^* - \pi_{\theta_t} )^\top r} \le \frac{5 \log{T}}{c_T^2} + 1.
\end{align}
On the other hand, rearranging \cref{eq:delta_iteration} and summing up $\delta_t^2 \le \frac{5}{c_t^2} \cdot (\delta_t - \delta_{t+1})  \le \frac{5}{c_T^2} \cdot (\delta_t - \delta_{t+1}) $ from $t = 1$ to $T$,
\begin{align}
    \sum_{t=1}^{T}{ \delta_t^2 } &\le \frac{5}{c_T^2} \sum_{t=1}^{T}{ \left( \delta_t - \delta_{t+1} \right) } \\
    &= \frac{5}{c_T^2} \cdot \left( \delta_1 - \delta_{T+1} \right) \\
    &\le \frac{5}{c_T^2}. \qquad\left(\text{since } \delta_{T+1} \ge 0, \ \delta_1 \le 1 \right)
\end{align}
Therefore, by Cauchy-Schwarz,
\begin{equation*}
    \sum_{t=1}^{T}{ ( \pi^* - \pi_{\theta_t} )^\top r } = \sum_{t=1}^{T}{ \delta_t } \le \sqrt{T} \cdot \sqrt{\sum_{t=1}^{T}{ \delta_t^2 }} \le \frac{ \sqrt{5 T}}{ c_T }. \qedhere
\end{equation*}
\end{proof}

\textbf{\cref{lem:lower_bound_cT_softmax_special}.}
For $\eta=2/5$, we have 
$\inf_{t\ge 1} \pi_{\theta_t}(a^*) > 0$.
\begin{proof}
Let 
\begin{align}
    c = \frac{K}{2 \Delta} \cdot \left(1 - \frac{\Delta}{K} \right)
\end{align}
and 
\begin{align}
    \Delta = r(a^*) - \max_{a \not= a^*}{ r(a) } > 0
\end{align}
denote the reward gap of $r$.
We will prove that $\inf_{t\ge 1} \pi_{\theta_t}(a^*) = \min_{1 \le t \le t_0}{ \pi_{\theta_t}(a^*) }$, where $t_0 =\min\{ t: \pi_{\theta_t}(a^*) \ge \frac{c}{c+1} \}$. 
Note that $t_0$ depends only on $\theta_1$ and $c$, and $c$ depends only on the problem.
Define the following regions,
\begin{align}
    \gR_1 &= \left\{ \theta : \frac{d \pi_\theta^\top r}{d \theta(a^*)} \ge \frac{d \pi_\theta^\top r}{d \theta(a)}, \ \forall a \not= a^* \right\}, \\
    \gR_2 & = \left\{ \theta : \pi_\theta(a^*) \ge \pi_\theta(a), \ \forall a \not= a^* \right\}\,,\\
    \gN_c & = \left\{ \theta : \pi_\theta(a^*) \ge \frac{c}{c+1} \right\}\,.
\end{align}

We make the following three-part claim.
\begin{claim}\label{cl:regions}
The following hold:
\begin{description}
    \item[a)] \label{cl:regions:a}
    $\gR_1$ is a ``nice" region, in the sense that if $\theta_{t} \in \gR_1$ then, with any $\eta>0$,
     following a gradient update
     {\em (i)}  $\theta_{t+1} \in \gR_1$ 
     and 
     {\em (ii)} $\pi_{\theta_{t+1}}(a^*) \ge \pi_{\theta_{t}}(a^*)$.
    \item[b)]  We have $\gR_2\subset \gR_1$ and $\gN_c \subset \gR_1$.
    \item[c)] 
    For $\eta=2/5$,
    there exists a finite time $t_0 \ge 1$, such that $\theta_{t_0} \in \gN_c$, and thus $\theta_{t_0} \in \gR_1$, which implies that $\inf_{t\ge 1} \pi_{\theta_t}(a^*) = \min_{1 \le t \le t_0}{ \pi_{\theta_t}(a^*) }$.
\end{description}
\end{claim}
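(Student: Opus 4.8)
The plan is to prove the three parts in the order b), a), c), since a)(i) is the only substantial step while b) supplies the containments that keep the bookkeeping clean. Throughout, write $\tilde g_0=\tilde g_0(\theta_t)$ for the step direction, with components $\tilde g_0(a)=\pi_{\theta_t}(a)\,(r(a)-\pi_{\theta_t}^\top r)$ as in \cref{update_rule:softmax_special}, so that $\gR_1=\{\theta:\tilde g_0(a^*)\ge \tilde g_0(a)\ \forall a\ne a^*\}$. The global logic is: part b) places the threshold region $\gN_c$ inside the gradient-dominance region $\gR_1$; part a) shows $\gR_1$ is forward invariant and that $\pi_{\theta_t}(a^*)$ can only grow while the iterate stays in $\gR_1$; and part c) uses the asymptotic result $\pi_{\theta_t}(a^*)\to1$ of \citet{AgKaLeMa19} (valid for $\eta=2/5$) to enter $\gN_c\subset\gR_1$ at a finite time and then never leave $\gR_1$. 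Since $\pi_{\theta_t}(a^*)>0$ for every $t$ and is eventually non-decreasing, its infimum over $t\ge1$ is attained on a finite initial segment.

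First I would dispatch the elementary pieces, a)(ii) and b). For a)(ii), write $\pi_\theta(a^*)=\bigl(1+\sum_{a\ne a^*}e^{-(\theta(a^*)-\theta(a))}\bigr)^{-1}$; membership in $\gR_1$ gives $\tilde g_0(a^*)\ge \tilde g_0(a)$, so the update raises every logit gap $\theta(a^*)-\theta(a)$ by $\eta(\tilde g_0(a^*)-\tilde g_0(a))\ge0$, shrinking each term in the sum and increasing $\pi_\theta(a^*)$. For b), note $r(a^*)-\pi_\theta^\top r\ge0$ always, hence $\tilde g_0(a^*)\ge0$. Then $\gR_2\subset\gR_1$ follows by a sign split: if $r(a)-\pi_\theta^\top r\le0$ then $\tilde g_0(a)\le0\le \tilde g_0(a^*)$, and otherwise both factors of $\tilde g_0(a^*)$ dominate those of $\tilde g_0(a)$ because $\pi_\theta(a^*)\ge\pi_\theta(a)$ and $r(a^*)\ge r(a)$. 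For $\gN_c\subset\gR_1$ I would lower bound $\tilde g_0(a^*)\ge\pi_\theta(a^*)\,(1-\pi_\theta(a^*))\,\Delta$ via $r(a^*)-\pi_\theta^\top r=\sum_{a\ne a^*}\pi_\theta(a)(r(a^*)-r(a))\ge(1-\pi_\theta(a^*))\Delta$, upper bound each competing $\tilde g_0(a)$ by $\pi_\theta(a)\,(r(a^*)-\pi_\theta^\top r)$, and check that the threshold $\pi_\theta(a^*)\ge c/(c+1)$ with $c=\tfrac{K}{2\Delta}(1-\tfrac{\Delta}{K})$ is calibrated exactly to make the former bound dominate the latter.

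The substance is a)(i): forward invariance of $\gR_1$ under a step of \emph{arbitrary} size $\eta>0$. Since $\theta_{t+1}=\theta_t+\eta\,\tilde g_0$ with the direction $\tilde g_0$ fixed, this is equivalent to the ray statement that $\{\theta_t+\eta\,\tilde g_0:\eta\ge0\}\subseteq\gR_1$, which is also what makes the ``any $\eta$'' phrasing transparent. Writing $\theta_\eta$ for a point on the ray, $\pi_\eta=\softmax(\theta_\eta)$ and $\tilde g_\eta(a)=\pi_\eta(a)(r(a)-\pi_\eta^\top r)$, I would show each difference $\tilde g_\eta(a^*)-\tilde g_\eta(a)$, nonnegative at $\eta=0$, never turns negative. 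Two sub-cases are immediate: if $r(a)\le\pi_\eta^\top r$ then $\tilde g_\eta(a)\le0\le\tilde g_\eta(a^*)$; and if $\pi_\eta(a)\le\pi_\eta(a^*)$, dividing by $\pi_\eta(a^*)$ reduces the claim to $r(a^*)-\pi_\eta^\top r\ge(\pi_\eta(a)/\pi_\eta(a^*))(r(a)-\pi_\eta^\top r)$, whose right side is dominated because $\pi_\eta(a)/\pi_\eta(a^*)\le1<(r(a^*)-\pi_\eta^\top r)/(r(a)-\pi_\eta^\top r)$. The remaining case — a competitor with \emph{both} $\pi_\eta(a)>\pi_\eta(a^*)$ and $r(a)>\pi_\eta^\top r$ — is the crux and the \textbf{main obstacle}, because growing $\eta$ helps through the widening logit gap (driving $\pi_\eta(a)/\pi_\eta(a^*)$ down) but can hurt by moving $\pi_\eta^\top r$ so as to shrink $r(a^*)-\pi_\eta^\top r$ relative to $r(a)-\pi_\eta^\top r$. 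My plan is a boundary/differential-inequality argument: show that at any $\eta$ with $\tilde g_\eta(a^*)=\tilde g_\eta(a)$ one has $\tfrac{d}{d\eta}[\tilde g_\eta(a^*)-\tilde g_\eta(a)]\ge0$, so the difference cannot cross zero downward. Computing this uses $\tfrac{d}{d\eta}\pi_\eta(a)=\pi_\eta(a)(\tilde g_0(a)-\langle\pi_\eta,\tilde g_0\rangle)$ from \cref{eq:H_matrix} and reduces, on the boundary set, to a covariance-type inequality to be verified from the $\gR_1$ ordering of the fixed direction $\tilde g_0$; getting this cleanly rather than by brute expansion is where the effort lies, and a discrete analysis tracking $\pi_\eta^\top r$ and $\pi_\eta(a)/\pi_\eta(a^*)$ quantitatively is the fallback.

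Finally, for c), \citet{AgKaLeMa19} guarantee $\pi_{\theta_t}(a^*)\to1$ under $\eta=2/5$; since $c/(c+1)<1$, the first time $t_0$ with $\pi_{\theta_{t_0}}(a^*)\ge c/(c+1)$ is finite, so $\theta_{t_0}\in\gN_c\subset\gR_1$ by b). By a), $\theta_t\in\gR_1$ for all $t\ge t_0$ and $t\mapsto\pi_{\theta_t}(a^*)$ is non-decreasing there, so the infimum over all $t\ge1$ equals the minimum over the finite range $1\le t\le t_0$, which is strictly positive. This is precisely $\inf_{t\ge1}\pi_{\theta_t}(a^*)=\min_{1\le t\le t_0}\pi_{\theta_t}(a^*)>0$.
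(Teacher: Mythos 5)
Your parts b), a)(ii) and c) are correct in substance and follow the same route as the paper: the sign-split for $\gR_2\subset\gR_1$, the monotone-logit-gap argument for a)(ii), and the finite-entry-time argument via \citet{AgKaLeMa19} for c) all match. Two small caveats there. First, your bounds for $\gN_c\subset\gR_1$ reduce to needing $\pi_\theta(a^*)\,\Delta\ge\pi_\theta(a)$ for all $a\ne a^*$, which under the threshold $c/(c+1)=\tfrac{K-\Delta}{K+\Delta}$ holds only when $K\ge 2+\Delta$; so your chain covers $K\ge 3$, and $K=2$ must be noted separately (it is trivial there, since the two gradient components sum to zero, so $\gR_1$ is all of $\R^2$) — the threshold is calibrated exactly for the paper's sharper decomposition, not for your cruder bounds. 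Second, the asymptotic result of \citet{AgKaLeMa19} is stated for smaller step sizes; the paper explicitly argues why it extends to $\eta=2/5$ (the step size enters their proof only through the ascent lemma), and your proof needs that remark too.

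The genuine gap is in a)(i), the heart of the claim, in exactly the case you flag as the crux ($\pi_\eta(a)>\pi_\eta(a^*)$ and $r(a)>\pi_\eta^\top r$): you do not prove it, and the inequality your plan hinges on cannot be obtained from what you assume. Carrying out the derivative computation you outline, at a crossing point where $\tilde g_\eta(a^*)=\tilde g_\eta(a)=g>0$ one gets
\begin{align}
    \frac{d}{d\eta}\left[ \tilde g_\eta(a^*)-\tilde g_\eta(a) \right] = g\cdot\left[ \tilde g_0(a^*)-\tilde g_0(a) \right] + \left( \pi_\eta(a)-\pi_\eta(a^*) \right)\cdot \langle \tilde g_\eta, \tilde g_0 \rangle,
\end{align}
so what you need is $\langle \tilde g_\eta,\tilde g_0\rangle\ge 0$, i.e.\ precisely the ascent property $\tfrac{d}{d\eta}\,\pi_\eta^\top r\ge 0$ along the ray. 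This does \emph{not} follow ``from the $\gR_1$ ordering of the fixed direction $\tilde g_0$'': two zero-sum vectors maximized at the same coordinate can have negative inner product (e.g.\ $(2,-3,1)$ and $(2,1,-3)$), so extra structure is indispensable, and it is unclear it even holds for unbounded $\eta$. The paper supplies this missing ingredient by a different, completed argument: it rewrites $\theta\in\gR_1(a)$ as $r(a^*)-r(a)\ge\bigl(1-\pi_\theta(a^*)/\pi_\theta(a)\bigr)\bigl(r(a^*)-\pi_\theta^\top r\bigr)$ and shows both right-hand factors do not increase after one update — the ratio factor because the logit gap grows, and the factor $r(a^*)-\pi_\theta^\top r$ by the ascent lemma for smooth functions (\cref{lem:ascent_lemma_smooth_function}, with the smoothness constant from \cref{lem:smoothness_softmax_special}). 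Note that this ascent step requires a bounded step size, so the ``any $\eta>0$'' phrasing is really ``any $\eta$ for which the ascent property holds'' (e.g.\ $\eta=2/5$, which is all that is used downstream); your ray formulation for unbounded $\eta$ is strictly harder than what the paper's own argument supports. To repair your proof: restrict to such step sizes, invoke smoothness to get $\pi_{\theta_{t+1}}^\top r\ge\pi_{\theta_t}^\top r$, and close your crux case with the two-factor monotonicity argument rather than a differential inequality.
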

\paragraph{Claim a)}
Part~(i): We want to show that 
if $\theta_{t} \in \gR_1$, then $\theta_{t+1} \in \gR_1$. 
Let 
\begin{align}
\gR_1(a) &= \left\{ \theta : \frac{d \pi_\theta^\top r}{d \theta(a^*)} \ge \frac{d \pi_\theta^\top r}{d \theta(a)}\right\}\,.
\end{align}
Note that $\gR_1 = \cap_{a\ne a^*} \gR_1(a)$.
Pick $a\ne a^*$. Clearly, it suffices to show that if $\theta_t\in \gR_1(a)$ then $\theta_{t+1}\in \gR_1(a)$.
Hence, suppose that $\theta_t\in \gR_1(a)$.
We consider two cases.

\noindent Case (a): $\pi_{\theta_t}(a^*) \ge \pi_{\theta_t}(a)$.
Since $\pi_{\theta_t}(a^*) \ge \pi_{\theta_t}(a)$, we also have $\theta_t(a^*) \ge \theta_t(a)$. 
After an update of the parameters,
\begin{align}
    \theta_{t+1}(a^*) &= \theta_{t}(a^*) + \eta \cdot \frac{d \pi_{\theta_t}^\top r}{d \theta_t(a^*)} \\
    &\ge \theta_t(a) + \eta \cdot \frac{d \pi_{\theta_t}^\top r}{d \theta_t(a)} \\
    &= \theta_{t+1}(a),
\end{align}
which implies that $\pi_{\theta_{t+1}}(a^*) \ge \pi_{\theta_{t+1}}(a)$. Since $r(a^*) - \pi_{\theta_{t+1}}^\top r > 0$ and $r(a^*) > r(a)$,
\begin{align}
    \pi_{\theta_{t+1}}(a^*) \cdot \left( r(a^*) - \pi_{\theta_{t+1}}^\top r \right) \ge \pi_{\theta_{t+1}}(a) \cdot \left( r(a) - \pi_{\theta_{t+1}}^\top r \right), 
\end{align}
which is equivalent to $\frac{d \pi_{\theta_{t+1}}^\top r}{d \theta_{t+1}(a^*)} \ge \frac{d \pi_{\theta_{t+1}}^\top r}{d \theta_{t+1}(a)}$, i.e., $\theta_{t+1} \in \gR_1(a)$.

\noindent Case (b): Suppose now that $\pi_{\theta_t}(a^*) < \pi_{\theta_t}(a)$.
First note that for any $\theta$ and $a\ne a^*$, $\theta \in \gR_1(a)$ holds if and only if
\begin{align}
\label{eq:r1acond}
    r(a^*) - r(a) &\ge \left( 1 - \frac{\pi_{\theta}(a^*)}{\pi_{\theta}(a)} \right) \cdot \left( r(a^*) - \pi_{\theta}^\top r \right)\,.
\end{align}
Indeed, from the condition $\frac{d \pi_{\theta}^\top r}{d \theta(a^*)} \ge \frac{d \pi_{\theta}^\top r}{d \theta(a)}$, we get
\begin{align}
    \pi_{\theta}(a^*) \cdot \left( r(a^*) - \pi_{\theta}^\top r \right) &\ge \pi_{\theta}(a) \cdot \left( r(a) - \pi_{\theta}^\top r \right) \\
    &= \pi_{\theta}(a) \cdot \left( r(a^*) - \pi_{\theta}^\top r \right) - \pi_{\theta}(a) \cdot \left( r(a^*) - r(a) \right),
\end{align}
which, after rearranging, is equivalent to \cref{eq:r1acond}.
Hence, it suffices to show that  \cref{eq:r1acond} holds for $\theta_{t+1}$ provided it holds for $\theta_t$.

From the latter condition, we get
\begin{align}
    r(a^*) - r(a) \ge 
     \left( 1 - \exp\left\{ \theta_{t}(a^*) - \theta_{t}(a) \right\} \right) \cdot \left( r(a^*) - \pi_{\theta_{t}}^\top r \right).
\end{align}
After an update of the parameters, according to the ascent lemma for smooth function (\cref{lem:ascent_lemma_smooth_function}), 
$\pi_{\theta_{t+1}}^\top r \ge \pi_{\theta_{t}}^\top r$, i.e.,
\begin{align}
    0 < r(a^*) - \pi_{\theta_{t+1}}^\top r \le r(a^*) - \pi_{\theta_{t}}^\top r\,.
\end{align}
On the other hand,
\begin{align}
    \theta_{t+1}(a^*) - \theta_{t+1}(a) &= \theta_{t}(a^*) + \eta \cdot \frac{d \pi_{\theta_t}^\top r}{d \theta_t(a^*)} - \theta_{t}(a) - \eta \cdot \frac{d \pi_{\theta_t}^\top r}{d \theta_t(a)} \\
    &\ge \theta_{t}(a^*) - \theta_{t}(a),
\end{align}
which implies that
\begin{align}
    1 - \exp\left\{ \theta_{t+1}(a^*) - \theta_{t+1}(a) \right\} \le 1 - \exp\left\{ \theta_{t}(a^*) - \theta_{t}(a) \right\}.
\end{align}
Furthermore, 
by our assumption that $\pi_{\theta_t}(a^*) < \pi_{\theta_t}(a)$,
we have
$1 - \exp\left\{ \theta_{t}(a^*) - \theta_{t}(a) \right\} = 1 - \frac{\pi_{\theta_{t}}(a^*)}{\pi_{\theta_{t}}(a)} > 0$. Putting things together, we get
\begin{align}
    \left( 1 - \exp\left\{ \theta_{t+1}(a^*) - \theta_{t+1}(a) \right\} \right) \cdot \left( r(a^*) - \pi_{\theta_{t+1}}^\top r \right) &\le \left( 1 - \exp\left\{ \theta_{t}(a^*) - \theta_{t}(a) \right\} \right) \cdot \left( r(a^*) - \pi_{\theta_{t}}^\top r \right) \\
    &\le r(a^*) - r(a),
\end{align}
which is equivalent to
\begin{align}
     \left( 1 - \frac{\pi_{\theta_{t+1}}(a^*)}{\pi_{\theta_{t+1}}(a)} \right) \cdot \left( r(a^*) - \pi_{\theta_{t+1}}^\top r \right) \le r(a^*) - r(a),
\end{align}
and thus by our previous remark, $\theta_{t+1}\in \gR_1(a)$, thus, finishing the proof of part~(i).

Part~(ii): 
Assume again that $\theta_t \in \gR_1$. We want to show that
$\pi_{\theta_{t+1}}(a^*) \ge \pi_{\theta_{t}}(a^*)$. 
Since $\theta_t \in \gR_1$, we have $\frac{d \pi_{\theta_t}^\top r}{d \theta_t(a^*)} \ge \frac{d \pi_{\theta_t}^\top r}{d \theta_t(a)}, \ \forall a \not= a^*$. Hence,
\begin{align}
    \pi_{\theta_{t+1}}(a^*) &= \frac{\exp\left\{ \theta_{t+1}(a^*) \right\}}{ \sum_{a}{ \exp\left\{ \theta_{t+1}(a) \right\}} } \\
    &= \frac{\exp\left\{ \theta_{t}(a^*) + \eta \cdot \frac{d \pi_{\theta_t}^\top r}{d \theta_t(a^*)} \right\}}{ \sum_{a}{ \exp\left\{ \theta_{t}(a) + \eta \cdot \frac{d \pi_{\theta_t}^\top r}{d \theta_t(a)} \right\}} } \\
    &\ge \frac{\exp\left\{ \theta_{t}(a^*) + \eta \cdot \frac{d \pi_{\theta_t}^\top r}{d \theta_t(a^*)} \right\}}{ \sum_{a}{ \exp\left\{ \theta_{t}(a) + \eta \cdot \frac{d \pi_{\theta_t}^\top r}{d \theta_t(a^*)} \right\}} } \qquad\left(\text{using } \frac{d \pi_{\theta_t}^\top r}{d \theta_t(a^*)} \ge \frac{d \pi_{\theta_t}^\top r}{d \theta_t(a)} \right) \\
    &= \frac{\exp\left\{ \theta_{t}(a^*) \right\}}{ \sum_{a}{ \exp\left\{ \theta_{t}(a) \right\}} } = \pi_{\theta_t}(a^*).
\end{align}

\paragraph{Claim b)} 
We start by showing that $\gR_2\subset \gR_1$.
For this, let $\theta\in \gR_2$, i.e., $\pi_\theta(a^*) \ge \pi_\theta(a)$. Then,
\begin{align}
    \frac{d \pi_\theta^\top r}{d \theta(a^*)} &= \pi_\theta(a^*) \cdot \left( r(a^*) - \pi_\theta^\top r \right) \\
    &> \pi_\theta(a) \cdot \left( r(a) - \pi_\theta^\top r \right)  \qquad\left(\text{using } r(a^*) - \pi_\theta^\top r > 0 \text{ and } r(a^*) > r(a) \right) \\
    &= \frac{d \pi_\theta^\top r}{d \theta(a)}.
\end{align}
Hence, $\theta\in \gR_1$ and thus $\gR_2\subset \gR_1$ as desired.

Now, let us prove that $\gN_c \subset \gR_1$.
Take $\theta\in \gN_c$. We want to show that $\theta\in \gR_1$.
If $\theta\in \gR_2$, by $\gR_2\subset\gR_1$, we also have that $\theta\in \gR_1$.
Hence, it remains to show that $\theta\in \gR_1$ holds when $\theta \in \gN_c$ and $\theta\not\in \gR_2$.

Thus, take any $\theta$ that satisfies these two conditions. Pick $a\ne a^*$.
It suffices to show that $\theta\in \gR_1(a)$.
Without loss of generality, assume that $a^*=1$ and $a=2$.
Then, we have,
\begin{align}
\label{eq:r1i}
    \frac{d \pi_\theta^\top r}{d \theta(a^*)} - \frac{d \pi_\theta^\top r}{d \theta(a)} &= \frac{d \pi_\theta^\top r}{d \theta(1)} - \frac{d \pi_\theta^\top r}{d \theta(2)} \\
    &= \pi_\theta(1) \cdot \left( r(1) - \pi_\theta^\top r \right) - \pi_\theta(2) \cdot \left( r(2) - \pi_\theta^\top r \right) 
    \\
    &= 2 \cdot \pi_\theta(1) \cdot \left( r(1) - \pi_\theta^\top r \right) + \sum_{i=3}^{K}{ \pi_\theta(i) \cdot \left( r(i) - \pi_\theta^\top r \right) } 
    \qquad\left(\text{see below}\right)
    \\
    &= \left( 2 \cdot \pi_\theta(1) + \sum_{i=3}^{K}{ \pi_\theta(i)} \right) \cdot \left( r(1) - \pi_\theta^\top r \right) - \sum_{i=3}^{K}{ \pi_\theta(i) \cdot \left( r(1) - r(i) \right) } \\
    &\ge \left( 2 \cdot \pi_\theta(1) + \sum_{i=3}^{K}{ \pi_\theta(i)} \right) \cdot \left( r(1) - \pi_\theta^\top r \right) - \sum_{i=3}^{K}{ \pi_\theta(i) } \\
    &\ge \left( 2 \cdot \pi_\theta(1) + \sum_{i=3}^{K}{ \pi_\theta(i)} \right) \cdot \frac{\Delta}{K} - \sum_{i=3}^{K}{ \pi_\theta(i) },
\end{align}
where the second equation is because
\begin{align}
    \pi_\theta(2) \cdot \left( r(2) - \pi_\theta^\top r \right) + \sum_{i \not= 2}{ \pi_\theta(i) \cdot \left( r(i) - \pi_\theta^\top r \right) } = 0,
\end{align}
the first inequality is by $0 < r(1) - r(i) \le 1$ and the second inequality is because of
\begin{align}
    r(1) - \pi_\theta^\top r &= \sum_{i=1}^{K}{ \pi_\theta(i) \cdot r(1) }  - \sum_{i=1}^{K}{ \pi_\theta(i) \cdot r(i)} \\
    &= \sum_{i=2}^{K}{ \pi_\theta(i) \cdot \left( r(1) - r(i) \right) } \\
    &\ge \sum_{i=2}^{K}{ \pi_\theta(i) \cdot \Delta } \ge \max_{a \not= a^*}\{ \pi_\theta(a) \} \cdot \Delta \\
    &\ge \frac{\Delta}{K}. 
    \qquad\left(\text{using } \pi_\theta(a^*) < \max_{a \not= a^*}\{ \pi_\theta(a) \}, \ \max_{a \not= a^*}\{ \pi_\theta(a) \} = \max_{a}\{ \pi_\theta(a) \} \ge \frac{1}{K} \right)
\end{align}
Plugging $\sum_{i=3}^{K}{ \pi_\theta(i) } = 1 - \pi_\theta(1) - \pi_\theta(2)$
into \cref{eq:r1i} and rearranging the resulting expression we get
\begin{align}
    \frac{d \pi_\theta^\top r}{d \theta(a^*)} - \frac{d \pi_\theta^\top r}{d \theta(a)} 
    &\ge 
    \pi_\theta(1) \cdot  \left(1+\frac{\Delta}{K}\right) - \left(1-\frac{\Delta}{K}\right) + \pi_\theta(2)  \cdot  \left(1-\frac{\Delta}{K}\right) \\
    &\ge 
    \pi_\theta(2) \cdot  \left(1-\frac{\Delta}{K}\right) \ge 0\,,
    \qquad\left(\text{using } \theta\in \gN_c \text{, i.e., } \ \pi_\theta(1)\ge c/(c+1) \right)
\end{align}
which implies that $\theta \in \gR_1(a)$, thus, finishing the proof.

\paragraph{Claim c)}
We claim that  $\pi_{\theta_t}(a^*) \to 1$ as $t \to \infty$. 
For this, we wish to use the asymptotic convergence results of \citet[Theorem 5.1]{AgKaLeMa19},
which states this, but the stepsize there is $\eta\le 1/5$ while here we have $\eta=2/5$. 
We claim that their asymptotic result still hold with the larger $\eta$. In fact, the restriction on $\eta$ comes from that they can only prove the ascent lemma (\cref{lem:ascent_lemma_smooth_function}) for $\eta \le 1/5$. Other than this, their proof does not rely on the choice of $\eta$. Since we can prove the ascent lemma with $\eta\le 2/5$ (and in particular with $\eta=2/5$), their result continues to hold even with $\eta=2/5$.

Thus, $\pi_{\theta_t}(a^*) \to 1$ as $t \to \infty$. Hence, there exists $t_0 \ge 1$, such that $\pi_{\theta_{t_0}}(a^*) \ge \frac{c}{c+1}$ , which means $\theta_{t_0} \in \gN_c \subset \gR_1$. According to the first part in our proof, i.e., once $\theta_t$ is in $\gR_1$, following gradient update $\theta_{t+1}$ will be in $\gR_1$, and $\pi_{\theta_t}(a^*)$ is increasing in $\gR_1$, we have $\inf_{t}{ \pi_{\theta_t}(a^*)} = \min_{1 \le t \le t_0}{ \pi_{\theta_t}(a^*)}$. $t_0$ depends on initialization and $c$, which only depends on the problem.
\end{proof}

\textbf{\cref{prop:t_zero_softmax_special}.}
For any initialization there exist $t_0 \ge 1$ such that for any $t\ge t_0$, $t \mapsto \pi_{\theta_t}(a^*)$ is increasing. In particular, when $\pi_{\theta_1}$ is the uniform distribution, $t_0=1$.
\begin{proof}
We have
$t_0 =\min\{ t\ge 1: \pi_{\theta_t}(a^*) \ge \frac{c}{c+1} \}$, where $c = \frac{K}{2 \Delta} \cdot \left(1 - \frac{\Delta}{K} \right)$ in the proof for \cref{lem:lower_bound_cT_softmax_special} satisfies for any $t\ge t_0$, $t \mapsto \pi_{\theta_t}(a^*)$ is increasing. 

Now, let $\theta_1$ be so that $\pi_{\theta_1}$ is the uniform distribution.
We show that $t_0=1$. 
Recall from \cref{cl:regions} 
that $\gR_2$ is the region where the probability of the optimal action exceeds that of the suboptimal ones
and $\gR_1$ is the region where the gradient of the optimal action exceeds those of the suboptimal ones
and that $\gR_2\subset \gR_1$.
Clearly, $\theta_1\in \gR_2$ and hence also $\theta_1\in \gR_1$.
Now, by Part~a) of \cref{cl:regions}, $\gR_1$ is invariant under the updates, showing that $t_0=1$ holds as required.
\end{proof}

\textbf{\cref{thm:final_rates_softmax_special}} (Arbitrary initialization)\textbf{.}
Using \cref{update_rule:softmax_special} with $\eta = 2/5$, for all $t \ge 1$,
\begin{align}
    ( \pi^* - \pi_{\theta_t} )^\top r \le 5 / (c^2 \cdot t),
\end{align}
where $c = \inf_{t\ge 1} \pi_{\theta_t}(a^*) > 0$ is a constant that depends on $r$ and $\theta_1$, but it does not depend on the time $t$.
\begin{proof}
According to \cref{lem:pseudo_rates_softmax_special,lem:lower_bound_cT_softmax_special}, the claim immediately holds, with $c =  \inf_{t \ge 1}{ \pi_{\theta_t}(a^*)} > 0$.
\end{proof}

\textbf{\cref{thm:rates_uniform_softmax_special}} (Uniform initialization)\textbf{.}
Using \cref{update_rule:softmax_special} with $\eta = 2/5$ and $\pi_{\theta_1}(a) = 1/K$, $\forall a$, for all $t \ge 1$,
\begin{align}
    ( \pi^* - \pi_{\theta_t} )^\top r &\le 5 K^2 / t, \qquad \text{and} \\
    \sum_{t=1}^{T}{ ( \pi^* - \pi_{\theta_t} )^\top r} &\le \min\left\{ K \sqrt{5 T}, \ 5 K^2 \log{T} + 1 \right\}.
\end{align}
\begin{proof}
Since the initial policy is uniform policy, $\pi_{\theta_1}(a^*) \ge 1/K$. 
According to \cref{prop:t_zero_softmax_special}, for all $t \ge t_0 = 1$, 
$t \mapsto \pi_{\theta_t}(a^*)$ is increasing.
Hence, we have $\pi_{\theta_t}(a^*) \ge 1/K$, $\forall t \ge 1$, and $c_t = \min_{1 \le s \le t}{ \pi_{\theta_s}(a^*) } \ge 1/K$. 
According to \cref{lem:pseudo_rates_softmax_special},
\begin{align}
    ( \pi^* - \pi_{\theta_t} )^\top r \le \frac{5}{c_t^2} \cdot \frac{1}{t},
\end{align}
we have $( \pi^* - \pi_{\theta_t} )^\top r \le 5 K^2 / t$, $\forall t \ge 1$. The remaining results follow from \cref{eq:pseudo_regret} and $c_T \ge 1/K$.
\end{proof}

\textbf{\cref{lem:t0_softmax_special}.}
Let $r(1) > r(2) > r(3)$. Then, $a^*=1$ and $\inf_{t\ge 1} \pi_{\theta_t}(1) = \min_{1 \le t \le t_0} \pi_{\theta_t}(1)$, where 
\begin{align}
    t_0 = \min \left\{ t \ge 1 \,:\,  \frac{ \pi_{\theta_t}(1) }{ \pi_{\theta_t}(3) } \ge \frac{r(2) - r(3)}{2 \cdot ( r(1) - r(2) ) } \right\}\,.
\end{align}
In general, for $K$-action bandit cases, let $r(1) > r(2) > \cdots > r(K)$, we have,
\begin{align}
    t_0 = \min \left\{ t \ge 1 \,:\,  \pi_{\theta}(1) \ge \frac{ \sum_{j \not= 1, j \not= i}{ \pi_\theta(j) \cdot \left( r(i) - r(j) \right)} }{2 \cdot ( r(1) - r(i) ) }, \text{ for all } i \in \{ 2, 3, \dots K-1 \} \right\}\,.
\end{align}
\begin{proof}
\textbf{$3$-action case.}
Recall the definition of $\gR_1$ from the proof for \cref{lem:lower_bound_cT_softmax_special}:
\begin{align}
    \gR_1 = \left\{ \theta : \frac{d \pi_\theta^\top r}{d \theta(a^*)} \ge \frac{d \pi_\theta^\top r}{d \theta(a)}, \ \forall a \not= a^* \right\}.
\end{align}
By Part~a) of \cref{cl:regions}, it suffices to prove that $\theta\in \gR_1$.
Thus, our goal is to show that any $\theta$ such that $\frac{ \pi_{\theta}(1) }{ \pi_{\theta}(3) } \ge \frac{r(2) - r(3)}{2 \cdot ( r(1) - r(2) ) }$ is in fact an element of $\gR_1$. Suppose $\frac{ \pi_{\theta}(1) }{ \pi_{\theta}(3) } \ge \frac{r(2) - r(3)}{2 \cdot ( r(1) - r(2) ) }$. There are two cases.

Case (a): If $\frac{ \pi_{\theta}(1) }{ \pi_{\theta}(3) } \ge \frac{r(2) - r(3)}{ r(1) - r(2) }$, then we have,
\begin{align}
    r(2) - \pi_\theta^\top r &= - \pi_\theta(1) \cdot \left( r(1) - r(2) \right) + \pi_\theta(3) \cdot \left( r(2) - r(3) \right) \\
    &= \pi_\theta(3) \cdot ( r(1) - r(2) ) \cdot \left[ - \frac{ \pi_{\theta}(1) }{ \pi_{\theta}(3) } + \frac{r(2) - r(3)}{ r(1) - r(2) } \right] \\
    &\le 0, \qquad\left(\frac{ \pi_{\theta}(1) }{ \pi_{\theta}(3) } \ge \frac{r(2) - r(3)}{ r(1) - r(2) }\right)
\end{align}
which implies,
\begin{align}
    \frac{d \pi_\theta^\top r}{d \theta(1)} - \frac{d \pi_\theta^\top r}{d \theta(2)} &= \pi_\theta(1) \cdot \left( r(1) - \pi_\theta^\top r \right) - \pi_\theta(2) \cdot \left( r(2) - \pi_\theta^\top r \right) \\
    &\ge 0 - 0 = 0. \qquad\left(r(1) - \pi_\theta^\top r > 0 \right)
\end{align}
Note that since $r(1) > \pi_\theta^\top r$, and $r(3) < \pi_\theta^\top r$, we have
\begin{align}
\label{eq:t0_softmax_special_dtheta1_ge_dtheta3}
    \frac{d \pi_\theta^\top r}{d \theta(1)} - \frac{d \pi_\theta^\top r}{d \theta(3)} &= \pi_\theta(1) \cdot \left( r(1) - \pi_\theta^\top r \right) - \pi_\theta(3) \cdot \left( r(3) - \pi_\theta^\top r \right) \\
    &\ge 0 - 0 = 0.
\end{align}
Therefore we have $\frac{d \pi_\theta^\top r}{d \theta(1)} \ge \frac{d \pi_\theta^\top r}{d \theta(2)}$ and $\frac{d \pi_\theta^\top r}{d \theta(1)} \ge \frac{d \pi_\theta^\top r}{d \theta(3)}$, i.e., $\theta \in \gR_1$.

Case (b): If $\frac{r(2) - r(3)}{2 \cdot ( r(1) - r(2) ) } \le \frac{ \pi_{\theta}(1) }{ \pi_{\theta}(3) } < \frac{r(2) - r(3)}{ r(1) - r(2) }$, then we have,
\begin{align}
    \frac{d \pi_\theta^\top r}{d \theta(1)} - \frac{d \pi_\theta^\top r}{d \theta(2)} &= \pi_\theta(1) \cdot \left( r(1) - \pi_\theta^\top r \right) - \pi_\theta(2) \cdot \left( r(2) - \pi_\theta^\top r \right) \\
    &= 2 \cdot \pi_\theta(1) \cdot \left( r(1) - \pi_\theta^\top r \right) + \pi_\theta(3) \cdot \left( r(3) - \pi_\theta^\top r \right) \\
    &\ge \pi_\theta(3) \cdot \left[ \frac{r(2) - r(3)}{ r(1) - r(2) } \cdot \left( r(1) - \pi_\theta^\top r \right) + \left( r(3) - \pi_\theta^\top r \right)  \right] \qquad\left( \frac{ \pi_{\theta}(1) }{ \pi_{\theta}(3) } \ge \frac{r(2) - r(3)}{2 \cdot ( r(1) - r(2) ) } \right) \\
    &\ge \pi_\theta(3) \cdot \left[ \frac{r(2) - r(3)}{ r(1) - r(2) } \cdot \left( r(1) - r(2) \right) + \left( r(3) - \pi_\theta^\top r \right)  \right] \\
    &= \pi_\theta(3) \cdot \left( r(2) -  \pi_\theta^\top r \right) \ge 0,
\end{align}
where the second equation is according to
\begin{align}
    \pi_\theta(1) \cdot \left( r(1) - \pi_\theta^\top r \right) + \pi_\theta(2) \cdot \left( r(2) - \pi_\theta^\top r \right) + \pi_\theta(3) \cdot \left( r(3) - \pi_\theta^\top r \right) = \pi_\theta^\top r - \pi_\theta^\top r = 0,
\end{align}
and the second inequality is because of
\begin{align}
    r(1) - \pi_\theta^\top r &= \left(1 - \pi_\theta(1) \right) \cdot r(1) - \left( \pi_\theta(2) \cdot r(2) + \pi_\theta(3) \cdot r(3) \right) \\
    &= \pi_\theta(2) \cdot \left( r(1) - r(2) \right) + \pi_\theta(3) \cdot \left( r(1) - r(3) \right) \\
    &= \left( \pi_\theta(2) + \pi_\theta(3) \right) \cdot \left( r(1) - r(2) \right) + \pi_\theta(3) \cdot \left( r(2) - r(3) \right) \\
    &> \left( \pi_\theta(2) + \pi_\theta(3) \right) \cdot \left( r(1) - r(2) \right) + \pi_\theta(1) \cdot \left( r(1) - r(2) \right) \qquad\left( \frac{ \pi_{\theta}(1) }{ \pi_{\theta}(3) } < \frac{r(2) - r(3)}{ r(1) - r(2) } \right) \\
    &= r(1) - r(2),
\end{align}
and the last inequality is from
\begin{align}
    r(2) - \pi_\theta^\top r &= \pi_\theta(3) \cdot ( r(1) - r(2) ) \cdot \left[ - \frac{ \pi_{\theta}(1) }{ \pi_{\theta}(3) } + \frac{r(2) - r(3)}{ r(1) - r(2) } \right] \\
    &> 0. \qquad \left(\frac{ \pi_{\theta}(1) }{ \pi_{\theta}(3) } < \frac{r(2) - r(3)}{ r(1) - r(2) } \right)
\end{align}
Now we have $\frac{d \pi_\theta^\top r}{d \theta(1)} \ge \frac{d \pi_\theta^\top r}{d \theta(2)}$. According to \cref{eq:t0_softmax_special_dtheta1_ge_dtheta3}, we have $\frac{d \pi_\theta^\top r}{d \theta(1)} \ge \frac{d \pi_\theta^\top r}{d \theta(3)}$. Therefore we have $\theta \in \gR_1$.

\textbf{$K$-action case.} Suppose for each action $i \in \{ 2, 3, \dots K-1 \}$, $\pi_{\theta}(1) \ge \frac{ \sum_{j \not= 1, j \not= i}{ \pi_\theta(j) \cdot \left( r(i) - r(j) \right)} }{2 \cdot ( r(1) - r(i) ) }$. There are two cases.

Case (a): If $\pi_{\theta}(1) \ge \frac{ \sum_{j \not= 1, j \not= i}{ \pi_\theta(j) \cdot \left( r(i) - r(j) \right)} }{ r(1) - r(i) }$, then we have,
\begin{align}
    r(i) - \pi_\theta^\top r &= - \pi_\theta(1) \cdot \left( r(1) - r(i) \right) + \sum_{j \not= 1, j \not= i} \pi_\theta(j) \cdot \left( r(i) - r(j) \right) \\
    &\le 0, \qquad \left( \pi_{\theta}(1) \ge \frac{ \sum_{j \not= 1, j \not= i}{ \pi_\theta(j) \cdot \left( r(i) - r(j) \right)} }{ r(1) - r(i) } \right)
\end{align}
which implies, for all $i \in \{ 2, 3, \dots K-1 \}$,
\begin{align}
    \frac{d \pi_\theta^\top r}{d \theta(1)} - \frac{d \pi_\theta^\top r}{d \theta(i)} &= \pi_\theta(1) \cdot \left( r(1) - \pi_\theta^\top r \right) - \pi_\theta(i) \cdot \left( r(i) - \pi_\theta^\top r \right) \\
    &\ge 0 - 0 = 0. \qquad\left( r(1) - \pi_\theta^\top r > 0 \right)
\end{align}
Similar with \cref{eq:t0_softmax_special_dtheta1_ge_dtheta3}, since $r(1) > \pi_\theta^\top r$, and $r(K) < \pi_\theta^\top r$, we have
\begin{align}
\label{eq:t0_softmax_special_dtheta1_ge_dthetaK}
    \frac{d \pi_\theta^\top r}{d \theta(1)} - \frac{d \pi_\theta^\top r}{d \theta(K)} &= \pi_\theta(1) \cdot \left( r(1) - \pi_\theta^\top r \right) - \pi_\theta(K) \cdot \left( r(K) - \pi_\theta^\top r \right) \\
    &\ge 0 - 0 = 0.
\end{align}
Therefore we have $\frac{d \pi_\theta^\top r}{d \theta(1)} \ge \frac{d \pi_\theta^\top r}{d \theta(i)}$, for all $i \in \{ 2, 3, \dots K \}$, i.e., $\theta \in \gR_1$.

Case (b): If $\frac{ \sum_{j \not= 1, j \not= i}{ \pi_\theta(j) \cdot \left( r(i) - r(j) \right)} }{2 \cdot ( r(1) - r(i) ) } \le  \pi_{\theta}(1) < \frac{ \sum_{j \not= 1, j \not= i}{ \pi_\theta(j) \cdot \left( r(i) - r(j) \right)} }{ r(1) - r(i) }$, then we have, for all $i \in \{ 2, 3, \dots K-1 \}$,
\begin{align}
    \frac{d \pi_\theta^\top r}{d \theta(1)} - \frac{d \pi_\theta^\top r}{d \theta(i)} &= \pi_\theta(1) \cdot \left( r(1) - \pi_\theta^\top r \right) - \pi_\theta(i) \cdot \left( r(i) - \pi_\theta^\top r \right) \\
    &= 2 \cdot \pi_\theta(1) \cdot \left( r(1) - \pi_\theta^\top r \right) + \sum_{j \not= 1, j \not= i}{ \pi_\theta(j) \cdot \left( r(j) - \pi_\theta^\top r \right)} \\
    &\ge \frac{ \sum_{j \not= 1, j \not= i}{ \pi_\theta(j) \cdot \left( r(i) - r(j) \right)} }{ r(1) - r(i) } \cdot \left( r(1) - \pi_\theta^\top r \right) + \sum_{j \not= 1, j \not= i}{ \pi_\theta(j) \cdot \left( r(j) - \pi_\theta^\top r \right)} \\
    &\ge \frac{ \sum_{j \not= 1, j \not= i}{ \pi_\theta(j) \cdot \left( r(i) - r(j) \right)} }{ r(1) - r(i) } \cdot \left( r(1) - r(i) \right) + \sum_{j \not= 1, j \not= i}{ \pi_\theta(j) \cdot \left( r(j) - \pi_\theta^\top r \right)} \\
    &= \sum_{j \not= 1, j \not= i}{ \pi_\theta(j) \cdot \left( r(i) - \pi_\theta^\top r \right)} \ge 0,
\end{align}
where the second equation is according to
\begin{align}
    \pi_\theta(1) \cdot \left( r(1) - \pi_\theta^\top r \right) + \pi_\theta(i) \cdot \left( r(i) - \pi_\theta^\top r \right) + \sum_{j \not= 1, j \not= i}{ \pi_\theta(j) \cdot \left( r(j) - \pi_\theta^\top r \right)} = \pi_\theta^\top r - \pi_\theta^\top r = 0,
\end{align}
and the first inequality is by $r(1) - \pi_\theta^\top r > 0$ and,
\begin{align}
    \pi_{\theta}(1) \ge \frac{ \sum_{j \not= 1, j \not= i}{ \pi_\theta(j) \cdot \left( r(i) - r(j) \right)} }{2 \cdot ( r(1) - r(i) ) },
\end{align}
and the second inequality is because of
\begin{align}
\MoveEqLeft
    r(1) - \pi_\theta^\top r = \pi_\theta(i) \cdot \left( r(1) - r(i) \right) + \sum_{j \not= 1, j \not= i}{ \pi_\theta(j) \cdot \left( r(1) - r(j) \right)} \\
    &= \sum_{j \not= 1}{ \pi_\theta(j) \cdot \left( r(1) - r(i) \right)} + \sum_{j \not= 1, j \not= i}{ \pi_\theta(j) \cdot \left( r(i) - r(j) \right)} \\
    &> \sum_{j \not= 1}{ \pi_\theta(j) \cdot \left( r(1) - r(i) \right)} + \pi_\theta(1) \cdot \left( r(1) - r(i) \right) \qquad \left( \frac{ \sum_{j \not= 1, j \not= i}{ \pi_\theta(j) \cdot \left( r(i) - r(j) \right)} }{ r(1) - r(i) } > \pi_{\theta}(1) \right) \\
    &= r(1) - r(i),
\end{align}
and the last inequality is from $ \frac{ \sum_{j \not= 1, j \not= i}{ \pi_\theta(j) \cdot \left( r(i) - r(j) \right)} }{ r(1) - r(i) } > \pi_{\theta}(1) > 0$ and,
\begin{align}
    r(i) - \pi_\theta^\top r &= - \pi_\theta(1) \cdot \left( r(1) - r(i) \right) + \sum_{j \not= 1, j \not= i} \pi_\theta(j) \cdot \left( r(i) - r(j) \right) \\
    &> 0. \qquad \left( \pi_{\theta}(1) < \frac{ \sum_{j \not= 1, j \not= i}{ \pi_\theta(j) \cdot \left( r(i) - r(j) \right)} }{ r(1) - r(i) } \right)
\end{align}
Now we have $\frac{d \pi_\theta^\top r}{d \theta(1)} \ge \frac{d \pi_\theta^\top r}{d \theta(i)}$, for all $i \in \{ 2, 3, \dots K-1 \}$. According to \cref{eq:t0_softmax_special_dtheta1_ge_dthetaK}, we have $\frac{d \pi_\theta^\top r}{d \theta(1)} \ge \frac{d \pi_\theta^\top r}{d \theta(K)}$. Therefore we have $\theta \in \gR_1$.
\end{proof}

\subsubsection{Proofs for softmax parametrization in MDPs}
\label{sec:proofs_policy_gradient_general_mdps}

\textbf{\cref{lem:smoothness_softmax_general}} (Smoothness)\textbf{.}
$V^{\pi_\theta}(\rho)$ is $8 / (1 - \gamma)^3$-smooth.
\begin{proof}
See \citet[Lemma E.4]{AgKaLeMa19}. Our proof is for completeness. Denote $\theta_\alpha = \theta + \alpha u$, where $\alpha \in \sR$ and $u \in \sR^{SA}$. For any $s \in \gS$,
\begin{align}
    \sum_{a}{\left| \frac{\partial \pi_{\theta_\alpha}(a | s)}{\partial \alpha} \Big|_{\alpha=0} \right|} &= \sum_{a}{\left| \Big\langle \frac{\partial \pi_{\theta_\alpha}(a | s)}{\partial \theta_\alpha} \Big|_{\alpha=0}, \frac{\partial \theta_\alpha}{\partial \alpha} \Big\rangle \right|} \\
    &= \sum_{a}{\left| \Big\langle \frac{\partial \pi_{\theta}(a | s)}{\partial \theta}, u \Big\rangle \right|}.
\end{align}
Since $\frac{\partial \pi_\theta(a | s)}{\partial \theta(s^\prime, \cdot)} = 0$, for $s^\prime \not= s$,
\begin{align}
\label{eq:smoothness_softmax_general_intermediate_pi_first_derivative_upper_bound}
    \sum_{a}{\left| \frac{\partial \pi_{\theta_\alpha}(a | s)}{\partial \alpha} \Big|_{\alpha=0} \right|} &= \sum_{a}{\left| \Big\langle \frac{\partial \pi_{\theta}(a | s)}{\partial \theta(s, \cdot)}, u(s, \cdot) \Big\rangle \right|} \\
    &= \sum_{a}{\pi_{\theta}(a | s) \cdot \left| u(s,a) - \pi_{\theta}(\cdot | s)^\top u(s, \cdot) \right|} \\
    &\le \max_{a}{ | u(s, a) | + | \pi_{\theta}(\cdot | s)^\top u(s, \cdot) | } \le 2 \cdot \| u \|_2.
\end{align}
Similarly,
\begin{align}
    \sum_{a}{\left| \frac{\partial^2 \pi_{\theta_\alpha}(a | s)}{\partial \alpha^2} \Big|_{\alpha=0} \right|} &= \sum_{a}{\left| \Big\langle \frac{\partial}{\partial \theta_\alpha} \left\{ \frac{\partial \pi_{\theta_\alpha}(a|s)}{\partial \alpha} \right\} \Big|_{\alpha=0}, \frac{\partial \theta_\alpha}{\partial \alpha} \Big\rangle \right|} \\
    &= \sum_{a}{\left| \Big\langle \frac{\partial^2 \pi_{\theta_\alpha}(a|s)}{\partial \theta_\alpha^2} \Big|_{\alpha=0} \frac{\partial \theta_\alpha}{\partial \alpha}, \frac{\partial \theta_\alpha}{\partial \alpha} \Big\rangle \right|} \\
    &= \sum_{a}{\left| \Big\langle \frac{\partial^2 \pi_{\theta}(a | s)}{\partial \theta^2(s, \cdot)} u(s, \cdot), u(s, \cdot) \Big\rangle \right|}.
\end{align}
Let $S(a, \theta) = \frac{\partial^2 \pi_{\theta}(a | s)}{\partial \theta^2(s, \cdot)} \in \sR^{A \times A}$. $\forall i, j \in [A]$, the value of $S(a, \theta)$ is,
\begin{align}
    S_{i,j} &= \frac{\partial \{ \delta_{ia} \pi_{\theta}(a | s) - \pi_{\theta}(a | s) \pi_{\theta}(i | s) \}}{\partial \theta(s,j)} \\
    &= \delta_{ia} \cdot \left[ \delta_{ja} \pi_{\theta}(a | s) -  \pi_{\theta}(a | s) \pi_{\theta}(j | s) \right] - \pi_{\theta}(a | s) \cdot \left[\delta_{ij} \pi_{\theta}(j | s) - \pi_{\theta}(i | s) \pi_{\theta}(j | s) \right] - \pi_{\theta}(i | s) \cdot \left[ \delta_{ja} \pi_{\theta}(a | s) - \pi_{\theta}(a | s) \pi_{\theta}(j | s) \right],
\end{align}
where the $\delta$ notation is as defined in \cref{eq:delta_ij_notation}. Then we have,
\begin{align}
\MoveEqLeft
    \left| \Big\langle \frac{\partial^2 \pi_{\theta}(a | s)}{\partial \theta^2(s, \cdot)} u(s, \cdot), u(s, \cdot) \Big\rangle \right| = \left| \sum_{i=1}^{A} \sum_{j=1}^{A} { S_{i,j} u(s,i) u(s,j) } \right| \\
    &= \pi_{\theta}(a | s) \cdot \left| u(s,a)^2 - 2 \cdot u(s,a) \cdot \pi_{\theta}(\cdot | s)^\top u(s, \cdot) - \pi_{\theta}(\cdot | s)^\top \left( u(s, \cdot) \odot u(s, \cdot) \right) + 2 \cdot \left( \pi_{\theta}(\cdot | s)^\top u(s, \cdot) \right)^2 \right|.
\end{align}
Therefore we have,
\begin{align}
\label{eq:smoothness_softmax_general_intermediate_pi_second_derivative_upper_bound}
    \sum_{a}{\left| \frac{\partial^2 \pi_{\theta_\alpha}(a | s)}{\partial \alpha^2} \Big|_{\alpha=0} \right|} &\le \max_{a}{ \left\{ u(s,a)^2 + 2 \cdot \left| u(s,a) \cdot \pi_{\theta}(\cdot | s)^\top u(s, \cdot) \right| \right\} +  \pi_{\theta}(\cdot | s)^\top \left( u(s, \cdot) \odot u(s, \cdot) \right) + 2 \cdot \left( \pi_{\theta}(\cdot | s)^\top u(s, \cdot) \right)^2 } \\
    &\le \| u(s, \cdot) \|_2^2 + 2 \cdot \| u(s, \cdot) \|_2^2 + \| u(s, \cdot) \|_2^2 + 2 \cdot \| u(s, \cdot) \|_2^2 \le 6 \cdot \| u \|_2^2.
\end{align}
Define $P(\alpha) \in \sR^{S \times S}$, where $\forall ( s, s^\prime )$,
\begin{align}
\label{eq:smoothness_softmax_general_intermediate_P_PI_def}
   \left[ P(\alpha) \right]_{( s, s^\prime )} = \sum_{a}{ \pi_{\theta_\alpha}(a | s) \cdot \gP(s^\prime | s, a) }.
\end{align}
The derivative w.r.t. $\alpha$ is
\begin{align}
    \left[ \frac{\partial P(\alpha)}{\partial \alpha} \Big|_{\alpha=0} \right]_{( s, s^\prime)} = \sum_{a} \left[ \frac{ \partial \pi_{\theta_\alpha}(a | s) }{\partial \alpha} \Big|_{\alpha=0} \right] \cdot \gP(s^\prime | s, a).
\end{align}
For any vector $x \in \sR^{S}$, we have
\begin{align}
    \left[ \frac{\partial P(\alpha)}{\partial \alpha} \Big|_{\alpha=0} x \right]_{(s)} &= \sum_{s^\prime} \sum_{a}{ \left[ \frac{ \partial \pi_{\theta_\alpha}(a | s) }{\partial \alpha} \Big|_{\alpha=0} \right] \cdot \gP(s^\prime | s, a) \cdot x(s^\prime) }.
\end{align}
The $\ell_\infty$ norm is upper bounded as
\begin{align}
\label{eq:smoothness_softmax_general_intermediate_P_PI_first}
    \left\|  \frac{\partial P(\alpha)}{\partial \alpha} \Big|_{\alpha=0} x \right\|_\infty &= \max_{s}{ \left| \sum_{s^\prime} \sum_{a}{ \left[ \frac{ \partial \pi_{\theta_\alpha}(a | s) }{\partial \alpha} \Big|_{\alpha=0} \right] \cdot \gP(s^\prime | s, a) \cdot x(s^\prime) } \right| } \\
    &\le \max_{s}{ \sum_{a}{ \sum_{s^\prime} { \gP(s^\prime | s, a) \cdot \left| \frac{ \partial \pi_{\theta_\alpha}(a | s) }{\partial \alpha} \Big|_{\alpha=0} \right| } } \cdot \| x \|_\infty } \\
    &= \max_{s} \sum_{a}{ \left| \frac{ \partial \pi_{\theta_\alpha}(a | s) }{\partial \alpha} \Big|_{\alpha=0} \right|  } \cdot \| x \|_\infty \\
    &\le 2 \cdot \| u \|_2 \cdot \| x \|_\infty. \qquad
    \left( \text{by \cref{eq:smoothness_softmax_general_intermediate_pi_first_derivative_upper_bound}} \right)
\end{align}
Similarly, taking second derivative w.r.t. $\alpha$,
\begin{align}
    \left[ \frac{\partial^2 P(\alpha)}{\partial \alpha^2} \Big|_{\alpha=0} \right]_{( s, s^\prime )} = \sum_{a} \left[ \frac{ \partial^2 \pi_{\theta_\alpha}(a | s) }{\partial \alpha^2} \Big|_{\alpha=0} \right] \cdot \gP(s^\prime | s, a).
\end{align}
The $\ell_\infty$ norm is upper bounded as
\begin{align}
\label{eq:smoothness_softmax_general_intermediate_P_PI_second}
    \left\|  \frac{\partial^2 P(\alpha)}{\partial \alpha^2} \Big|_{\alpha=0} x \right\|_\infty &= \max_{s}{ \left| \sum_{s^\prime} \sum_{a}{ \left[ \frac{ \partial^2 \pi_{\theta_\alpha}(a | s) }{\partial \alpha^2} \Big|_{\alpha=0} \right] \cdot \gP(s^\prime | s, a) \cdot x(s^\prime) } \right| } \\
    &\le \max_{s}{ \sum_{a}{ \sum_{s^\prime} { \gP(s^\prime | s, a) \cdot \left| \frac{ \partial^2 \pi_{\theta_\alpha}(a | s) }{\partial \alpha^2} \Big|_{\alpha=0} \right| } } \cdot \| x \|_\infty } \\
    &= \max_{s} \sum_{a}{ \left| \frac{ \partial^2 \pi_{\theta_\alpha}(a | s) }{\partial \alpha^2} \Big|_{\alpha=0} \right|  } \cdot \| x \|_\infty \\
    &\le 6 \cdot \| u \|_2^2 \cdot \| x \|_\infty. \qquad \left(
    \text{by \cref{eq:smoothness_softmax_general_intermediate_pi_second_derivative_upper_bound}} \right)
\end{align}
Next, consider the state value function of $\pi_{\theta_\alpha}$,
\begin{align}
    V^{\pi_{\theta_\alpha}}(s) &= \sum_{a}{\pi_{\theta_\alpha}(a | s) \cdot r(s, a) } + \gamma \sum_{a}{ \pi_{\theta_\alpha}(a | s) \sum_{s^\prime}{ \gP(s^\prime | s, a) \cdot V^{\pi_{\theta_\alpha}}(s^\prime) }  },
\end{align}
which implies,
\begin{align}
\label{eq:state_value_bellman_equation}
    V^{\pi_{\theta_\alpha}}(s) &= e_{s}^\top M(\alpha) r_{\theta_\alpha},
\end{align}
where
\begin{align}
\label{eq:smoothness_softmax_general_intermediate_M_matrix_def}
    M(\alpha) = \left( \identitymatrix - \gamma P(\alpha) \right)^{-1},
\end{align}
and $r_{\theta_\alpha} \in \sR^{S}$ for $s\in \gS$ is given by
\begin{align}
    r_{\theta_\alpha}(s) = \sum_{a}{\pi_{\theta_\alpha}(a | s) \cdot r(s, a)}.
\end{align}
Since $\left[ P(\alpha) \right]_{(s,s^\prime)} \ge 0$, $\forall (s,s^\prime)$, and
\begin{align}
    M(\alpha) = \left( \identitymatrix - \gamma P(\alpha) \right)^{-1} = \sum_{t=0}^\infty{\gamma^t \left[ P(\alpha) \right]^t},
\end{align}
we have $\left[ M(\alpha) \right]_{(s,s^\prime)} \ge 0$, $\forall (s,s^\prime)$. Denote $\left[ M(\alpha) \right]_{i,:}$ as the $i$-th row vector of $M(\alpha)$. We have
\begin{align}
    \rvone = \frac{1}{1 - \gamma} \cdot \left( \identitymatrix - \gamma P(\alpha) \right) \rvone \Longrightarrow{} M(\alpha) \rvone = \frac{1}{1 - \gamma} \cdot \rvone,
\end{align}
which implies, $\forall i$,
\begin{align}
    \left\| \left[ M(\alpha) \right]_{i,:} \right\|_1 = \sum_{j}{ \left[ M(\alpha) \right]_{(i,j)} } = \frac{1}{1 - \gamma}.
\end{align}
Therefore, for any vector $x \in \sR^{S}$,
\begin{align}
\label{eq:smoothness_softmax_general_intermediate_M_matrix_norm}
    \left\| M(\alpha) x \right\|_\infty &= \max_{i}{ \left| \left[ M(\alpha) \right]_{i,:}^\top x \right| } \\
    &\le \max_{i}{ \left\| \left[ M(\alpha) \right]_{i,:} \right\|_1 \cdot \| x \|_\infty} \\
    &= \frac{1}{1 - \gamma} \cdot \| x \|_\infty.
\end{align}
According to \cref{asmp:bounded_reward}, $r(s,a) \in [0,1]$, $\forall (s,a)$. We have,
\begin{align}
\label{eq:smoothness_softmax_general_intermediate_rpi_upper_bound}
    \left\| r_{\theta_\alpha} \right\|_\infty = \max_{s}{ \left| r_{\theta_\alpha}(s) \right| } = \max_{s}{ \left| \sum_{a}{\pi_{\theta_\alpha}(a | s) \cdot r(s, a)} \right| } \le 1.
\end{align}
Since $\frac{\partial \pi_\theta(a | s)}{\partial \theta(s^\prime, \cdot)} = 0$, for $s^\prime \not= s$,
\begin{align}
    \left| \frac{\partial r_{\theta_\alpha}(s)}{\partial \alpha} \right| &= \left| \left( \frac{\partial r_{\theta_\alpha}(s)}{\partial \theta_\alpha} \right)^\top \frac{\partial \theta_\alpha}{\partial \alpha} \right| \\
    &= \left| \left( \frac{\partial \{ \pi_{\theta_\alpha}(\cdot | s)^\top r(s, \cdot )\} }{\partial \theta_\alpha(s, \cdot)} \right)^\top u(s, \cdot) \right| \\
    &= \left| \left( H\left( \pi_{\theta_\alpha}(\cdot | s) \right) r(s, \cdot ) \right)^\top u(s, \cdot) \right| \\
    &\le \left\| H\left( \pi_{\theta_\alpha}(\cdot | s) \right) r(s, \cdot ) \right\|_1 \cdot \left\| u(s, \cdot) \right\|_\infty.
\end{align}
Similarly to \cref{eq:H_matrix_r_1_norm_upper_bound_special}, the $\ell_1$ norm is upper bounded as
\begin{align}
    \left\| H\left( \pi_{\theta_\alpha}(\cdot | s) \right) r(s, \cdot ) \right\|_1 &= \sum_{a}{ \pi_{\theta_\alpha}(a | s) \cdot \left| r(s, a) - \pi_{\theta_\alpha}(\cdot | s)^\top r(s, \cdot ) \right| } \\
    &\le \max_{a}{ \left| r(s, a) - \pi_{\theta_\alpha}(\cdot | s)^\top r(s, \cdot ) \right| } \\
    &\le 1. \qquad\left(\text{since } r(s,a) \in [0, 1] \right)
\end{align}
Therefore we have,
\begin{align}
\label{eq:smoothness_softmax_general_intermediate_rpi_first_derivative_upper_bound}
    \left\| \frac{\partial r_{\theta_\alpha}}{\partial \alpha} \right\|_\infty &= \max_{s}{ \left| \frac{\partial r_{\theta_\alpha}(s)}{\partial \alpha} \right| } \\
    &\le \max_{s}{ \left\| H\left( \pi_{\theta_\alpha}(\cdot | s) \right) r(s, \cdot ) \right\|_1 \cdot \left\| u(s, \cdot) \right\|_\infty } \\
    &\le \| u \|_2.
\end{align}
Similarly,
\begin{align}
\label{eq:smoothness_softmax_general_intermediate_rpi_second_derivative_upper_bound}
    \left\| \frac{\partial^2 r_{\theta_\alpha}}{\partial \alpha^2} \right\|_\infty &= \max_{s}{ \left| \frac{\partial^2 r_{\theta_\alpha}(s)}{\partial \alpha^2} \right| } \\
    &= \max_{s}{ \left| \left( \frac{\partial }{\partial \theta_\alpha}\left\{ \frac{\partial r_{\theta_\alpha}(s)}{\partial \alpha} \right\} \right)^\top \frac{\partial \theta_\alpha}{\partial \alpha} \right| } \\
    &= \max_{s}{ \left| \left( \frac{\partial^2 r_{\theta_\alpha}(s) }{\partial \theta_\alpha^2} \frac{\partial \theta_\alpha}{\partial \alpha} \right)^\top \frac{\partial \theta_\alpha}{\partial \alpha} \right| } \\
    &= \max_{s}\left| u(s, \cdot)^\top \frac{\partial^2 \{ \pi_{\theta_\alpha}(\cdot | s)^\top r(s, \cdot )\} }{\partial \theta_\alpha(s, \cdot)^2} u(s, \cdot) \right| \\
    &\le 5/2 \cdot \| u(s, \cdot ) \|_2^2 \le 3 \cdot \| u \|_2^2. 
	\qquad \left( \text{by \cref{eq:H_matrix_maximum_eigenvalue}} \right)
\end{align}
Taking derivative w.r.t. $\alpha$ in \cref{eq:state_value_bellman_equation},
\begin{align}
    \frac{\partial V^{\pi_{\theta_\alpha}}(s)}{\partial \alpha} = \gamma \cdot e_{s}^\top M(\alpha) \frac{\partial P(\alpha)}{\partial \alpha} M(\alpha) r_{\theta_\alpha} + e_{s}^\top M(\alpha) \frac{\partial r_{\theta_\alpha}}{\partial \alpha}.
\end{align}
Taking second derivative w.r.t. $\alpha$,
\begin{align}
\label{eq:smoothness_softmax_general_intermediate_V_second_derivative_def}
    \frac{\partial^2 V^{\pi_{\theta_\alpha}}(s)}{\partial \alpha^2} &= 2 \gamma^2 \cdot e_{s}^\top M(\alpha) \frac{\partial P(\alpha)}{\partial \alpha} M(\alpha) \frac{\partial P(\alpha)}{\partial \alpha} M(\alpha) r_{\theta_\alpha} + \gamma \cdot e_{s}^\top M(\alpha) \frac{\partial^2 P(\alpha)}{\partial \alpha^2} M(\alpha) r_{\theta_\alpha} \\
    &\qquad + 2 \gamma \cdot e_{s}^\top M(\alpha) \frac{\partial P(\alpha)}{\partial \alpha} M(\alpha) \frac{\partial r_{\theta_\alpha}}{\partial \alpha} + e_{s}^\top M(\alpha) \frac{\partial^2 r_{\theta_\alpha}}{\partial \alpha^2}.
\end{align}
For the last term,
\begin{align}
\label{eq:smoothness_softmax_general_intermediate_1}
    \left| e_{s}^\top M(\alpha) \frac{\partial^2 r_{\theta_\alpha}}{\partial \alpha^2} \Big|_{\alpha=0} \right| &\le \left\| e_{s} \right\|_1 \cdot \left\| M(\alpha) \frac{\partial^2 r_{\theta_\alpha}}{\partial \alpha^2} \Big|_{\alpha=0} \right\|_\infty \\
    &\le \frac{1}{1 - \gamma} \cdot \left\| \frac{\partial^2 r_{\theta_\alpha}}{\partial \alpha^2} \Big|_{\alpha=0} \right\|_\infty 
    \qquad \left(\text{by \cref{eq:smoothness_softmax_general_intermediate_M_matrix_norm}} \right) \\
    &\le \frac{3}{1 - \gamma} \cdot \| u \|_2^2. 
    \qquad \left( \text{by \cref{eq:smoothness_softmax_general_intermediate_rpi_second_derivative_upper_bound}} \right)
\end{align}
For the second last term,
\begin{align}
\label{eq:smoothness_softmax_general_intermediate_2}
    \left| e_{s}^\top M(\alpha) \frac{\partial P(\alpha)}{\partial \alpha} M(\alpha) \frac{\partial r_{\theta_\alpha}}{\partial \alpha} \Big|_{\alpha=0} \right| &\le \left\| M(\alpha) \frac{\partial P(\alpha)}{\partial \alpha} M(\alpha) \frac{\partial r_{\theta_\alpha}}{\partial \alpha} \Big|_{\alpha=0} \right\|_\infty \\
    &\le \frac{1}{1 - \gamma} \cdot \left\| \frac{\partial P(\alpha)}{\partial \alpha} M(\alpha) \frac{\partial r_{\theta_\alpha}}{\partial \alpha} \Big|_{\alpha=0} \right\|_\infty \qquad \left( \text{by \cref{eq:smoothness_softmax_general_intermediate_M_matrix_norm}} \right) \\
    &\le \frac{2 \cdot \| u \|_2 }{1 - \gamma} \cdot \left\| M(\alpha) \frac{\partial r_{\theta_\alpha}}{\partial \alpha} \Big|_{\alpha=0} \right\|_\infty 
    \qquad \left( \text{by \cref{eq:smoothness_softmax_general_intermediate_P_PI_first}} \right) \\
    &\le \frac{2 \cdot \| u \|_2 }{(1 - \gamma)^2} \cdot \left\|  \frac{\partial r_{\theta_\alpha}}{\partial \alpha} \Big|_{\alpha=0} \right\|_\infty \qquad \left( \text{by \cref{eq:smoothness_softmax_general_intermediate_M_matrix_norm}} \right) \\
    &\le \frac{2 \cdot \| u \|_2 }{(1 - \gamma)^2} \cdot \| u \|_2 = \frac{2  }{(1 - \gamma)^2} \cdot \| u \|_2^2. 
    \qquad \left( \text{by \cref{eq:smoothness_softmax_general_intermediate_rpi_first_derivative_upper_bound}} \right) 
\end{align}
For the second term,
\begin{align}
\label{eq:smoothness_softmax_general_intermediate_3}
    \left| e_{s}^\top M(\alpha) \frac{\partial^2 P(\alpha)}{\partial \alpha^2} M(\alpha) r_{\theta_\alpha} \Big|_{\alpha=0} \right| &\le \left\| M(\alpha) \frac{\partial^2 P(\alpha)}{\partial \alpha^2} M(\alpha) r_{\theta_\alpha} \Big|_{\alpha=0} \right\|_\infty \\
    &\le \frac{1}{1 - \gamma} \cdot \left\|  \frac{\partial^2 P(\alpha)}{\partial \alpha^2} M(\alpha) r_{\theta_\alpha} \Big|_{\alpha=0} \right\|_\infty \qquad \left( \text{by \cref{eq:smoothness_softmax_general_intermediate_M_matrix_norm}} \right) \\
    &\le \frac{6 \cdot \| u \|_2^2}{1 - \gamma} \cdot \left\| M(\alpha) r_{\theta_\alpha} \Big|_{\alpha=0} \right\|_\infty \qquad \left( \text{by \cref{eq:smoothness_softmax_general_intermediate_P_PI_second}} \right) \\
    &\le \frac{6 \cdot \| u \|_2^2}{(1 - \gamma)^2} \cdot \left\|  r_{\theta_\alpha} \Big|_{\alpha=0} \right\|_\infty \qquad \left( \text{by \cref{eq:smoothness_softmax_general_intermediate_M_matrix_norm}} \right) \\
    &\le \frac{6 }{(1 - \gamma)^2} \cdot \| u \|_2^2. \qquad \left( \text{by \cref{eq:smoothness_softmax_general_intermediate_rpi_upper_bound}} \right) 
\end{align}
For the first term, according to \cref{eq:smoothness_softmax_general_intermediate_P_PI_first}, \cref{eq:smoothness_softmax_general_intermediate_M_matrix_norm,eq:smoothness_softmax_general_intermediate_rpi_upper_bound},
\begin{align}
\label{eq:smoothness_softmax_general_intermediate_4}
    \left| e_{s}^\top M(\alpha) \frac{\partial P(\alpha)}{\partial \alpha} M(\alpha) \frac{\partial P(\alpha)}{\partial \alpha} M(\alpha) r_{\theta_\alpha} \Big|_{\alpha=0} \right| &\le \left\| M(\alpha) \frac{\partial P(\alpha)}{\partial \alpha} M(\alpha) \frac{\partial P(\alpha)}{\partial \alpha} M(\alpha) r_{\theta_\alpha} \Big|_{\alpha=0} \right\|_\infty \\ &\le \frac{1}{1-\gamma} \cdot 2 \cdot \| u \|_2 \cdot \frac{1}{1-\gamma} \cdot 2 \cdot \| u \|_2 \cdot \frac{1}{1-\gamma} \cdot 1 \\
    &= \frac{4  }{(1 - \gamma)^3} \cdot \| u \|_2^2.
\end{align}
Combining \cref{eq:smoothness_softmax_general_intermediate_1,eq:smoothness_softmax_general_intermediate_2,eq:smoothness_softmax_general_intermediate_3,eq:smoothness_softmax_general_intermediate_4} with \cref{eq:smoothness_softmax_general_intermediate_V_second_derivative_def},
\begin{align}
\label{eq:smoothness_softmax_general_intermediate_5}
    \left| \frac{\partial^2 V^{\pi_{\theta_\alpha}}(s)}{\partial \alpha^2} \Big|_{\alpha=0} \right| &\le 2 \gamma^2 \cdot \left| e_{s}^\top M(\alpha) \frac{\partial P(\alpha)}{\partial \alpha} M(\alpha) \frac{\partial P(\alpha)}{\partial \alpha} M(\alpha) r_{\theta_\alpha} \Big|_{\alpha=0} \right| + \gamma \cdot \left| e_{s}^\top M(\alpha) \frac{\partial^2 P(\alpha)}{\partial \alpha^2} M(\alpha) r_{\theta_\alpha} \Big|_{\alpha=0} \right| \\
    &\qquad + 2 \gamma \cdot \left| e_{s}^\top M(\alpha) \frac{\partial P(\alpha)}{\partial \alpha} M(\alpha) \frac{\partial r_{\theta_\alpha}}{\partial \alpha} \Big|_{\alpha=0} \right| + \left| e_{s}^\top M(\alpha) \frac{\partial^2 r_{\theta_\alpha}}{\partial \alpha^2} \Big|_{\alpha=0} \right| \\
    &\le \left( 2 \gamma^2 \cdot \frac{4}{(1-\gamma)^3} + \gamma \cdot \frac{6}{(1-\gamma)^2} + 2 \gamma \cdot \frac{2}{(1-\gamma)^2} + \frac{3}{1-\gamma}  \right) \cdot \| u \|_2^2 \\
    &\le \frac{8}{(1-\gamma)^3} \cdot \| u \|_2^2,
\end{align}
which implies for all $y \in \sR^{S A}$ and $\theta$,
\begin{align}
\label{eq:smoothness_softmax_general_intermediate_6}
    \left| y^\top \frac{\partial^2 V^{\pi_\theta}(s)}{\partial \theta^2} y \right| &= \left| \left(\frac{y}{ \| y \|_2 } \right)^\top \frac{\partial^2 V^{\pi_\theta}(s)}{\partial \theta^2} \left(\frac{y}{ \| y \|_2 } \right) \right| \cdot \| y \|_2^2 \\
    &\le \max_{\| u \|_2 = 1}{ \left| \Big\langle \frac{\partial^2 V^{\pi_\theta}(s)}{\partial \theta^2} u , u \Big\rangle \right| } \cdot \| y \|_2^2 \\
    &= \max_{\| u \|_2 = 1}{ \left| \Big\langle\frac{\partial^2 V^{\pi_{\theta_\alpha}}(s)}{\partial {\theta_\alpha^2}} \Big|_{\alpha=0} \frac{\partial \theta_\alpha}{\partial \alpha}, \frac{\partial \theta_\alpha}{\partial \alpha} \Big\rangle \right| } \cdot \| y \|_2^2 \\
    &= \max_{\| u \|_2 = 1}{ \left| \Big\langle \frac{\partial}{\partial \theta_\alpha} \left\{ \frac{\partial V^{\pi_{\theta_\alpha}}(s)}{\partial \alpha} \right\} \Big|_{\alpha = 0}, \frac{\partial \theta_\alpha}{\partial \alpha} \Big\rangle \right| } \cdot \| y \|_2^2 \\
    &= \max_{\| u \|_2 = 1}{ \left| \frac{\partial^2 V^{\pi_{\theta_\alpha}}(s)}{\partial \alpha^2  } \Big|_{\alpha=0} \right| } \cdot \| y \|_2^2 \\
    &\le \frac{8}{(1 - \gamma)^3} \cdot \| y \|_2^2. \qquad \left(
    \text{by \cref{eq:smoothness_softmax_general_intermediate_5}} \right)
\end{align}
Denote $\theta_{\xi} = \theta + \xi ( \theta^\prime - \theta )$, where $\xi \in [0,1]$. According to Taylor's theorem, $\forall s$, $\forall \theta, \ \theta^\prime$,
\begin{align}
    \left| V^{\pi_{\theta^\prime}}(s) - V^{\pi_{\theta}}(s) - \Big\langle \frac{\partial V^{\pi_\theta}(s)}{\partial \theta}, \theta^\prime - \theta \Big\rangle \right| &= \frac{1}{2} \cdot \left| \left( \theta^\prime - \theta \right)^\top \frac{\partial^2 V^{\pi_{\theta_\xi}}(s)}{\partial \theta_\xi^2} \left( \theta^\prime - \theta \right) \right| \\
    &\le \frac{4}{(1 - \gamma)^3} \cdot \| \theta^\prime - \theta \|_2^2. \qquad \left( \text{by \cref{eq:smoothness_softmax_general_intermediate_6}} \right)
\end{align}
Since $V^{\pi_\theta}(s)$ is $8/(1-\gamma)^3$-smooth, for any state $s$, $V^{\pi_\theta}(\rho) 
= \expectation_{s \sim \rho}{ \left[ V^{\pi_\theta}(s) \right]}$ is also $8/(1-\gamma)^3$-smooth.
\end{proof}

\textbf{\cref{lem:lojasiewicz_softmax_general}} 
(Non-uniform \L{}ojasiewicz)\textbf{.} 
Let  $\pi_\theta(\cdot | s) = \softmax(\theta(s, \cdot))$, $s\in \gS$ and fix an arbitrary optimal policy $\pi^*$.
We have,
\begin{align}
    \left\| \frac{\partial V^{\pi_\theta}(\mu)}{\partial \theta }\right\|_2 \ge \frac{1}{\sqrt{S}} \cdot \left\| \frac{ d_{\rho}^{\pi^*} }{ d_{\mu}^{\pi_\theta} } \right\|_\infty^{-1} \cdot \min_s{ \pi_\theta(a^*(s)|s) } \cdot \left[ V^*(\rho) - V^{\pi_\theta}(\rho) \right],
\end{align}
where $a^*(s) = \argmax_{a}{ \pi^*(a | s) }$ ($s\in \gS$). 
Furthermore,
\begin{align}
    \left\| \frac{\partial V^{\pi_\theta}(\mu)}{\partial \theta }\right\|_2 &\ge \frac{1}{\sqrt{S A}} \cdot \left\| \frac{d_{\rho}^{\pi^*}}{d_{\mu}^{\pi_\theta}} \right\|_\infty^{-1} \cdot \left[ \min_{s} \sum_{\bar{a}(s) \in \bar{\gA}^{\pi_\theta}(s)}{ \pi_\theta(\bar{a}(s)|s) } \right] \cdot \left[ V^*(\rho) - V^{\pi_\theta}(\rho) \right],
\end{align}
where $\bar{\gA}^{\pi}(s) = \left\{ \bar{a}(s) \in \gA : Q^{\pi}(s, \bar{a}(s)) = \max_{a}{ Q^{\pi}(s, a) } \right\}$ is the greedy action set for state $s$ given policy $\pi$. Finally,
\begin{align}
\label{eq:lojasiewicz_softmax_general_result_3}
    \left\| \frac{\partial V^{\pi_\theta}(\mu)}{\partial \theta }\right\|_2 \ge \frac{1}{\sqrt{S A}} \cdot \Bigg\| \frac{d_{\rho}^{\pi_{\theta}^*}}{d_{\mu}^{\pi_\theta}} \Bigg\|_{\infty}^{-1} \cdot \Bigg[ \min_{s}{\sum_{a^\prime \in \gA^*(s)}{\pi_{\theta}(a^\prime |s)} } \Bigg] \cdot \left[ V^{*}(\rho) - V^{\pi_\theta}(\rho) \right],
\end{align}
where $\gA^*(s)$ is the ``optimal action set'' under state $s \in \gS$, defined by,
\begin{align}
    \gA^*(s) \coloneqq \left\{ a^*(s) \in \gA: Q^*(s, a^*(s)) = \max_{a \in \gA}{ Q^*(s, a) }  \right\},
\end{align}
and $\pi_\theta^*$ is the globally optimal policy induced by $\pi_\theta$, where for all $s \in \gS$,
\begin{align}
\label{eq:constructed_optimal_policy}
    \pi_{\theta}^*(a | s) \coloneqq \begin{cases}
		\frac{\pi_{\theta}(a | s)}{ \sum_{a^\prime \in \gA^*(s)}{\pi_{\theta}(a^\prime |s)} } , & \text{if } a \in \gA^*(s), \\
		0, & \text{otherwise}.
	\end{cases}
\end{align}
\begin{proof}
We have,
\begin{align}
\MoveEqLeft
    \left\| \frac{\partial V^{\pi_\theta}(\mu)}{\partial \theta }\right\|_2 = \left[ \sum_{s,a} \left( \frac{\partial V^{\pi_\theta}(\mu)}{\partial \theta(s,a)} \right)^2 \right]^{\frac{1}{2}} \\
    &\ge \left[ \sum_{s} \left( \frac{\partial V^{\pi_\theta}(\mu)}{\partial \theta(s,a^*(s))} \right)^2 \right]^{\frac{1}{2}} \\
    &\ge \frac{1}{\sqrt{S}} \sum_{s} \left| \frac{\partial V^{\pi_\theta}(\mu)}{\partial \theta(s,a^*(s))} \right| \qquad \left(
    \text{by Cauchy-Schwarz, } \| x \|_1 = | \langle \rvone, \ |x| \rangle | \le \| \rvone \|_2 \cdot \| x \|_2 \right) \\
    &= \frac{1}{1-\gamma} \cdot \frac{1}{\sqrt{S}} \sum_{s} \left| d_{\mu}^{\pi_\theta}(s) \cdot \pi_\theta(a^*(s)|s) \cdot A^{\pi_\theta}(s,a^*(s)) \right| \qquad \left( \text{by \cref{lem:policy_gradient_softmax}} \right) \\
    &= \frac{1}{1-\gamma} \cdot \frac{1}{\sqrt{S}} \sum_{s} d_{\mu}^{\pi_\theta}(s) \cdot  \pi_\theta(a^*(s)|s) \cdot \left| A^{\pi_\theta}(s,a^*(s)) \right|. 
    \qquad \left( \text{because } d_{\mu}^{\pi_\theta}(s) \ge 0 \text{ and } \pi_\theta(a^*(s)|s) \ge 0 \right)
\end{align}
Define the distribution mismatch coefficient as $\bigg\| \frac{d_{\rho}^{\pi^*}}{d_{\mu}^{\pi_\theta}} \bigg\|_\infty = \max_{s}{ \frac{d_{\rho}^{\pi^*}(s)}{d_{\mu}^{\pi_\theta}(s)} }$. We have,
\begin{align}
    \left\| \frac{\partial V^{\pi_\theta}(\mu)}{\partial \theta }\right\|_2 &\ge \frac{1}{1-\gamma} \cdot \frac{1}{\sqrt{S}} \sum_{s} \frac{ d_{\mu}^{\pi_\theta}(s) }{ d_{\rho}^{\pi^*}(s) } \cdot  d_{\rho}^{\pi^*}(s) \cdot \pi_\theta(a^*(s)|s) \cdot \left| A^{\pi_\theta}(s,a^*(s)) \right| \\
    &\ge \frac{1}{1-\gamma} \cdot  \frac{1}{\sqrt{S}} \cdot \left\| \frac{d_{\rho}^{\pi^*}}{d_{\mu}^{\pi_\theta}} \right\|_\infty^{-1} \cdot \min_s{ \pi_\theta(a^*(s)|s) } \cdot \sum_s{ d_{\rho}^{\pi^*}(s) \cdot \left| A^{\pi_\theta}(s,a^*(s)) \right| } \\
    &\ge \frac{1}{1-\gamma} \cdot  \frac{1}{\sqrt{S}} \cdot \left\| \frac{d_{\rho}^{\pi^*}}{d_{\mu}^{\pi_\theta}} \right\|_\infty^{-1} \cdot \min_s{ \pi_\theta(a^*(s)|s) } \cdot \sum_s{ d_{\rho}^{\pi^*}(s) \cdot A^{\pi_\theta}(s,a^*(s)) } \\
    &= \frac{1}{\sqrt{S}} \cdot \left\| \frac{d_{\rho}^{\pi^*}}{d_{\mu}^{\pi_\theta}} \right\|_\infty^{-1} \cdot \min_s{ \pi_\theta(a^*(s)|s) } \cdot \frac{1}{1-\gamma} \sum_{s}{ d_{\rho}^{\pi^*}(s) \sum_{a}{\pi^*(a|s) \cdot A^{\pi_\theta}(s,a) } } 
    \\
    &= \frac{1}{\sqrt{S}} \cdot \left\| \frac{d_{\rho}^{\pi^*}}{d_{\mu}^{\pi_\theta}} \right\|_\infty^{-1} \cdot \min_s{ \pi_\theta(a^*(s)|s) } \cdot \left[ V^*(\rho) - V^{\pi_\theta}(\rho) \right],
\end{align}
where the
one but last equality used that $\pi^*$ is deterministic and in state $s$ chooses $a^*(s)$ with probability one,
and the last equality uses the performance difference formula (\cref{lem:performance_difference_general}). 

To prove the second claim,
 given a policy $\pi$, define the greedy action set for each state $s$,
\begin{align}
    \bar{\gA}^{\pi}(s) = \left\{ \bar{a}(s) \in \gA : Q^{\pi}(s, \bar{a}(s)) = \max_{a}{ Q^{\pi}(s, a) } \right\}.
\end{align}
By similar arguments that were used in the first part, we have,
\begin{align}
\MoveEqLeft
    \left\| \frac{\partial V^{\pi_\theta}(\mu)}{\partial \theta }\right\|_2 \ge \frac{1}{\sqrt{S A}} \sum_{s,a}{ \left| \frac{\partial V^{\pi_\theta}(\mu)}{\partial \theta(s,a)} \right| } \qquad \left( \text{by Cauchy-Schwarz} \right) \\
    &= \frac{1}{1-\gamma} \cdot \frac{1}{\sqrt{S A}} \sum_{s}{ d_{\mu}^{\pi_\theta}(s) \sum_{a}{ \pi_\theta(a|s) \cdot \left| A^{\pi_\theta}(s,a) \right| } } \qquad \left( \text{by \cref{lem:policy_gradient_softmax}} \right) \\
    &\ge \frac{1}{1-\gamma} \cdot \frac{1}{\sqrt{S A }} \sum_{s}{ d_{\mu}^{\pi_\theta}(s) \sum_{\bar{a}(s) \in \bar{\gA}^{\pi_\theta}(s)}{ \pi_\theta(\bar{a}(s)|s) \cdot \left| A^{\pi_\theta}(s,\bar{a}(s)) \right| } }  \\
    &\ge \frac{1}{1-\gamma} \cdot \frac{1}{\sqrt{S A}} \cdot \left\| \frac{d_{\rho}^{\pi^*}}{d_{\mu}^{\pi_\theta}} \right\|_\infty^{-1} \cdot \left[ \min_{s} \sum_{\bar{a}(s) \in \bar{\gA}^{\pi_\theta}(s)}{ \pi_\theta(\bar{a}(s)|s) } \right] \cdot \sum_{s} d_{\rho}^{\pi^*}(s) \cdot \left| \max_{a}{ Q^{\pi_\theta}(s, a) } - V^{\pi_\theta}(s) \right|,
\end{align}
where the last inequality is because for any $\bar{a}(s) \in \bar{\gA}^{\pi_\theta}(s)$ we have
\begin{align}
    A^{\pi_\theta}(s,\bar{a}(s)) = \max_{a}{ Q^{\pi_\theta}(s, a) } - V^{\pi_\theta}(s),
\end{align}
which is the same value across all $\bar{a}(s) \in \bar{\gA}^{\pi_\theta}(s)$. Then we have,
\begin{align}
    \left\| \frac{\partial V^{\pi_\theta}(\mu)}{\partial \theta }\right\|_2 &\ge \frac{1}{1-\gamma} \cdot \frac{1}{\sqrt{S A}} \cdot \left\| \frac{d_{\rho}^{\pi^*}}{d_{\mu}^{\pi_\theta}} \right\|_\infty^{-1} \cdot \left[ \min_{s} \sum_{\bar{a}(s) \in \bar{\gA}^{\pi_\theta}(s)}{ \pi_\theta(\bar{a}(s)|s) } \right] \cdot \sum_{s} d_{\rho}^{\pi^*}(s) \cdot \left[ \max_{a}{ Q^{\pi_\theta}(s, a) } - V^{\pi_\theta}(s) \right] \\
    &\ge \frac{1}{\sqrt{S A}} \cdot \left\| \frac{d_{\rho}^{\pi^*}}{d_{\mu}^{\pi_\theta}} \right\|_\infty^{-1} \cdot \left[ \min_{s} \sum_{\bar{a}(s) \in \bar{\gA}^{\pi_\theta}(s)}{ \pi_\theta(\bar{a}(s)|s) } \right] \cdot \frac{1}{1-\gamma} \sum_{s} d_{\rho}^{\pi^*}(s) \cdot \left[ Q^{\pi_\theta}(s, a^*(s)) - V^{\pi_\theta}(s) \right] \\
    &= \frac{1}{\sqrt{S A}} \cdot \left\| \frac{d_{\rho}^{\pi^*}}{d_{\mu}^{\pi_\theta}} \right\|_\infty^{-1} \cdot \left[ \min_{s} \sum_{\bar{a}(s) \in \bar{\gA}^{\pi_\theta}(s)}{ \pi_\theta(\bar{a}(s)|s) } \right] \cdot \frac{1}{1-\gamma} \sum_{s} d_{\rho}^{\pi^*}(s) \sum_{a} \pi^*(a |s) \cdot A^{\pi_\theta}(s, a) \\
    &= \frac{1}{\sqrt{S A}} \cdot \left\| \frac{d_{\rho}^{\pi^*}}{d_{\mu}^{\pi_\theta}} \right\|_\infty^{-1} \cdot \left[ \min_{s} \sum_{\bar{a}(s) \in \bar{\gA}^{\pi_\theta}(s)}{ \pi_\theta(\bar{a}(s)|s) } \right] \cdot \left[ V^*(\rho) - V^{\pi_\theta}(\rho) \right],
\end{align}
where the last equation is again according to \cref{lem:performance_difference_general}.

To prove the third claim, using the similar arguments in the second part, we have,
\begin{align}
\MoveEqLeft
    \left\| \frac{\partial V^{\pi_\theta}(\mu)}{\partial \theta }\right\|_2 \ge \frac{1}{\sqrt{S A}} \cdot \sum_{s,a}{ \left| \frac{\partial V^{\pi_\theta}(\mu)}{\partial \theta(s,a)} \right| } \qquad \left( \text{by Cauchy-Schwarz} \right) \\
    &= \frac{1}{1-\gamma} \cdot \frac{1}{\sqrt{S A}} \cdot \sum_{s}{ d_{\mu}^{\pi_\theta}(s) \cdot \sum_{a}{ \pi_\theta(a|s) \cdot \left| A^{\pi_\theta}(s,a) \right| } } \qquad \left( \text{by \cref{lem:policy_gradient_softmax}} \right) \\
    &\ge \frac{1}{1-\gamma} \cdot \frac{1}{\sqrt{S A}} \cdot \sum_{s}{ d_{\mu}^{\pi_\theta}(s) \cdot \sum_{a \in \gA^*(s)}{ \pi_\theta(a|s) \cdot \left| A^{\pi_\theta}(s,a) \right| } } \qquad \left( \text{fewer terms} \right) \\
    &=  \frac{1}{1-\gamma} \cdot \frac{1}{\sqrt{S A}} \cdot \sum_{s}{ d_{\mu}^{\pi_\theta}(s) \cdot \sum_{a \in \gA^*(s)}{ \frac{ \pi_\theta(a|s) }{ \sum_{a^\prime \in \gA^*(s)}{\pi_{\theta}(a^\prime |s)} } \cdot \Bigg[ \sum_{a^\prime \in \gA^*(s)}{\pi_{\theta}(a^\prime |s)} \Bigg] \cdot \left| A^{\pi_\theta}(s,a) \right| } } \\
    &=  \frac{1}{1-\gamma} \cdot \frac{1}{\sqrt{S A}} \cdot \sum_{s}{ d_{\mu}^{\pi_\theta}(s) \cdot \sum_{a \in \gA^*(s)}{ \pi_{\theta}^*(a | s) \cdot \Bigg[ \sum_{a^\prime \in \gA^*(s)}{\pi_{\theta}(a^\prime |s)} \Bigg] \cdot \left| A^{\pi_\theta}(s,a) \right| } }. \qquad \left( \text{by \cref{eq:constructed_optimal_policy}} \right)
\end{align}
Taking minimum over all states, we have,
\begin{align}
\MoveEqLeft
    \left\| \frac{\partial V^{\pi_\theta}(\mu)}{\partial \theta }\right\|_2 \ge  \frac{1}{1-\gamma} \cdot \frac{1}{\sqrt{S A}} \cdot \Bigg[ \min_{s}{\sum_{a^\prime \in \gA^*(s)}{\pi_{\theta}(a^\prime |s)} } \Bigg] \cdot \sum_{s}{ d_{\mu}^{\pi_\theta}(s) \cdot \sum_{a \in \gA^*(s)}{ \pi_{\theta}^*(a | s) \cdot  \left| A^{\pi_\theta}(s,a) \right| } }  \\
    &\ge \frac{1}{1-\gamma} \cdot \frac{1}{\sqrt{S A}} \cdot \Bigg\| \frac{d_{\rho}^{\pi_{\theta}^*}}{d_{\mu}^{\pi_\theta}} \Bigg\|_{\infty}^{-1} \cdot \Bigg[ \min_{s}{\sum_{a^\prime \in \gA^*(s)}{\pi_{\theta}(a^\prime |s)} } \Bigg]  \cdot \sum_{s}{ d_{\mu}^{\pi_\theta^*}(s) \cdot \sum_{a \in \gA^*(s)}{ \pi_{\theta}^*(a | s) \cdot  \left| A^{\pi_\theta}(s,a) \right| } } \\
    &= \frac{1}{1-\gamma} \cdot \frac{1}{\sqrt{S A}} \cdot \Bigg\| \frac{d_{\rho}^{\pi_{\theta}^*}}{d_{\mu}^{\pi_\theta}} \Bigg\|_{\infty}^{-1} \cdot \Bigg[ \min_{s}{\sum_{a^\prime \in \gA^*(s)}{\pi_{\theta}(a^\prime |s)} } \Bigg] \cdot \sum_{s}{ d_{\mu}^{\pi_\theta^*}(s) \cdot \sum_{a}{ \pi_{\theta}^*(a | s) \cdot  \left| A^{\pi_\theta}(s,a) \right| } }. \qquad \left( \text{by \cref{eq:constructed_optimal_policy}} \right) \\
    &\ge \frac{1}{\sqrt{S A}} \cdot \Bigg\| \frac{d_{\rho}^{\pi_{\theta}^*}}{d_{\mu}^{\pi_\theta}} \Bigg\|_{\infty}^{-1} \cdot \Bigg[ \min_{s}{\sum_{a^\prime \in \gA^*(s)}{\pi_{\theta}(a^\prime |s)} } \Bigg] \cdot \frac{1}{1-\gamma} \cdot  \sum_{s}{ d_{\mu}^{\pi_\theta^*}(s) \cdot \sum_{a}{ \pi_{\theta}^*(a | s) \cdot A^{\pi_\theta}(s,a) } }. \qquad \left( |x| \ge x \right) \\
    &= \frac{1}{\sqrt{S A}} \cdot  \Bigg\| \frac{d_{\rho}^{\pi_{\theta}^*}}{d_{\mu}^{\pi_\theta}} \Bigg\|_{\infty}^{-1} \cdot \Bigg[ \min_{s}{\sum_{a^\prime \in \gA^*(s)}{\pi_{\theta}(a^\prime |s)} } \Bigg] \cdot \left[ V^{\pi_\theta^*}(\rho) - V^{\pi_\theta}(\rho) \right] \qquad \left( \text{by \cref{lem:performance_difference_general}} \right) \\
    &= \frac{1}{\sqrt{S A}} \cdot \Bigg\| \frac{d_{\rho}^{\pi_{\theta}^*}}{d_{\mu}^{\pi_\theta}} \Bigg\|_{\infty}^{-1} \cdot \Bigg[ \min_{s}{\sum_{a^\prime \in \gA^*(s)}{\pi_{\theta}(a^\prime |s)} } \Bigg] \cdot \left[ V^{*}(\rho) - V^{\pi_\theta}(\rho) \right], \qquad \left( V^{\pi_\theta^*}(\rho) = V^{*}(\rho) \right) 
\end{align}
where the last equation is because of $\pi_\theta^*$ is an optimal policy, for all $s \in \gS$,
\begin{align}
    \sum_{a \in \gA^*(s)} \pi_{\theta}^*(a | s) = \frac{ \sum_{a \in \gA^*(s)} \pi_{\theta}(a | s)}{ \sum_{a^\prime \in \gA^*(s)}{\pi_{\theta}(a^\prime |s)} } = 1,
\end{align}
thus finishing the proofs.
\end{proof}

\textbf{\cref{lem:lower_bound_cT_softmax_general}.}
Let \cref{ass:posinit} hold. Using \cref{alg:policy_gradient_softmax}, 
we have $c:=\inf_{s\in \cS,t\ge 1} \pi_{\theta_t}(a^*(s)|s) > 0$.
\begin{proof}
The proof is an extension of the proof for  \cref{lem:lower_bound_cT_softmax_special}. Denote $\Delta^*(s) = Q^*(s, a^*(s)) - \max_{a \not= a^*(s)}{ Q^*(s, a) } > 0$ as the optimal value gap of state $s$, where $a^*(s)$ is the action that the optimal policy selects under state $s$, and $\Delta^* = \min_{s \in \gS}{ \Delta^*(s) } > 0$ as the optimal value gap of the MDP. For each state $s \in \gS$, define the following sets:
\begin{align}
    \gR_1(s) &= \left\{ \theta : \frac{\partial V^{\pi_\theta}(\mu)}{\partial \theta(s, a^*(s))} \ge \frac{\partial V^{\pi_\theta}(\mu)}{\partial \theta(s, a)}, \ \forall a \not= a^* \right\}, \\
    \gR_2(s) &= \left\{ \theta :  Q^{\pi_\theta}(s,a^*(s)) \ge Q^*(s,a^*(s)) - \Delta^*(s) / 2 \right\}, \\
    \gR_3(s) &= \left\{ \theta_t: V^{\pi_{\theta_t}}(s) \ge Q^{\pi_{\theta_t}}(s, a^*(s)) - \Delta^*(s) / 2, 
    \text{ for all } t \ge 1 \text{ large enough} \right\}, \\
    \gN_c(s) &= \left\{ \theta : \pi_\theta(a^*(s) | s) \ge \frac{c(s)}{c(s)+1} \right\}, \text{ where } c(s) = \frac{A}{(1-\gamma) \cdot \Delta^*(s)} - 1.
\end{align}
Similarly to the previous proof, we have the following claims: 
\begin{description}
    \item[Claim I.] $\gR_1(s) \cap \gR_2(s) \cap \gR_3(s)$ is a ``nice" region, in the sense that, following a gradient update, (i) if $\theta_{t} \in \gR_1(s) \cap \gR_2(s) \cap \gR_3(s)$, then $\theta_{t+1} \in \gR_1(s) \cap \gR_2(s) \cap \gR_3(s)$;
    while we also have (ii) $\pi_{\theta_{t+1}}(a^*(s) | s) \ge \pi_{\theta_{t}}(a^*(s) | s)$.
    \item[Claim II.] $\gN_c(s) \cap \gR_2(s) \cap \gR_3(s) \subset \gR_1(s) \cap \gR_2(s) \cap \gR_3(s)$.
    \item[Claim III.] There exists a finite time $t_0(s) \ge 1$, such that $\theta_{t_0(s)} \in \gN_c(s) \cap \gR_2(s) \cap \gR_3(s)$, and thus $\theta_{t_0(s)} \in \gR_1(s) \cap \gR_2(s) \cap \gR_3(s)$, which implies $\inf_{t\ge 1} \pi_{\theta_t}(a^*(s) | s) = \min_{1 \le t \le t_0(s)}{ \pi_{\theta_t}(a^*(s) | s) }$.
    \item[Claim IV.] Define $t_0 = \max_{s}{ t_0(s)}$. Then, we have $\inf_{s\in \cS, t\ge 1} \pi_{\theta_t}(a^*(s)|s) = \min_{1 \le t \le t_0}{ \min_{s} \pi_{\theta_t}(a^*(s) | s) }$. 
\end{description}
Clearly, claim IV suffices to prove the lemma since for any $\theta$, $\min_{s,a}\pi_{\theta}(a|s)>0$.
In what follows we provide the proofs of these four claims.
\paragraph{Claim I.}
First we prove part (i) of the claim. If $\theta_{t} \in \gR_1(s) \cap \gR_2(s) \cap \gR_3(s)$, then $\theta_{t+1} \in \gR_1(s) \cap \gR_2(s) \cap \gR_3(s)$. Suppose $\theta_{t} \in \gR_1(s) \cap \gR_2(s) \cap \gR_3(s)$. We have $\theta_{t+1} \in \gR_3(s)$ by the definition of $\gR_3(s)$. We have,
\begin{align}
    Q^{\pi_{\theta_t}}(s,a^*(s)) \ge Q^*(s,a^*(s)) - \Delta^*(s) / 2.
\end{align}
According to smoothness arguments as \cref{eq:smoothness_progress_general}, we have $V^{\pi_{\theta_{t+1}}}(s^\prime) \ge V^{\pi_{\theta_t}}(s^\prime)$, and
\begin{align}
    Q^{\pi_{\theta_{t+1}}}(s,a^*(s)) &= Q^{\pi_{\theta_t}}(s,a^*(s)) + Q^{\pi_{\theta_{t+1}}}(s,a^*(s)) - Q^{\pi_{\theta_t}}(s,a^*(s)) \\
    &= Q^{\pi_{\theta_t}}(s,a^*(s)) + \gamma \sum_{s^\prime}{ \gP(s^\prime | s, a^*(s))} \cdot \left[ V^{\pi_{\theta_{t+1}}}(s^\prime) - V^{\pi_{\theta_t}}(s^\prime) \right] \\
    &\ge  Q^{\pi_{\theta_t}}(s,a^*(s)) + 0\\
    & \ge Q^*(s,a^*(s)) - \Delta^*(s) / 2,
\end{align}
which means $\theta_{t+1} \in \gR_2(s)$. Next we prove $\theta_{t+1} \in \gR_1(s)$. Note that $\forall a \not= a^*(s)$,
\begin{align}
\label{eq:lower_bound_cT_softmax_general_intermediate_1}
    Q^{\pi_{\theta_t}}(s,a^*(s)) - Q^{\pi_{\theta_t}}(s,a) &= Q^{\pi_{\theta_t}}(s,a^*(s)) - Q^*(s,a^*(s)) + Q^*(s,a^*(s)) - Q^{\pi_{\theta_t}}(s,a) \\
    &\ge - \Delta^*(s) / 2 + Q^*(s,a^*(s)) - Q^*(s,a) + Q^*(s,a) - Q^{\pi_{\theta_t}}(s,a) \\
    &\ge - \Delta^*(s) / 2 + Q^*(s,a^*(s)) - \max_{a \not= a^*(s)}{ Q^*(s,a) } + Q^*(s,a) - Q^{\pi_{\theta_t}}(s,a) \\
    &= - \Delta^*(s) / 2 + \Delta^*(s) + \gamma \sum_{s^\prime}{ \gP(s^\prime | s, a)} \cdot \left[ V^*(s^\prime) - V^{\pi_{\theta_t}}(s^\prime) \right] \\
    &\ge - \Delta^*(s) / 2 + \Delta^*(s) + 0\\
    & = \Delta^*(s) / 2.
\end{align}
Using similar arguments we also have $Q^{\pi_{\theta_{t+1}}}(s,a^*(s)) - Q^{\pi_{\theta_{t+1}}}(s,a) \ge \Delta^*(s) / 2$.
According to \cref{lem:policy_gradient_softmax},
\begin{align}
    \frac{\partial V^{\pi_{\theta_t}}(\mu)}{\partial \theta_t(s, a)} &= \frac{1}{1-\gamma} \cdot d_{\mu}^{\pi_{\theta_t}}(s) \cdot \pi_{\theta_t}(a | s) \cdot A^{\pi_{\theta_t}}(s, a) \\
    &= \frac{1}{1-\gamma} \cdot  d_{\mu}^{\pi_{\theta_t}}(s) \cdot \pi_{\theta_t}(a | s) \cdot \left[ Q^{\pi_{\theta_t}}(s, a) - V^{\pi_{\theta_t}}(s) \right].
\end{align}
Furthermore, since $\frac{\partial V^{\pi_{\theta_t}}(\mu)}{\partial {\theta_t}(s, a^*(s))} \ge \frac{\partial V^{\pi_{\theta_t}}(\mu)}{\partial {\theta_t}(s, a)}$, we have
\begin{align}
    \pi_{\theta_t}(a^*(s) | s) \cdot \left[Q^{\pi_{\theta_t}}(s, a^*(s)) - V^{\pi_{\theta_t}}(s) \right] \ge \pi_{\theta_t}(a | s) \cdot  \left[Q^{\pi_{\theta_t}}(s, a) - V^{\pi_{\theta_t}}(s) \right].
\end{align}
Similarly to the first part in the proof for \cref{lem:lower_bound_cT_softmax_special}. There are two cases.

Case (a): If $\pi_{\theta_t}(a^*(s) | s) \ge \pi_{\theta_t}(a | s)$, then $\theta_t(s, a^*(s)) \ge \theta_t(s, a)$.  
After an update of the parameters,
\begin{align}
    \theta_{t+1}(s, a^*(s)) &= \theta_{t}(s, a^*(s)) + \eta \cdot \frac{\partial V^{\pi_{\theta_t}}(\mu)}{\partial {\theta_t}(s, a^*(s))} \\ 
    &\ge \theta_t(s, a) + \eta \cdot \frac{\partial V^{\pi_{\theta_t}}(\mu)}{\partial {\theta_t}(s, a)} = \theta_{t+1}(s, a),
\end{align}
which implies $\pi_{\theta_{t+1}}(a^*(s) | s) \ge \pi_{\theta_{t+1}}(a | s)$. Since $Q^{\pi_{\theta_{t+1}}}(s,a^*(s)) - Q^{\pi_{\theta_{t+1}}}(s,a) \ge \Delta^*(s) / 2 \ge 0$, $\forall a$, we have $Q^{\pi_{\theta_{t+1}}}(s,a^*(s)) - V^{\pi_{\theta_{t+1}}}(s) = Q^{\pi_{\theta_{t+1}}}(s,a^*(s)) - \sum_{a}{ \pi_{\theta_{t+1}}(a | s) \cdot Q^{\pi_{\theta_{t+1}}}(s,a) } \ge 0$, and
\begin{align}
    \pi_{\theta_{t+1}}(a^*(s) | s) \cdot \left[Q^{\pi_{\theta_{t+1}}}(s, a^*(s)) - V^{\pi_{\theta_{t+1}}}(s) \right] \ge \pi_{\theta_{t+1}}(a | s) \cdot  \left[Q^{\pi_{\theta_{t+1}}}(s, a) - V^{\pi_{\theta_{t+1}}}(s) \right],
\end{align}
which is equivalent to $\frac{\partial V^{\pi_{\theta_{t+1}}}(\mu)}{\partial {\theta_{t+1}}(s, a^*(s))} \ge \frac{\partial V^{\pi_{\theta_{t+1}}}(\mu)}{\partial {\theta_{t+1}}(s, a)}$, i.e., $\theta_{t+1} \in \gR_1(s)$.

Case (b): If $\pi_{\theta_t}(a^*(s) | s) < \pi_{\theta_t}(a | s)$, then by $\frac{\partial V^{\pi_{\theta_t}}(\mu)}{\partial {\theta_t}(s, a^*(s))} \ge \frac{\partial V^{\pi_{\theta_t}}(\mu)}{\partial {\theta_t}(s, a)}$,
\begin{align}
\MoveEqLeft
    \pi_{\theta_t}(a^*(s) | s) \cdot \left[Q^{\pi_{\theta_t}}(s, a^*(s)) - V^{\pi_{\theta_t}}(s) \right] \ge \pi_{\theta_t}(a | s) \cdot \left[Q^{\pi_{\theta_t}}(s, a) - V^{\pi_{\theta_t}}(s) \right] \\
    &= \pi_{\theta_t}(a | s) \cdot \left[Q^{\pi_{\theta_t}}(s, a^*(s)) - V^{\pi_{\theta_t}}(s) + Q^{\pi_{\theta_t}}(s, a) - Q^{\pi_{\theta_t}}(s, a^*(s)) \right],
\end{align}
which, after rearranging, is equivalent to
\begin{align}
    Q^{\pi_{\theta_t}}(s, a^*(s)) - Q^{\pi_{\theta_t}}(s, a) &\ge \left( 1 - \frac{\pi_{\theta_t}(a^*(s) | s)}{\pi_{\theta_t}(a | s)} \right) \cdot \left[ Q^{\pi_{\theta_t}}(s, a^*(s)) - V^{\pi_{\theta_t}}(s)  \right] \\
    &= \left( 1 - \exp\left\{ \theta_{t}(s, a^*(s)) - \theta_{t}(s, a) \right\} \right) \cdot \left[ Q^{\pi_{\theta_t}}(s, a^*(s)) - V^{\pi_{\theta_t}}(s) \right].
\end{align}
Since $\theta_{t+1} \in \gR_3(s)$, we have, 
\begin{align}
    Q^{\pi_{\theta_{t+1}}}(s, a^*(s)) - V^{\pi_{\theta_{t+1}}}(s) \le \Delta^*(s) / 2 \le Q^{\pi_{\theta_{t+1}}}(s,a^*(s)) - Q^{\pi_{\theta_{t+1}}}(s,a).
\end{align}
On the other hand,
\begin{align}
    \theta_{t+1}(s, a^*(s)) - \theta_{t+1}(s, a) &= \theta_{t}(s, a^*(s)) + \eta \cdot \frac{\partial V^{\pi_{\theta_t}}(\mu)}{\partial {\theta_t}(s, a^*(s))} - \theta_{t}(s, a) - \eta \cdot \frac{\partial V^{\pi_{\theta_t}}(\mu)}{\partial {\theta_t}(s, a)} \\
    &\ge \theta_{t}(s, a^*(s)) - \theta_{t}(s, a),
\end{align}
which implies
\begin{align}
    1 - \exp\left\{ \theta_{t+1}(s, a^*(s)) - \theta_{t+1}(s, a) \right\} \le 1 - \exp\left\{ \theta_{t}(s, a^*(s)) - \theta_{t}(s, a) \right\}.
\end{align}
Furthermore, since $1 - \exp\left\{ \theta_{t}(s, a^*(s)) - \theta_{t}(s, a) \right\} = 1 - \frac{\pi_{\theta_{t}}(a^*(s) | s)}{\pi_{\theta_{t}}(a | s)} > 0$ (in this case $\pi_{\theta_t}(a^*(s) |s) < \pi_{\theta_t}(a | s))$,
\begin{align}
    \left( 1 - \exp\left\{ \theta_{t+1}(s, a^*(s)) - \theta_{t+1}(s, a) \right\} \right) \cdot \left[ Q^{\pi_{\theta_{t+1}}}(s, a^*(s)) - V^{\pi_{\theta_{t+1}}}(s) \right] \le Q^{\pi_{\theta_{t+1}}}(s, a^*(s)) - Q^{\pi_{\theta_{t+1}}}(s, a),
\end{align}
which after rearranging is equivalent to
\begin{align}
    \pi_{\theta_{t+1}}(a^*(s) | s) \cdot \left[Q^{\pi_{\theta_{t+1}}}(s, a^*(s)) - V^{\pi_{\theta_{t+1}}}(s) \right] \ge \pi_{\theta_{t+1}}(a | s) \cdot  \left[Q^{\pi_{\theta_{t+1}}}(s, a) - V^{\pi_{\theta_{t+1}}}(s) \right],
\end{align}
which means $\frac{\partial V^{\pi_{\theta_{t+1}}}(\mu)}{\partial {\theta_{t+1}}(s, a^*(s))} \ge \frac{\partial V^{\pi_{\theta_{t+1}}}(\mu)}{\partial {\theta_{t+1}}(s, a)}$ i.e., $\theta_{t+1} \in \gR_1(s)$. Now we have (i) if $\theta_{t} \in \gR_1(s) \cap \gR_2(s) \cap \gR_3(s)$, then $\theta_{t+1} \in \gR_1(s) \cap \gR_2(s) \cap \gR_3(s)$.

Let us now turn to proving part~(ii).
We have $\pi_{\theta_{t+1}}(a^*(s) | s) \ge \pi_{\theta_{t}}(a^*(s) | s)$. If $\theta_{t} \in \gR_1(s) \cap \gR_2(s) \cap \gR_3(s)$, then $\frac{\partial V^{\pi_{\theta_t}}(\mu)}{\partial {\theta_t}(s, a^*(s))} \ge \frac{\partial V^{\pi_{\theta_t}}(\mu)}{\partial {\theta_t}(s, a)}, \ \forall a \not= a^*$. After an update of the parameters,
\begin{align}
    \pi_{\theta_{t+1}}(a^*(s) | s) &= \frac{\exp\left\{ \theta_{t+1}(s, a^*(s)) \right\}}{ \sum_{a}{ \exp\left\{ \theta_{t+1}(s, a) \right\}} } \\
    &= \frac{\exp\left\{ \theta_{t}(s, a^*(s)) + \eta \cdot \frac{\partial V^{\pi_{\theta_t}}(\mu)}{\partial {\theta_t}(s, a^*(s))} \right\}}{ \sum_{a}{ \exp\left\{ \theta_{t}(s, a) + \eta \cdot \frac{\partial V^{\pi_{\theta_t}}(\mu)}{\partial {\theta_t}(s, a)} \right\}} } \\
    &\ge \frac{\exp\left\{ \theta_{t}(s, a^*(s)) + \eta \cdot \frac{\partial V^{\pi_{\theta_t}}(\mu)}{\partial {\theta_t}(s, a^*(s))} \right\}}{ \sum_{a}{ \exp\left\{ \theta_{t}(s, a) + \eta \cdot \frac{\partial V^{\pi_{\theta_t}}(\mu)}{\partial {\theta_t}(s, a^*(s))} \right\}} } 
    \qquad \left( \text{because } \frac{\partial V^{\pi_{\theta_t}}(\mu)}{\partial {\theta_t}(s, a^*(s))} \ge \frac{\partial V^{\pi_{\theta_t}}(\mu)}{\partial {\theta_t}(s, a)} \right) \\
    &= \frac{\exp\left\{ {\theta_t}(s, a^*(s)) \right\}}{ \sum_{a}{ \exp\left\{ {\theta_t}(s, a) \right\}} } = \pi_{\theta_t}(a^*(s) | s).
\end{align}
\paragraph{Claim II.}$\gN_c(s) \cap \gR_2(s) \cap \gR_3(s) \subset \gR_1(s) \cap \gR_2(s) \cap \gR_3(s)$. Suppose $\theta \in \gR_2(s) \cap \gR_3(s)$ and $\pi_{\theta}(a^*(s) | s) \ge \frac{c(s)}{c(s)+1}$. There are two cases.

Case (a): If $\pi_\theta(a^*(s) | s ) \ge \max_{a \not= a^*(s)}\{ \pi_\theta(a | s) \}$, then we have,
\begin{align}
    \frac{\partial V^{\pi_{\theta}}(\mu)}{\partial \theta(s, a^*(s))} &= \frac{1}{1-\gamma} \cdot  d_{\mu}^{\pi_{\theta}}(s) \cdot \pi_{\theta}(a^*(s) | s) \cdot \left[ Q^{\pi_{\theta}}(s, a^*(s)) - V^{\pi_{\theta}}(s) \right] \\
    &> \frac{1}{1-\gamma} \cdot  d_{\mu}^{\pi_{\theta}}(s) \cdot \pi_{\theta}(a | s) \cdot \left[ Q^{\pi_{\theta}}(s, a) - V^{\pi_{\theta}}(s) \right] \\
    &= \frac{\partial V^{\pi_{\theta}}(\mu)}{\partial \theta(s, a)},
\end{align}
where the inequality is since $Q^{\pi_{\theta}}(s, a^*(s)) - Q^{\pi_{\theta}}(s, a) \ge \Delta^*(s) /2 > 0$, $\forall a \not= a^*(s)$, similarly to \cref{eq:lower_bound_cT_softmax_general_intermediate_1}.

Case (b): $\pi_\theta(a^*(s) | s) < \max_{a \not= a^*(s)}\{ \pi_\theta(a | s) \}$, which is not possible. Suppose there exists an $a \not= a^*(s)$, such that $\pi_\theta(a^*(s) | s) < \pi_\theta(a | s)$. Then we have the following contradiction,
\begin{align}
    \pi_\theta(a^*(s) | s) + \pi_\theta(a | s) &> \frac{2 \cdot c(s)}{c(s)+1} = 2 - \frac{2 \cdot (1 - \gamma) \cdot \Delta^*(s)}{A} > 1,
\end{align}
where the last inequality is according to $A \ge 2$ (there are at least two actions), and $\Delta^*(s) \le 1/(1 - \gamma)$.

\paragraph{Claim III.}(1) According to the asymptotic convergence results of \citet[Theorem 5.1]{AgKaLeMa19},
which we can use thanks to \cref{ass:posinit},
 $\pi_{\theta_{t}}(a^*(s) | s) \to 1$. Hence, there exists $t_1(s) \ge 1$, such that $\pi_{\theta_{t_1(s)}}(a^*(s) | s) \ge \frac{c(s)}{c(s)+1}$. (2)  $Q^{\pi_{\theta_t}}(s,a^*(s)) \to Q^*(s,a^*(s))$, as $t \to \infty$. There exists $t_2(s) \ge 1$, such that $Q^{\pi_{\theta_{t_2(s)}}}(s,a^*(s)) \ge Q^*(s,a^*(s)) - \Delta^*(s)/2$. (3) $Q^{\pi_{\theta_t}}(s, a^*(s)) \to V^*(s)$, and $V^{\pi_{\theta_t}}(s) \to V^*(s)$, as $t \to \infty$. There exists $t_3(s) \ge 1$, such that $\forall t \ge t_3(s)$, $Q^{\pi_{\theta_t}}(s, a^*(s)) - V^{\pi_{\theta_t}}(s) \le \Delta^*(s) / 2$.

Define $t_0(s) = \max\{ t_1(s), t_2(s), t_3(s) \}$. We have  $\theta_{t_0(s)} \in \gN_c(s) \cap \gR_2(s) \cap \gR_3(s)$, and thus $\theta_{t_0(s)} \in \gR_1(s) \cap \gR_2(s) \cap \gR_3(s)$. According to the first part in our proof, i.e., once $\theta_t$ is in $\gR_1(s) \cap \gR_2(s) \cap \gR_3(s)$, following gradient update $\theta_{t+1}$ will be in $\gR_1(s) \cap \gR_2(s) \cap \gR_3(s)$, and $\pi_{\theta_t}(a^*(s) | s)$ is increasing in $\gR_1(s) \cap \gR_2(s) \cap \gR_3(s)$, we have $\inf_{t}{ \pi_{\theta_t}(a^*(s) | s)} = \min_{1 \le t \le t_0(s)}{ \pi_{\theta_t}(a^*(s) | s)}$. $t_0(s)$ depends on initialization and $c(s)$, which only depends on the MDP and state $s$.

\paragraph{Claim IV.}Define $t_0 = \max_{s}{ t_0(s)}$. Then we have $\inf_{s\in \cS, t\ge 1} \pi_{\theta_t}(a^*(s)|s) = \min_{1 \le t \le t_0}{ \min_{s}\pi_{\theta_t}(a^*(s) | s) }>0$.
\end{proof}

\textbf{\cref{thm:final_rates_softmax_general}.}
Let \cref{ass:posinit} hold and
let $\{\theta_t\}_{t\ge 1}$ be generated using \cref{alg:policy_gradient_softmax} with $\eta = (1 - \gamma)^3/8$,
$c$ the positive constant from \cref{lem:lower_bound_cT_softmax_general}.
Then, for all $t\ge 1$,
\begin{align}
    V^*(\rho) - V^{\pi_{\theta_t}}(\rho) \le \frac{16 S }{c^2(1-\gamma)^6 t} \cdot \bigg\| \frac{d_{\mu}^{\pi^*}}{\mu} \bigg\|_\infty^2 \cdot \bigg\| \frac{1}{\mu} \bigg\|_\infty\,.
\end{align}
\begin{proof}
Let us first note that for any $\theta$ and $\mu$,
\begin{align}
\label{eq:stationary_distribution_dominate_initial_state_distribution}
    d_{\mu}^{\pi_\theta}(s) &= \expectation_{s_0 \sim \mu}{ \left[ d_{\mu}^{\pi_\theta}(s) \right] } \\
    &= \expectation_{s_0 \sim \mu}{ \left[ (1 - \gamma) \sum_{t=0}^{\infty}{ \gamma^t \probability(s_t = s | s_0, \pi_\theta, \gP) } \right] } \\
    &\ge \expectation_{s_0 \sim \mu}{ \left[ (1 - \gamma) \probability(s_0 = s | s_0)  \right] } \\
    &= (1 - \gamma) \cdot \mu(s)\,.
\end{align}
According to the value sub-optimality lemma of \cref{lem:value_suboptimality},
\begin{align}
\MoveEqLeft
    V^*(\rho) - V^{\pi_\theta}(\rho) = \frac{1}{1 - \gamma} \sum_{s}{ d_{\rho}^{\pi_\theta}(s)  \sum_{a}{ \left( \pi^*(a | s) - \pi_\theta(a | s) \right) \cdot Q^*(s,a) } } \\
    &= \frac{1}{1 - \gamma} \sum_{s} \frac{d_{\rho}^{\pi_\theta}(s)}{d_{\mu}^{\pi_\theta}(s)} \cdot d_{\mu}^{\pi_\theta}(s) \sum_{a}{ \left( \pi^*(a | s) - \pi_\theta(a | s) \right) \cdot Q^*(s,a) } \\
    &\le \frac{1}{1 - \gamma} \cdot \left\| \frac{1}{d_{\mu}^{\pi_\theta}} \right\|_\infty \sum_{s} d_{\mu}^{\pi_\theta}(s) \sum_{a}{ \left( \pi^*(a | s) - \pi_\theta(a | s) \right) \cdot Q^*(s,a) } \\
    &\le \frac{1}{(1 - \gamma)^2} \cdot \left\| \frac{1}{\mu} \right\|_\infty \sum_{s} d_{\mu}^{\pi_\theta}(s) \sum_{a}{ \left( \pi^*(a | s) - \pi_\theta(a | s) \right) \cdot Q^*(s,a) } 
    \qquad \left( \text{by \cref{eq:stationary_distribution_dominate_initial_state_distribution} and } \min_s\mu(s)>0 \right)
    \\
    &= \frac{1}{1 - \gamma} \cdot \left\| \frac{1}{\mu} \right\|_\infty \cdot \left[ V^*(\mu) - V^{\pi_\theta}(\mu) \right],
\end{align}
where the first inequality is because of
\begin{align}
    \sum_{a}{ \left( \pi^*(a | s) - \pi_\theta(a | s) \right) \cdot Q^*(s,a) } \ge 0,
\end{align}
and the last equation is again by \cref{lem:value_suboptimality}. According to \cref{lem:smoothness_softmax_general}, $V^{\pi_\theta}(\mu)$ is $\beta$-smooth with $\beta = 8/(1 -\gamma)^3$. Denote $\delta_t = V^*(\mu) - V^{\pi_{\theta_{t}}}(\mu)$. And note $ \eta = \frac{(1 - \gamma)^3}{8}$. We have,
\begin{align}
\label{eq:smoothness_progress_general}
    \delta_{t+1} - \delta_t &= V^{\pi_{\theta_{t}}}(\mu) - V^{\pi_{\theta_{t+1}}}(\mu) \\
    &\le - \frac{(1-\gamma)^3}{16} \cdot \left\| \frac{\partial V^{\pi_{\theta_t}}(\mu)}{\partial \theta_t} \right\|_2^2 
    \qquad \left( \text{by \cref{lem:ascent_lemma_smooth_function}} \right) \\
    &\le - \frac{(1-\gamma)^3}{16 S} \cdot \left\| \frac{d_{\mu}^{\pi^*}}{d_{\mu}^{\pi_{\theta_t}}} \right\|_\infty^{-2} \cdot \min_s{ \pi_{\theta_t}(a^*(s)|s)^2 } \cdot \left[ V^*(\mu) - V^{\pi_{\theta_t}}(\mu) \right]^2 \qquad \left( \text{by \cref{lem:lojasiewicz_softmax_general}} \right) \\
    &\le - \frac{(1-\gamma)^5}{16 S} \cdot \left\| \frac{d_{\mu}^{\pi^*}}{\mu} \right\|_\infty^{-2} \cdot \min_s{ \pi_{\theta_t}(a^*(s)|s)^2 } \cdot \delta_t^2 \\
    &\le - \frac{(1-\gamma)^5}{16 S} \cdot \left\| \frac{d_{\mu}^{\pi^*}}{\mu} \right\|_\infty^{-2} \cdot \inf_{s\in \cS,t\ge 1} \pi_{\theta_t}(a^*(s)|s)^2 \cdot \delta_t^2,
\end{align}
where the second to last inequality is by $d_{\mu}^{\pi_{\theta_t}}(s) \ge (1 - \gamma) \cdot \mu(s)$ (cf. \cref{eq:stationary_distribution_dominate_initial_state_distribution}). According to \cref{lem:lower_bound_cT_softmax_general}, $c=\inf_{s\in \cS,t\ge 1} \pi_{\theta_t}(a^*(s)|s) > 0$.  Using similar induction arguments as in \cref{eq:one_over_t_induction},
\begin{align}
    V^*(\mu) - V^{\pi_{\theta_{t}}}(\mu) \le \frac{16 S }{c^2(1-\gamma)^5 t} \cdot \left\| \frac{d_{\mu}^{\pi^*}}{\mu} \right\|_\infty^2,
\end{align}
which leads to the final result,
\begin{align}
    V^*(\rho) - V^{\pi_{\theta_t}}(\rho) \le \frac{1}{1 - \gamma} \cdot \left\| \frac{1}{\mu} \right\|_\infty \cdot \left[ V^*(\mu) - V^{\pi_{\theta_t}}(\mu) \right] \le \frac{16 S }{c^2(1-\gamma)^6 t} \cdot \left\| \frac{d_{\mu}^{\pi^*}}{\mu} \right\|_\infty^2 \cdot \left\| \frac{1}{\mu} \right\|_\infty,
\end{align}
thus, finishing the proof.
\end{proof}

\subsection{Proofs for \cref{sec:entropy_policy_gradient}: entropy regularized softmax policy gradient}
\label{sec:proofs_entropy_policy_gradient}

\subsubsection{Preliminaries}
\label{sec:proofs_entropy_policy_gradient_priliminaries}

\textbf{\cref{lem:policy_gradient_entropy}.}
Entropy regularized policy gradient w.r.t. $\theta$ is
\begin{align}
\label{eq:policy_gradient_entropy_vector_form}
    \frac{\partial \tilde{V}^{\pi_\theta}(\mu)}{\partial \theta(s,a)} &= \frac{1}{1-\gamma} \cdot d_{\mu}^{\pi_\theta}(s) \cdot \pi_\theta(a|s) \cdot  \tilde{A}^{\pi_\theta}(s,a) \\
    \frac{\partial \tilde{V}^{\pi_\theta}(\mu)}{\partial \theta(s, \cdot)} &= \frac{1}{1-\gamma} \cdot d_{\mu}^{\pi_\theta}(s) \cdot H(\pi_\theta( \cdot |s)) \left[ \tilde{Q}^{\pi_\theta}(s, \cdot) - \tau \log{ \pi_\theta(\cdot | s) } \right] \\
    &= \frac{1}{1-\gamma} \cdot d_{\mu}^{\pi_\theta}(s) \cdot H(\pi_\theta( \cdot |s)) \left[ \tilde{Q}^{\pi_\theta}(s, \cdot) - \tau \theta(s, \cdot) \right], \ \forall s
\end{align}
where $\tilde{A}^{\pi_\theta}(s, a)$ is soft advantage function defined as
\begin{align}
     \tilde{A}^{\pi_\theta}(s, a) &= \tilde{Q}^{\pi_\theta}(s, a) - \tau \log{\pi_\theta(a | s)} - \tilde{V}^{\pi_\theta}(s) \\
     \tilde{Q}^{\pi_\theta}(s, a) &= r(s,a) + \gamma \sum_{s^\prime}{ \gP( s^\prime | s, a)  \tilde{V}^{{\pi_\theta}}(s^\prime) }.
\end{align}
\begin{proof}
According to the definition of $\tilde{V}^{\pi_\theta}$,
\begin{align}
    \tilde{V}^{{\pi_\theta}}(\mu) = \expectation_{s \sim \mu}{ \sum_{a}{ {\pi_\theta}(a | s)} } \cdot { \left[ \tilde{Q}^{\pi_\theta}(s, a) - \tau \log{{\pi_\theta}(a | s)} \right]}.
\end{align}
Taking derivative w.r.t. $\theta$,
\begin{align}
    \frac{\partial \tilde{V}^{\pi_\theta}(\mu)}{\partial \theta} &= \expectation_{s \sim \mu}{ \sum_{a}{ \frac{ \partial {\pi_\theta}(a | s) }{\partial \theta} } } \cdot { \left[ \tilde{Q}^{\pi_\theta}(s, a) - \tau \log{{\pi_\theta}(a | s)} \right]} + \expectation_{s \sim \mu}{ \sum_{a}{ {\pi_\theta}(a | s)} } \cdot { \left[ \frac{ \partial \tilde{Q}^{\pi_\theta}(s, a)}{\partial \theta} - \tau \cdot \frac{1}{{\pi_\theta}(a | s)} \cdot \frac{ \partial {\pi_\theta}(a | s) }{\partial \theta} \right]} \\
    &= \expectation_{s \sim \mu}{ \sum_{a}{ \frac{ \partial {\pi_\theta}(a | s) }{\partial \theta} } } \cdot { \left[ \tilde{Q}^{\pi_\theta}(s, a) - \tau \log{{\pi_\theta}(a | s)} \right]} + \expectation_{s \sim \mu}{ \sum_{a}{ {\pi_\theta}(a | s)}  } \cdot { \frac{ \partial \tilde{Q}^{\pi_\theta}(s, a)}{\partial \theta} } \\
    &= \expectation_{s \sim \mu}{ \sum_{a}{ \frac{ \partial {\pi_\theta}(a | s) }{\partial \theta} } } \cdot { \left[ \tilde{Q}^{\pi_\theta}(s, a) - \tau \log{{\pi_\theta}(a | s)} \right]} + \gamma \cdot \expectation_{s \sim \mu}{ \sum_{a}{ {\pi_\theta}(a | s)} }{ \sum_{s^\prime} \gP(s^\prime | s, a) \cdot \frac{ \partial \tilde{V}^{\pi_\theta}(s^\prime)}{\partial \theta} } \\
    &= \frac{1}{1-\gamma} \sum_{s} d_{\mu}^{\pi_\theta}(s) \sum_{a} \frac{ \partial {\pi_\theta}(a | s) }{\partial \theta} \cdot \left[ \tilde{Q}^{\pi_\theta}(s, a) - \tau \log{{\pi_\theta}(a | s)} \right],
\end{align}
where the second equation is because of
\begin{align}
    \sum_{a}{ {\pi_\theta}(a | s)} \cdot { \left[ \frac{1}{{\pi_\theta}(a | s)} \cdot \frac{ \partial {\pi_\theta}(a | s) }{\partial \theta} \right]} = \sum_{a}{ \frac{ \partial {\pi_\theta}(a | s) }{\partial \theta} } = \frac{ \partial }{\partial \theta} \sum_{a}{ {\pi_\theta}(a | s) } = \frac{ \partial 1 }{\partial \theta} = 0.
\end{align}
Using similar arguments as in the proof for \cref{lem:policy_gradient_softmax}, i.e., for $s^\prime \not= s$, $\frac{\partial \pi_\theta(a | s)}{\partial \theta(s^\prime, \cdot)} = \rvzero$,
\begin{align}
    \frac{\partial \tilde{V}^{\pi_\theta}(\mu)}{\partial \theta(s, \cdot)} 
    &= \frac{1}{1-\gamma} \cdot d_{\mu}^{\pi_\theta}(s) \cdot { \left[ \sum_{a} \frac{\partial \pi_\theta(a | s)}{\partial \theta(s, \cdot)} \cdot \left[ \tilde{Q}^{\pi_\theta}(s, a) - \tau \log{{\pi_\theta}(a | s)} \right] \right] } \\
    &= \frac{1}{1-\gamma} \cdot d_{\mu}^{\pi_\theta}(s) \cdot \left( \frac{d \pi(\cdot | s)}{d \theta(s, \cdot)} \right)^\top \left[ \tilde{Q}^{\pi_\theta}(s, \cdot) - \tau \log{{\pi_\theta}(\cdot | s)} \right] \\
    &= \frac{1}{1-\gamma} \cdot d_{\mu}^{\pi_\theta}(s) \cdot H(\pi_\theta(\cdot | s)) \left[ \tilde{Q}^{\pi_\theta}(s, \cdot) - \tau \log{{\pi_\theta}(\cdot | s)} \right] 
    \qquad \left( \text{by \cref{eq:H_matrix}} \right) \\
    &= \frac{1}{1-\gamma} \cdot d_{\mu}^{\pi_\theta}(s) \cdot H(\pi_\theta(\cdot | s)) \left[ \tilde{Q}^{\pi_\theta}(s, \cdot) - \tau \theta(\cdot | s) + \tau \log \sum_{a} \exp\{ \theta(s, a)\} \cdot \rvone \right] \\
    &= \frac{1}{1-\gamma} \cdot d_{\mu}^{\pi_\theta}(s) \cdot H(\pi_\theta(\cdot | s)) \left[ \tilde{Q}^{\pi_\theta}(s, \cdot) - \tau \theta(\cdot | s) \right]. 
    \qquad \left( H(\pi_\theta(\cdot | s)) \rvone = \rvzero \text{ in \cref{lem:golub_rank_one_perturb}} \right) 
\end{align}
For each component $a$, we have
\begin{align}
    \frac{\partial \tilde{V}^{\pi_\theta}(\mu)}{\partial \theta(s,a)} &= \frac{1}{1-\gamma} \cdot d_{\mu}^{\pi_\theta}(s) \cdot \pi_\theta(a|s) \cdot \left[ \tilde{Q}^{\pi_\theta}(s, a) - \tau \log{{\pi_\theta}(a | s)} - \sum_{a} \pi_\theta(a | s) \cdot \left[ \tilde{Q}^{\pi_\theta}(s, a) - \tau \log{{\pi_\theta}(a | s)} \right] \right] \\
    &= \frac{1}{1-\gamma} \cdot d_{\mu}^{\pi_\theta}(s) \cdot \pi_\theta(a|s) \cdot \left[ \tilde{Q}^{\pi_\theta}(s, a) - \tau \log{{\pi_\theta}(a | s)} - \tilde{V}^{\pi_\theta}(s) \right] \\
    &= \frac{1}{1-\gamma} \cdot d_{\mu}^{\pi_\theta}(s) \cdot \pi_\theta(a|s) \cdot  \tilde{A}^{\pi_\theta}(s,a). \qedhere
\end{align}
\end{proof}

\subsubsection{Proofs for bandits and non-uniform contraction}
\label{sec:proofs_entropy_policy_gradient_bandits}

\textbf{\cref{lem:contraction_entropy_special}} (Non-uniform contraction)\textbf{.}
Using \cref{update_rule:entropy_special} with $\tau \eta \le 1$, $\forall t \ge 1$,
\begin{align}
    \| \zeta_{t+1} \|_2 \le \left(1 - \tau \eta \cdot \min_{a}  \pi_{\theta_t}(a)  \right) \cdot \| \zeta_{t} \|_2,
\end{align}
where $\zeta_t = \tau \theta_{t} - r - \frac{(\tau \theta_{t} - r)^\top \rvone}{K} \cdot \rvone$.
\begin{proof}
\cref{update_rule:entropy_special} can be written as
\begin{align}
    \theta_{t+1} &= \theta_t - \eta \cdot H(\pi_{\theta_t}) (\tau \log{\pi_{\theta_t}} - r ) \\
    &= \theta_t - \eta \cdot H(\pi_{\theta_t}) \left[ \tau \theta_t - r - \left( \log \sum_{a} \exp\{ \theta_t(a) \} \right) \cdot \rvone \right] \\
    &= \theta_{t} - \eta \cdot H(\pi_{ \theta_{t} })( \tau \theta_{t} - r ) \\
    &= \theta_{t} - \eta \cdot H(\pi_{ \theta_{t} }) \left( \tau \theta_{t} - r - \frac{(\tau \theta_{t} - r)^\top \rvone }{K} \cdot \rvone \right),
\end{align}
where the last two equations are from $H(\pi_{ \theta_{t} } ) \rvone = \rvzero$ as shown in \cref{lem:golub_rank_one_perturb}. For all $t \ge 1$,
\begin{align}
    \zeta_{t+1} &= \tau \theta_{t+1} - r - \frac{(\tau \theta_{t+1} - r)^\top \rvone}{K} \cdot \rvone \\
    &= \tau \theta_{t} - r - \frac{(\tau \theta_{t} - r)^\top \rvone}{K} \cdot \rvone + \tau (\theta_{t+1} - \theta_{t}) + \left( \frac{(\tau \theta_{t} - r)^\top \rvone}{K} - \frac{(\tau \theta_{t+1} - r)^\top \rvone}{K} \right) \cdot \rvone \\
    &= \tau \theta_{t} - r - \frac{(\tau \theta_{t} - r)^\top \rvone}{K} \cdot \rvone + \tau (\theta_{t+1} - \theta_{t}) + \frac{\tau ( \theta_{t} - \theta_{t+1} )^\top \rvone}{K} \cdot \rvone.
\end{align}
For the last term,
\begin{align}
\label{eq:contraction_entropy_special_intermediate_1}
    \frac{\tau ( \theta_{t} -  \theta_{t+1} )^\top \rvone}{K} \cdot \rvone &= \frac{\tau}{K} \cdot \left( \eta \cdot H(\pi_{ \theta_{t} }) \left( \tau \theta_{t} - r - \frac{(\tau \theta_{t} - r)^\top \rvone }{K} \cdot \rvone \right) \right)^\top \rvone \cdot \rvone = \rvzero,
\end{align}
where the last equation is again by $H(\pi_{ \theta_{t} } )^\top \rvone = H(\pi_{ \theta_{t} } ) \rvone = \rvzero$. Using the update rule and combining the above,
\begin{align}
    \zeta_{t+1} &= \tau \theta_{t} - r - \frac{(\tau \theta_{t} - r)^\top \rvone}{K} \cdot \rvone + \tau (\theta_{t+1} - \theta_{t}) \\
    &= \left( \identitymatrix - \tau \eta \cdot H(\pi_{ \theta_{t} } ) \right) \left( \tau \theta_{t} - r - \frac{(\tau \theta_{t} - r)^\top \rvone}{K} \cdot \rvone \right) \\
    &= \left( \identitymatrix - \tau \eta \cdot H(\pi_{ \theta_{t} } ) \right) \zeta_t.
\end{align}
According to \cref{lem:norm_decay_entropy_special}, with $\tau \eta \le 1$,
\begin{align}
    \| \zeta_{t+1} \|_2 &= \left\| \left( \identitymatrix - \tau \eta \cdot H(\pi_{ \theta_{t} } ) \right) \zeta_t \right\|_2 \\
    &\le \left(1 - \tau \eta \cdot \min_{a}{ \pi_{\theta_t}(a) } \right) \cdot \| \zeta_{t} \|_2. \qedhere
\end{align}
\end{proof}

\textbf{\cref{lem:matching_entropy_special}.}
Let $\pi_{\theta_t} = \softmax(\theta_t)$. Using \cref{update_rule:entropy_special} with $\tau \eta \le 1$, $\forall t \ge 1$,
\begin{align}
    \| \zeta_{t} \|_2 \le \frac{ 2 ( \tau \norm{\theta_1}_\infty +1 ) \sqrt{K} }{\exp\left\{ \tau \eta \sum_{s=1}^{t-1}{ \min_{a}{ \pi_{\theta_s}(a) } }  \right\}}.
\end{align}
\begin{proof}
According to \cref{lem:contraction_entropy_special}, for all $t \ge 1$,
\begin{align}
    \| \zeta_{t+1} \|_2 &\le \left(1 - \tau \eta \cdot \min_{a}{ \pi_{\theta_t}(a) } \right) \cdot \| \zeta_{t} \|_2 \\
    &\le \frac{ 1 }{\exp\left\{ \tau \eta \cdot \min_{a}{ \pi_{\theta_t}(a) } \right\}} \cdot \| \zeta_{t} \|_2 \\
    &\le \frac{ 1 }{\exp\left\{ \tau \eta \cdot \min_{a}{ \pi_{\theta_t}(a) } \right\}} \cdot \left( 1- \tau \eta \cdot \min_{a}{ \pi_{ \theta_{t-1} }(a) } \right) \cdot \| \zeta_{t-1} \|_2 \\
    &\le \frac{ 1 }{\exp\left\{ \tau \eta \sum_{s=t-1}^{t}{ \min_{a}{ \pi_{\theta_s}(a) }  } \right\}} \cdot \| \zeta_{t-1} \|_2 \\
    &\le \frac{ 1 }{\exp\left\{ \tau \eta \sum_{s=1}^{t}{ \min_{a}{ \pi_{\theta_s}(a) } } \right\}} \cdot \| \zeta_{1} \|_2.
\end{align}
For the initial logit $\theta_1$, 
\begin{align}
\label{eq:zeta_1_upper_bound}
    \| \zeta_{1} \|_2 &= \left\| \tau \theta_{1} - r - \frac{(\tau \theta_{1} - r)^\top \rvone}{K} \cdot \rvone \right\|_2 \\
    &\le \| \tau \theta_{1} - r \|_2 + \left\| \frac{(\tau \theta_{1} - r)^\top \rvone}{K} \cdot \rvone \right\|_2 \qquad \left( \text{by triangle inequality} \right) \\
    &= \| \tau \theta_{1} - r \|_2 + \frac{ \left| (\tau \theta_{1} - r)^\top \rvone \right|}{\sqrt{K}} \\
    &\le \| \tau \theta_{1} - r \|_2 + \frac{ \| \tau \theta_{1} - r \|_2 \cdot \| \rvone \|_2}{\sqrt{K}} \qquad \left( \text{by Cauchy-Schwarz} \right) \\
    &= 2 \cdot \| \tau \theta_{1} - r \|_2 \\
    &\le 2 \cdot \left( \| \tau \theta_{1} \|_2 + \| r \|_2 \right) \\
    &\le 2 ( \tau \norm{\theta_1}_\infty +1 ) \sqrt{K}\,,
\end{align}
finishing the proof.
\end{proof}

\textbf{\cref{lem:lower_bound_min_prob_entropy_special}.}
There exists $c=c(\tau,K,\norm{\theta_1}_\infty)>0$, such that for all $t \ge 1$, $\min_{a}{ \pi_{\theta_t}(a) } \ge c$.
Thus, $\sum_{s=1}^{t-1}{ \min_{a}{ \pi_{\theta_s}(a) } } \ge c \cdot (t-1) $. 
\begin{proof}
Define the constant $c = c(\tau,K,\norm{\theta_1}_\infty)$ as
\begin{align}
    c = \frac{1}{K} \cdot \frac{1}{ \exp\{ 1 / \tau \}} \cdot \frac{1}{ \exp\{ 4 (\norm{\theta_1}_\infty + 1 / \tau ) \sqrt{K} \} }.
\end{align}
First, according to \cref{eq:zeta_1_upper_bound}, we have,
\begin{align}
    \| \zeta_{1} \|_2 \le 2 ( \tau \norm{\theta_1}_\infty +1 ) \sqrt{K}.
\end{align}
Next, according to \cref{lem:contraction_entropy_special}, with $\tau \eta \le 1$,
\begin{align}
    \| \zeta_{t+1} \|_2 \le \left(1 - \tau \eta \cdot \min_{a}{ \pi_{\theta_t}(a) } \right) \cdot \| \zeta_{t} \|_2 \le 2 ( \tau \norm{\theta_1}_\infty +1 ) \sqrt{K}.
\end{align}
Therefore, for all $t \ge 1$, we have,
\begin{align}
    \| \zeta_{t} \|_2 \le 2 ( \tau \norm{\theta_1}_\infty +1 ) \sqrt{K}.
\end{align}
We now prove $\min_{a}{ \pi_{\theta_t}(a) } \ge c$. We have, $\forall a$,
\begin{align}
    \left| \theta_{t}(a) - \frac{r(a)}{\tau} - \frac{( \theta_{t} - r / \tau )^\top \rvone}{K} \right| &= \frac{1}{\tau} \cdot \left| \tau \theta_{t}(a) - r(a) - \frac{( \tau  \theta_{t} - r )^\top \rvone}{K} \right| \\
    &\le \frac{1}{\tau} \cdot \left\| \tau \theta_{t} - r - \frac{(\tau \theta_{t} - r)^\top \rvone}{K} \cdot \rvone \right\|_2 \\
    &= \frac{1}{\tau} \cdot \left\| \zeta_t \right\|_2 \\
    &\le 2 (\norm{\theta_1}_\infty + 1 / \tau ) \sqrt{K}.
\end{align}
Denote $a_1 = \argmin_{a}{ \theta_t(a) } $, and $a_2 = \argmax_{a}{ \theta_t(a) } $. According to the above, we have the following results,
\begin{align}
    \theta_t(a_1) &\ge \frac{r(a_1)}{\tau} + \frac{( \tau  \theta_{t} - r )^\top \rvone}{K} - 2 (\norm{\theta_1}_\infty + 1 / \tau ) \sqrt{K}, \\
    - \theta_t(a_2) &\ge - \frac{r(a_2)}{\tau} - \frac{( \tau  \theta_{t} - r )^\top \rvone}{K} - 2 (\norm{\theta_1}_\infty + 1 / \tau ) \sqrt{K},
\end{align}
which can be used to lower bound the minimum probability as,
\begin{align}
    \min_{a}{ \pi_{ \theta_t }(a) } &= \frac{ \exp\{ \theta_t(a_1) \}}{ \sum_{a}{ \exp\{ \theta_t(a) \}} } \ge \frac{ \exp\{ \theta_t(a_1) \}}{ \sum_{a}{ \exp\{ \theta_t(a_2) \}} } = \frac{1}{K} \cdot \exp\left\{ \theta_t(a_1) - \theta_t(a_2) \right\}, \left( \text{since } \theta_t(a) \le \theta_t(a_2), \ \forall a \right)
\end{align}
which can be further lower bounded using the above results,
\begin{align}
\MoveEqLeft
    \min_{a}{ \pi_{ \theta_t }(a) } \ge \frac{1}{K} \cdot \exp\left\{ \theta_t(a_1) - \theta_t(a_2) \right\} \\
    &\ge \frac{1}{K} \cdot \exp\left\{ \frac{r(a_1)}{\tau} + \frac{( \tau  \theta_{t} - r )^\top \rvone}{K} - 2 (\norm{\theta_1}_\infty + 1 / \tau ) \sqrt{K} - \frac{r(a_2)}{\tau} - \frac{( \tau  \theta_{t} - r )^\top \rvone}{K} - 2 (\norm{\theta_1}_\infty + 1 / \tau ) \sqrt{K} \right\} \\
    &= \frac{1}{K} \cdot \exp\left\{ \frac{r(a_1) - r(a_2)}{\tau} - 4 (\norm{\theta_1}_\infty + 1 / \tau ) \sqrt{K} \right\} \\
    &\ge \frac{1}{K} \cdot \exp\left\{ - \frac{1}{\tau} - 4 (\norm{\theta_1}_\infty + 1 / \tau ) \sqrt{K} \right\} 
    \qquad \left( \text{because } r \in [0, 1]^K \text{ and } r(a_1) - r(a_2) \ge -1 \right) \\
    &= \frac{1}{K} \cdot \frac{1}{ \exp\{ 1 / \tau \}} \cdot \frac{1}{ \exp\{ 4 (\norm{\theta_1}_\infty + 1 / \tau ) \sqrt{K} \} } = c. \qedhere
\end{align}
\end{proof}

\textbf{\cref{thm:rates_entropy_special}.} Let $\pi_{\theta_t} = \softmax(\theta_t)$. Using \cref{update_rule:entropy_special} with $\eta \le 1 / \tau$, for all $t \ge 1$,
\begin{align}
    \left( \pi_\tau^* - \pi_{\theta_t} \right)^\top r &\le \frac{2 \sqrt{K} ( \norm{\theta_1}_\infty + 1 / \tau )}{\exp\left\{ \tau \eta \cdot c \cdot (t-1) \right\}}, \\
    \tilde{\delta}_t &\le \frac{ 2 ( \tau \norm{\theta_1}_\infty +1 )^2 K / \tau }{\exp\left\{ 2 \tau \eta \cdot c \cdot (t-1) \right\}},
\end{align}
where $\tilde{\delta}_t \coloneqq { \pi_\tau^* }^\top \left( r - \tau \log{ \pi_\tau^* } \right)  - { \pi_{\theta_t} }^\top \left( r - \tau \log{ \pi_{\theta_t} } \right)$ and $c>0$ is from \cref{lem:lower_bound_min_prob_entropy_special}.
\begin{proof}
According to H{\" o}lder's inequality,
\begin{align}
    \left( \pi_\tau^* - \pi_{\theta_t} \right)^\top r &\le \left\| \pi_\tau^* - \pi_{\theta_t} \right\|_1 \cdot \left\| r \right\|_\infty \qquad \left( \text{by H{\" o}lder's inequality} \right) \\
    &\le \left\| \pi_\tau^* - \pi_{\theta_t} \right\|_1  
    \qquad \qquad \left( \text{because $r \in [0, 1]^K$} \right) \\
    &\le \left\| \frac{r}{\tau} - \theta_t + \frac{(\tau \theta_{t} - r)^\top \rvone}{\tau K} \cdot \rvone \right\|_\infty \qquad \left( \text{by \cref{lem:policy_logit_inequality_special}} \right) \\
    &= \frac{1}{\tau} \cdot \left\| \tau \theta_t - r - \frac{(\tau \theta_{t} - r)^\top \rvone}{K} \cdot \rvone \right\|_\infty \\
    &\le \frac{1}{\tau} \cdot \left\| \tau \theta_t - r - \frac{(\tau \theta_{t} - r)^\top \rvone}{K} \cdot \rvone \right\|_2 \\
    &\le \frac{1}{\tau} \cdot \frac{ 2 ( \tau \norm{\theta_1}_\infty +1 ) \sqrt{K} }{\exp\left\{ \tau \eta \sum_{s=1}^{t-1}{ \min_{a}{ \pi_{\theta_s}(a) } }  \right\}} 
    \qquad \left( \text{by \cref{lem:matching_entropy_special}} \right) \\
    &\le \frac{2 \sqrt{K}}{\tau} \cdot \frac{ \tau \norm{\theta_1}_\infty +1 }{\exp\left\{ \tau \eta \cdot c \cdot (t-1) \right\}}. 
    \qquad \left( \text{by \cref{lem:lower_bound_min_prob_entropy_special}} \right)
\end{align}
On the other hand, we have,
\begin{align}
    { \pi_\tau^* }^\top \left( r - \tau \log{ \pi_\tau^* } \right)  - { \pi_{\theta_t} }^\top \left( r - \tau \log{ \pi_{\theta_t} } \right) &= { \pi_\tau^* }^\top \left( r - \tau \log{ \pi_\tau^* } \right)  - { \pi_{\theta_t} }^\top \left( r - \tau \log{ \pi_\tau^* } + \tau \log{ \pi_\tau^* } - \tau \log{ \pi_{\theta_t} } \right) \\
    &= \left( \pi_\tau^* - \pi_{\theta_t} \right)^\top \left( r - \tau \log{ \pi_\tau^* } \right) + \tau \cdot \KL( \pi_{\theta_t} \| \pi_\tau^* ) \\
    &= \left( \pi_\tau^* - \pi_{\theta_t} \right)^\top \rvone \cdot \tau \cdot \log{ \sum_{a}{ \exp\{ r(a) / \tau \} } } + \tau \cdot \KL( \pi_{\theta_t} \| \pi_\tau^* ) \\
    &= \tau \cdot \KL( \pi_{\theta_t} \| \pi_\tau^* ) \\
    &\le \frac{\tau}{2} \cdot \left\| \theta_t - \frac{r}{\tau} - \frac{(\tau \theta_t - r )^\top \rvone}{\tau K} \cdot \rvone \right\|_\infty^2 
    \qquad \left( \text{by \cref{lem:kl_logit_inequality}} \right)
    \\
    &= \frac{1}{2 \tau} \cdot \left\| \tau \theta_t - r - \frac{(\tau \theta_t - r )^\top \rvone}{K} \cdot \rvone \right\|_\infty^2 \\
    &\le \frac{1}{2 \tau} \cdot \left\| \tau \theta_t - r - \frac{(\tau \theta_t - r )^\top \rvone}{K} \cdot \rvone \right\|_2^2 \\
    &\le  \frac{1}{2 \tau} \cdot \frac{ 4 ( \tau \norm{\theta_1}_\infty +1 )^2 K }{\exp\left\{ 2 \tau \eta \sum_{s=1}^{t-1}{ \min_{a}{ \pi_{\theta_s}(a) } }  \right\}} 
    \qquad \left( \text{by \cref{lem:matching_entropy_special}} \right)
    \\
    &\le \frac{1}{ \tau} \cdot  \frac{ 2 ( \tau \norm{\theta_1}_\infty +1 )^2 K }{\exp\left\{ 2 \tau \eta \cdot c \cdot (t-1) \right\}}. 
    \qquad \left( \text{by \cref{lem:lower_bound_min_prob_entropy_special}} \right)
    \qedhere
\end{align}
\end{proof}

\subsubsection{Proofs for MDPs and entropy regularization}
\label{sec:proofs_entropy_policy_gradient_general_mdps}

\textbf{\cref{lem:smoothness_entropy_general}} (Smoothness)\textbf{.}
$\sH(\rho, \pi_\theta)$ is $(4 + 8 \log{A}) /(1-\gamma)^3$-smooth, where $A = | \gA |$ is the total number of actions.
\begin{proof}
Denote $\sH^{\pi_\theta}(s) = \sH(s, \pi_\theta)$. Also denote $\theta_\alpha = \theta + \alpha u $, where $\alpha \in \sR$ and $u \in \sR^{SA}$. According to \cref{eq:discounted_entropy},
\begin{align}
	\sH^{\pi_{\theta_\alpha}}(s) &= \expectation_{\substack{s_0 = s, a_t \sim \pi_{\theta_\alpha}(\cdot | s_t), \\ s_{t+1} \sim \gP( \cdot | s_t, a_t)}}{\left[ \sum_{t=0}^{\infty}{- \gamma^t \log{\pi_{\theta_\alpha}(a_t | s_t)}} \right]} \\
	&= - \sum_{a}{ \pi_{\theta_\alpha}(a | s) \cdot \log{  \pi_{\theta_\alpha}(a | s) } } + \gamma \sum_{a}{ \pi_{\theta_\alpha}(a | s) \sum_{s^\prime} { \gP(s^\prime | s, a) \cdot \sH^{\pi_{\theta_\alpha}}(s^\prime) } },
\end{align}
which implies,
\begin{align}
\label{eq:discounted_entropy_bellman_equation} 
    \sH^{\pi_{\theta_\alpha}}(s) = e_{s}^\top M(\alpha) h_{\theta_\alpha},
\end{align}
where $M(\alpha) = \left( \identitymatrix - \gamma P(\alpha) \right)^{-1}$ is defined in \cref{eq:smoothness_softmax_general_intermediate_M_matrix_def}, $P(\alpha)$ is defined in \cref{eq:smoothness_softmax_general_intermediate_P_PI_def}, and $h_{\theta_\alpha} \in \sR^{S}$
for $s\in \gS$ is given by
\begin{align}
\label{eq:one_step_entropy} 
    h_{\theta_\alpha}(s) = - \sum_{a}{ \pi_{\theta_\alpha}(a | s) \cdot \log{  \pi_{\theta_\alpha}(a | s) } }.
\end{align}
According to \cref{eq:one_step_entropy}, $h_{\theta_\alpha}(s) \in [0, \log{A}]$, $\forall s$. Then we have,
\begin{align}
\label{eq:smoothness_entropy_general_intermediate_hpi_upper_bound}
    \| h_{\theta_\alpha} \|_\infty = \max_{s}{ \left| h_{\theta_\alpha}(s) \right| } \le \log{A}.
\end{align}
For any state $s \in \gS$,
\begin{align}
    \left| \frac{\partial h_{\theta_\alpha}(s)}{\partial \alpha} \right| &= \left| \Big\langle \frac{\partial h_{\theta_\alpha}(s)}{\partial \theta_\alpha}, \frac{\partial \theta_\alpha}{\partial \alpha} \Big\rangle \right| \\
    &= \left| \Big\langle \frac{\partial h_{\theta_\alpha}(s)}{\partial \theta_\alpha(\cdot | s)}, u(s, \cdot) \Big\rangle \right| \\
    &= \left| \left( H(\pi_{\theta_\alpha}(\cdot | s)) \log{ \pi_{\theta_\alpha}(\cdot | s) } \right)^\top u(s, \cdot) \right| \\
    &\le \left\| H(\pi_{\theta_\alpha}(\cdot | s)) \log{ \pi_{\theta_\alpha}(\cdot | s) } \right\|_1 \cdot \left\| u(s, \cdot) \right\|_\infty.
\end{align}
The $\ell_1$ norm is upper bounded as
\begin{align}
\label{eq:H_matrix_log_pi_1_norm}
    \left\| H(\pi_{\theta_\alpha}(\cdot | s)) \log{ \pi_{\theta_\alpha}(\cdot | s) } \right\|_1 &= \sum_{a}{ \pi_{\theta_\alpha}(a | s) \cdot \left| \log{ \pi_{\theta_\alpha}(a | s) } - \pi_{\theta_\alpha}(\cdot | s)^\top \log{ \pi_{\theta_\alpha}(\cdot | s) } \right|} \\
    &\le \sum_{a}{ \pi_{\theta_\alpha}(a | s) \cdot \left( \left| \log{ \pi_{\theta_\alpha}(a | s) } \right| + \left| \pi_{\theta_\alpha}(\cdot | s)^\top \log{ \pi_{\theta_\alpha}(\cdot | s) } \right|  \right) } \\
    &= - 2 \cdot \sum_{a}{ \pi_{\theta_\alpha}(a | s) \cdot \log{ \pi_{\theta_\alpha}(a | s) } } \le 2 \cdot \log{A}.
\end{align}
Therefore we have,
\begin{align}
\label{eq:smoothness_entropy_general_intermediate_hpi_first_derivative_upper_bound}
    \left\| \frac{\partial h_{\theta_\alpha}}{\partial \alpha} \right\|_\infty &= \max_{s}{ \left| \frac{\partial h_{\theta_\alpha}(s)}{\partial \alpha} \right| } \\
    &\le \max_{s}{ \left\| H(\pi_{\theta_\alpha}(\cdot | s)) \log{ \pi_{\theta_\alpha}(\cdot | s) } \right\|_1 \cdot \left\| u(s, \cdot) \right\|_\infty } \\
    &\le 2 \cdot \log{A} \cdot \| u \|_2.
\end{align}
The second derivative w.r.t. $\alpha$ is
\begin{align}
    \left| \frac{\partial^2 h_{\theta_\alpha}(s)}{\partial \alpha^2} \right| &= \left| \left( \frac{\partial}{\partial \theta_\alpha} \left\{ \frac{\partial h_{\theta_\alpha}(s)}{\partial \alpha} \right\} \right)^\top \frac{\partial \theta_\alpha}{\partial \alpha}  \right| \\
    &= \left| \left( \frac{\partial^2 h_{\theta_\alpha}(s)}{\partial \theta_\alpha^2} \frac{\partial \theta_\alpha}{\partial \alpha} \right)^\top \frac{\partial \theta_\alpha}{\partial \alpha} \right| \\
    &= \left| u(s, \cdot)^\top \frac{\partial^2 h_{\theta_\alpha}(s)}{\partial \theta_\alpha^2(s, \cdot)} u(s, \cdot) \right|.
\end{align}
Denote the Hessian $T(s, \theta_\alpha) = \frac{\partial^2 h_{\theta_\alpha}(s)}{\partial \theta^2(s, \cdot)}$.
Then,
\begin{align}
    T(s, \theta_\alpha) = \frac{\partial^2 h_{\theta_\alpha}(s)}{\partial \theta_\alpha^2(s, \cdot)} &= \frac{\partial}{\partial \theta_\alpha(s, \cdot)} \left\{ \frac{\partial h_{\theta_\alpha}(s)}{\partial \theta_\alpha(s, \cdot)} \right\} \\
    &= \frac{\partial}{\partial \theta_\alpha(s, \cdot)} \left\{ \left( \frac{ \partial \pi_{\theta_\alpha}(\cdot | s) }{ \partial \theta_\alpha( s, \cdot )}  \right)^\top \frac{\partial h_{\theta_\alpha}(s)}{\partial \pi_{\theta_\alpha}( \cdot | s)} \right\} \\
    &= \frac{\partial}{\partial \theta_\alpha(s, \cdot)} \left\{  H( \pi_{\theta_\alpha}( \cdot | s) ) ( - \log{ \pi_{\theta_\alpha}( \cdot | s)} ) \right\}.
\end{align}
Note $T(s, \theta_\alpha) \in \sR^{A \times A}$, and $\forall i, j \in \gA$, the value of $T(s, \theta_\alpha)$ is,
\begin{align}
    T_{i, j} &= \frac{ d \{ \pi_{\theta_\alpha}(i | s) \cdot ( - \log{ \pi_{\theta_\alpha}(i | s) } - h_{\theta_\alpha}(s) ) \} }{d \theta_\alpha(s, j)} \\
    &= \frac{d \pi_{\theta_\alpha}(i | s) }{d \theta_\alpha(s, j) } \cdot ( - \log{ \pi_{\theta_\alpha}(i | s) } -  h_{\theta_\alpha}(s)  ) + \pi_{\theta_\alpha}(i | s) \cdot \frac{d \{ - \log{ \pi_{\theta_\alpha}(i | s) } - h_{\theta_\alpha}(s) \} }{d \theta_\alpha(s, j) } \\
    &= (\delta_{ij} \pi_{\theta_\alpha}(j | s) -  \pi_{\theta_\alpha}(i | s) \pi_{\theta_\alpha}(j | s) ) \cdot ( - \log{ \pi_{\theta_\alpha}(i | s) } -  h_{\theta_\alpha}(s)) \\
    &\qquad + \pi_{\theta_\alpha}(i | s) \cdot \left( - \frac{1}{ \pi_{\theta_\alpha}(i | s)  } \cdot \left( \delta_{ij} \pi_{\theta_\alpha}(j | s) - \pi_{\theta_\alpha}(i | s) \pi_{\theta_\alpha}(j | s) \right) - \pi_{\theta_\alpha}(j | s) \cdot \left( - \log{ \pi_{\theta_\alpha}(j | s) } -  h_{\theta_\alpha}(s) \right) \right) \\
    &= \delta_{ij} \pi_{\theta_\alpha}(j | s) \cdot ( - \log{ \pi_{\theta_\alpha}(i | s) } -  h_{\theta_\alpha}(s) - 1 ) - \pi_{\theta_\alpha}(i | s) \pi_{\theta_\alpha}(j | s) \cdot ( - \log{ \pi_{\theta_\alpha}(i | s) } -  h_{\theta_\alpha}(s) - 1) \\
    &\qquad - \pi_{\theta_\alpha}(i | s) \pi_{\theta_\alpha}(j | s) \cdot ( - \log{ \pi_{\theta_\alpha}(j | s) } -  h_{\theta_\alpha}(s) ).
\end{align}
For any vector $y \in \sR^{A}$,
\begin{align}
    \left| y^\top T(s, \theta_\alpha) y \right| &= \left| \sum\limits_{i=1}^{A}{ \sum\limits_{j=1}^{A}{ T_{i,j} y(i) y(j)} } \right| \\
    &\le \left| \sum_{i}{ \pi_{\theta_\alpha}(i | s) \cdot ( - \log{ \pi_{\theta_\alpha}(i | s) } -  h_{\theta_\alpha}(s) - 1 ) \cdot y(i)^2 } \right| \\
    &\qquad + 2 \cdot \left| \sum_{i} \pi_{\theta_\alpha}(i | s) \cdot y(i) \sum_{j}\pi_{\theta_\alpha}(j | s) \cdot ( - \log{ \pi_{\theta_\alpha}(j | s) } -  h_{\theta_\alpha}(s) ) \cdot y(j) \right| + \left( \pi_{\theta_\alpha}(\cdot | s)^\top y \right)^2 \\
    &= \left| \left( H( \pi_{\theta_\alpha}(\cdot | s) ) ( - \log{ \pi_{\theta_\alpha}(\cdot | s) }) - \pi_{\theta_\alpha}(\cdot | s) \right)^\top \left( y \odot y \right) \right| \\
    &\qquad + 2 \cdot \left| \left( \pi_{\theta_\alpha}(\cdot | s)^\top y \right) \cdot \left( H( \pi_{\theta_\alpha}(\cdot | s) ) ( - \log{ \pi_{\theta_\alpha}(\cdot | s) }) \right)^\top y \right| + \left( \pi_{\theta_\alpha}(\cdot | s)^\top y \right)^2 \\
    &\le \left\| H( \pi_{\theta_\alpha}(\cdot | s) ) ( - \log{ \pi_{\theta_\alpha}(\cdot | s) }) \right\|_\infty \cdot \left\| y \odot y \right\|_1 + \| \pi_{\theta_\alpha}(\cdot | s) \|_\infty \cdot \left\| y \odot y \right\|_1 \\
    &\qquad + 2 \cdot \| \pi_{\theta_\alpha}(\cdot | s) \|_1 \cdot \| y \|_\infty \cdot \left\| H( \pi_{\theta_\alpha}(\cdot | s) ) ( - \log{ \pi_{\theta_\alpha}(\cdot | s) }) \right\|_1 \cdot \| y \|_\infty + \| \pi_{\theta_\alpha}(\cdot | s) \|_2^2 \cdot \| y \|_2^2,
\end{align}
where the last inequality is by  H{\" o}lder's inequality. Note that $\| y \odot y \|_1 = \| y \|_2^2$, $\| \pi_{\theta_\alpha}(\cdot | s) \|_\infty \le \| \pi_{\theta_\alpha}(\cdot | s) \|_1$, $\| \pi_{\theta_\alpha}(\cdot | s) \|_2 \le \| \pi_{\theta_\alpha}(\cdot | s) \|_1 = 1$, and $\| y \|_\infty \le \| y \|_2$. The $\ell_\infty$ norm is upper bounded as
\begin{align}
\label{eq:H_matrix_log_pi_infty_norm}
    \left\| H( \pi_{\theta_\alpha}(\cdot | s) ) ( - \log{ \pi_{\theta_\alpha}(\cdot | s) }) \right\|_\infty &= \max_{a}{ \left| \pi_{\theta_\alpha}(a | s)  \cdot \left( - \log{ \pi_{\theta_\alpha}(a | s) }  + \pi_{\theta_\alpha}(\cdot | s)^\top \log{ \pi_{\theta_\alpha}(\cdot | s) } \right) \right| } \\
    &\le \max_{a}{ - \pi_{\theta_\alpha}(a | s)  \cdot \log{ \pi_{\theta_\alpha}(a | s) } } - \pi_{\theta_\alpha}(\cdot | s)^\top \log{ \pi_{\theta_\alpha}(\cdot | s) } \\
    &\le \frac{1}{e} + \log{A}. \qquad \left( \text{since } - x \cdot \log{ x } \le \frac{1}{e} \text{ for all } x \in [0, 1] \right)
\end{align}
Therefore we have,
\begin{align}
    \left| y^\top T(s, \theta_\alpha) y \right| &\le \left\| H( \pi_{\theta_\alpha}(\cdot | s) ) ( - \log{ \pi_{\theta_\alpha}(\cdot | s) }) \right\|_\infty \cdot \| y \|_2^2 \\
    &\qquad + \| y \|_2^2 + 2 \cdot \left\| H( \pi_{\theta_\alpha}(\cdot | s) ) ( - \log{ \pi_{\theta_\alpha}(\cdot | s) }) \right\|_1 \cdot \| y \|_2^2 + \| y \|_2^2 \\
    &\le \left( \frac{1}{e} + \log{A} + 2 \right) \cdot \| y \|_2^2 + 2 \cdot \left\| H( \pi_{\theta_\alpha}(\cdot | s) ) ( - \log{ \pi_{\theta_\alpha}(\cdot | s) }) \right\|_1 \cdot \| y \|_2^2 \qquad \left( \text{by \cref{eq:H_matrix_log_pi_infty_norm}} \right) \\
    &\le \left( \frac{1}{e} + \log{A} + 2 + 2 \cdot \log{A} \right) \cdot \| y \|_2^2 \qquad \left( \text{by \cref{eq:H_matrix_log_pi_1_norm}} \right) \\
    &\le 3 \cdot \left( 1 + \log{A} \right) \cdot \| y \|_2^2.
\end{align}
According to the above results,
\begin{align}
\label{eq:smoothness_entropy_general_intermediate_hpi_second_derivative_upper_bound}
    \left\| \frac{\partial^2 h_{\theta_\alpha}}{\partial \alpha^2} \right\|_\infty &= \max_{s}{ \left| \frac{\partial^2 h_{\theta_\alpha}(s)}{\partial \alpha^2} \right| } \\
    &= \max_{s}{ \left| u(s, \cdot)^\top \frac{\partial^2 h_{\theta_\alpha}(s)}{\partial \theta_\alpha^2(s, \cdot)} u(s, \cdot) \right|}  \\
    &= \max_{s}{ \left| u(s, \cdot)^\top T(s, \theta_\alpha) u(s, \cdot) \right| } \\
    &\le  3 \cdot \left( 1 + \log{A} \right) \cdot \max_{s}{ \| u(s, \cdot) \|_2^2 } \\
    &\le 3 \cdot \left( 1 + \log{A} \right) \cdot \| u \|_2^2.
\end{align}
Taking derivative w.r.t. $\alpha$ in \cref{eq:discounted_entropy_bellman_equation},
\begin{align}
    \frac{\partial \sH^{\pi_{\theta_\alpha}}(s)}{\partial \alpha} = \gamma \cdot e_{s}^\top M(\alpha) \frac{\partial P(\alpha)}{\partial \alpha} M(\alpha) h_{\theta_\alpha} + e_{s}^\top M(\alpha) \frac{\partial h_{\theta_\alpha}}{\partial \alpha}.
\end{align}
Taking second derivative w.r.t. $\alpha$,
\begin{align}
\label{eq:smoothness_entropy_general_intermediate_H_second_derivative_def}
    \frac{\partial^2 \sH^{\pi_{\theta_\alpha}}(s)}{\partial \alpha^2} &= 2 \gamma^2 \cdot e_{s}^\top M(\alpha) \frac{\partial P(\alpha)}{\partial \alpha} M(\alpha) \frac{\partial P(\alpha)}{\partial \alpha} M(\alpha) h_{\theta_\alpha} + \gamma \cdot e_{s}^\top M(\alpha) \frac{\partial^2 P(\alpha)}{\partial \alpha^2} M(\alpha) h_{\theta_\alpha} \\
    &\qquad + 2 \gamma \cdot e_{s}^\top M(\alpha) \frac{\partial P(\alpha)}{\partial \alpha} M(\alpha) \frac{\partial h_{\theta_\alpha}}{\partial \alpha} + e_{s}^\top M(\alpha) \frac{\partial^2 h_{\theta_\alpha}}{\partial \alpha^2}.
\end{align}
For the last term,
\begin{align}
\label{eq:smoothness_entropy_general_intermediate_1}
    \left| e_{s}^\top M(\alpha) \frac{\partial^2 h_{\theta_\alpha}}{\partial \alpha^2} \Big|_{\alpha=0} \right| &\le \left\| e_{s} \right\|_1 \cdot \left\| M(\alpha) \frac{\partial^2 h_{\theta_\alpha}}{\partial \alpha^2} \Big|_{\alpha=0} \right\|_\infty \\
    &\le \frac{1}{1 - \gamma} \cdot \left\| \frac{\partial^2 h_{\theta_\alpha}}{\partial \alpha^2} \Big|_{\alpha=0} \right\|_\infty \qquad \left( \text{by \cref{eq:smoothness_softmax_general_intermediate_M_matrix_norm}} \right) \\
    &\le \frac{3 \cdot (1 + \log{A})}{1 - \gamma} \cdot \| u \|_2^2. 
    \qquad \left( \text{by \cref{eq:smoothness_entropy_general_intermediate_hpi_second_derivative_upper_bound}} \right)
\end{align}
For the second last term,
\begin{align}
\label{eq:smoothness_entropy_general_intermediate_2} 
    \left| e_{s}^\top M(\alpha) \frac{\partial P(\alpha)}{\partial \alpha} M(\alpha) \frac{\partial h_{\theta_\alpha}}{\partial \alpha} \Big|_{\alpha=0} \right| &\le \left\| M(\alpha) \frac{\partial P(\alpha)}{\partial \alpha} M(\alpha) \frac{\partial h_{\theta_\alpha}}{\partial \alpha} \Big|_{\alpha=0} \right\|_\infty \\
    &\le \frac{1}{1 - \gamma} \cdot \left\| \frac{\partial P(\alpha)}{\partial \alpha} M(\alpha) \frac{\partial h_{\theta_\alpha}}{\partial \alpha} \Big|_{\alpha=0} \right\|_\infty \qquad \left( \text{by \cref{eq:smoothness_softmax_general_intermediate_M_matrix_norm}} \right) \\
    &\le \frac{2 \cdot \| u \|_2 }{1 - \gamma} \cdot \left\| M(\alpha) \frac{\partial h_{\theta_\alpha}}{\partial \alpha} \Big|_{\alpha=0} \right\|_\infty \qquad \left( \text{by \cref{eq:smoothness_softmax_general_intermediate_P_PI_first}} \right) \\
    &\le \frac{2 \cdot \| u \|_2 }{(1 - \gamma)^2} \cdot \left\|  \frac{\partial h_{\theta_\alpha}}{\partial \alpha} \Big|_{\alpha=0} \right\|_\infty \qquad \left( \text{by \cref{eq:smoothness_softmax_general_intermediate_M_matrix_norm}} \right) \\
    &\le \frac{2 \cdot \| u \|_2 }{(1 - \gamma)^2} \cdot 2 \cdot \log{A} \cdot \| u \|_2 = \frac{4 \cdot \log{A} }{(1 - \gamma)^2} \cdot \| u \|_2^2. 
    \qquad \left( \text{by \cref{eq:smoothness_entropy_general_intermediate_hpi_first_derivative_upper_bound}} \right)
\end{align}
For the second term,
\begin{align}
\label{eq:smoothness_entropy_general_intermediate_3}
    \left| e_{s}^\top M(\alpha) \frac{\partial^2 P(\alpha)}{\partial \alpha^2} M(\alpha) h_{\theta_\alpha} \Big|_{\alpha=0} \right| &\le \left\| M(\alpha) \frac{\partial^2 P(\alpha)}{\partial \alpha^2} M(\alpha) h_{\theta_\alpha} \Big|_{\alpha=0} \right\|_\infty \\
    &\le \frac{1}{1 - \gamma} \cdot \left\|  \frac{\partial^2 P(\alpha)}{\partial \alpha^2} M(\alpha) h_{\theta_\alpha} \Big|_{\alpha=0} \right\|_\infty \qquad \left( \text{by \cref{eq:smoothness_softmax_general_intermediate_M_matrix_norm}} \right) \\
    &\le \frac{6 \cdot \| u \|_2^2}{1 - \gamma} \cdot \left\| M(\alpha) h_{\theta_\alpha} \Big|_{\alpha=0} \right\|_\infty \qquad \left( \text{by \cref{eq:smoothness_softmax_general_intermediate_P_PI_second}} \right) \\
    &\le \frac{6 \cdot \| u \|_2^2}{(1 - \gamma)^2} \cdot \left\|  h_{\theta_\alpha} \Big|_{\alpha=0} \right\|_\infty \qquad \left( \text{by \cref{eq:smoothness_softmax_general_intermediate_M_matrix_norm}} \right) \\
    &\le \frac{6 \cdot \log{A} }{(1 - \gamma)^2} \cdot \| u \|_2^2. \qquad \left( \text{by \cref{eq:smoothness_entropy_general_intermediate_hpi_upper_bound}} \right)
\end{align}
For the first term, according to \cref{eq:smoothness_softmax_general_intermediate_P_PI_first,eq:smoothness_softmax_general_intermediate_M_matrix_norm}, \cref{eq:smoothness_entropy_general_intermediate_hpi_upper_bound},
\begin{align}
\label{eq:smoothness_entropy_general_intermediate_4}
    \left| e_{s}^\top M(\alpha) \frac{\partial P(\alpha)}{\partial \alpha} M(\alpha) \frac{\partial P(\alpha)}{\partial \alpha} M(\alpha) h_{\theta_\alpha} \Big|_{\alpha=0} \right| &\le \left\| M(\alpha) \frac{\partial P(\alpha)}{\partial \alpha} M(\alpha) \frac{\partial P(\alpha)}{\partial \alpha} M(\alpha) h_{\theta_\alpha} \Big|_{\alpha=0} \right\|_\infty \\ &\le \frac{1}{1-\gamma} \cdot 2 \cdot \| u \|_2 \cdot \frac{1}{1-\gamma} \cdot 2 \cdot \| u \|_2 \cdot \frac{1}{1-\gamma} \cdot \log{A} \\
    &= \frac{4 \cdot \log{A} }{(1 - \gamma)^3} \cdot \| u \|_2^2.
\end{align}
Combining \cref{eq:smoothness_entropy_general_intermediate_1,eq:smoothness_entropy_general_intermediate_2,eq:smoothness_entropy_general_intermediate_3,eq:smoothness_entropy_general_intermediate_4} with \cref{eq:smoothness_entropy_general_intermediate_H_second_derivative_def},
\begin{align}
\label{eq:smoothness_entropy_general_intermediate_5}
    \left| \frac{\partial^2 \sH^{\pi_{\theta_\alpha}}(s)}{\partial \alpha^2} \Big|_{\alpha=0} \right| &\le 2 \gamma^2 \cdot \left| e_{s}^\top M(\alpha) \frac{\partial P(\alpha)}{\partial \alpha} M(\alpha) \frac{\partial P(\alpha)}{\partial \alpha} M(\alpha) h_{\theta_\alpha} \Big|_{\alpha=0} \right| + \gamma \cdot \left| e_{s}^\top M(\alpha) \frac{\partial^2 P(\alpha)}{\partial \alpha^2} M(\alpha) h_{\theta_\alpha} \Big|_{\alpha=0} \right| \\
    &\qquad + 2 \gamma \cdot \left| e_{s}^\top M(\alpha) \frac{\partial P(\alpha)}{\partial \alpha} M(\alpha) \frac{\partial h_{\theta_\alpha}}{\partial \alpha} \Big|_{\alpha=0} \right| + \left| e_{s}^\top M(\alpha) \frac{\partial^2 h_{\theta_\alpha}}{\partial \alpha^2} \Big|_{\alpha=0} \right| \\
    &\le \left( 2 \gamma^2 \cdot \frac{4 \cdot \log{A}}{(1-\gamma)^3} + \gamma \cdot \frac{6 \cdot \log{A}}{(1-\gamma)^2} + 2 \gamma \cdot \frac{4 \cdot \log{A }}{(1-\gamma)^2} + \frac{3 \cdot (1 + \log{A})}{1-\gamma}  \right) \cdot \| u \|_2^2 \\
    &\le \left( \frac{8 \cdot \log{A}}{(1-\gamma)^3} + \frac{3}{1-\gamma} \right) \cdot \| u \|_2^2 \\
    &\le \frac{4 + 8 \cdot \log{A}}{(1-\gamma)^3}  \cdot \| u \|_2^2,
\end{align}
which implies for all $y \in \sR^{S A}$ and $\theta$,
\begin{align}
\label{eq:smoothness_entropy_general_intermediate_6}
    \left| y^\top \frac{\partial^2 \sH^{\pi_\theta}(s)}{\partial \theta^2} y \right| &= \left| \left(\frac{y}{ \| y \|_2 } \right)^\top \frac{\partial^2 \sH^{\pi_\theta}(s)}{\partial \theta^2} \left(\frac{y}{ \| y \|_2 } \right) \right| \cdot \| y \|_2^2 \\
    &\le \max_{\| u \|_2 = 1}{ \left| \Big\langle \frac{\partial^2 \sH^{\pi_\theta}(s)}{\partial \theta^2} u , u \Big\rangle \right| } \cdot \| y \|_2^2 \\
    &= \max_{\| u \|_2 = 1}{ \left| \Big\langle \frac{\partial^2 \sH^{\pi_{\theta_\alpha}}(s)}{\partial {\theta_\alpha^2}} \Big|_{\alpha=0} \frac{\partial \theta_\alpha}{\partial \alpha}, \frac{\partial \theta_\alpha}{\partial \alpha} \Big\rangle \right| } \cdot \| y \|_2^2 \\
    &= \max_{\| u \|_2 = 1}{ \left| \Big\langle \frac{\partial}{\partial \theta_\alpha} \left\{ \frac{\partial \sH^{\pi_{\theta_\alpha}}(s)}{\partial \alpha} \right\} \Big|_{\alpha = 0}, \frac{\partial \theta_\alpha}{\partial \alpha} \Big\rangle \right| } \cdot \| y \|_2^2 \\
    &= \max_{\| u \|_2 = 1}{ \left| \frac{\partial^2 \sH^{\pi_{\theta_\alpha}}(s)}{\partial \alpha^2  } \Big|_{\alpha=0} \right| } \cdot \| y \|_2^2 \\
    &\le \frac{4 + 8 \cdot \log{A}}{(1-\gamma)^3} \cdot \| y \|_2^2. \qquad \left( \text{by \cref{eq:smoothness_entropy_general_intermediate_5}} \right)
\end{align}
Denote $\theta_{\xi} = \theta + \xi ( \theta^\prime - \theta )$, where $\xi \in [0,1]$. According to Taylor's theorem, $\forall s$, $\forall \theta, \ \theta^\prime$,
\begin{align}
    \left| \sH^{\pi_{\theta^\prime}}(s) - \sH^{\pi_{\theta}}(s) - \Big\langle \frac{\partial \sH^{\pi_\theta}(s)}{\partial \theta}, \theta^\prime - \theta \Big\rangle \right| &= \frac{1}{2} \cdot \left| \left( \theta^\prime - \theta \right)^\top \frac{\partial^2 \sH^{\pi_{\theta_\xi}}(s)}{\partial \theta_\xi^2} \left( \theta^\prime - \theta \right) \right| \\
    &\le \frac{2 + 4 \cdot \log{A}}{(1-\gamma)^3} \cdot \| \theta^\prime - \theta \|_2^2. \qquad \left( \text{by \cref{eq:smoothness_entropy_general_intermediate_6}} \right)
\end{align}
Since $\sH^{\pi_\theta}(s)$ is $(4 + 8 \log{A}) /(1-\gamma)^3$-smooth, $\forall s$, $\sH(\rho, \pi_\theta) = \expectation_{s \sim \rho}{ \left[ \sH^{\pi_\theta}(s) \right]}$ is also $(4 + 8 \log{A}) /(1-\gamma)^3$-smooth.
\end{proof}

\textbf{\cref{lem:lojasiewicz_entropy_general}} (Non-uniform \L{}ojasiewicz)\textbf{.}
Suppose $\mu(s) > 0$ for all states $s \in \gS$
and $\pi_\theta(\cdot | s) = \softmax(\theta(s, \cdot))$. Then,
\begin{align}
    \left\| \frac{\partial \tilde{V}^{{\pi_\theta}}(\mu) }{\partial \theta} \right\|_2 \ge \frac{\sqrt{2 \tau}}{\sqrt{S}} \cdot \min_{s}{\sqrt{ \mu(s) } } \cdot \min_{s,a}{ \pi_\theta(a | s)  } \cdot \left\| \frac{d_{\rho}^{\pi_\tau^*} }{ d_{\mu}^{\pi_\theta}} \right\|_\infty^{-\frac{1}{2}} \cdot \left[ \tilde{V}^{\pi_\tau^*}(\rho) - \tilde{V}^{{\pi_\theta}}(\rho) \right]^{\frac{1}{2}}.
\end{align}
\begin{proof}
According to the definition of soft value functions,
\begin{align}
\MoveEqLeft
    \tilde{V}^{\pi_\tau^*}(\rho) - \tilde{V}^{{\pi_\theta}}(\rho) = \expectation_{\substack{s_0 \sim \rho, a_t \sim {\pi_\tau^*}(\cdot | s_t), \\ s_{t+1} \sim \gP( \cdot | s_t, a_t)}}{\left[ \sum_{t=0}^{\infty}{\gamma^t ( r(s_t, a_t) - \tau \log{\pi_\tau^*(a_t | s_t)} ) } \right]} - \tilde{V}^{{\pi_\theta}}(\rho) \\
    &= \expectation_{\substack{s_0 \sim \rho, a_t \sim {\pi_\tau^*}(\cdot | s_t), \\ s_{t+1} \sim \gP( \cdot | s_t, a_t)}}{\left[ \sum_{t=0}^{\infty}{\gamma^t ( r(s_t, a_t) - \tau \log{\pi_\tau^*(a_t | s_t)} + \tilde{V}^{{\pi_\theta}}(s_t) - \tilde{V}^{{\pi_\theta}}(s_t) ) } \right]} - \tilde{V}^{{\pi_\theta}}(\rho) \\
    &= \expectation_{\substack{s_0 \sim \rho, a_t \sim {\pi_\tau^*}(\cdot | s_t), \\ s_{t+1} \sim \gP( \cdot | s_t, a_t)}}{\left[ \sum_{t=0}^{\infty}{\gamma^t ( r(s_t, a_t) - \tau \log{\pi_\tau^*(a_t | s_t)} + \gamma \tilde{V}^{{\pi_\theta}}(s_{t+1}) - \tilde{V}^{{\pi_\theta}}(s_t) ) } \right]} \\
    &= \frac{1}{1 - \gamma} \sum_{s}{ d_{\rho}^{\pi_\tau^*}(s) \cdot { \left[ \sum_{a}{ \pi_\tau^*(a | s) \cdot \left( r(s,a) - \tau \log{ \pi_\tau^*(a | s) + \gamma \sum_{s^\prime}{\gP(s^\prime|s,a) } \tilde{V}^{{\pi_\theta}}(s^\prime) - \tilde{V}^{{\pi_\theta}}(s) } \right)} \right] } } \\
    &= \frac{1}{1 - \gamma} \sum_{s}{ d_{\rho}^{\pi_\tau^*}(s) \cdot { \left[ \sum_{a}{ \pi_\tau^*(a | s) \cdot \left[ \tilde{Q}^{\pi_\theta}(s,a) - \tau \log{ \pi_\tau^*(a | s) } \right]} - \tilde{V}^{{\pi_\theta}}(s) \right] } }.
\end{align}
Next, define the ``soft greedy policy" $\bar{\pi}_{\theta}(\cdot | s) = \softmax(\tilde{Q}^{\pi_\theta}(s, \cdot) / \tau)$, $\forall s$, i.e.,
\begin{align}
\label{eq:intermediate_justification_sac_1}
    \bar{\pi}_{\theta}(a | s) = \frac{\exp\left\{ \tilde{Q}^{\pi_\theta}(s, a) / \tau \right\}}{ \sum_{a^\prime}{ \exp\left\{ \tilde{Q}^{\pi_\theta}(s, a^\prime) / \tau \right\}  }}, \ \forall a.
\end{align}
We have, $\forall s$,
\begin{align}
    \sum_{a}{ \pi_\tau^*(a | s) \cdot \left[ \tilde{Q}^{\pi_\theta}(s,a) - \tau \log{ \pi_\tau^*(a | s) } \right]} &\le \max_{\pi(\cdot | s)}{ \sum_{a}{ \pi(a | s) \cdot \left[ \tilde{Q}^{\pi_\theta}(s,a) - \tau \log{ \pi(a | s) } \right]} } \\
    &= \sum_{a}{ \bar{\pi}_{\theta}(a | s) \cdot \left[ \tilde{Q}^{\pi_\theta}(s,a) - \tau \log{ \bar{\pi}_{\theta}(a | s) } \right]} \\
    &= \tau \log \sum_{a} \exp\left\{ \tilde{Q}^{\pi_\theta}(s,a) / \tau \right\}.
\end{align}
Also note that,
\begin{align}
    \tilde{V}^{{\pi_\theta}}(s) &= \sum_{a}{ \pi_\theta(a | s) \cdot \left[ \tilde{Q}^{\pi_\theta}(s,a) - \tau \log{ \pi_\theta(a | s) } \right] } \\
    &= \sum_{a}{ \pi_\theta(a | s) \cdot \left[ \tilde{Q}^{\pi_\theta}(s,a) - \tau \log{ \bar{\pi}_{\theta}(a | s) } + \tau \log{ \bar{\pi}_{\theta}(a | s) } - \tau \log{ \pi_\theta(a | s) } \right] } \\
    &= \sum_{a}{ \pi_\theta(a | s) \cdot \left[ \tilde{Q}^{\pi_\theta}(s,a) - \tau \log{ \bar{\pi}_{\theta}(a | s) } \right] } - \tau \KL( \pi_\theta(\cdot | s) \| \bar{\pi}_{\theta}(\cdot | s) ) \\
    &= \tau \log \sum_{a} \exp\left\{ \tilde{Q}^{\pi_\theta}(s,a) / \tau \right\} - \tau \cdot \KL( \pi_\theta(\cdot | s) \| \bar{\pi}_{\theta}(\cdot | s) ).
\end{align}
Combining the above,
\begin{align}
\MoveEqLeft
    \tilde{V}^{\pi_\tau^*}(\rho) - \tilde{V}^{{\pi_\theta}}(\rho) 
    = \frac{1}{1 - \gamma} \sum_{s}{ d_{\rho}^{\pi_\tau^*}(s) \cdot { \left[ \sum_{a}{ \pi_\tau^*(a | s) \cdot \left[ \tilde{Q}^{\pi_\theta}(s,a) - \tau \log{ \pi_\tau^*(a | s) } \right]} - \tilde{V}^{{\pi_\theta}}(s) \right] } } \\
    &\le \frac{1}{1 - \gamma} \sum_{s}{ d_{\rho}^{\pi_\tau^*}(s) \cdot { \left[ \tau \log \sum_{a} \exp\left\{ \tilde{Q}^{\pi_\theta}(s,a) / \tau \right\} - \tilde{V}^{{\pi_\theta}}(s) \right] } } \\
    \label{eq:intermediate_justification_sac_2}
    &= \frac{1}{1 - \gamma} \sum_{s}{ d_{\rho}^{\pi_\tau^*}(s) \cdot  \tau \cdot \KL( \pi_\theta(\cdot | s) \| \bar{\pi}_{\theta}(\cdot | s) )  } \\
    \label{eq:intermediate_justification_pcl_1}
    &\le \frac{1}{1 - \gamma} \sum_{s}{ d_{\rho}^{\pi_\tau^*}(s) \cdot \frac{\tau}{2} \cdot \left\| \frac{ \tilde{Q}^{\pi_\theta}(s,\cdot) }{\tau} - \theta(s, \cdot) - \frac{(\tilde{Q}^{\pi_\theta}(s,\cdot) / \tau - \theta(s, \cdot) )^\top \rvone }{A} \cdot \rvone\right\|_\infty^2  } 
    \,\,\, 
    \qquad \left( \text{by \cref{lem:kl_logit_inequality}} \right)
    \\
    &= \frac{1}{1 - \gamma} \sum_{s}{ d_{\rho}^{\pi_\tau^*}(s) \cdot \frac{1}{2 \tau} \cdot \left\| \tilde{Q}^{\pi_\theta}(s,\cdot) - \tau \theta(s, \cdot) - \frac{( \tilde{Q}^{\pi_\theta}(s,\cdot) - \tau \theta(s, \cdot) )^\top \rvone }{A} \cdot \rvone\right\|_\infty^2  },
\end{align}
where $A = \left| \gA \right|$ is the total number of actions. Taking square root of soft sub-optimality,
\begin{align}
\MoveEqLeft
    \left[ \tilde{V}^{\pi_\tau^*}(\rho) - \tilde{V}^{{\pi_\theta}}(\rho) \right]^\frac{1}{2} 
    \le \frac{1}{\sqrt{1 - \gamma}} \cdot \left[ \sum_{s}{ d_{\rho}^{\pi_\tau^*}(s) \cdot \frac{1}{2 \tau} \cdot \left\| \tilde{Q}^{\pi_\theta}(s,\cdot) - \tau \theta(s, \cdot) - \frac{( \tilde{Q}^{\pi_\theta}(s,\cdot) - \tau \theta(s, \cdot) )^\top \rvone }{A} \cdot \rvone\right\|_\infty^2  } \right]^{\frac{1}{2}} \\
    &= \frac{1}{\sqrt{1 - \gamma}} \cdot \left[ \sum_{s}{ \left(  \sqrt{ d_{\rho}^{\pi_\tau^*}(s) } \cdot \frac{1}{\sqrt{2 \tau}}  \cdot \left\| \tilde{Q}^{\pi_\theta}(s,\cdot) - \tau \theta(s, \cdot) - \frac{( \tilde{Q}^{\pi_\theta}(s,\cdot) - \tau \theta(s, \cdot) )^\top \rvone }{A} \cdot \rvone\right\|_\infty \right)^2  } \right]^\frac{1}{2} \\
    &\le \frac{1}{\sqrt{1 - \gamma}} \cdot \sum_{s}{  \sqrt{ d_{\rho}^{\pi_\tau^*}(s) } \cdot \frac{1}{\sqrt{2 \tau}} \cdot \left\| \tilde{Q}^{\pi_\theta}(s,\cdot) - \tau \theta(s, \cdot) - \frac{(\tilde{Q}^{\pi_\theta}(s,\cdot) - \tau \theta(s, \cdot) )^\top \rvone }{A} \cdot \rvone\right\|_\infty } 
    \qquad \left( \text{by $\| x \|_2 \le \| x \|_1$} \right)
    \\
    \label{eq:intermediate_justification_pcl_2}
    &\le \frac{1}{\sqrt{1 - \gamma}} \cdot \frac{1}{\sqrt{2 \tau }} \cdot \left\| \frac{d_{\rho}^{\pi_\tau^*} }{ d_{\mu}^{\pi_\theta}} \right\|_\infty^\frac{1}{2} \sum_{s}{ \sqrt{ d_{\mu}^{\pi_\theta}(s) } \cdot \left\| \tilde{Q}^{\pi_\theta}(s,\cdot) - \tau \theta(s, \cdot) - \frac{(\tilde{Q}^{\pi_\theta}(s,\cdot) - \tau \theta(s, \cdot) )^\top \rvone }{A} \cdot \rvone \right\|_\infty }.
\end{align}
On the other hand, the entropy regularized policy gradient norm is lower bounded as
\begin{align}
    \left\| \frac{\partial \tilde{V}^{{\pi_\theta}}(\mu) }{\partial \theta} \right\|_2 &= \left[ \sum_{s,a}{\left( \frac{\partial \tilde{V}^{{\pi_\theta}}(\mu) }{\partial \theta(s, a)} \right)^2 } \right]^\frac{1}{2} \\
    &= \left[ \sum_{s}{\left\| \frac{\partial \tilde{V}^{{\pi_\theta}}(\mu) }{\partial \theta(s, \cdot)} \right\|_2^2 } \right]^\frac{1}{2} \\
    &\ge \frac{1}{\sqrt{S}} \sum_{s}{\left\| \frac{\partial \tilde{V}^{{\pi_\theta}}(\mu) }{\partial \theta(s, \cdot)} \right\|_2 }, \qquad \left( \text{by Cauchy-Schwarz, } \| x \|_1 = | \langle \rvone, \ |x| \rangle | \le \| \rvone \|_2 \cdot \| x \|_2 \right)
\end{align}
which is further lower bounded as
\begin{align}
\MoveEqLeft
    \left\| \frac{\partial \tilde{V}^{{\pi_\theta}}(\mu) }{\partial \theta} \right\|_2
    \ge \frac{1}{\sqrt{S}} \cdot \frac{1}{1 - \gamma} \sum_{s}{ d_\mu^{\pi_\theta}(s) \cdot \left\| H( \pi_\theta(\cdot | s) ) \left[ \tilde{Q}^{\pi_\theta}(s, \cdot) - \tau \theta(s, \cdot) \right] \right\|_2 } \qquad \left( \text{by \cref{eq:policy_gradient_entropy_vector_form}, \cref{lem:policy_gradient_entropy}} \right) \\
    &= \frac{1}{\sqrt{S}} \cdot \frac{1}{1 - \gamma} \sum_{s}{ d_\mu^{\pi_\theta}(s) \cdot \left\| H( \pi_\theta(\cdot | s) ) \left[ \tilde{Q}^{\pi_\theta}(s, \cdot) - \tau \theta(s, \cdot) - \frac{(\tilde{Q}^{\pi_\theta}(s,\cdot) - \tau \theta(s, \cdot) )^\top \rvone }{A} \cdot \rvone \right] \right\|_2 } \qquad \left( \text{by \cref{lem:golub_rank_one_perturb}} \right) \\
    &\ge \frac{1}{\sqrt{S}} \cdot \frac{1}{1 - \gamma} \sum_{s}{ d_\mu^{\pi_\theta}(s) \cdot \min_{a}{ \pi_\theta(a | s)  } \cdot \left\| \tilde{Q}^{\pi_\theta}(s, \cdot) - \tau \theta(s, \cdot) - \frac{(\tilde{Q}^{\pi_\theta}(s,\cdot) - \tau \theta(s, \cdot) )^\top \rvone }{A} \cdot \rvone \right\|_2 } \qquad \left( \text{by \cref{lem:norm_decay_entropy_special}} \right) \\
    &\ge \frac{1}{\sqrt{S}} \cdot \frac{1}{1 - \gamma} \sum_{s}{ d_\mu^{\pi_\theta}(s) \cdot \min_{a}{ \pi_\theta(a | s)  } \cdot \left\| \tilde{Q}^{\pi_\theta}(s, \cdot) - \tau \theta(s, \cdot) - \frac{(\tilde{Q}^{\pi_\theta}(s,\cdot) - \tau \theta(s, \cdot) )^\top \rvone }{A} \cdot \rvone \right\|_\infty }.
\end{align}
Denote $\zeta_\theta(s) = \tilde{Q}^{\pi_\theta}(s, \cdot) - \tau \theta(s, \cdot) - \frac{(\tilde{Q}^{\pi_\theta}(s,\cdot) - \tau \theta(s, \cdot) )^\top \rvone }{K} \cdot \rvone$. We have,
\begin{align}
\MoveEqLeft
    \left\| \frac{\partial \tilde{V}^{{\pi_\theta}}(\mu) }{\partial \theta} \right\|_2
    \ge \frac{1}{\sqrt{S}} \cdot \frac{1}{1 - \gamma} \sum_{s}{ d_\mu^{\pi_\theta}(s) \cdot \min_{a}{ \pi_\theta(a | s)  } \cdot \left\| \zeta_\theta(s) \right\|_\infty } \\
    &\ge \frac{1}{\sqrt{S}} \cdot \frac{1}{\sqrt{1 - \gamma}} \cdot \min_{s}{\sqrt{ d_{\mu}^{\pi_\theta}(s) } } \cdot \min_{s,a}{ \pi_\theta(a | s)  } \cdot \sqrt{2 \tau} \cdot \left\| \frac{d_{\rho}^{\pi_\tau^*} }{ d_{\mu}^{\pi_\theta}} \right\|_\infty^{-\frac{1}{2}} \cdot \left[ \frac{1}{\sqrt{1 - \gamma}} \cdot \frac{1}{\sqrt{2 \tau }} \cdot \left\| \frac{d_{\rho}^{\pi_\tau^*} }{ d_{\mu}^{\pi_\theta}} \right\|_\infty^\frac{1}{2} \sum_{s}{ \sqrt{ d_{\mu}^{\pi_\theta}(s) } \cdot \left\| \zeta_\theta(s) \right\|_\infty } \right] \\
    &\ge \frac{1}{\sqrt{S}} \cdot \frac{1}{\sqrt{1 - \gamma}} \cdot \min_{s}{\sqrt{ d_{\mu}^{\pi_\theta}(s) } } \cdot \min_{s,a}{ \pi_\theta(a | s)  } \cdot \sqrt{2 \tau} \cdot \left\| \frac{d_{\rho}^{\pi_\tau^*} }{ d_{\mu}^{\pi_\theta}} \right\|_\infty^{-\frac{1}{2}} \cdot \left[ \tilde{V}^{\pi_\tau^*}(\rho) - \tilde{V}^{{\pi_\theta}}(\rho) \right]^\frac{1}{2} 
    \\
    &\ge \frac{\sqrt{2 \tau}}{\sqrt{S}} \cdot \min_{s}{\sqrt{ \mu(s) } } \cdot \min_{s,a}{ \pi_\theta(a | s)  } \cdot \left\| \frac{d_{\rho}^{\pi_\tau^*} }{ d_{\mu}^{\pi_\theta}} \right\|_\infty^{-\frac{1}{2}} \cdot \left[ \tilde{V}^{\pi_\tau^*}(\rho) - \tilde{V}^{{\pi_\theta}}(\rho) \right]^\frac{1}{2},
\end{align}
where the last inequality is by $d_{\mu}^{\pi_\theta}(s) \ge (1 - \gamma) \cdot \mu(s)$ (cf. \cref{eq:stationary_distribution_dominate_initial_state_distribution}).
\end{proof}

\textbf{\cref{lem:lower_bound_min_prob_entropy_general}.}
Using \cref{alg:policy_gradient_softmax} with 
the entropy regularized objective, we have $c:=\inf_{t \ge 1} \min_{s,a} { \pi_{\theta_t}(a | s) } > 0$.
\begin{proof}
The augmented value function $\tilde{V}^{{\pi_{\theta_t}}}(\rho)$ is monotonically increasing following gradient update due to smoothness, i.e., \cref{lem:smoothness_softmax_general,lem:smoothness_entropy_general}. It follows then that $\tilde{V}^{{\pi_{\theta_t}}}(\rho)$ is upper bounded. Indeed,
\begin{align}
\label{eq:augmented_value_upper_bound}
\MoveEqLeft
    \tilde{V}^{{\pi_{\theta_t}}}(\rho) = \expectation_{\substack{s_0 \sim \rho, a_t \sim {\pi_{\theta_t}}(\cdot | s_t), \\ s_{t+1} \sim \gP( \cdot | s_t, a_t)}}{\left[ \sum_{t=0}^{\infty}{\gamma^t ( r(s_t, a_t) - \tau \log{\pi_{\theta_t}(a_t | s_t)} ) } \right]} \\
    &= \frac{1}{1-\gamma} \sum_{s}{ d_{\rho}^{\pi_{\theta_t}}(s) \cdot { \left[ \sum_{a}{ \pi_{\theta_t}(a | s) \cdot \left( r(s,a)  - \tau \log{\pi_{\theta_t}(a | s)} \right) } \right]} } \\
    &\le \frac{1}{1-\gamma} \sum_{s}{ d_{\rho}^{\pi_{\theta_t}}(s) \cdot { \left( 1 + \tau \log{A} \right) } } \qquad \left( \text{by } r(s,a) \le 1 \text{ and } -\sum_{a}{ \pi_{\theta_t}(a | s) \cdot \log{\pi_{\theta_t}(a | s)} } \le \log{A} \right) \\
    &\le \frac{1 + \tau \log{A}}{1 - \gamma}.
\end{align}
According to the monotone convergence theorem, $\tilde{V}^{{\pi_{\theta_t}}}(\rho)$ converges to a finite value. Suppose $ \pi_{\theta_t}(a | s) \to \pi_{\theta_\infty}(a | s)$. For any state $s \in \gS$, define the following sets,
\begin{align}
    \gA_0(s) &= \left\{ a : \pi_{\theta_\infty}(a | s) = 0 \right\}, \\
    \gA_+(s) &= \left\{ a : \pi_{\theta_\infty}(a | s) > 0 \right\}.
\end{align}
Note that $\gA = \gA_0(s) \cup \gA_+(s)$ since $\pi_\infty(a | s) \ge 0$, $\forall a \in \gA$. We prove that for any state $s \in \gS$, $\gA_0(s) = \emptyset$ by contradiction. Suppose $\exists s \in \gS$, such that $\gA_0(s)$ is non-empty. For any $a_0 \in \gA_0(s)$, we have $ \pi_{\theta_t}(a_0 | s) \to \pi_{\theta_\infty}(a_0 | s) = 0$, which implies $- \log{\pi_{\theta_t}(a_0 | s)} \to \infty$. There exists $t_0 \ge 1$, such that $\forall t \ge t_0$,
\begin{align}
    - \log{\pi_{\theta_t}(a_0 | s)} \ge \frac{1 + \tau \log{A}}{\tau(1 - \gamma)}.
\end{align}
According to \cref{lem:policy_gradient_entropy}, $\forall t \ge t_0$,
\begin{align}
\label{eq:logit_increasing_a0}
    \frac{\partial \tilde{V}^{\pi_{\theta_t}}(\mu)}{\partial {\theta_t}(s,a_0)} &= \frac{1}{1-\gamma} \cdot d_{\mu}^{\pi_{\theta_t}}(s) \cdot \pi_{\theta_t}(a_0|s) \cdot \tilde{A}^{\pi_{\theta_t}}(s,a_0) \\
    &= \frac{1}{1-\gamma} \cdot d_{\mu}^{\pi_{\theta_t}}(s) \cdot \pi_{\theta_t}(a_0|s) \cdot \left[ \tilde{Q}^{\pi_{\theta_t}}(s, a_0) - \tau \log{\pi_{\theta_t}(a_0 | s)} - \tilde{V}^{\pi_{\theta_t}}(s) \right] \\
    &\ge \frac{1}{1-\gamma} \cdot d_{\mu}^{\pi_{\theta_t}}(s) \cdot \pi_{\theta_t}(a_0|s) \cdot \left[ 0 - \tau \log{\pi_{\theta_t}(a_0 | s)} - \frac{1 + \tau \log{A}}{1 - \gamma} \right] \\
    &\ge \frac{1}{1-\gamma} \cdot d_{\mu}^{\pi_{\theta_t}}(s) \cdot \pi_{\theta_t}(a_0|s) \cdot \left[ 0 + \tau \cdot  \frac{1 + \tau \log{A}}{\tau(1 - \gamma)} - \frac{1 + \tau \log{A}}{1 - \gamma} \right] = 0,
\end{align}
where the first inequality is by
\begin{align}
    \tilde{Q}^{\pi_{\theta_t}}(s, a_0) = r(s,a_0) + \gamma \sum_{s^\prime}{ \gP( s^\prime | s, a_0)  \tilde{V}^{{\pi_{\theta_t}}}(s^\prime) } \ge 0. \qquad \left( \text{by } r(s,a_0) \ge 0 \text{ and } \tilde{V}^{{\pi_{\theta_t}}}(s^\prime) \ge 0 \right)
\end{align}
This means that $\theta_t(s, a_0)$ is increasing for any $t \ge t_0$, which in turn implies that $\theta_\infty(s, a_0)$ is lower bounded by constant, i.e., $\theta_\infty(s, a_0) \ge c$ for some constant $c$, and thus $\exp\left\{ \theta_\infty(a_0 | s) \right\} \ge e^c > 0$. According to
\begin{align}
    \pi_{\theta_\infty}(a_0 | s) = \frac{ \exp\left\{ \theta_\infty(a_0 | s) \right\}}{ \sum_{a}{ \exp\left\{ \theta_\infty(a | s) \right\} } } = 0,
\end{align}
we have,
\begin{align}
    \sum_{a}{ \exp\left\{ \theta_\infty(a | s) \right\} } = \infty.
\end{align}
On the other hand, for any $a_+ \in \gA_+(s)$, according to
\begin{align}
    \pi_{\theta_\infty}(a_+ | s) = \frac{ \exp\left\{ \theta_\infty(a_+ | s) \right\}}{ \sum_{a}{ \exp\left\{ \theta_\infty(a | s) \right\} } } > 0,
\end{align}
we have,
\begin{align}
    \exp\left\{ \theta_\infty(a_+ | s) \right\} = \infty, \quad \forall a_+ \in \gA_+(s)
\end{align}
which implies,
\begin{align}
\label{eq:sum_logit_aplus_infty}
    \sum_{a_+ \in \gA_+(s)}{ \theta_\infty(a_+ | s) } = \infty.
\end{align}
Note that $\forall t$, the summation of logit incremental over all actions is zero:
\begin{align}
\label{eq:logit_incremental_summation_zero}
    \sum_{a} \frac{\partial \tilde{V}^{\pi_{\theta_t}}(\mu)}{\partial {\theta_t}(s,a)} &= \sum_{a_0 \in \gA_0(s)}{ \frac{\partial \tilde{V}^{\pi_{\theta_t}}(\mu)}{\partial {\theta_t}(s,a_0)} } + \sum_{a_+ \in \gA_+(s)}{ \frac{\partial \tilde{V}^{\pi_{\theta_t}}(\mu)}{\partial {\theta_t}(s,a_+)} } \\
    &= \frac{1}{1-\gamma} \cdot  d_{\mu}^{\pi_{\theta_t}}(s) \sum_{a}{ \pi_{\theta_t}(a|s) \cdot \tilde{A}^{\pi_{\theta_t}}(s,a) } \\
    &= \frac{1}{1-\gamma} \cdot d_{\mu}^{\pi_{\theta_t}}(s) \cdot \left[ \tilde{V}^{{\pi_{\theta_t}}}(s) - \tilde{V}^{{\pi_{\theta_t}}}(s) \right] = 0.
\end{align}
According to \cref{eq:logit_increasing_a0}, $\forall t \ge t_0$,
\begin{align}
    \sum_{a_0 \in \gA_0(s)}{ \frac{\partial \tilde{V}^{\pi_{\theta_t}}(\mu)}{\partial {\theta_t}(s,a_0)} } \ge 0.
\end{align}
According to \cref{eq:logit_incremental_summation_zero}, $\forall t \ge t_0$,
\begin{align}
    \sum_{a_+ \in \gA_+(s)}{ \frac{\partial \tilde{V}^{\pi_{\theta_t}}(\mu)}{\partial {\theta_t}(s,a_+)} } = 0 -  \sum_{a_0 \in \gA_0(s)}{ \frac{\partial \tilde{V}^{\pi_{\theta_t}}(\mu)}{\partial {\theta_t}(s,a_0)} } \le 0.
\end{align}
which means $\sum_{a_+ \in \gA_+(s)}{ \theta_t(s, a_+) }$ will decrease for all large enough $t \ge 1$. This contradicts with \cref{eq:sum_logit_aplus_infty}, i.e., $\sum_{a_+ \in \gA_+(s)}{ \theta_t(s, a_+) } \to \infty$.

To this point, we have shown that $\gA_0(s) = \emptyset$ for any state $s \in \gS$, i.e.,  $\pi_{\theta_t}(\cdot | s)$ will converge in the interior of probabilistic simplex $\Delta(\gA)$. Furthermore, at the convergent point $\pi_{\theta_\infty}(\cdot | s)$, the gradient is zero, otherwise by smoothness the objective can be further improved, which is a contradiction with convergence. According to \cref{lem:policy_gradient_entropy}, $\forall s$,
\begin{align}
    \frac{\partial \tilde{V}^{\pi_{\theta_\infty}}(\mu)}{\partial {\theta_\infty}(s, \cdot)} = \frac{1}{1-\gamma} \cdot  d_{\mu}^{\pi_{\theta_\infty}}(s) \cdot H(\pi_{\theta_\infty}( \cdot |s)) \left[ \tilde{Q}^{\pi_{\theta_\infty}}(s, \cdot) - \tau \log{ \pi_{\theta_\infty}(\cdot | s) } \right] = \rvzero.
\end{align}

We have $d_{\mu}^{\pi_{\theta_\infty}}(s) \ge (1 - \gamma) \cdot \mu(s) > 0$ for all states $s$
(cf. \cref{eq:stationary_distribution_dominate_initial_state_distribution}). 
Therefore we have, $\forall s$,
\begin{align}
    H(\pi_{\theta_\infty}( \cdot |s)) \left[ \tilde{Q}^{\pi_{\theta_\infty}}(s, \cdot) - \tau \log{ \pi_{\theta_\infty}(\cdot | s) } \right] = \rvzero.
\end{align}
According to \cref{lem:golub_rank_one_perturb}, $H(\pi_{\theta_\infty}( \cdot |s))$ has eigenvalue $0$ with multiplicity $1$, and its corresponding eigenvector is $c \cdot \rvone$ for some constant $c \in \sR$. Therefore, the gradient is zero implies that for all states $s$,
\begin{align}
    \tilde{Q}^{\pi_{\theta_\infty}}(s, \cdot) - \tau \log{ \pi_{\theta_\infty}(\cdot | s) } = c \cdot \rvone,
\end{align}
which is equivalent to
\begin{align}
    \pi_{\theta_\infty}(\cdot | s) = \softmax( \tilde{Q}^{\pi_{\theta_\infty}}(s, \cdot) / \tau ),
\end{align}
which, according to \citet[Theorem 3]{nachum2017bridging}, is the softmax optimal policy $\pi_\tau^*$. Since $\tau \in \Omega(1) > 0$ and,
\begin{align}
    0 \le \tilde{Q}^{\pi_{\theta_\infty}}(s, a) \le \frac{1 + \tau \log{A}}{1 - \gamma},
\end{align}
we have $\pi_{\theta_\infty}(a | s) \in \Omega(1)$, $\forall (s,a)$. Since $\pi_{\theta_t}(a | s) \to \pi_{\theta_\infty}(a |s)$, there exists $t_0 \ge 1$, such that $\forall t \ge t_0$,
\begin{align}
    0.9 \cdot \pi_{\theta_\infty}(a |s) \le \pi_{\theta_t}(a | s) \le 1.1 \cdot \pi_{\theta_\infty}(a |s), \ \forall (s,a),
\end{align}
which means $\inf_{t \ge t_0}{ \min_{s,a} \pi_{\theta_t}(a | s) } \in \Omega(1)$, and thus 
\begin{equation*}
    \inf_{t \ge 1} \min_{s,a} \pi_{\theta_t}(a | s) = \min\left\{ \min_{1 \le t \le t_0} \min_{s,a} \pi_{\theta_t}(a | s), \ \inf_{t \ge t_0}{ \min_{s,a} \pi_{\theta_t}(a | s) } \right\} = \min\{ \Omega(1), \ \Omega(1) \} \in \Omega(1). \qedhere
\end{equation*}
\end{proof}

\textbf{\cref{thm:final_rates_entropy_general}.}
Suppose $\mu(s) > 0$ for all state $s$. Using \cref{alg:policy_gradient_softmax} with the entropy regularized objective and softmax parametrization and $\eta = (1 - \gamma)^3/(8 + \tau ( 4 + 8 \log{A}))$,
there exists a constant $C>0$ such that
 for all $t \ge 1$,
\begin{align}
    \tilde{V}^{\pi_\tau^*}(\rho) - \tilde{V}^{\pi_{\theta_t}}(\rho) \le \left\| \frac1\mu \right\|_\infty  \cdot \frac{1 + \tau \log{A}}{(1 - \gamma)^2}
    \, \cdot \, e^{-C(t-1)}
    \,.
\end{align}
\begin{proof}
According to the soft sub-optimality lemma of \cref{lem:soft_suboptimality},
\begin{align}
    \tilde{V}^{\pi_\tau^*}(\rho) - \tilde{V}^{\pi_{\theta_t}}(\rho) &= \frac{1}{1 - \gamma} \sum_{s}{ \left[  d_{\rho}^{\pi_{\theta_t}}(s) \cdot \tau \cdot { \KL(\pi_{\theta_t}( \cdot | s) \| \pi_\tau^*(\cdot | s) ) } \right] } \\
    &= \frac{1}{1 - \gamma} \sum_{s}{ \frac{d_{\rho}^{\pi_{\theta_t}}(s)}{d_{\mu}^{\pi_{\theta_t}}(s)} \cdot \left[  d_{\mu}^{\pi_{\theta_t}}(s) \cdot \tau \cdot { \KL(\pi_{\theta_t}( \cdot | s) \| \pi_\tau^*(\cdot | s) ) } \right] } \\ 
    &\le \frac{1}{(1 - \gamma)^2} \sum_{s}{ \frac{1}{\mu(s)} \cdot \left[  d_{\mu}^{\pi_{\theta_t}}(s) \cdot \tau \cdot { \KL(\pi_{\theta_t}( \cdot | s) \| \pi_\tau^*(\cdot | s) ) } \right] } \\ 
    &\le \frac{1}{(1 - \gamma)^2} \cdot \left\| \frac{1}{\mu} \right\|_\infty \sum_{s}{ \left[  d_{\mu}^{\pi_{\theta_t}}(s) \cdot \tau \cdot { \KL(\pi_{\theta_t}( \cdot | s) \| \pi_\tau^*(\cdot | s) ) } \right] } \\ 
    &= \frac{1}{1 - \gamma} \cdot \left\| \frac{1}{\mu} \right\|_\infty \cdot \left[ \tilde{V}^{\pi_\tau^*}(\mu) - \tilde{V}^{\pi_{\theta_t}}(\mu) \right],
\end{align}
where the last equation is again by \cref{lem:soft_suboptimality}, and the first inequality is according to $d_{\mu}^{\pi_{\theta_t}}(s) \ge (1 - \gamma) \cdot \mu(s)$ 
(cf. \cref{eq:stationary_distribution_dominate_initial_state_distribution}). 
According to \cref{lem:smoothness_softmax_general,lem:smoothness_entropy_general}, $V^{\pi_\theta}(\mu)$ is $8/(1-\gamma)^3$-smooth, and $\sH(\mu, \pi_\theta)$ is $(4 + 8 \log{A}) /(1-\gamma)^3$-smooth. Therefore, $\tilde{V}^{\pi_\theta}(\mu) = V^{\pi_\theta}(\mu) + \tau \cdot \sH(\mu, \pi_\theta)$ is $\beta$-smooth with $\beta = (8 + \tau ( 4 + 8 \log{A}) )/(1-\gamma)^3$. Denote $\tilde{\delta}_t = \tilde{V}^{\pi_\tau^*}(\mu) - \tilde{V}^{\pi_{\theta_t}}(\mu)$. And note $ \eta = \frac{(1 - \gamma)^3}{8 + \tau ( 4 + 8 \log{A})}$. We have,
\begin{align}
\MoveEqLeft
    \tilde{\delta}_{t+1} - \tilde{\delta}_t = \tilde{V}^{\pi_{\theta_{t}}}(\mu) - \tilde{V}^{\pi_{\theta_{t+1}}}(\mu) \\
    &\le - \frac{(1-\gamma)^3}{16 + \tau ( 8 + 16 \log{A})} \cdot \left\| \frac{\partial V^{\pi_{\theta_t}}(\mu)}{\partial \theta_t} \right\|_2^2 \qquad \left( \text{by \cref{lem:ascent_lemma_smooth_function}} \right) \\
    &\le - \frac{(1-\gamma)^3}{16 + \tau ( 8 + 16 \log{A})} \cdot \frac{2 \tau}{S} \cdot \min_{s}{ \mu(s) } \cdot \min_{s,a}{ \pi_{\theta_t}(a | s)^2  } \cdot \left\| \frac{d_{\mu}^{\pi_\tau^*} }{ d_{\mu}^{\pi_{\theta_t}}} \right\|_\infty^{-1} \cdot \left[ \tilde{V}^{\pi_\tau^*}(\mu) - \tilde{V}^{{\pi_{\theta_t}}}(\mu) \right] \qquad \left( \text{by \cref{lem:lojasiewicz_entropy_general}} \right) \\
    &\le - \frac{(1-\gamma)^4}{(8 / \tau + 4 + 8 \log{A}) \cdot S} \cdot \min_{s}{ \mu(s) } \cdot \min_{s,a}{ \pi_{\theta_t}(a | s)^2  } \cdot \left\| \frac{d_{\mu}^{\pi_\tau^*} }{ \mu } \right\|_\infty^{-1} \cdot \tilde{\delta}_t 
    \qquad \left( \text{by $ d_{\mu}^{\pi_{\theta_t}}(s) \ge (1 - \gamma) \cdot \mu(s)$} \right)  \\
    &\le - \frac{(1-\gamma)^4}{(8 / \tau + 4 + 8 \log{A}) \cdot S} \cdot \min_{s}{ \mu(s) } \cdot \inf_{t \ge 1} \min_{s,a}{ \pi_{\theta_t}(a | s)^2  }  \cdot \left\| \frac{d_{\mu}^{\pi_\tau^*} }{ \mu } \right\|_\infty^{-1} \cdot \tilde{\delta}_t,
\end{align}
According to \cref{lem:lower_bound_min_prob_entropy_general}, $c = \inf_{t \ge 1} \min_{s,a}{ \pi_{\theta_t}(a | s)  } > 0$ is independent with $t$. We have, 
\begin{align}
    \tilde{\delta}_{t} &\le \left[ 1 - \frac{(1-\gamma)^4 }{(8 / \tau + 4 + 8 \log{A}) \cdot S} \cdot \min_{s}{ \mu(s) } \cdot c^2 \cdot \left\| \frac{d_{\mu}^{\pi_\tau^*} }{ \mu } \right\|_\infty^{-1} \right] \cdot \tilde{\delta}_{t-1} \\
    &\le \exp\left\{ - \frac{(1-\gamma)^4}{(8 / \tau + 4 + 8 \log{A}) \cdot S} \cdot \min_{s}{ \mu(s) } \cdot c^2 \cdot \left\| \frac{d_{\mu}^{\pi_\tau^*} }{ \mu } \right\|_\infty^{-1} \right\} \cdot \tilde{\delta}_{t-1} \\
    &\le \exp\left\{ - \frac{(1-\gamma)^4 }{(8 / \tau + 4 + 8 \log{A}) \cdot S} \cdot \min_{s}{ \mu(s) } \cdot c^2 \cdot \left\| \frac{d_{\mu}^{\pi_\tau^*} }{ \mu } \right\|_\infty^{-1} \cdot (t - 1) \right\} \cdot \tilde{\delta}_1 \\
    &\le \exp\left\{ - \frac{(1-\gamma)^4 }{(8 / \tau + 4 + 8 \log{A}) \cdot S} \cdot \min_{s}{ \mu(s) } \cdot c^2 \cdot \left\| \frac{d_{\mu}^{\pi_\tau^*} }{ \mu } \right\|_\infty^{-1} \cdot (t - 1) \right\} \cdot \frac{1 + \tau \log{A}}{1 - \gamma},
\end{align}
where the last inequality is according to \cref{eq:augmented_value_upper_bound}. Therefore we have the final result,
\begin{align}
    \tilde{V}^{\pi_\tau^*}(\rho) - \tilde{V}^{\pi_{\theta_t}}(\rho) &\le \frac{1}{1 - \gamma} \cdot \left\| \frac{1}{ \mu} \right\|_\infty \cdot \left[ \tilde{V}^{\pi_\tau^*}(\mu) - \tilde{V}^{\pi_{\theta_t}}(\mu) \right] \\
    &\le \frac{1}{\exp\left\{ C \cdot (t-1) \right\}} \cdot \frac{1 + \tau \log{A}}{(1 - \gamma)^2} \cdot \left\| \frac{1}{ \mu} \right\|_\infty,
\end{align}
where
\begin{align}
    C = \frac{(1-\gamma)^4 }{(8 / \tau + 4 + 8 \log{A}) \cdot S} \cdot \min_{s}{ \mu(s) } \cdot c^2 \cdot \left\| \frac{d_{\mu}^{\pi_\tau^*} }{ \mu } \right\|_\infty^{-1} > 0,
\end{align}
is independent with $t$.
\end{proof}

\subsubsection{Proofs for two-stage and decaying entropy regularization}
\label{sec:proofs_entropy_policy_gradient_decaying_entropy_bandits}

\textbf{\cref{thm:rates_two_stage_special}} (Two-stage)\textbf{.} Denote $\Delta = r(a^*) - \max_{a \not= a^*}{ r(a) } > 0$. Using \cref{update_rule:entropy_special} for $t_1 \in O( e^{ 1/ \tau }  \cdot \log{( \frac{ \tau + 1}{\Delta } } ) )$ iterations and then \cref{update_rule:softmax_special} for $t_2 \ge 1$ iterations, we have,
\begin{align}
    ( \pi^* - \pi_{\theta_t} )^\top r \le 5 / ( {C}^2 \cdot t_2),
\end{align}
where $t = t_1 + t_2$, and $C \in [1/K, 1)$.
\begin{proof}
In particular, using \cref{update_rule:entropy_special} with $\eta \le 1/ \tau$ for the following number of iterations,
\begin{align}
    t_1 &= \frac{1}{\tau \eta} \cdot K  \cdot \exp\Big\{ 4 \| \theta_1 \|_\infty \sqrt{K} \Big\} \cdot \exp\Big\{ \frac{1 + 4 \sqrt{K}}{ \tau }  \Big\} \cdot \log{\left( \frac{4 (\tau \| \theta_1 \|_\infty + 1) \sqrt{K}}{\Delta} \right) } + 1 \\
    &\in O\left( e^{ 1/ \tau }  \cdot \log{\Big( \frac{ \tau + 1}{\Delta } } \Big) \right),
\end{align}
we have,
\begin{align}
    t_1 - 1 &\ge \frac{1}{\tau \eta} \cdot K  \cdot \exp\Big\{ 4 \| \theta_1 \|_\infty \sqrt{K} \Big\} \cdot \exp\Big\{ \frac{1 + 4 \sqrt{K}}{ \tau }  \Big\} \cdot \log{\left( \frac{4 (\tau \| \theta_1 \|_\infty + 1) \sqrt{K}}{\Delta} \right) } \\
    &= \frac{1}{\tau \eta} \cdot K \cdot \exp\{ 1 / \tau \} \cdot \exp\{ 4 (\norm{\theta_1}_\infty + 1 / \tau ) \sqrt{K} \} \cdot \log{\left( \frac{4 (\tau \| \theta_1 \|_\infty + 1) \sqrt{K}}{\Delta} \right) } \\
    &\ge \frac{1}{\tau \eta} \cdot \frac{1}{ c } \cdot \log{\left( \frac{4 (\tau \| \theta_1 \|_\infty + 1) \sqrt{K}}{\Delta} \right) }. \qquad \left( c \text{ is from \cref{lem:lower_bound_min_prob_entropy_special}} \right)
\end{align}
Therefore we have,
\begin{align}
    \log{\left( \frac{4 (\tau \| \theta_1 \|_\infty + 1) \sqrt{K}}{\Delta} \right) } &\le \tau \eta \cdot c \cdot (t_1 - 1 ) \\
    &\le \tau \eta \sum_{s=1}^{t_1-1}{ \min_{a}{ \pi_{\theta_s}(a) } } \qquad \left( \text{by \cref{lem:lower_bound_min_prob_entropy_special}} \right) \\
    &\le \log{\left( \frac{ 2 ( \tau \norm{\theta_1}_\infty +1 ) \sqrt{K} }{ \| \zeta_{t_1} \|_2 } \right)}, \qquad \left( \text{by \cref{lem:matching_entropy_special}} \right)
\end{align}
which is equivalent to,
\begin{align}
    \| \zeta_{t_1} \|_2 = \left\| \tau \theta_{t_1} - r - \frac{(\tau \theta_{t_1} - r)^\top \rvone}{K} \cdot \rvone \right\|_2 \le \frac{\Delta}{2}.
\end{align}
Then we have,  for all $a$,
\begin{align}
    \left|  \theta_{t_1}(a) - \frac{r(a)}{\tau} -  \frac{ ( \tau \theta_{t_1} - r )^\top \rvone }{ \tau K} \right| &\le \left\| \theta_{t_1} - \frac{r}{\tau} -  \frac{ ( \tau \theta_{t_1} - r )^\top \rvone }{ \tau K} \cdot \rvone \right\|_2 \\
    &= \frac{1}{\tau} \cdot \left\| \tau \theta_{t_1} - r -  \frac{ ( \tau \theta_{t_1} - r )^\top \rvone }{ K} \cdot \rvone \right\|_2 \le \frac{\Delta}{2 \tau},
\end{align}
which implies,
\begin{align}
    \theta_{t_1}(a^*) &\ge \frac{r(a^*)}{\tau} - \frac{\Delta}{2 \tau} + \frac{ ( \tau \theta_{t_1} - r )^\top \rvone }{ \tau K}, \quad \text{and} \\
    \theta_{t_1}(a) &\le \frac{r(a)}{\tau} + \frac{\Delta}{2 \tau} + \frac{ ( \tau \theta_{t_1} - r )^\top \rvone }{ \tau K}. \quad \text{for all } a \not= a^*
\end{align}
Then we have, for all $a \not= a^*$,
\begin{align}
    \theta_{t_1}(a^*) - \theta_{t_1}(a) &\ge \frac{r(a^*)}{\tau} - \frac{\Delta}{2 \tau} - \left( \frac{r(a)}{\tau} + \frac{\Delta}{2 \tau} \right) \\
    &= \frac{r(a^*)}{\tau} - \frac{r(a)}{\tau} - \frac{\Delta}{\tau} \ge 0,
\end{align}
which means $\pi_{\theta_{t_1}}(a^*) \ge \pi_{\theta_{t_1}}(a)$. Now we turn off the regularization and use \cref{update_rule:softmax_special} for $t_2 \ge 1$ iterations. According to similar arguments as in \cref{thm:rates_uniform_softmax_special}, we have,
\begin{align}
    ( \pi^* - \pi_{\theta_t} )^\top r \le 5 / ( {C}^2 \cdot t_2),
\end{align}
where $t = t_1 + t_2$, and $C \in [1/K, 1)$.
\end{proof}

\textbf{\cref{thm:rates_decaying_entropy_special}} (Decaying entropy regularization)\textbf{.}
Using \cref{update_rule:decaying_entropy_special} with $\tau_t = \frac{\alpha \cdot \Delta}{\log{t}}$ for $t \ge 2$, where $\alpha > 0$, and $\eta_t = 1/\tau_t$, we have, for all $t \ge 1$,
\begin{align}
    ( \pi^* - \pi_{\theta_t} )^\top r \le \frac{K}{t^{1/\alpha}} + \frac{ \log{t} }{ \exp\left\{ \sum_{s=1}^{t-1}{ \min_{a}{ \pi_{\theta_s}(a) } }  \right\}} \cdot \frac{2 ( \tau_1 \norm{\theta_1}_\infty +1 ) \sqrt{K}}{\alpha \cdot \Delta}.
\end{align}
\begin{proof}
Denote $\pi_{\tau_t}^* = \softmax(r / \tau_t)$ as the softmax optimal policy at time $t$. We have,
\begin{align}
\label{eq:rates_decaying_entropy_special_intermediate_0}
    (\pi^* - \pi_{\theta_t})^\top r = \underbrace{(\pi^* - \pi_{\tau_t}^*)^\top r}_{\text{``decaying''}} + \underbrace{(\pi_{\tau_t}^* - \pi_{\theta_t})^\top r}_{\text{``tracking''}}.
\end{align}
\paragraph{``decaying'' part.} Note $a^*$ is the optimal action. Denote $\Delta(a) = r(a^*) - r(a)$, and $\Delta = \min_{a \not= a^*}{ \Delta(a) }$. We have,
\begin{align}
\MoveEqLeft
    (\pi^* - \pi_{\tau_t}^*)^\top r = \sum_{a}{ \pi_{\tau_t}^*(a) \cdot r(a^*) } -  \sum_{a}{ \pi_{\tau_t}^*(a) \cdot r(a) } = \sum_{a \not= a^*}{ \pi_{\tau_t}^*(a) \cdot \Delta(a) } \\
    &= \frac{ \sum_{a \not= a^*}{ e^{ \frac{r(a)}{\tau_t}  } \cdot \Delta(a) } }{ \sum_{a^\prime}{ e^{ \frac{r(a^\prime)}{ \tau_t } } } } \\
    &\le \frac{1}{ e^{ \frac{r(a^*)}{\tau_t} } + \max_{a \not= a^*}{ e^{ \frac{r(a)}{\tau_t} } } } \cdot \left[  \sum_{a \not= a^*}{ e^{ \frac{r(a)}{\tau_t}  } \cdot \Delta(a) } \right] \qquad \left( \sum_{a^\prime}{ e^{ \frac{r(a^\prime)}{\tau_t}  } } \ge e^{ \frac{r(a^*)}{\tau_t} } + \max_{a \not= a^*}{ e^{ \frac{r(a)}{\tau_t} } } \right) \\
    &= \frac{1}{ e^{ \frac{r(a^*)}{\tau_t} } + \max_{a \not= a^*}{ e^{ \frac{r(a)}{\tau_t} } } } \cdot \left[ \sum_{a \not= a^*}{ \frac{ e^{ \frac{r(a)}{\tau_t} } \cdot \Delta(a)}{ e^{ \frac{r(a^*)}{\tau_t} } + e^{ \frac{r(a)}{\tau_t} } } \cdot \left( e^{ \frac{r(a^*)}{\tau_t} } + e^{ \frac{r(a)}{\tau_t} } \right) } \right] \\
    &\le  \frac{1}{ \bcancel{ e^{ \frac{r(a^*)}{\tau_t} } + \max_{a \not= a^*}{ e^{ \frac{r(a)}{\tau_t} } } } } \cdot \sum_{a \not= a^*}{ \frac{  e^{ \frac{r(a)}{\tau_t} } \cdot \Delta(a)}{  e^{ \frac{r(a^*)}{\tau_t} } +  e^{ \frac{r(a)}{\tau_t} } } \cdot  \left( \bcancel{ e^{ \frac{r(a^*)}{\tau_t} } + \max_{a \not= a^* }{ e^{ \frac{r(a)}{\tau_t} } } } \right)   } \qquad \left( \text{by H{\" o}lder's inequality} \right) \\
    &= \sum_{a \not= a^*}{ \frac{ e^{ \frac{r(a)}{\tau_t} } \cdot \Delta(a)}{ e^{ \frac{r(a^*)}{\tau_t} } + e^{ \frac{r(a)}{\tau_t} } }  } = \sum_{a \not= a^*}{ \frac{ \Delta(a)}{ e^{ \frac{\Delta(a)}{\tau_t} } + 1 }  } \le  \sum_{a \not= a^*}{ \frac{ 1 }{ e^{ \frac{ \Delta }{\tau_t} } + 1 }  } = \frac{K-1}{1 + e^{ \frac{ \Delta }{\tau_t} } } \le \frac{K}{e^{ \frac{ \Delta }{\tau_t} } }.
\end{align}
Using the decaying temperature $\tau_t = \frac{\alpha \cdot \Delta}{\log{t}}$, for $t \ge 2$, where $\alpha > 0$, we have,
\begin{align}
\label{eq:rates_decaying_entropy_special_intermediate_1}
    (\pi^* - \pi_{\tau_t}^*)^\top r \le \frac{K}{t^{1/\alpha}}.
\end{align}

\paragraph{``tracking'' part.} Using \cref{update_rule:decaying_entropy_special}, we have,
\begin{align}
\MoveEqLeft
    \tau_{t+1} \theta_{t+1} - r - \frac{(\tau_{t+1} \theta_{t+1} - r)^\top \rvone}{K} \cdot \rvone = \tau_t \theta_{t} - r - \frac{(\tau_t \theta_{t} - r)^\top \rvone}{K} \cdot \rvone \\
    &\qquad + \left( \tau_{t+1} \theta_{t+1} - \tau_t \theta_{t} \right) + \left( \frac{(\tau_t \theta_{t} - r)^\top \rvone}{K} - \frac{(\tau_{t+1} \theta_{t+1} - r)^\top \rvone}{K} \right) \cdot \rvone \\
    &= \tau_t \theta_{t} - r - \frac{(\tau_t \theta_{t} - r)^\top \rvone}{K} \cdot \rvone + \tau_t \eta_t \cdot H(\pi_{\theta_t}) (r - \tau_t \log{\pi_{\theta_t}} ) + \frac{\left( \tau_t \theta_{t} - \tau_{t+1} \theta_{t+1} \right)^\top \rvone}{K} \cdot \rvone \qquad \left( \text{by \cref{update_rule:decaying_entropy_special}} \right) \\
    &= \left( \identitymatrix - \tau_t \eta_t \cdot H(\pi_{\theta_t}) \right) \left( \tau_t \theta_{t} - r - \frac{(\tau_t \theta_{t} - r)^\top \rvone}{K} \cdot \rvone \right) \qquad \left( H(\pi_{ \theta_{t} } ) \rvone = H(\pi_{ \theta_{t} } )^\top \rvone =  \rvzero, \text{ cf. \cref{eq:contraction_entropy_special_intermediate_1}} \right) \\
    &= \left( \identitymatrix - H(\pi_{\theta_t}) \right) \left( \tau_t \theta_{t} - r - \frac{(\tau_t \theta_{t} - r)^\top \rvone}{K} \cdot \rvone \right) \qquad \left( \eta_t = 1 / \tau_t \right).
\end{align}
Therefore we have,
\begin{align}
\label{eq:rates_decaying_entropy_special_intermediate_2}
\MoveEqLeft
    \left\| \tau_{t+1} \theta_{t+1} - r - \frac{(\tau_{t+1} \theta_{t+1} - r)^\top \rvone}{K} \cdot \rvone \right\|_2 = \left\| \left( \identitymatrix - H(\pi_{\theta_t}) \right) \left( \tau_t \theta_{t} - r - \frac{(\tau_t \theta_{t} - r)^\top \rvone}{K} \cdot \rvone \right) \right\|_2 \\
    &\le \left( 1 - \min_{a}{ \pi_{\theta_t}(a) } \right) \cdot \left\| \tau_t \theta_{t} - r - \frac{(\tau_t \theta_{t} - r)^\top \rvone}{K} \cdot \rvone \right\|_2
    \qquad \left(\text{by \cref{lem:norm_decay_entropy_special}} \right) \\
    &\le \exp\left\{ - \min_{a}{ \pi_{\theta_t}(a) } \right\} \cdot \left\| \tau_t \theta_{t} - r - \frac{(\tau_t \theta_{t} - r)^\top \rvone}{K} \cdot \rvone \right\|_2.
\end{align}
Then we have,
\begin{align}
\MoveEqLeft
    (\pi_{\tau_t}^* - \pi_{\theta_t})^\top r \le \left\| \pi_{\tau_t}^* - \pi_{\theta_t} \right\|_1 \qquad \left( \text{by H{\" o}lder's inequality, and } \|r\|_\infty \le 1 \right) \\
    &\le \left\| \theta_t - \frac{r}{\tau_t} - \frac{(\tau_t \theta_{t} - r)^\top \rvone}{ \tau_t K} \cdot \rvone \right\|_\infty \qquad \left( \text{by \cref{lem:policy_logit_inequality_special}} \right) \\
    &\le \frac{1}{\tau_t} \cdot \left\| \tau_t \theta_{t} - r - \frac{(\tau_t \theta_{t} - r)^\top \rvone}{K} \cdot \rvone \right\|_2 \qquad \left( \|x \|_\infty \le \| x \|_2 \right) \\
    &\le \frac{1}{\tau_t} \cdot \exp\left\{ - \min_{a}{ \pi_{\theta_{t-1}}(a) } \right\} \cdot \left\| \tau_{t-1} \theta_{t-1} - r - \frac{(\tau_{t-1} \theta_{t-1} - r)^\top \rvone}{K} \cdot \rvone \right\|_2 \qquad \left( \text{by \cref{eq:rates_decaying_entropy_special_intermediate_2}} \right) \\
    &\le \frac{1}{\tau_t} \cdot \exp\bigg\{- \sum_{s=1}^{t-1}{ \min_{a}{ \pi_{\theta_s}(a) } }  \bigg\} \cdot \left\| \tau_1 \theta_1 - r - \frac{(\tau_1 \theta_1 - r)^\top \rvone}{K} \cdot \rvone \right\|_2 \\
    &\le \frac{1}{\tau_t} \cdot \exp\bigg\{- \sum_{s=1}^{t-1}{ \min_{a}{ \pi_{\theta_s}(a) } }  \bigg\} \cdot 2 ( \tau_1 \norm{\theta_1}_\infty +1 ) \sqrt{K} \qquad \left( \text{by \cref{eq:zeta_1_upper_bound}} \right) \\
    &= \frac{ \log{t} }{ \exp\left\{ \sum_{s=1}^{t-1}{ \min_{a}{ \pi_{\theta_s}(a) } }  \right\}} \cdot \frac{2 ( \tau_1 \norm{\theta_1}_\infty +1 ) \sqrt{K}}{\alpha \cdot \Delta}. \qedhere
\end{align}
\end{proof}

\subsection{Proofs for \cref{sec:theoretical_understanding_entropy} (Does Entropy Regularization Really Help?)}
\label{sec:proofs_theoretical_understanding_entropy}

\subsubsection{Proofs for the bandit case}
\label{sec:proofs_theoretical_understanding_entropy_bandits}

\textbf{\cref{lem:reverse_lojasiewicz_softmax_special}} (Reversed \L{}ojasiewicz)\textbf{.}
Take any $r\in [0,1]^K$.
Denote $\Delta = r(a^*) - \max_{a \not= a^*}{ r(a) } > 0$. Then,
\begin{align}
    \left\| \frac{d \pi_\theta^\top r}{d \theta} \right\|_2 \le \frac{\sqrt{2}}{\Delta} \cdot (\pi^* - \pi_\theta)^\top r.
\end{align}
\begin{proof}
Note $a^*$ is the optimal action. Denote $\Delta(a) = r(a^*) - r(a)$, and $\Delta = \min_{a \not= a^*}{ \Delta(a) }$.
\begin{align}
    (\pi^* - \pi_\theta)^\top r &= \sum_{a}{ \pi_\theta(a) \cdot r(a^*) } - \sum_{a}{ \pi_\theta(a) \cdot r(a) } \\
    &= \sum_{a \not= a^*}{ \pi_\theta(a) \cdot r(a^*) } - \sum_{a \not= a^*}{ \pi_\theta(a) \cdot r(a) } \\
    &= \sum_{a \not= a^*}{ \pi_\theta(a) \cdot \Delta(a) } \\
    &\ge \sum_{a \not= a^*}{ \pi_\theta(a) \cdot \Delta }.
\end{align}
On the other hand,
\begin{align}
    0 \le r(a^*) - \pi_\theta^\top r &= (\pi^* - \pi_\theta)^\top r = \sum_{a \not= a^*}{ \pi_\theta(a) \cdot \Delta(a) } \le \sum_{a \not= a^*}{ \pi_\theta(a) \cdot 1 } = \sum_{a \not= a^*}{ \pi_\theta(a)}.
\end{align}
Therefore the $\ell_2$ norm of gradient can be upper bounded as
\begin{align}
    \left\| \frac{d \pi_\theta^\top r}{d \theta} \right\|_2 &= \left( \pi_\theta(a^*)^2 \cdot \left[ r(a^*) - \pi_\theta^\top r \right]^2 + \sum_{a \not= a^*}{\left[ \pi_\theta(a)^2 \cdot (r(a) - \pi_\theta^\top r )^2 \right]} \right)^\frac{1}{2} \\
    &\le \left( 1^2 \cdot \left[ \sum_{a \not= a^*}{ \pi_\theta(a)} \right]^2 + \sum_{a \not= a^*}{\left[ \pi_\theta(a)^2 \cdot 1^2 \right]} \right)^\frac{1}{2} \\
    &\le \left( \left[ \sum_{a \not= a^*}{ \pi_\theta(a)} \right]^2 + \left[ \sum_{a \not= a^*}{ \pi_\theta(a)} \right]^2 \right)^\frac{1}{2} 
    \qquad \left(\text{by } \| x \|_2 \le \| x \|_1 \right) \\
    &= \sqrt{2} \cdot \sum_{a \not= a^*}{ \pi_\theta(a)}.
\end{align}
Combining the results, we have 
\begin{equation*}
    \left\| \frac{d \pi_\theta^\top r}{d \theta} \right\|_2 \le \sqrt{2} \cdot \sum_{a \not= a^*}{ \pi_\theta(a)} = \frac{\sqrt{2}}{\Delta} \cdot \Delta \cdot \sum_{a \not= a^*}{ \pi_\theta(a)} \le \frac{\sqrt{2}}{\Delta} \cdot (\pi^* - \pi_\theta)^\top r. \qedhere
\end{equation*}
\end{proof}

\textbf{\cref{thm:lower_bound_softmax_special}} (Lower bound)\textbf{.}
Take any $r\in [0,1]^K$.
For large enough $t \ge 1$, using \cref{update_rule:softmax_special} with learning rate $\eta_t \in ( 0 , 1]$,
\begin{align*}
    (\pi^* - \pi_{\theta_t})^\top r \ge \frac{\Delta^2}{6 \cdot t }.
\end{align*}
\begin{proof}
Denote $\delta_t = (\pi^* - \pi_{\theta_t})^\top r > 0$. Let $\theta_{t+1} = \theta_t + \eta_t \cdot \frac{d \pi_{\theta_t}^\top r}{d \theta_t}$, and $\pi_{\theta_{t+1}} = \softmax(\theta_{t+1})$ be the next policy after one step gradient update. We have,
\begin{align}
\label{eq:lower_bound_softmax_special_intermediate_1}
    \delta_t - \delta_{t+1} &= ( \pi_{\theta_{t+1}} - \pi_{\theta_t}) ^\top r - \Big\langle \frac{d \pi_{\theta_t}^\top r}{d \theta_t}, \theta_{t+1} - \theta_t \Big\rangle + \Big\langle \frac{d \pi_{\theta_t}^\top r}{d \theta_t}, \theta_{t+1} - \theta_t \Big\rangle \\
    &\le \frac{5}{4} \cdot \left\| \theta_{t+1} - \theta_t \right\|_2^2 + \Big\langle \frac{d \pi_{\theta_t}^\top r}{d \theta_t}, \theta_{t+1} - \theta_t \Big\rangle 
    \qquad \left( \text{by \cref{lem:smoothness_softmax_special}} \right) \\
    &= \left( \frac{5 \eta_t^2}{4} + \eta_t \right) \cdot \left\| \frac{d \pi_{\theta_t}^\top r}{d \theta_t} \right\|_2^2 
    \qquad \left( \text{by } \theta_{t+1} = \theta_t + \eta_t \cdot \frac{d \pi_{\theta_t}^\top r}{d \theta_t} \right)\\
    &\le \frac{9}{2} \cdot \frac{1}{ \Delta^2} \cdot \delta_t^2. 
    \qquad \left( \text{by } \eta_t \in (0, 1] \text{ and by \cref{lem:reverse_lojasiewicz_softmax_special}} \right)
\end{align}
According to convergence result \cref{thm:final_rates_softmax_special} we have $\delta_t > 0$, $\delta_t \to 0$ as $t \to \infty$. We prove that for all large enough $t \ge 1$, $\delta_t \le \frac{10}{9} \cdot \delta_{t+1}$ by contradiction. Suppose $\delta_t > \frac{10}{9} \cdot \delta_{t+1}$.
\begin{align}
\label{eq:lower_bound_softmax_special_intermediate_2}
\MoveEqLeft
    \delta_{t+1} \ge \delta_t - \frac{9}{2} \cdot \frac{1}{ \Delta^2} \cdot \delta_t^2 \\
    &> \frac{10}{9} \cdot \delta_{t+1} - \frac{9}{2} \cdot \frac{1}{ \Delta^2} \cdot \left( \frac{10}{9} \cdot \delta_{t+1} \right)^2 
    \qquad \left( \text{since } f(x) = x - a x^2 \text{ is increasing for all } x < \frac{1}{2a} \text{ and } a > 0 \right) \\
    &= \frac{10}{9} \cdot \delta_{t+1} - \frac{50 }{9} \cdot \frac{1}{\Delta^2} \cdot \delta_{t+1}^2,
\end{align}
which implies $\delta_{t+1} > \frac{\Delta^2}{50 }$ for large enough $t \ge 1$. This is a contradiction with $\delta_t \to 0$ as $t \to \infty$. Now we have $\delta_t \le \frac{10}{9} \cdot \delta_{t+1}$. Divide both sides of $\delta_t - \delta_{t+1} \le \frac{9 }{2} \cdot \frac{1}{ \Delta^2} \cdot \delta_t^2$ by $\delta_t \cdot \delta_{t+1}$,
\begin{align}
    \frac{1}{\delta_{t+1}} - \frac{1}{\delta_t} \le \frac{9 }{2} \cdot \frac{1}{ \Delta^2} \cdot \frac{\delta_t}{\delta_{t+1}} \le \frac{9 }{2} \cdot \frac{1}{ \Delta^2} \cdot \frac{10}{9} = \frac{5 }{ \Delta^2}.
\end{align}
Summing up from $T_1$ (some large enough time) to $T_1 + t$, we have
\begin{align}
    \frac{1}{\delta_{T_1+t}} - \frac{1}{\delta_{T_1}} \le \frac{5}{\Delta^2} \cdot (t - 1) \le \frac{5}{\Delta^2} \cdot t.
\end{align}
Since $T_1$ is a finite time, $\delta_{T_1} \ge 1/C$ for some constant $C > 0$. Rearranging, we have
\begin{align}
    (\pi^* - \pi_{\theta_{T_1+t}})^\top r  = \delta_{T_1+t} \ge \frac{1}{ \frac{1}{\delta_{T_1}} + \frac{5}{\Delta^2} \cdot t} \ge \frac{1}{ C + \frac{5}{\Delta^2} \cdot t } \ge \frac{1}{ C + \frac{5 }{\Delta^2} \cdot (T_1 + t) }.
\end{align}
By abusing notation $t \coloneqq T_1 + t$ and $C \le \frac{t}{\Delta^2}$, we have
\begin{align}
    (\pi^* - \pi_{\theta_t})^\top r \ge \frac{1}{ C + \frac{5}{\Delta^2} \cdot t } \ge \frac{1}{  \frac{t}{\Delta^2} + \frac{5}{\Delta^2} \cdot t } = \frac{\Delta^2}{6 \cdot t },
\end{align}
for all large enough $t \ge 1$.
\end{proof}

\subsubsection{Proofs for general MDPs}
\label{sec:proofs_theoretical_understanding_entropy_general_mdps}

\textbf{\cref{thm:lower_bound_softmax_general}} (Lower bound)\textbf{.} 
Take any MDP.
For large enough $t \ge 1$, using \cref{alg:policy_gradient_softmax} with $\eta_t \in ( 0, 1] $,
\begin{align}
    V^*(\mu) - V^{\pi_{\theta_t}}(\mu) \ge \frac{ (1- \gamma)^5 \cdot (\Delta^*)^2}{12 \cdot t},
\end{align}
where $\Delta^* = \min_{s \in \gS, a \not= a^*(s)}\{ Q^*(s, a^*(s)) - Q^*(s, a) \} > 0$ is the optimal value gap of the MDP, and $a^*(s) = \argmax_{a}{ \pi^*(a | s) }$ is the action that the optimal policy selects under state $s$.
\begin{proof}
Suppose \cref{alg:policy_gradient_softmax} can converge faster than $O(1/t)$ for general MDPs, then it can converge faster than $O(1/t)$ for any one-state MDPs, which are special cases of general MDPs. This is a contradiction with \cref{thm:lower_bound_softmax_special}.

The above one-sentence argument implies a $\Omega(1/t)$ rate lower bound. To calculate the constant in the lower bound, we need results similar to \cref{lem:reverse_lojasiewicz_softmax_special}. According to the reversed \L{}ojasiewicz inequality of  \cref{lem:reverse_lojasiewicz_softmax_general},
\begin{align}
    \left\| \frac{\partial V^{\pi_{\theta_t}}(\mu)}{\partial \theta_t } \right\|_2 \le \frac{1}{1 - \gamma} \cdot \frac{\sqrt{2}}{\Delta^*} \cdot \delta_t,
\end{align}
where $\delta_t = V^*(\mu) - V^{\pi_{\theta_t}}(\mu) > 0$. Let $\theta_{t+1} = \theta_t + \eta_t \cdot \frac{\partial V^{\pi_{\theta_t}}(\mu)}{\partial \theta_t }$, and $\pi_{\theta_{t+1}}(\cdot | s) = \softmax(\theta_{t+1}(s, \cdot))$, $\forall s \in \gS$ be the next policy after one step gradient update. Using similar calculations as in \cref{eq:lower_bound_softmax_special_intermediate_1},
\begin{align}
    \delta_t - \delta_{t+1} &= V^{\pi_{\theta_{t+1}}}(\mu) - V^{\pi_{\theta_t}}(\mu) - \Big\langle \frac{\partial V^{\pi_{\theta_t}}(\mu)}{\partial \theta_t }, \theta_{t+1} - \theta_t \Big\rangle + \Big\langle \frac{\partial V^{\pi_{\theta_t}}(\mu)}{\partial \theta_t }, \theta_{t+1} - \theta_t \Big\rangle \\
    &\le \frac{4}{(1- \gamma)^3} \cdot \left\| \theta_{t+1} - \theta_t \right\|_2^2 + \Big\langle \frac{\partial V^{\pi_{\theta_t}}(\mu)}{\partial \theta_t }, \theta_{t+1} - \theta_t \Big\rangle \qquad \left(
    \text{by \cref{lem:smoothness_softmax_general}} \right) \\
    &= \left( \frac{4 \eta_t^2}{(1- \gamma)^3} + \eta_t \right) \cdot \left\| \frac{\partial V^{\pi_{\theta_t}}(\mu)}{\partial \theta_t } \right\|_2^2 
    \qquad \left( \text{by } \theta_{t+1} = \theta_t + \eta_t \cdot \frac{\partial V^{\pi_{\theta_t}}(\mu)}{\partial \theta_t } \right) \\
    &\le \frac{10}{(1-\gamma)^5} \cdot \frac{1}{ (\Delta^*)^2} \cdot \delta_t^2. 
    \qquad \left( \text{by } \eta_t \in (0, 1] \text{ and by \cref{lem:reverse_lojasiewicz_softmax_general}} \right)
\end{align}
According to \cref{thm:final_rates_softmax_general}, we have $\delta_t > 0$, $\delta_t \to 0$ as $t \to \infty$. Using similar arguments as in \cref{eq:lower_bound_softmax_special_intermediate_2}, we can show that for all large enough $t \ge 1$, $\delta_t \le \frac{11}{10} \cdot \delta_{t+1}$. Divide both sides of $\delta_t - \delta_{t+1} \le \frac{10}{(1-\gamma)^5} \cdot \frac{1}{ (\Delta^*)^2} \cdot \delta_t^2$ by $\delta_t \cdot \delta_{t+1}$,
\begin{align}
    \frac{1}{\delta_{t+1}} - \frac{1}{\delta_t} \le \frac{10}{(1-\gamma)^5} \cdot \frac{1}{ (\Delta^*)^2} \cdot \frac{\delta_t}{\delta_{t+1}} \le \frac{10}{(1-\gamma)^5} \cdot \frac{1}{ (\Delta^*)^2} \cdot \frac{11}{10} = \frac{11 }{ (1- \gamma)^5 \cdot (\Delta^*)^2}.
\end{align}
Using similar calculations as in the proof of \cref{thm:lower_bound_softmax_special}, we have,
\begin{align}
    V^*(\mu) - V^{\pi_{\theta_t}}(\mu) = \delta_t \ge \frac{ (1- \gamma)^5 \cdot (\Delta^*)^2}{12 \cdot t},
\end{align}
for all large enough $t \ge 1$.
\end{proof}

\subsubsection{Proofs for the non-uniform \L{}ojasiewicz degree}
\label{sec:proofs_theoretical_understanding_entropy_lojasiewicz_degree}

\textbf{\cref{prop:lojasiewicz_degree_softmax}.}
Let $r\in [0,1]^K$ be arbitrary and consider $\theta\mapsto \expectation_{a \sim \pi_{\theta}}{ [ r(a) ] }$.
The non-uniform \L{}ojasiewicz degree of this map with constant $C(\theta) = \pi_\theta(a^*)$ is zero.
\begin{proof}
We prove by contradiction. Suppose the \L{}ojasiewicz degree of $\expectation_{a \sim \pi_{\theta}}{ [ r(a) ] }$ can be larger than $0$. Then there exists $\xi > 0$, such that,
\begin{align}
    \left\| \frac{d \pi_\theta^\top r}{d \theta} \right\|_2 \ge C(\theta) \cdot \left[ (\pi^* - \pi_\theta)^\top r \right]^{1-\xi}.
\end{align}
Consider the following example, $r = (0.6, 0.4, 0.2)^\top$, $\pi_\theta = (1 - 3 \epsilon, 2 \epsilon, \epsilon)^\top$ with small number $\epsilon > 0$.
\begin{align}
    (\pi^* - \pi_\theta)^\top r = r(a^*) - \pi_\theta^\top r = 0.6 - \left( 0.6 - 0.8 \epsilon \right) = 0.8 \cdot \epsilon.
\end{align}
According to the reversed \L{}ojasiewicz inequality of  \cref{lem:reverse_lojasiewicz_softmax_special},
\begin{align}
    \left\| \frac{d \pi_\theta^\top r}{d \theta} \right\|_2 \le \frac{\sqrt{2}}{\Delta} \cdot (\pi^* - \pi_\theta)^\top r = \frac{\sqrt{2}}{2} \cdot (\pi^* - \pi_\theta)^\top r \le \frac{1.5}{2} \cdot (\pi^* - \pi_\theta)^\top r = 0.6 \cdot \epsilon.
\end{align}
Also note that $\pi_\theta(a^*) = 1 - 3 \epsilon > 1 / 4$. Then for $\xi \in (0, 1]$, we have
\begin{align}
    \left\| \frac{d \pi_\theta^\top r}{d \theta} \right\|_2 \le 0.6 \cdot \epsilon = \frac{1}{4} \cdot 3 \cdot 0.8 \cdot \epsilon < \pi_\theta(a^*) \cdot 3 \cdot 0.8 \cdot \epsilon = C(\theta) \cdot 3 \cdot 0.8 \cdot \epsilon.
\end{align}
Next, since $\epsilon > 0$ can be very small,
\begin{align}
    \left\| \frac{d \pi_\theta^\top r}{d \theta} \right\|_2 < C(\theta) \cdot 3 \cdot 0.8 \cdot \epsilon &= C(\theta) \cdot 3 \cdot (0.8 \cdot \epsilon)^\xi \cdot (0.8 \cdot \epsilon)^{1-\xi} \\
    &< C(\theta) \cdot (0.8 \cdot \epsilon)^{1-\xi} = C(\theta) \cdot \left[ (\pi^* - \pi_\theta)^\top r \right]^{1-\xi},
\end{align}
where the second inequality is by $(0.8 \cdot \epsilon)^\xi < 1 / 3$ for small $\epsilon > 0$ since $\xi > 0$. This is a contradiction with the assumption. Therefore the \L{}ojasiewicz degree $\xi$ cannot be larger than $0$.
\end{proof}

\textbf{\cref{prop:lojasiewicz_degree_entropy}.}
Fix $\tau>0$.
With $C(\theta) = \sqrt{2 \tau} \cdot  \min_{a}{ \pi_\theta(a) }$, the \L{}ojasiewicz degree of $\theta \mapsto \expectation_{a \sim \pi_{\theta}}{ \left[ r(a) - \tau \log{\pi_\theta(a)} \right] }$ is at least $1/2$.
\begin{proof}
Denote $\delta_\theta = \expectation_{a \sim \pi_{\tau}^*}{ \left[ r(a) - \tau \log{\pi_{\tau}^*(a)} \right] } - \expectation_{a \sim \pi_{\theta}}{ \left[ r(a) - \tau \log{\pi_\theta(a)} \right] }$ as the soft sub-optimality. We have,
\begin{align}
\label{eq:entropy_lojsiewicz_degree_special_intermediate_1}
    \delta_\theta &= \expectation_{a \sim \pi_{\tau}^*}{ \left[ r(a) - \tau \log{\pi_{\tau}^*(a)} \right] } - \expectation_{a \sim \pi_{\theta}}{ \left[ r(a) - \tau \log{\pi_{\tau}^*(a)} \right] }  - \expectation_{a \sim \pi_{\theta}}{ \left[ \tau \log{\pi_{\tau}^*(a)} - \tau \log{\pi_\theta(a)} \right] } \\
    &= \tau \log \sum_{a}{ \exp\{ r(a) / \tau \} } - \tau \log \sum_{a}{ \exp\{ r(a) / \tau \} } + \tau \cdot \KL(\pi_\theta \| \pi_\tau^*) 
    \qquad \left( \text{since } \pi_\tau^* = \softmax(r/\tau) \right) \\
    &= \tau \cdot \KL(\pi_\theta \| \pi_\tau^*) \\
    &\le \frac{\tau}{2} \cdot \left\| \frac{r}{ \tau} - \theta - \frac{ (r /\tau - \theta)^\top \rvone}{K} \cdot \rvone \right\|_\infty^2 \qquad \left( \text{by \cref{lem:kl_logit_inequality}} \right) \\
    &= \frac{1}{2 \tau } \cdot \left\| r - \tau \theta - \frac{ (r - \tau \theta)^\top \rvone}{K} \cdot \rvone \right\|_\infty^2.
\end{align}
Next, the entropy regularized policy gradient w.r.t. $\theta$ is
\begin{align}
    \frac{ d \{ \pi_\theta^\top ( r - \tau \log{\pi_\theta}) \} }{d \theta} &= H(\pi_\theta) (r - \tau \log{\pi_\theta}) \\
    &= H(\pi_\theta) \left(r - \tau \theta + \tau \log{ \sum_{a}{\exp\{\theta(a)\}} } \cdot \rvone \right) \\
    &= H(\pi_\theta) \left(r - \tau \theta \right) \\
    &= H(\pi_\theta) \left( r - \tau \theta - \frac{ (r - \tau \theta)^\top \rvone}{K} \cdot \rvone \right),
\end{align}
where the last two equations are by $H(\pi_\theta) \rvone = \rvzero$ as shown in \cref{lem:golub_rank_one_perturb}.
Then we have,
\begin{align}
    \left\| \frac{ d \{ \pi_\theta^\top ( r - \tau \log{\pi_\theta}) \} }{d \theta} \right\|_2 &= \left\| H(\pi_\theta) \left( r - \tau \theta - \frac{ (r - \tau \theta)^\top \rvone}{K} \cdot \rvone \right) \right\|_2 \\
    &\ge \min_{a}{ \pi_\theta(a) } \cdot \left\| r - \tau \theta - \frac{ (r - \tau \theta)^\top \rvone}{K} \cdot \rvone \right\|_2 \qquad \left( \text{by \cref{lem:norm_decay_entropy_special}} \right) \\
    &\ge \min_{a}{ \pi_\theta(a) } \cdot \left\| r - \tau \theta - \frac{ (r - \tau \theta)^\top \rvone}{K} \cdot \rvone \right\|_\infty \\
    &\ge \min_{a}{ \pi_\theta(a) } \cdot \sqrt{2 \tau } \cdot  \sqrt{\delta_\theta} \qquad \left( \text{by \cref{eq:entropy_lojsiewicz_degree_special_intermediate_1}} \right) \\
    &= \sqrt{2 \tau } \cdot \min_{a}{ \pi_\theta(a) } \cdot \left( \expectation_{a \sim \pi_{\tau}^*}{ \left[ r(a) - \tau \log{\pi_{\tau}^*(a)} \right] } - \expectation_{a \sim \pi_{\theta}}{ \left[ r(a) - \tau \log{\pi_\theta(a)} \right] } \right)^\frac{1}{2}, 
\end{align}
which means the \L{}ojasiewicz degree of $\expectation_{a \sim \pi_{\theta}}{ \left[ r(a) - \tau \log{\pi_\theta(a)} \right] }$ is $1/2$ and $C(\theta) = \sqrt{2 \tau } \cdot \min_{a}{ \pi_\theta(a) }$.
\end{proof}

\section{Miscellaneous Extra Supporting Results}
\label{sec:supporting_lemmas}

\begin{lemma}[Ascent lemma for smooth function]
\label{lem:ascent_lemma_smooth_function}
Let $f:\R^d \to \R$ be a $\beta$-smooth function, $\theta\in \R^d$ 
and $\theta' = \theta + \frac{1}{\beta} \cdot \frac{\partial f(\theta)}{\partial \theta}$. We have,
\begin{align}
    f(\theta) - f(\theta') \le - \frac{1}{2 \beta} \cdot \left\| \frac{\partial f(\theta)}{\partial \theta} \right\|_2^2.
\end{align}
\end{lemma}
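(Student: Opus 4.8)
The plan is to apply the definition of $\beta$-smoothness directly at the specific update point $\theta' = \theta + \frac{1}{\beta} \cdot \frac{\partial f(\theta)}{\partial \theta}$. The only subtlety is that the smoothness definition in the excerpt is stated with an absolute value, so I first extract the one-sided (lower) bound from it: since $|x| \le c$ implies $x \ge -c$, the smoothness inequality yields
\begin{align*}
    f(\theta') - f(\theta) - \Big\langle \frac{\partial f(\theta)}{\partial \theta}, \theta' - \theta \Big\rangle \ge - \frac{\beta}{2} \cdot \| \theta' - \theta \|_2^2,
\end{align*}
which rearranges to a lower bound on the improvement $f(\theta') - f(\theta)$.

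Next I would substitute the update rule, using $\theta' - \theta = \frac{1}{\beta} \cdot \frac{\partial f(\theta)}{\partial \theta}$. This makes the inner product term equal to $\frac{1}{\beta} \cdot \big\| \frac{\partial f(\theta)}{\partial \theta} \big\|_2^2$ and the quadratic penalty term equal to $\frac{\beta}{2} \cdot \frac{1}{\beta^2} \cdot \big\| \frac{\partial f(\theta)}{\partial \theta} \big\|_2^2 = \frac{1}{2\beta} \cdot \big\| \frac{\partial f(\theta)}{\partial \theta} \big\|_2^2$. Combining these gives
\begin{align*}
    f(\theta') - f(\theta) \ge \frac{1}{\beta} \cdot \Big\| \frac{\partial f(\theta)}{\partial \theta} \Big\|_2^2 - \frac{1}{2\beta} \cdot \Big\| \frac{\partial f(\theta)}{\partial \theta} \Big\|_2^2 = \frac{1}{2\beta} \cdot \Big\| \frac{\partial f(\theta)}{\partial \theta} \Big\|_2^2.
\end{align*}
Negating both sides then produces exactly the claimed bound $f(\theta) - f(\theta') \le - \frac{1}{2\beta} \cdot \big\| \frac{\partial f(\theta)}{\partial \theta} \big\|_2^2$.

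There is essentially no hard step here: the result is the textbook ``gradient ascent with step size $1/\beta$ guarantees progress proportional to the squared gradient norm.'' The closest thing to a pitfall is remembering to peel off the correct side of the absolute-value bound in the smoothness definition (the lower bound, not the upper bound), and tracking the factor of $\frac12$ arising from the $\frac{\beta}{2}$ coefficient interacting with the $\frac{1}{\beta^2}$ from squaring the step. Since $\R^d$ is convex, the point $\theta'$ is automatically in the domain, so no feasibility check is needed and the smoothness inequality applies verbatim.
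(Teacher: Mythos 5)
Your proof is correct and follows exactly the same route as the paper's: extract the lower-bound side of the absolute-value smoothness inequality, substitute $\theta' - \theta = \frac{1}{\beta}\cdot\frac{\partial f(\theta)}{\partial\theta}$, and combine the resulting $\frac{1}{\beta}$ and $\frac{1}{2\beta}$ terms. Nothing is missing.
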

\begin{proof}
According to the definition of smoothness, we have,
\begin{align}
    \left| f(\theta') - f(\theta) - \Big\langle \frac{\partial f(\theta)}{\partial \theta}, \theta' - \theta \Big\rangle \right| \le \frac{\beta}{2} \cdot \| \theta' - \theta \|_2^2,
\end{align}
which implies,
\begin{align}
    f(\theta) - f(\theta') &\le - \Big\langle \frac{\partial f(\theta)}{\partial \theta}, \theta' - \theta \Big\rangle + \frac{\beta}{2} \cdot \| \theta' - \theta \|_2^2 \\
    &= - \frac{1}{\beta} \cdot \left\| \frac{\partial f(\theta)}{\partial \theta} \right\|_2^2 + \frac{\beta}{2} \cdot \frac{1}{\beta^2} \cdot \left\| \frac{\partial f(\theta)}{\partial \theta} \right\|_2^2 \qquad \left( \theta' = \theta + \frac{1}{\beta} \cdot \frac{\partial f(\theta)}{\partial \theta} \right) \\
    &= - \frac{1}{2 \beta} \cdot \left\| \frac{\partial f(\theta)}{\partial \theta} \right\|_2^2. \qedhere
\end{align}
\end{proof}

\begin{lemma}[First performance difference lemma \citep{kakade2002approximately}]
\label{lem:performance_difference_general}
For any policies $\pi$ and $\pi^\prime$,
\begin{align}
    V^{\pi^\prime}(\rho) - V^{\pi}(\rho) &= \frac{1}{1 - \gamma} \sum_{s}{ d_\rho^{\pi^\prime}(s) \sum_{a}{ \left( \pi^\prime(a | s) - \pi(a | s) \right) \cdot Q^{\pi}(s,a) } }\\
    &= \frac{1}{1 - \gamma} \sum_{s}{ d_{\rho}^{\pi^\prime}(s)  \sum_{a}{ \pi^\prime(a | s) \cdot A^{\pi}(s, a) } }.
\end{align}
\end{lemma}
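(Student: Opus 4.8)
The plan is to prove the second (advantage) form directly via a telescoping argument along trajectories generated by $\pi'$, and then obtain the first (action-value) form as an algebraic rearrangement. First I would write $V^{\pi'}(\rho)$ as a trajectory expectation using \cref{eq:state_value_function}, i.e. $V^{\pi'}(\rho) = \expectation\!\left[\sum_{t=0}^\infty \gamma^t r(s_t,a_t)\right]$ with $s_0\sim\rho$, $a_t\sim\pi'(\cdot|s_t)$, $s_{t+1}\sim\gP(\cdot|s_t,a_t)$. The key algebraic step is to rewrite the one-step reward using the defining relation of the advantage function: from $A^\pi(s,a)=Q^\pi(s,a)-V^\pi(s)$ and \cref{eq:state_action_value_function} one has
$$r(s,a) = A^\pi(s,a) + V^\pi(s) - \gamma \sum_{s'} \gP(s'|s,a)\, V^\pi(s').$$

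Substituting this into the trajectory sum, the $V^\pi$ terms telescope. Taking the expectation over $\pi'$-trajectories and using the tower property, $\expectation\big[\gamma\sum_{s'}\gP(s'|s_t,a_t)V^\pi(s')\big]=\gamma\,\expectation[V^\pi(s_{t+1})]$, so the collection $\sum_{t=0}^\infty \big(\gamma^t V^\pi(s_t)-\gamma^{t+1}V^\pi(s_{t+1})\big)$ collapses to $V^\pi(s_0)$, whose expectation is $V^\pi(\rho)$. This yields
$$V^{\pi'}(\rho) - V^{\pi}(\rho) = \expectation\!\left[\sum_{t=0}^\infty \gamma^t A^\pi(s_t,a_t)\right],$$
where the expectation is over $\pi'$-trajectories started from $\rho$. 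I would then interchange the expectation with the sum over $t$, group terms by the state occupied at step $t$, and recognize the discounted visitation weights: since $d_\rho^{\pi'}(s) = (1-\gamma)\sum_{t=0}^\infty \gamma^t \probability(s_t=s\mid s_0,\pi',\gP)$ after averaging $s_0\sim\rho$, the factor $\sum_t \gamma^t\probability(s_t=s)$ equals $d_\rho^{\pi'}(s)/(1-\gamma)$, giving exactly
$$V^{\pi'}(\rho) - V^{\pi}(\rho) = \frac{1}{1-\gamma}\sum_s d_\rho^{\pi'}(s)\sum_a \pi'(a|s)\, A^\pi(s,a),$$
which is the second identity.

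Finally I would derive the first form from the second. Using $V^\pi(s)=\sum_a \pi(a|s)Q^\pi(s,a)$ together with $A^\pi(s,a)=Q^\pi(s,a)-V^\pi(s)$, the inner sum rewrites as $\sum_a \pi'(a|s)A^\pi(s,a) = \sum_a \pi'(a|s)Q^\pi(s,a) - V^\pi(s) = \sum_a\big(\pi'(a|s)-\pi(a|s)\big)Q^\pi(s,a)$, which produces the action-value form. The main obstacle is the analytic justification of the telescoping and the interchange of the infinite sum with the expectation; this is precisely where \cref{asmp:bounded_reward} is essential, since boundedness of $r$ (hence of $Q^\pi$, $V^\pi$, and $A^\pi$) together with $\gamma\in[0,1)$ guarantees absolute convergence, forces the boundary term $\gamma^{T+1}\expectation[V^\pi(s_{T+1})]\to 0$ as $T\to\infty$, and licenses Fubini to swap summation with the expectation.
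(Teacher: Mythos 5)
Your proof is correct, but it runs in the opposite direction from the paper's. The paper works at the level of state-indexed functions: it decomposes $V^{\pi'}(s)-V^{\pi}(s)$ by adding and subtracting $\sum_a \pi'(a|s)Q^{\pi}(s,a)$, observes that the difference then satisfies a Bellman-type recursion under $\pi'$'s dynamics with ``reward'' $\sum_a(\pi'(a|s)-\pi(a|s))Q^{\pi}(s,a)$, and unrolls that recursion to produce the discounted occupancy $d^{\pi'}$ — obtaining the $Q$-form first and the advantage form as a one-line algebraic consequence. You instead use the classic Kakade--Langford trajectory argument: rewrite $r(s,a)=A^{\pi}(s,a)+V^{\pi}(s)-\gamma\sum_{s'}\gP(s'|s,a)V^{\pi}(s')$, telescope the $V^{\pi}$ terms along $\pi'$-trajectories, and obtain the advantage form first, with the $Q$-form derived afterwards. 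The two arguments are mathematically equivalent (your telescoping is the pathwise version of the paper's unrolled recursion), but each has its merits: your route makes the analytic bookkeeping explicit — the vanishing boundary term $\gamma^{T+1}\expectation[V^{\pi}(s_{T+1})]\to 0$ and the Fubini interchange, both licensed by \cref{asmp:bounded_reward} and $\gamma<1$ — which the paper's infinite unrolling leaves implicit; the paper's route is more compact, stays entirely at the level of value functions without introducing trajectory expectations, and delivers the $Q$-form (the one actually invoked in \cref{lem:value_suboptimality}) directly.
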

\begin{proof}
According to the definition of value function,
\begin{align}
    V^{\pi^\prime}(s) - V^{\pi}(s) &= \sum_{a}{ \pi^\prime(a | s) \cdot Q^{\pi^\prime}(s,a) } - \sum_{a}{ \pi(a | s) \cdot Q^{\pi}(s,a) } \\
    &= \sum_{a}{ \pi^\prime(a | s) \cdot \left( Q^{\pi^\prime}(s,a) - Q^{\pi}(s,a) \right) } + \sum_{a}{ \left( \pi^\prime(a | s) - \pi(a | s) \right) \cdot Q^{\pi}(s,a) } \\
    &= \sum_{a}{ \left( \pi^\prime(a | s) - \pi(a | s) \right) \cdot Q^{\pi}(s,a) } + \gamma \sum_{a}{ \pi^\prime(a | s) \sum_{s^\prime}{  \gP( s^\prime | s, a) \cdot \left[ V^{\pi^\prime}(s^\prime) -  V^{\pi}(s^\prime)  \right] } } \\
    &= \frac{1}{1 - \gamma} \sum_{s^\prime}{ d_{s}^{\pi^\prime}(s^\prime) \sum_{a^\prime}{ \left( \pi^\prime(a^\prime | s^\prime) - \pi(a^\prime | s^\prime) \right) \cdot Q^{\pi}(s^\prime, a^\prime) }  } \\
    &= \frac{1}{1 - \gamma} \sum_{s^\prime}{ d_{s}^{\pi^\prime}(s^\prime) \sum_{a^\prime}{ \pi^\prime(a^\prime | s^\prime) \cdot \left( Q^{\pi}(s^\prime, a^\prime) - V^{\pi}(s^\prime) \right) }  } \\
    &= \frac{1}{1 - \gamma} \sum_{s^\prime}{ d_{s}^{\pi^\prime}(s^\prime)  \sum_{a^\prime}{ \pi^\prime(a^\prime | s^\prime) \cdot A^{\pi}(s^\prime, a^\prime) } }. \qedhere
\end{align}
\end{proof}

\begin{lemma}[Second performance difference lemma]
\label{lem:performance_difference_general_second}
For any policies $\pi$ and $\pi^\prime$,
\begin{align}
    V^{\pi^\prime}(\rho) - V^{\pi}(\rho) = \frac{1}{1 - \gamma} \sum_{s}{ d_\rho^{\pi}(s) \sum_{a}{ \left( \pi^\prime(a | s) - \pi(a | s) \right) \cdot Q^{\pi^\prime}(s,a) } }.
\end{align}
\end{lemma}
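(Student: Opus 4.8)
The plan is to obtain this identity as an immediate consequence of the first performance difference lemma (\cref{lem:performance_difference_general}) by exploiting the symmetry in the roles of the two policies. That lemma, which is stated for \emph{any} pair of policies, asserts $V^{\pi'}(\rho) - V^{\pi}(\rho) = \frac{1}{1-\gamma}\sum_s d_\rho^{\pi'}(s)\sum_a(\pi'(a|s)-\pi(a|s))\,Q^{\pi}(s,a)$. Because this holds for arbitrary policies, I would apply it with the names $\pi$ and $\pi'$ interchanged, obtaining $V^{\pi}(\rho) - V^{\pi'}(\rho) = \frac{1}{1-\gamma}\sum_s d_\rho^{\pi}(s)\sum_a(\pi(a|s)-\pi'(a|s))\,Q^{\pi'}(s,a)$. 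Negating both sides and absorbing the sign into the bracketed difference $(\pi(a|s)-\pi'(a|s))$ yields exactly the claimed formula. This is the entire argument: the ``second'' form is nothing but the swapped-and-negated ``first'' form, so there is no genuine obstacle here, only the recognition of the symmetry.

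If a self-contained derivation is preferred (mirroring the proof of \cref{lem:performance_difference_general} rather than quoting it), I would instead telescope directly. Starting from $V^{\pi'}(s)-V^\pi(s) = \sum_a \pi'(a|s)Q^{\pi'}(s,a) - \sum_a\pi(a|s)Q^\pi(s,a)$, I would add and subtract $\sum_a\pi(a|s)Q^{\pi'}(s,a)$ to split the right-hand side into the advantage-like term $\sum_a(\pi'(a|s)-\pi(a|s))\,Q^{\pi'}(s,a)$ plus a residual $\sum_a\pi(a|s)\big(Q^{\pi'}(s,a)-Q^\pi(s,a)\big)$. Using the Bellman relation $Q^{\pi'}(s,a)-Q^\pi(s,a)=\gamma\sum_{s'}\gP(s'|s,a)\big(V^{\pi'}(s')-V^\pi(s')\big)$, the residual becomes a $\gamma$-discounted expectation of $V^{\pi'}-V^\pi$ over the next state drawn under $\pi$, producing a recursion in the value gap.

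The key structural point in the direct route is that the transition term carries the factor $\pi(a|s)$ (the \emph{old} policy), so unrolling the recursion propagates states according to $\pi$; the accumulated discounted weights therefore telescope into the visitation distribution $d_s^\pi$ rather than $d_s^{\pi'}$. Summing the resulting geometric series and multiplying by $1/(1-\gamma)$ gives the per-state identity $V^{\pi'}(s)-V^\pi(s)=\frac{1}{1-\gamma}\sum_{s'}d_s^\pi(s')\sum_a(\pi'(a|s')-\pi(a|s'))\,Q^{\pi'}(s',a)$, and taking the expectation over $s\sim\rho$ finishes the proof. The only thing requiring care is getting this propagation direction right, since it is precisely the asymmetry between $d^\pi$ and $d^{\pi'}$ that distinguishes the second lemma from the first; the cleanest and least error-prone presentation is thus the one-line swap argument of the first paragraph.
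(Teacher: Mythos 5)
Your proposal is correct, and your preferred route differs from the paper's. The paper proves \cref{lem:performance_difference_general_second} directly, by the telescoping argument you describe as your fallback: it writes $V^{\pi'}(s)-V^{\pi}(s)=\sum_a(\pi'(a|s)-\pi(a|s))Q^{\pi'}(s,a)+\sum_a\pi(a|s)\bigl(Q^{\pi'}(s,a)-Q^{\pi}(s,a)\bigr)$, converts the residual via the Bellman relation into a $\gamma$-discounted recursion in $V^{\pi'}-V^{\pi}$ with next states drawn under $\pi$, and unrolls it into $d_s^{\pi}$ — exactly your second paragraph, including the key observation that the factor $\pi(a|s)$ on the transition term is what forces the visitation measure to be $d^{\pi}$ rather than $d^{\pi'}$. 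Your primary argument — apply \cref{lem:performance_difference_general} with the roles of $\pi$ and $\pi'$ interchanged and negate — is not what the paper does, but it is valid: the first lemma is quantified over arbitrary policy pairs and is proved independently (so there is no circularity), and the swap-and-negate manipulation lands exactly on the claimed identity. Your route buys brevity and makes the relationship between the two lemmas transparent (they are the same identity viewed from the two visitation measures); the paper's route buys a self-contained derivation that does not depend on having stated the first lemma for general policy pairs. Either proof would be acceptable here.
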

\begin{proof}
According to the definition of value function,
\begin{align}
    V^{\pi^\prime}(s) - V^{\pi}(s) &= \sum_{a}{ \pi^\prime(a | s) \cdot Q^{\pi^\prime}(s,a) } - \sum_{a}{ \pi(a | s) \cdot Q^{\pi}(s,a) } \\
    &= \sum_{a}{ \left( \pi^\prime(a | s) - \pi(a | s) \right) \cdot Q^{\pi^\prime}(s,a) } + \sum_{a}{ \pi(a | s) \cdot \left(  Q^{\pi^\prime}(s,a) - Q^{\pi}(s,a) \right) } \\
    &= \sum_{a}{ \left( \pi^\prime(a | s) - \pi(a | s) \right) \cdot Q^{\pi^\prime}(s,a) } + \gamma \sum_{a}{ \pi(a | s) \sum_{s^\prime}{  \gP( s^\prime | s, a) \cdot \left[ V^{\pi^\prime}(s^\prime) -  V^{\pi}(s^\prime)  \right] } } \\
    &= \frac{1}{1 - \gamma} \sum_{s^\prime}{ d_{s}^{\pi}(s^\prime) \sum_{a^\prime}{ \left( \pi^\prime(a^\prime | s^\prime) - \pi(a^\prime | s^\prime) \right) \cdot Q^{\pi^\prime}(s^\prime, a^\prime) }  }. \qedhere
\end{align}
\end{proof}

\begin{lemma}[Value sub-optimality lemma]
\label{lem:value_suboptimality}
For any policy $\pi$,
\begin{align}
    V^*(\rho) - V^{\pi}(\rho) = \frac{1}{1 - \gamma} \sum_{s}{ d_\rho^{\pi}(s) \sum_{a}{ \left( \pi^*(a | s) - \pi(a | s) \right) \cdot Q^*(s,a) } }.
\end{align}
\end{lemma}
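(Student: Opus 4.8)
The plan is to obtain this identity as an immediate corollary of the second performance difference lemma (\cref{lem:performance_difference_general_second}), rather than proving it from scratch. The decisive observation is a bookkeeping one: the right-hand side here weights the per-state advantage term by the occupancy measure $d_\rho^{\pi}$ of the policy $\pi$ being evaluated, while pairing it with the \emph{optimal} action-value function $Q^*$. This particular pairing $(d_\rho^\pi, Q^{\pi'})$ is exactly the form delivered by the \emph{second} performance difference lemma, not the first. So the entire content of the lemma reduces to choosing the correct one of the two performance difference identities already established.

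Concretely, I would instantiate \cref{lem:performance_difference_general_second} with the target policy $\pi' \coloneqq \pi^*$, an optimal policy, and the base policy taken to be the given $\pi$. That lemma then reads
\begin{align*}
    V^{\pi^*}(\rho) - V^{\pi}(\rho) = \frac{1}{1 - \gamma} \sum_{s}{ d_\rho^{\pi}(s) \sum_{a}{ \left( \pi^*(a | s) - \pi(a | s) \right) \cdot Q^{\pi^*}(s,a) } }.
\end{align*}
Finally I would substitute the definitions $V^*(\rho) = V^{\pi^*}(\rho)$ and $Q^*(s,a) = Q^{\pi^*}(s,a)$, which holds because $\pi^*$ attains the maximal value, to recover the stated identity verbatim.

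There is essentially no obstacle beyond picking the right ingredient: using the first performance difference lemma (\cref{lem:performance_difference_general}) instead would yield $d_\rho^{\pi^*}$ and $Q^\pi$ on the right-hand side, i.e.\ the occupancy of the optimal policy together with the value of the current policy, which is a different (though equally valid) decomposition and not the one needed downstream. If one preferred a self-contained argument, the same result follows by the standard telescoping/value-difference expansion used to prove \cref{lem:performance_difference_general_second}, but invoking that lemma directly is the cleanest route and is what I would write.
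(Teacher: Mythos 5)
Your proposal is correct and matches the paper's own proof exactly: the paper also derives this lemma as an immediate consequence of the second performance difference lemma (\cref{lem:performance_difference_general_second}) instantiated with $\pi' = \pi^*$, using $V^*(\rho) = V^{\pi^*}(\rho)$ and $Q^* = Q^{\pi^*}$. Your additional remark about why the first performance difference lemma would give the wrong pairing $(d_\rho^{\pi^*}, Q^\pi)$ is a correct and useful clarification, but the substance of the argument is identical.
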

\begin{proof}
According to the second performance difference lemma of \cref{lem:performance_difference_general_second}, the result immediately holds.
\end{proof}

\begin{lemma}[Spectrum of H matrix]
\label{lem:golub_rank_one_perturb}
Let $\pi \in \Delta(\gA)$. Denote $H(\pi) = \diagonalmatrix(\pi) - \pi \pi^\top$. Let
\begin{align}
    \pi(1) \le \pi(2) \le \cdots \le \pi(K).
\end{align}
Denote the eigenvalues of $H(\pi)$ as
\begin{align}
    \lambda_1 \le \lambda_2 \le \cdots \le \lambda_K.
\end{align}
Then we have,
\begin{align}
    \lambda_1 &= 0, \\
    \pi(i-1) \le \lambda_i &\le \pi(i), \ i = 2, 3, \dots, K.
\end{align}
\end{lemma}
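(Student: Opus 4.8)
The plan is to recognize $H(\pi) = \diagonalmatrix(\pi) - \pi\pi^\top$ as a rank-one perturbation of the diagonal matrix $D := \diagonalmatrix(\pi)$ and then invoke eigenvalue interlacing for rank-one updates. First I would dispose of the $\lambda_1 = 0$ claim. For any $y \in \sR^K$, a direct expansion gives
\begin{align*}
    y^\top H(\pi) y = \sum_{i} \pi(i) y(i)^2 - \Big( \sum_i \pi(i) y(i) \Big)^2,
\end{align*}
which is the variance of the random variable taking value $y(i)$ with probability $\pi(i)$, hence nonnegative. Thus $H(\pi) \succeq 0$ and all eigenvalues are nonnegative, so $\lambda_1 \ge 0$. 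On the other hand $H(\pi) \rvone = \diagonalmatrix(\pi)\rvone - \pi (\pi^\top \rvone) = \pi - \pi = \rvzero$, so $\rvone$ is a null vector and therefore $\lambda_1 = 0$.

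Next I would set up the interlacing. Since $D = \diagonalmatrix(\pi)$ is diagonal, its sorted eigenvalues are exactly $\pi(1) \le \pi(2) \le \cdots \le \pi(K)$. Writing $D = H(\pi) + \pi \pi^\top$ exhibits $D$ as the positive semidefinite rank-one update of $H(\pi)$ by $\pi\pi^\top$. I would then apply the standard rank-one perturbation interlacing theorem: if $A$ is symmetric with sorted eigenvalues $\lambda_1(A) \le \cdots \le \lambda_K(A)$ and $z \in \sR^K$, then the sorted eigenvalues of $A + z z^\top$ satisfy
\begin{align*}
    \lambda_i(A) \le \lambda_i(A + z z^\top) \le \lambda_{i+1}(A), \quad i = 1, \dots, K-1, \qquad \lambda_K(A) \le \lambda_K(A + z z^\top).
\end{align*}
Taking $A = H(\pi)$, $z = \pi$, so that $A + z z^\top = D$, the left inequality $\lambda_i(H(\pi)) \le \lambda_i(D) = \pi(i)$ yields $\lambda_i \le \pi(i)$ for every $i$, and the right inequality $\lambda_i(D) = \pi(i) \le \lambda_{i+1}(H(\pi))$ yields, after reindexing $i \mapsto i-1$, the bound $\pi(i-1) \le \lambda_i$ for $i = 2, \dots, K$. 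Combining the two gives $\pi(i-1) \le \lambda_i \le \pi(i)$ as claimed.

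The routine part is the variance computation; the substantive step is the interlacing theorem, which is the main obstacle in the sense that it must be either cited or justified. I would cite the classical result on rank-one modifications of the symmetric eigenvalue problem (the reason for the lemma's name), since it holds in full generality, including repeated eigenvalues and vanishing components of $\pi$, with no genericity assumption. If a self-contained argument is preferred, I would derive it from the Courant--Fischer min-max characterization: for the upper bound one restricts the max-min to subspaces on which $\langle z, \cdot\rangle = 0$, where $A + zz^\top$ agrees with $A$, and for the lower bound one uses monotonicity of eigenvalues under positive semidefinite perturbations together with the one-dimensional rank of $zz^\top$ to prevent any eigenvalue from advancing past the next eigenvalue of $A$. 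A secular-equation derivation is also possible but is less clean in the presence of multiplicities, so I would prefer the Courant--Fischer route if a proof is required rather than a citation.
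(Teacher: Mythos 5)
Your proposal is correct and takes essentially the same approach as the paper: both prove $\lambda_1 = 0$ identically (the null vector $H(\pi)\rvone = \rvzero$ plus the variance identity showing $H(\pi) \succeq 0$), and both obtain the interlacing bounds from the classical rank-one modification theorem. The only cosmetic difference is that the paper cites Golub (1973, Section 5) directly for the downdate $H(\pi) = \diagonalmatrix(\pi) - \pi\pi^\top$, whereas you apply the equivalent update form of the same theorem to $\diagonalmatrix(\pi) = H(\pi) + \pi\pi^\top$.
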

\begin{proof}
According to \citet[Section 5]{golub1973some},
\begin{align}
    \pi(1) - \pi^\top \pi &\le \lambda_1 \le \pi(1), \\
    \pi(i-1) &\le \lambda_i \le \pi(i), \ i = 2, 3, \dots, K.
\end{align}
We show $\lambda_1 = 0$. Note
\begin{align}
    H(\pi) \rvone =  ( \diagonalmatrix(\pi) - \pi \pi^\top ) \rvone = \pi - \pi = 0 \cdot \rvone.
\end{align}
Thus $\rvone$ is an eigenvector of $H(\pi)$ which corresponds to eigenvalue $0$. Furthermore, for any vector $x \in \sR^K$,
\begin{align}
    x^\top H(\pi) x &= \expectation_{a \sim \pi}[x(a)^2] - \left( \expectation_{a \sim \pi}[x(a)] \right)^2 = \Var_{a \sim \pi}[x(a)] \ge 0,
\end{align}
which means all the eigenvalues of $H(\pi)$ are non-negative.
\end{proof}

\begin{lemma}
\label{lem:norm_decay_entropy_special}
Let $\pi \in \Delta(\gA)$. Denote $H(\pi) = \diagonalmatrix(\pi) - \pi \pi^\top$. For any vector $x \in \sR^K$,
\begin{align}
    \left\| \left( \identitymatrix - H(\pi) \right) \left(x - \frac{x^\top \rvone}{K} \cdot \rvone \right) \right\|_2 &\le \left( 1- \min_{a}{ \pi(a) } \right) \cdot \left\| x - \frac{x^\top \rvone}{K} \cdot \rvone \right\|_2, \\
    \left\| H(\pi) \left(x - \frac{x^\top \rvone}{K} \cdot \rvone \right) \right\|_2 &\ge \min_{a}{ \pi(a) } \cdot \left\| x - \frac{x^\top \rvone}{K} \cdot \rvone \right\|_2.
\end{align}
\end{lemma}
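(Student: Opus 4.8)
The plan is to reduce both inequalities to a single statement about the action of $H(\pi)$ on the hyperplane orthogonal to $\rvone$, and then read off the bounds from the eigenvalue localization already established in \cref{lem:golub_rank_one_perturb}. First I would set $y \coloneqq x - \frac{x^\top \rvone}{K} \cdot \rvone$ and note that $y$ is precisely the orthogonal projection of $x$ onto $\rvone^\perp$: a one-line computation gives $\rvone^\top y = x^\top \rvone - \frac{x^\top \rvone}{K} \cdot K = 0$. Hence it suffices to show, for every $y$ with $\rvone^\top y = 0$, that
\begin{align}
    \left\| \left( \identitymatrix - H(\pi) \right) y \right\|_2 &\le \left( 1 - \min_{a} \pi(a) \right) \cdot \| y \|_2, \\
    \left\| H(\pi) y \right\|_2 &\ge \min_{a} \pi(a) \cdot \| y \|_2.
\end{align}

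Next I would invoke \cref{lem:golub_rank_one_perturb}. Since $H(\pi)$ is real symmetric it admits an orthonormal eigenbasis $v_1, \dots, v_K$ with eigenvalues $0 = \lambda_1 \le \lambda_2 \le \cdots \le \lambda_K$, and that lemma shows both that $\rvone$ is a null vector of $H(\pi)$ (so I may take $v_1 = \rvone / \sqrt{K}$ as the eigenvector for $\lambda_1 = 0$) and that $\lambda_i \ge \min_a \pi(a)$ for all $i \ge 2$, while $\lambda_i \le \max_a \pi(a) \le 1$ for all $i$, the final inequality holding because $\pi$ is a probability vector. Expanding $y = \sum_{i=1}^K c_i v_i$, the constraint $\rvone^\top y = 0$ forces $c_1 = v_1^\top y = 0$, so $y = \sum_{i \ge 2} c_i v_i$ and $\| y \|_2^2 = \sum_{i \ge 2} c_i^2$.

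With this decomposition in hand, both bounds become elementary. For the lower bound, $H(\pi) y = \sum_{i \ge 2} \lambda_i c_i v_i$ yields $\| H(\pi) y \|_2^2 = \sum_{i \ge 2} \lambda_i^2 c_i^2 \ge (\min_a \pi(a))^2 \sum_{i \ge 2} c_i^2$, using $\lambda_i \ge \min_a \pi(a)$ on the retained indices. For the upper bound, $(\identitymatrix - H(\pi)) y = \sum_{i \ge 2} (1 - \lambda_i) c_i v_i$, and I would bound each coefficient by $0 \le 1 - \lambda_i \le 1 - \min_a \pi(a)$; this is the one place where I use both $\lambda_i \le 1$ (to keep $1 - \lambda_i$ nonnegative) and $\lambda_i \ge \min_a \pi(a)$. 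Squaring and summing gives $\| (\identitymatrix - H(\pi)) y \|_2^2 \le (1 - \min_a \pi(a))^2 \| y \|_2^2$, and taking square roots finishes both parts.

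I do not expect a genuinely hard step; the substance is entirely carried by \cref{lem:golub_rank_one_perturb}, and the remaining work is the routine eigenbasis expansion above. The only point deserving a word of care is the possibility that $0$ is a repeated eigenvalue of $H(\pi)$, which by the localization $\lambda_2 \ge \pi_{(1)} = \min_a \pi(a)$ can happen only when $\min_a \pi(a) = 0$. Choosing $v_1 = \rvone / \sqrt{K}$ as \emph{one} null eigenvector remains legitimate even then, and the asserted bounds $\lambda_i \ge \min_a \pi(a) = 0$ still hold for $i \ge 2$ by positive semidefiniteness, so the argument goes through verbatim; in that degenerate case the two inequalities simply collapse to the trivial $\| H(\pi) y \|_2 \ge 0$ and $\| (\identitymatrix - H(\pi)) y \|_2 \le \| y \|_2$. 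I would flag this case in a single sentence to confirm the eigenbasis choice is valid regardless of the dimension of the null space.
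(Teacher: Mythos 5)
Your proposal is correct and follows essentially the same route as the paper's proof: both expand the centered vector in the orthonormal eigenbasis of $H(\pi)$ guaranteed by \cref{lem:golub_rank_one_perturb}, kill the $\rvone$-component, and bound the remaining eigenvalues via $\min_a \pi(a) \le \lambda_i \le 1$ for $i \ge 2$. Your extra remark on the degenerate case $\min_a \pi(a) = 0$ (repeated zero eigenvalue) is a sound point of care that the paper leaves implicit, but it does not change the argument.
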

\begin{proof}
$x$ can be written as linear combination of eigenvectors of $H(\pi)$,
\begin{align}
    x &= a_1 \cdot \frac{\rvone}{\sqrt{K}} + a_2 v_2 + \cdots + a_K v_K \\
    &= \frac{x^\top \rvone}{K} \cdot \rvone + a_2 v_2 + \cdots + a_K v_K.
\end{align}
Since $H(\pi)$ is symmetric, $\left\{ \frac{\rvone}{\sqrt{K}}, v_2, \dots, v_K \right\}$ are orthonormal. The last equation is because the representation is unique, and
\begin{align}
    a_1 = x^\top \frac{\rvone}{\sqrt{K}} = \frac{x^\top \rvone}{\sqrt{K}}.
\end{align}
Denote
\begin{align}
    x^\prime = x - \frac{x^\top \rvone}{K} \cdot \rvone = a_2 v_2 + \cdots + a_K v_K.
\end{align}
We have
\begin{align}
    \| x^\prime \|_2^2 = a_2^2 + \cdots + a_K^2.
\end{align}
On the other hand,
\begin{align}
    ( \identitymatrix - H(\pi) ) x^\prime = a_2 (1 - \lambda_2 ) v_2 + \cdots + a_K ( 1-  \lambda_K ) v_K.
\end{align}
Therefore
\begin{align}
    \| ( \identitymatrix - H(\pi) ) x^\prime \|_2 &= \left( a_2^2 (1-\lambda_2)^2+ \cdots + a_K^2 (1- \lambda_K)^2 \right)^\frac{1}{2} \\
    &\le \left( ( a_2^2 + \cdots + a_K^2 ) \cdot (1 - \lambda_2)^2 \right)^\frac{1}{2} \\
    &= (1 - \lambda_2) \cdot \| x^\prime \|_2 \\
    &\le \left( 1 - \min_{a}{ \pi(a) } \right) \cdot \| x^\prime \|_2,
\end{align}
where the first inequality is by $0 \le \pi(1) \le \lambda_2 \le \cdots \le \lambda_K \le \pi(K) \le 1$, and the last inequality is according to $\lambda_2 \ge \pi(1) = \min_{a}{ \pi(a) }$, and both are shown in \cref{lem:golub_rank_one_perturb}. Similarly,
\begin{align}
    \|  H(\pi) x^\prime \|_2 &= \left( a_2^2 \lambda_2^2+ \cdots + a_K^2  \lambda_K^2 \right)^\frac{1}{2} \\
    &\ge \left( ( a_2^2 + \cdots + a_K^2 ) \cdot \lambda_2^2 \right)^\frac{1}{2} \\
    &= \lambda_2 \cdot \| x^\prime \|_2 \\
    &\ge \min_{a}{ \pi(a) } \cdot \| x^\prime \|_2. \qedhere
\end{align}
\end{proof}

\begin{lemma}
\label{lem:policy_logit_inequality_special}
Let $\pi_\theta = \softmax(\theta)$ and $\pi_{\theta^\prime} = \softmax(\theta^\prime)$. Then for any constant $c \in \sR$,
\begin{align}
    \left\| \pi_\theta - \pi_\theta^\prime \right\|_1 \le \left\| \theta^\prime - \theta - c \cdot \rvone \right\|_\infty.
\end{align}
\end{lemma}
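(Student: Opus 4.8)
The plan is to prove the inequality via the integral (mean-value) representation of the softmax map together with a sharp bound on the $\ell_\infty \to \ell_1$ behaviour of the Jacobian $H$. First I would exploit the shift-invariance of the softmax: since $\softmax(\theta') = \softmax(\theta' - c\rvone)$ for every $c$, and since the Jacobian satisfies $H(\pi)\rvone = \rvzero$ (established in \cref{lem:golub_rank_one_perturb}), the constant $c$ will ultimately enter only through the right-hand side $\|\theta'-\theta - c\rvone\|_\infty$. Writing $\theta_s = \theta + s(\theta'-\theta)$ for $s\in[0,1]$ and using \eqref{eq:H_matrix} (recall $H$ is symmetric, so $d\pi_{\theta_s}/d\theta_s = H(\pi_{\theta_s})$), the fundamental theorem of calculus gives $\pi_{\theta'} - \pi_\theta = \int_0^1 H(\pi_{\theta_s})(\theta'-\theta)\,ds$, whence $\|\pi_{\theta'}-\pi_\theta\|_1 \le \int_0^1 \|H(\pi_{\theta_s})(\theta'-\theta)\|_1\,ds$ by the triangle inequality for integrals.

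Set $w \coloneqq \theta'-\theta - c\rvone$. Because $H(\pi)\rvone = \rvzero$, we have $H(\pi_{\theta_s})(\theta'-\theta) = H(\pi_{\theta_s})w$ for every $s$, so the whole problem reduces to the single key inequality $\|H(\pi)w\|_1 \le \|w\|_\infty$, valid for every $\pi \in \Delta(\gA)$ and every $w$; integrating this over $s\in[0,1]$ then yields $\|\pi_{\theta'}-\pi_\theta\|_1 \le \|w\|_\infty = \|\theta'-\theta-c\rvone\|_\infty$, which is the claim. Using $H(\pi) = \diagonalmatrix(\pi) - \pi\pi^\top$ one computes $(H(\pi)w)_i = \pi(i)\,(w(i) - \pi^\top w)$, so that $\|H(\pi)w\|_1 = \sum_i \pi(i)\,|w(i) - \pi^\top w|$ is exactly the mean absolute deviation of $w$ under $\pi$.

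The crux, and the step I expect to be the main obstacle, is showing that this mean absolute deviation is bounded by $\|w\|_\infty$ rather than merely by the range $\max_i w(i) - \min_i w(i)$, which is all that the direct estimate $\sum_i \pi(i)|w(i)-\pi^\top w| \le \max_i |w(i) - \pi^\top w|$ produces and which is off by a factor of two. To remove this factor I would use that the signed deviations balance: with $\bar w = \pi^\top w$ and $x^+ \coloneqq \max\{x,0\}$ one has $\sum_i \pi(i)(w(i)-\bar w)^+ = \sum_i \pi(i)(\bar w - w(i))^+$, so the mean absolute deviation equals $2\sum_i \pi(i)(w(i)-\bar w)^+$. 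Writing $M = \|w\|_\infty$ (so $w(i),\bar w \in [-M,M]$) and $S^+ = \sum_{i:\,w(i)>\bar w}\pi(i)$, $S^- = \sum_{i:\,w(i)<\bar w}\pi(i)$, the common positive part $P \coloneqq \sum_i \pi(i)(w(i)-\bar w)^+$ is bounded two ways, $P \le S^+(M - \bar w)$ and, via its equal negative form, $P \le S^-(M + \bar w)$. Taking the geometric mean and using $S^+ S^- \le 1/4$ together with $(M-\bar w)(M+\bar w) = M^2 - \bar w^2 \le M^2$ gives $P \le \sqrt{S^+ S^-(M^2-\bar w^2)} \le M/2$, hence $\|H(\pi)w\|_1 = 2P \le M = \|w\|_\infty$. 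This establishes the key inequality and, through the integral bound above, the lemma.
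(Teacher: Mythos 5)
Your proof is correct, but it takes a genuinely different route from the paper's. The paper chains two inequalities: Pinsker's inequality $\left\| \pi_\theta - \pi_{\theta^\prime} \right\|_1 \le \sqrt{2 \cdot \KL(\pi_\theta \| \pi_{\theta^\prime})}$ (obtained from the $\ell_1$ strong convexity of negative entropy over the simplex) and the KL--logit bound of \cref{lem:kl_logit_inequality}, $\KL(\pi_\theta \| \pi_{\theta^\prime}) \le \tfrac{1}{2} \left\| \theta^\prime - \theta - c \cdot \rvone \right\|_\infty^2$, so the lemma follows almost for free from infrastructure the paper needs anyway (the same KL--logit lemma drives \cref{lem:lojasiewicz_entropy_general} and \cref{thm:rates_entropy_special}). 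You instead integrate the softmax Jacobian along the segment $\theta_s = \theta + s(\theta^\prime - \theta)$, use $H(\pi)\rvone = \rvzero$ to absorb the constant $c$, and reduce everything to the pointwise bound $\left\| H(\pi) w \right\|_1 = \sum_i \pi(i) \left| w(i) - \pi^\top w \right| \le \| w \|_\infty$. Your proof of that bound is the nontrivial part, and it is right: writing $\bar w = \pi^\top w$, $M = \| w \|_\infty$, the balance identity makes the mean absolute deviation equal to $2P$ with $P = \sum_i \pi(i) (w(i) - \bar w)^+$, and the two one-sided estimates $P \le S^+ (M - \bar w)$ and $P \le S^- (M + \bar w)$ combine via the geometric mean, $S^+ S^- \le 1/4$, and $M^2 - \bar w^2 \le M^2$ to give $P \le M/2$ --- exactly removing the factor of $2$ that the naive estimate loses, which you correctly identified as the crux. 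What each approach buys: yours is self-contained and elementary (no entropy or KL machinery), and your key inequality is a sharp $\ell_\infty \to \ell_1$ operator-type bound on the softmax Jacobian of independent interest, a natural companion to the $\ell_2$ lower bound of \cref{lem:norm_decay_entropy_special}; the paper's route is shorter given its existing lemmas and stays in terms of KL, which is the quantity it actually controls throughout the entropy-regularized analysis. Both arguments yield the same constant.
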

\begin{proof}
This results improves the results of $\left\| \pi_\theta - \pi_{\theta^\prime} \right\|_\infty \le 2 \cdot \left\| \theta - \theta^\prime \right\|_\infty$ in \citet[Lemma 5]{xiao2019maximum}.
According to the $\ell_1$ norm strong convexity of negative entropy over probabilistic simplex, i.e., for any policies $ \pi$, $\pi^\prime$,
\begin{align}
    \pi^\top \log{\pi} \ge {\pi^\prime}^\top \log{{\pi^\prime}} + (\pi - \pi^\prime )^\top \log{ \pi^\prime } + \frac{1}{2} \cdot \left\| \pi^\prime - \pi \right\|_1^2,
\end{align}
we have (letting $\pi = \pi_\theta$, and $\pi^\prime = \pi_{\theta^\prime}$),
\begin{align}
    \KL(\pi_\theta \| \pi_{\theta^\prime}) &= \pi_\theta^\top \log{\pi_\theta} - { \pi_{\theta^\prime} }^\top \log{\pi_{\theta^\prime}} - (\pi_\theta - \pi_{\theta^\prime} )^\top \log{\pi_{\theta^\prime}} \ge \frac{1}{2} \cdot \left\| \pi_\theta - \pi_\theta^\prime \right\|_1^2,
\end{align}
which is the Pinsker's inequality. Then we have,
\begin{align}
    \left\| \pi_\theta - \pi_\theta^\prime \right\|_1 &\le \sqrt{2 \cdot \KL(\pi_\theta \| \pi_{\theta^\prime})} \\
    &\le \sqrt{2 \cdot \frac{1}{2} \cdot \left\| \theta^\prime - \theta - c \cdot \rvone \right\|_\infty^2 } \qquad \left( \text{by \cref{lem:kl_logit_inequality}} \right) \\
    &= \left\| \theta^\prime - \theta - c \cdot \rvone \right\|_\infty. \qedhere
\end{align}
\end{proof}

\begin{lemma}[Soft performance difference lemma]
\label{lem:soft_performance_difference_general}
For any policies $\pi$ and $\pi^\prime$,
\begin{align}
    \tilde{V}^{\pi^\prime}(\rho) - \tilde{V}^{\pi}(\rho) = \frac{1}{1 - \gamma} \sum_{s}{ d_\rho^{\pi}(s) \cdot \left[ \sum_{a}{ \left( \pi^\prime(a | s) - \pi(a | s) \right) \cdot \left[ \tilde{Q}^{\pi^\prime}(s,a) - \tau \log{\pi^\prime(a | s)} \right] } + \tau \cdot \KL(\pi(\cdot| s) \| \pi^\prime(\cdot| s) ) \right] }.
\end{align}
\end{lemma}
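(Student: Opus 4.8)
The plan is to mirror the proof of the second performance difference lemma (\cref{lem:performance_difference_general_second}), carrying the entropy bonus through the recursion. First I would expand the per-state soft-value difference $\tilde{V}^{\pi^\prime}(s) - \tilde{V}^{\pi}(s)$ using the identity $\tilde{V}^{\pi}(s) = \sum_{a} \pi(a|s) [ \tilde{Q}^{\pi}(s,a) - \tau \log \pi(a|s) ]$, which holds by the definitions of the soft value and soft $Q$ functions (and which was already used in the proof of \cref{lem:lojasiewicz_entropy_general}).

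The key algebraic step is then to add and subtract $\sum_{a} \pi(a|s) [ \tilde{Q}^{\pi^\prime}(s,a) - \tau \log \pi^\prime(a|s) ]$. This splits the difference into three pieces: the desired ``soft advantage'' term $\sum_{a} ( \pi^\prime(a|s) - \pi(a|s) ) [ \tilde{Q}^{\pi^\prime}(s,a) - \tau \log \pi^\prime(a|s) ]$; a log-ratio term $\tau \sum_{a} \pi(a|s) \log ( \pi(a|s) / \pi^\prime(a|s) )$, which I would collect into $\tau \cdot \KL( \pi(\cdot|s) \| \pi^\prime(\cdot|s) )$; and a residual term $\sum_{a} \pi(a|s) [ \tilde{Q}^{\pi^\prime}(s,a) - \tilde{Q}^{\pi}(s,a) ]$. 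Because the one-step reward $r(s,a)$ cancels in the last bracket, the soft $Q$ recursion gives $\tilde{Q}^{\pi^\prime}(s,a) - \tilde{Q}^{\pi}(s,a) = \gamma \sum_{s^\prime} \gP(s^\prime|s,a) [ \tilde{V}^{\pi^\prime}(s^\prime) - \tilde{V}^{\pi}(s^\prime) ]$.

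These manipulations turn the identity into a Bellman-type fixed-point recursion for the map $s \mapsto \tilde{V}^{\pi^\prime}(s) - \tilde{V}^{\pi}(s)$, driven by the per-state ``reward'' $g(s) \coloneqq \sum_{a} ( \pi^\prime(a|s) - \pi(a|s) ) [ \tilde{Q}^{\pi^\prime}(s,a) - \tau \log \pi^\prime(a|s) ] + \tau \cdot \KL( \pi(\cdot|s) \| \pi^\prime(\cdot|s) )$ and propagated by the transition kernel under the policy $\pi$. Unrolling this recursion (equivalently, expanding the geometric series over trajectories generated by $\pi$) and invoking the definition of the discounted state distribution $d_s^{\pi}$ yields $\tilde{V}^{\pi^\prime}(s) - \tilde{V}^{\pi}(s) = \frac{1}{1-\gamma} \sum_{s^\prime} d_s^{\pi}(s^\prime) \, g(s^\prime)$; taking the expectation over $s \sim \rho$ then gives the stated result. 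The computation is essentially routine and closely parallels \cref{lem:performance_difference_general_second}; the only point requiring care is the bookkeeping in the add-and-subtract step, where the two sources of log-probability ($-\tau \log \pi^\prime$ from the inserted term and $+\tau \log \pi$ from the original) must be combined correctly into a single KL term, and one must confirm that the recursion propagates through the dynamics of $\pi$ (hence $d^{\pi}$ rather than $d^{\pi^\prime}$). I do not anticipate a substantive obstacle beyond this.
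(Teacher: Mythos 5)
Your proposal is correct and follows essentially the same route as the paper's proof: the same add-and-subtract of $\sum_{a}\pi(a|s)\,[\tilde{Q}^{\pi^\prime}(s,a)-\tau\log\pi^\prime(a|s)]$, the same collection of the log-ratio terms into $\tau\cdot\KL(\pi(\cdot|s)\,\|\,\pi^\prime(\cdot|s))$, the same cancellation of $r(s,a)$ in the soft-$Q$ difference yielding the $\gamma$-discounted recursion under the dynamics of $\pi$, and the same unrolling into $d_s^{\pi}$ followed by averaging over $s\sim\rho$. No gaps.
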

\begin{proof}
According to the definition of soft value function,
\begin{align}
\MoveEqLeft
    \tilde{V}^{\pi^\prime}(s) - \tilde{V}^{\pi}(s) = \sum_{a}{ \pi^\prime(a | s) \cdot \left[ \tilde{Q}^{\pi^\prime}(s,a) - \tau \log{ \pi^\prime(a | s) } \right] } - \sum_{a}{ \pi(a | s) \cdot \left[ \tilde{Q}^{\pi}(s,a) - \tau \log{ \pi(a | s) } \right]}\\
    &= \sum_{a}{ \left( \pi^\prime(a | s) - \pi(a | s) \right) \cdot \left[ \tilde{Q}^{\pi^\prime}(s,a) - \tau \log{ \pi^\prime(a | s) } \right] } + \sum_{a}{ \pi(a | s) \cdot \left[  \tilde{Q}^{\pi^\prime}(s,a) - \tau \log{ \pi^\prime(a | s) } - \tilde{Q}^{\pi}(s,a) + \tau \log{ \pi(a | s) } \right] } \\
    &= \sum_{a}{ \left( \pi^\prime(a | s) - \pi(a | s) \right) \cdot \left[ \tilde{Q}^{\pi^\prime}(s,a) - \tau \log{ \pi^\prime(a | s) } \right] } + \tau \KL(\pi(\cdot| s) \| \pi^\prime(\cdot| s) ) + \gamma \sum_{a}{ \pi(a | s) \sum_{s^\prime}{  \gP( s^\prime | s, a) \cdot \left[ \tilde{V}^{\pi^\prime}(s^\prime) -  \tilde{V}^{\pi}(s^\prime)  \right] } } \\
    &= \frac{1}{1 - \gamma} \sum_{s^\prime}{ d_{s}^{\pi}(s^\prime) \cdot \left[ \sum_{a^\prime}{ \left( \pi^\prime(a^\prime | s^\prime) - \pi(a^\prime | s^\prime) \right) \cdot \left[ \tilde{Q}^{\pi^\prime}(s^\prime,a^\prime) - \tau \log{ \pi^\prime(a^\prime | s^\prime) } \right] } + \tau \cdot \KL(\pi(\cdot| s^\prime) \| \pi^\prime(\cdot| s^\prime) ) \right]  }. \qedhere
\end{align}
\end{proof}

\begin{lemma}[Soft sub-optimality lemma]
\label{lem:soft_suboptimality}
For any policy $\pi$,
\begin{align}
    \tilde{V}^{\pi_\tau^*}(\rho) - \tilde{V}^{{\pi}}(\rho) = \frac{1}{1 - \gamma} \sum_{s}{ \left[  d_{\rho}^{\pi}(s) \cdot \tau \cdot { \KL(\pi( \cdot | s) \| \pi_\tau^*(\cdot | s) ) } \right] }.
\end{align}
\end{lemma}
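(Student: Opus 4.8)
The plan is to specialize the soft performance difference lemma (\cref{lem:soft_performance_difference_general}) to the pair $\pi^\prime = \pi_\tau^*$ and the given policy $\pi$, and then argue that the ``advantage-like'' term it produces vanishes identically in each state, so that only the KL contribution survives. Concretely, applying \cref{lem:soft_performance_difference_general} gives
\begin{align*}
    \tilde{V}^{\pi_\tau^*}(\rho) - \tilde{V}^{\pi}(\rho) = \frac{1}{1 - \gamma} \sum_{s}{ d_\rho^{\pi}(s) \cdot \Big[ \underbrace{\sum_{a}{ \left( \pi_\tau^*(a | s) - \pi(a | s) \right) \cdot \left[ \tilde{Q}^{\pi_\tau^*}(s,a) - \tau \log{\pi_\tau^*(a | s)} \right] }}_{(\ast)} + \tau \cdot \KL(\pi(\cdot| s) \| \pi_\tau^*(\cdot| s) ) \Big] },
\end{align*}
so it remains only to prove that $(\ast) = 0$ for every state $s$.

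The key step is to exploit the consistency conditions satisfied by the softmax optimal policy. From \cref{eq:path_consistency_conditions_1}, $\pi_\tau^*(a | s) = \exp\{ ( \tilde{Q}^{\pi_\tau^*}(s, a) - \tilde{V}^{\pi_\tau^*}(s) ) / \tau \}$; taking logarithms and rearranging yields, for every action $a$,
\begin{align*}
    \tilde{Q}^{\pi_\tau^*}(s, a) - \tau \log{ \pi_\tau^*(a | s) } = \tilde{V}^{\pi_\tau^*}(s).
\end{align*}
The decisive observation is that the right-hand side does not depend on $a$. Substituting this identity into $(\ast)$ and pulling the state-dependent constant $\tilde{V}^{\pi_\tau^*}(s)$ out of the sum gives $(\ast) = \tilde{V}^{\pi_\tau^*}(s) \cdot \sum_{a}{ ( \pi_\tau^*(a | s) - \pi(a | s) ) }$, which is zero because both $\pi_\tau^*(\cdot | s)$ and $\pi(\cdot | s)$ are probability distributions over $\gA$ and hence each sum to one. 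This establishes $(\ast) = 0$ and completes the argument.

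There is no genuinely hard step here: the statement is essentially a one-line corollary of \cref{lem:soft_performance_difference_general} once the consistency condition is in hand. The only point requiring care is to invoke the soft-optimal Bellman identity in the correct form, namely that $\tilde{Q}^{\pi_\tau^*}(s,a) - \tau \log \pi_\tau^*(a|s)$ is constant in $a$ (equal to $\tilde{V}^{\pi_\tau^*}(s)$); this is precisely what makes the advantage-type term telescope away, leaving only the relative-entropy penalty weighted by the discounted state distribution $d_\rho^{\pi}$. An alternative route that avoids citing \cref{lem:soft_performance_difference_general} would be to unroll $\tilde{V}^{\pi_\tau^*}(\rho) - \tilde{V}^{\pi}(\rho)$ directly through the soft Bellman recursion, adding and subtracting $\tilde{V}^{\pi}$ along trajectories generated by $\pi$, but this merely reproves the soft performance difference lemma en route and is strictly more work.
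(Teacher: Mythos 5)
Your proof is correct and matches the paper's own argument essentially line for line: both specialize \cref{lem:soft_performance_difference_general} to $\pi^\prime = \pi_\tau^*$, then invoke the consistency condition $\tilde{Q}^{\pi_\tau^*}(s,a) - \tau \log \pi_\tau^*(a|s) = \tilde{V}^{\pi_\tau^*}(s)$ (which the paper cites from \citet[Theorem 1]{nachum2017bridging}) to make the advantage-type term a state-dependent constant that cancels against $\sum_a (\pi_\tau^*(a|s) - \pi(a|s)) = 0$. No gaps; nothing further is needed.
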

\begin{proof}
According to \citet[Theorem 1]{nachum2017bridging}, $\forall (s,a)$,
\begin{align}
\label{eq:path_consistency_conditions_appendix_1}
    \tau \log{ \pi_\tau^*(a | s) } = \tilde{Q}^{\pi_\tau^*}(s, a) - \tilde{V}^{\pi_\tau^*}(s).
\end{align}
According to the soft performance difference lemma of \cref{lem:soft_performance_difference_general},
\begin{align}
\MoveEqLeft
    \tilde{V}^{\pi_\tau^*}(s) - \tilde{V}^{{\pi}}(s) = \frac{1}{1 - \gamma} \sum_{s^\prime}{ d_{s}^{\pi}(s^\prime) \cdot \left[ \sum_{a^\prime}{ \left( \pi_\tau^*(a^\prime | s^\prime) - \pi(a^\prime | s^\prime) \right) \cdot \left[ \tilde{Q}^{\pi_\tau^*}(s^\prime,a^\prime) - \tau \log{ \pi_\tau^*(a^\prime | s^\prime) } \right] } + \tau \cdot \KL(\pi(\cdot| s^\prime) \| {\pi_\tau^*}(\cdot| s^\prime) ) \right]  }  \\
    &= \frac{1}{1 - \gamma} \sum_{s^\prime}{ d_{s}^{\pi}(s^\prime) \cdot \left[ \sum_{a^\prime}{ \left( \pi_\tau^*(a^\prime | s^\prime) - \pi(a^\prime | s^\prime) \right) \cdot \tilde{V}^{\pi_\tau^*}(s^\prime) } + \tau \cdot \KL(\pi(\cdot| s^\prime) \| {\pi_\tau^*}(\cdot| s^\prime) ) \right]  } \qquad \left( \text{by \cref{eq:path_consistency_conditions_appendix_1}} \right) \\
    &= \frac{1}{1 - \gamma} \sum_{s^\prime}{ d_{s}^{\pi}(s^\prime) \cdot \left[ \left( 1 - 1 \right) \cdot \tilde{V}^{\pi_\tau^*}(s^\prime) + \tau \cdot \KL(\pi(\cdot| s^\prime) \| {\pi_\tau^*}(\cdot| s^\prime) ) \right]  } \\
    &= \frac{1}{1 - \gamma} \sum_{s^\prime}{ \left[ d_{s}^{\pi}(s^\prime) \cdot \tau \cdot \KL(\pi( \cdot | s^\prime) \| \pi_\tau^*(\cdot | s^\prime) ) \right] }. \qedhere
\end{align}
\end{proof}

\begin{lemma}[KL-Logit inequality]
\label{lem:kl_logit_inequality}
Let $\pi_\theta = \softmax(\theta)$ and $\pi_{\theta^\prime} = \softmax(\theta^\prime)$. Then for any constant $c \in \sR$,
\begin{align}
    \KL(\pi_\theta \| \pi_{\theta^\prime} ) \le \frac{1}{2} \cdot \left\| \theta^\prime - \theta - c \cdot \rvone \right\|_\infty^2.
\end{align}
In particular, let $c = \frac{(\theta^\prime - \theta )^\top \rvone}{K}$, we have
\begin{align}
    \KL(\pi_\theta \| \pi_{\theta^\prime} ) \le \frac{1}{2} \cdot \left\| \theta^\prime - \theta - \frac{(\theta^\prime - \theta )^\top \rvone}{K} \cdot \rvone \right\|_\infty^2.
\end{align}
\end{lemma}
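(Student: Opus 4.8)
The plan is to recognize $\KL(\pi_\theta \| \pi_{\theta'})$ as the second-order Bregman remainder of the log-partition function and then control it through its Hessian, which is exactly the $H$ matrix. First I would introduce $Z(\theta) \coloneqq \log \sum_a \exp\{\theta(a)\}$, so that $\log \pi_\theta(a) = \theta(a) - Z(\theta)$. Substituting into the definition of KL and pulling the $\theta$-independent and $a$-independent pieces out of the sum over $a$ gives
\[
    \KL(\pi_\theta \| \pi_{\theta'}) = Z(\theta') - Z(\theta) - \langle \pi_\theta, \theta' - \theta\rangle.
\]
Since $\nabla Z(\theta) = \softmax(\theta) = \pi_\theta$ and $\nabla^2 Z(\theta) = \diagonalmatrix(\pi_\theta) - \pi_\theta \pi_\theta^\top = H(\pi_\theta)$ (the same $H$ matrix as in \cref{eq:H_matrix}), the right-hand side is precisely the Bregman divergence of $Z$ evaluated at $\theta', \theta$.

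Next I would apply Taylor's theorem to the scalar map $t \mapsto Z(\theta + t(\theta'-\theta))$ on $[0,1]$, whose Lagrange remainder yields, for some $\xi$ on the segment joining $\theta$ and $\theta'$,
\[
    \KL(\pi_\theta \| \pi_{\theta'}) = \frac{1}{2}(\theta'-\theta)^\top H(\pi_\xi)(\theta'-\theta).
\]
It then remains to bound this quadratic form by $\|\theta'-\theta - c\rvone\|_\infty^2$ for arbitrary $c$.

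The key step, and the only place requiring care, is to exploit the shift-invariance of the form before applying a crude variance bound. By \cref{lem:golub_rank_one_perturb} we have $H(\pi_\xi)\rvone = \rvzero$, so for every $c\in\sR$,
\[
    (\theta'-\theta)^\top H(\pi_\xi)(\theta'-\theta) = (\theta'-\theta - c\rvone)^\top H(\pi_\xi)(\theta'-\theta - c\rvone).
\]
Also by \cref{lem:golub_rank_one_perturb}, for any $w$ one has $w^\top H(\pi_\xi) w = \Var_{a\sim\pi_\xi}[w(a)] \le \E_{a\sim\pi_\xi}[w(a)^2] \le \|w\|_\infty^2$. Applying this with $w = \theta'-\theta - c\rvone$ gives $(\theta'-\theta)^\top H(\pi_\xi)(\theta'-\theta) \le \|\theta'-\theta-c\rvone\|_\infty^2$, and combining with the Taylor identity establishes the general inequality. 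The ``in particular'' case is then simply the substitution $c = (\theta'-\theta)^\top\rvone/K$.

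The main obstacle---such as it is---is ordering the two facts correctly: one must use $H(\pi_\xi)\rvone=\rvzero$ to absorb the free constant $c$ \emph{before} invoking the coarse bound $\Var \le \E[(\cdot)^2] \le \|\cdot\|_\infty^2$, since without first shifting by $c\rvone$ the estimate would be loose and would fail to reproduce the stated constant $1/2$ and the $\ell_\infty$ seminorm on the shifted vector. Everything else---the Hessian computation for $Z$ and the variance identity for $H$---is routine and already available in the excerpt.
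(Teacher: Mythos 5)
Your proposal is correct, and it takes a genuinely different route from the paper's proof. The paper works entirely in the space of probability vectors: it invokes the $\ell_1$-strong convexity of negative entropy over the simplex, uses the fact that $\log{\pi_\theta} - \log{\pi_{\theta^\prime}}$ equals $\theta - \theta^\prime$ up to a multiple of $\rvone$, applies H{\"o}lder's inequality to obtain
\begin{align}
    \KL(\pi_\theta \| \pi_{\theta^\prime}) \le \left\| \theta - \theta^\prime - c \cdot \rvone \right\|_\infty \cdot \left\| \pi_\theta - \pi_{\theta^\prime} \right\|_1 - \frac{1}{2} \cdot \left\| \pi_\theta - \pi_{\theta^\prime} \right\|_1^2,
\end{align}
and closes with the elementary bound $ax - bx^2 \le a^2/(4b)$. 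You instead work in logit space: you identify the KL divergence with the Bregman remainder of the log-partition function $Z(\theta) = \log \sum_a \exp\{\theta(a)\}$, invoke the scalar Taylor--Lagrange theorem to get the exact identity $\KL(\pi_\theta \| \pi_{\theta^\prime}) = \frac{1}{2} (\theta^\prime - \theta)^\top H(\pi_\xi) (\theta^\prime - \theta)$ for some $\xi$ on the segment, and then finish with the two facts from \cref{lem:golub_rank_one_perturb}: $H(\pi_\xi) \rvone = \rvzero$ (to absorb the free shift $c \cdot \rvone$) and $w^\top H(\pi_\xi) w = \Var_{a \sim \pi_\xi}[w(a)] \le \|w\|_\infty^2$. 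Your closing remark about ordering is exactly the right point of care: shifting by $c \cdot \rvone$ must precede the crude variance bound, or the constant $c$ is lost. Both arguments are elementary and of comparable length. Yours buys an exact second-order representation and reuses the $H$-matrix machinery the paper already develops; it would even yield a slightly sharper bound via Popoviciu's inequality, $\Var_{a\sim\pi_\xi}[w(a)] \le \frac{1}{4}\left( \max_a w(a) - \min_a w(a) \right)^2$, since the span of $\theta^\prime - \theta$ is at most $2 \left\| \theta^\prime - \theta - c \cdot \rvone \right\|_\infty$ for every $c$. The paper's Pinsker-based route has the side benefit that the same strong-convexity machinery directly powers \cref{lem:policy_logit_inequality_special}, which the paper deduces from this lemma.
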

\begin{proof}
According to the $\ell_1$ norm strong convexity of negative entropy over probabilistic simplex, i.e., for any policies $ \pi$, $\pi^\prime$,
\begin{align}
    {\pi^\prime}^\top \log{ {\pi^\prime} } \ge \pi^\top \log{\pi} + ( {\pi^\prime} - \pi )^\top \log{ \pi } + \frac{1}{2} \cdot \left\| \pi - {\pi^\prime} \right\|_1^2,
\end{align}
we have (letting $\pi = \pi_\theta$, and $\pi^\prime = \pi_{\theta^\prime}$),
\begin{align}
\MoveEqLeft
    \KL(\pi_\theta \| \pi_{\theta^\prime}) = \pi_\theta^\top \log{\pi_\theta} - { \pi_{\theta^\prime} }^\top \log{\pi_{\theta^\prime}} - (\pi_\theta - \pi_{\theta^\prime} )^\top \log{\pi_{\theta^\prime}} \\
    &\le (\pi_\theta - \pi_{\theta^\prime})^\top \log{ \pi_\theta }  - \frac{1}{2} \cdot \left\| \pi_\theta - \pi_{\theta^\prime} \right\|_1^2 - (\pi_\theta - \pi_{\theta^\prime} )^\top \log{\pi_{\theta^\prime}} \\
    &= (\pi_\theta - \pi_{\theta^\prime})^\top \left( \log{ \pi_\theta } - \log{ \pi_{\theta^\prime} } \right) - \frac{1}{2} \cdot \left\| \pi_\theta - \pi_{\theta^\prime} \right\|_1^2 \\
    &= (\pi_\theta - \pi_{\theta^\prime})^\top \left[ \theta - \theta^\prime - \left( \log \sum_{a} \exp\{ \theta(a) \} - \log \sum_{a} \exp\{ \theta^\prime(a) \} \right) \cdot \rvone \right] - \frac{1}{2} \cdot \left\| \pi_\theta - \pi_{\theta^\prime} \right\|_1^2 \\
    &= (\pi_\theta - \pi_{\theta^\prime})^\top \left( \theta - \theta^\prime \right) - \frac{1}{2} \cdot \left\| \pi_\theta - \pi_{\theta^\prime} \right\|_1^2 \\
    &= (\pi_\theta - \pi_{\theta^\prime})^\top \left( \theta - \theta^\prime - c \cdot \rvone \right) - \frac{1}{2} \cdot \left\| \pi_\theta - \pi_{\theta^\prime} \right\|_1^2 
    \qquad \left( \text{since } (\pi_\theta - \pi_{\theta^\prime})^\top c \cdot \rvone = 0 \text{ holds } \forall c \in \sR \right) \\
    &\le \left\| \theta - \theta^\prime - c \cdot \rvone \right\|_\infty \cdot \left\| \pi_\theta - \pi_{\theta^\prime} \right\|_1 - \frac{1}{2} \cdot \left\| \pi_\theta - \pi_{\theta^\prime} \right\|_1^2 
    \qquad \left( \text{by H{\" o}lder's inequality} \right) \\
    &\le \frac{1}{2} \cdot \left\| \theta - \theta^\prime - c \cdot \rvone \right\|_\infty^2,
\end{align}
where the last inequality is according to $ax - bx^2 \le \frac{a^2}{4b}$, $\forall a, b > 0$.
\end{proof}

\begin{lemma}[Reversed \L{}ojasiewicz]
\label{lem:reverse_lojasiewicz_softmax_general}
Denote $\Delta^*(s) = Q^*(s, a^*(s)) - \max_{a \not= a^*(s)}{ Q^*(s, a) } > 0$ as the optimal value gap of state $s$, where $a^*(s)$ is the action that the optimal policy selects under state $s$, and $\Delta^* = \min_{s \in \gS}{ \Delta^*(s) } > 0$ as the optimal value gap of the MDP. Then we have, 
\begin{align}
    \left\| \frac{\partial V^{\pi_\theta}(\mu)}{\partial \theta }\right\|_2 \le \frac{1}{1 - \gamma} \cdot \frac{\sqrt{2}}{\Delta^*} \cdot \left[ V^*(\mu) - V^{\pi_\theta}(\mu) \right].
\end{align}
\end{lemma}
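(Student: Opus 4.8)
The plan is to mirror the bandit reversed \L{}ojasiewicz argument (\cref{lem:reverse_lojasiewicz_softmax_special}) at the level of individual states, using the per-state optimal gap $\Delta^*(s)$ in place of the bandit gap $\Delta$, and then aggregating over states through the occupancy measure $d_\mu^{\pi_\theta}$. The central quantity will be the per-state ``suboptimal mass'' $w_s \coloneqq \sum_{a \neq a^*(s)} \pi_\theta(a|s)$, and the whole proof reduces to sandwiching both the gradient norm and the value sub-optimality between multiples of $W \coloneqq \sum_s d_\mu^{\pi_\theta}(s)\, w_s$.

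First I would write the gradient in the form of \cref{lem:policy_gradient_softmax}, so that
\[
    \left\|\frac{\partial V^{\pi_\theta}(\mu)}{\partial\theta}\right\|_2 = \frac{1}{1-\gamma}\Big[\sum_s (d_\mu^{\pi_\theta}(s))^2 \sum_a \pi_\theta(a|s)^2 A^{\pi_\theta}(s,a)^2\Big]^{1/2}.
\]
The key step is a per-state bound $\sum_a \pi_\theta(a|s)^2 A^{\pi_\theta}(s,a)^2 \le \frac{2}{(1-\gamma)^2} w_s^2$. For the suboptimal actions $a \neq a^*(s)$ this follows from $|A^{\pi_\theta}(s,a)| \le 1/(1-\gamma)$ (a consequence of $Q^{\pi_\theta}, V^{\pi_\theta} \in [0, 1/(1-\gamma)]$ under \cref{asmp:bounded_reward}) together with $\sum_{a\neq a^*(s)}\pi_\theta(a|s)^2 \le w_s^2$. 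For the single action $a = a^*(s)$ I would use the identity $A^{\pi_\theta}(s,a^*(s)) = \sum_{a\neq a^*(s)}\pi_\theta(a|s)\,[Q^{\pi_\theta}(s,a^*(s)) - Q^{\pi_\theta}(s,a)]$ (which uses $\sum_a \pi_\theta(a|s) A^{\pi_\theta}(s,a)=0$) and bound each bracket by $1/(1-\gamma)$, giving $|A^{\pi_\theta}(s,a^*(s))| \le w_s/(1-\gamma)$. Summing the two contributions yields the factor $2$. Plugging in, using $\|\cdot\|_2 \le \|\cdot\|_1$ on the nonnegative vector $(d_\mu^{\pi_\theta}(s)\, w_s)_s$, and $d_\mu^{\pi_\theta}(s) \le 1$, I obtain $\|\nabla\|_2 \le \frac{\sqrt2}{(1-\gamma)^2} W$.

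Second I would lower bound the sub-optimality via the value sub-optimality lemma (\cref{lem:value_suboptimality}): since $\pi^*$ is deterministic and selects $a^*(s)$,
\[
    V^*(\mu) - V^{\pi_\theta}(\mu) = \frac{1}{1-\gamma}\sum_s d_\mu^{\pi_\theta}(s)\sum_{a\neq a^*(s)}\pi_\theta(a|s)\,[Q^*(s,a^*(s)) - Q^*(s,a)] \ge \frac{\Delta^*}{1-\gamma}\, W,
\]
using $Q^*(s,a^*(s)) - Q^*(s,a) \ge \Delta^*(s) \ge \Delta^*$ for every $a\neq a^*(s)$. Combining the two displays, $\|\nabla\|_2 \le \frac{\sqrt2}{(1-\gamma)^2}W \le \frac{\sqrt2}{(1-\gamma)\Delta^*}\,[V^*(\mu)-V^{\pi_\theta}(\mu)]$, which is exactly the claim.

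The main obstacle, and the only place the argument is delicate, is the bound on the optimal-action advantage $A^{\pi_\theta}(s,a^*(s))$: unlike the bandit case, $a^*(s)$ is the optimal policy's action rather than the greedy action of the current $Q^{\pi_\theta}$, so $A^{\pi_\theta}(s,a^*(s))$ need not be nonnegative and cannot be tied to the gap directly. The rewriting through the telescoping identity is what lets one control it by $w_s/(1-\gamma)$ while keeping the state-wise structure intact; everything else is Cauchy-Schwarz and H{\"o}lder bookkeeping, mirroring \cref{lem:reverse_lojasiewicz_softmax_special}.
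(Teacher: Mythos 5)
Your proposal is correct and follows essentially the same route as the paper's proof: the identity $A^{\pi_\theta}(s,a^*(s)) = \sum_{a\neq a^*(s)}\pi_\theta(a|s)\,[Q^{\pi_\theta}(s,a^*(s)) - Q^{\pi_\theta}(s,a)]$ to bound the optimal-action advantage by $w_s/(1-\gamma)$, the crude $1/(1-\gamma)$ bound plus $\|x\|_2\le\|x\|_1$ for the suboptimal actions, and the value sub-optimality lemma with the gap $\Delta^*$ to lower bound $V^*(\mu)-V^{\pi_\theta}(\mu)$ by $\frac{\Delta^*}{1-\gamma}W$. The only cosmetic differences are that you package the per-state bound as $\sum_a \pi_\theta(a|s)^2 A^{\pi_\theta}(s,a)^2 \le \frac{2}{(1-\gamma)^2}w_s^2$ before aggregating, and your appeal to $d_\mu^{\pi_\theta}(s)\le 1$ is unnecessary (the $\|\cdot\|_2\le\|\cdot\|_1$ step already suffices), which is harmless.
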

\begin{proof}
Denote $\Delta^*(s,a) = Q^*(s,a^*(s)) - Q^*(s,a)$, and $\Delta^*(s) = \min_{a \not= a^*(s)}{ \Delta^*(s,a) }$. We have,
\begin{align}
    V^*(\mu) - V^{\pi_\theta}(\mu) &= \frac{1}{1 - \gamma} \sum_{s}{ d_\mu^{\pi_\theta}(s) \sum_{a}{ \left( \pi^*(a | s) - \pi_\theta(a | s) \right) \cdot Q^*(s,a) } } \qquad \left( \text{by \cref{lem:value_suboptimality}} \right) \\
    &= \frac{1}{1 - \gamma} \sum_{s}{ d_\mu^{\pi_\theta}(s) \cdot \left[ \sum_{a}{ \pi_\theta(a | s) \cdot Q^*(s, a^*(s)) } - \sum_{a}{ \pi_\theta(a | s) \cdot Q^*(s, a) } \right] } \\
    &= \frac{1}{1 - \gamma} \sum_{s}{ d_\mu^{\pi_\theta}(s) \cdot \left[ \sum_{a \not= a^*(s)}{ \pi_\theta(a | s) \cdot Q^*(s, a^*(s)) } - \sum_{a \not= a^*(s)}{ \pi_\theta(a | s) \cdot Q^*(s, a) } \right] } \\
    &= \frac{1}{1 - \gamma} \sum_{s}{ d_\mu^{\pi_\theta}(s) \cdot \left[ \sum_{a \not= a^*(s)}{ \pi_\theta(a | s) \cdot \Delta^*(s, a) } \right] } \\
    &\ge \frac{1}{1 - \gamma} \sum_{s}{ d_\mu^{\pi_\theta}(s) \cdot \left[ \sum_{a \not= a^*(s)}{ \pi_\theta(a | s)  } \right] \cdot \Delta^*(s) }.
\end{align}
Since $Q^{\pi_\theta}(s,a) \in [ 0, 1/(1 - \gamma) ]$, and $V^{\pi_\theta}(s) \in [ 0, 1/(1 - \gamma) ]$, we have $ | A^{\pi_\theta}(s,a) | \in [ 0, 1/(1 - \gamma) ]$. Also,
\begin{align}
    \left| A^{\pi_\theta}(s,a^*(s)) \right| &= \left| Q^{\pi_\theta}(s,a^*(s)) - \sum_{a}{ \pi_\theta(a | s) \cdot Q^{\pi_\theta}(s,a) } \right| \\
    &= \left| \sum_{a \not= a^*(s)}{ \pi_\theta(a | s) \cdot \left[ Q^{\pi_\theta}(s,a^*(s)) - Q^{\pi_\theta}(s,a) \right] } \right| \\
    &\le \sum_{a \not= a^*(s)}{ \pi_\theta(a | s) \cdot \left| Q^{\pi_\theta}(s,a^*(s)) - Q^{\pi_\theta}(s,a) \right| } \qquad \left( \text{by the triangle inequality} \right) \\
    &\le \frac{1}{1 - \gamma} \sum_{a \not= a^*(s)}{ \pi_\theta(a | s) }. 
    \qquad \left( \text{because } Q^{\pi_\theta}(s,a) \in [ 0, 1/(1 - \gamma) ] \right)
\end{align}
Therefore the $\ell_2$ norm of gradient can be upper bounded as
\begin{align}
    \left\| \frac{\partial V^{\pi_\theta}(\mu)}{\partial \theta }\right\|_2 &= \frac{1}{1-\gamma} \cdot \left[ \sum_{s}{ d_{\mu}^{\pi_\theta}(s)^2 \sum_{a} \pi_\theta(a|s)^2 \cdot A^{\pi_\theta}(s,a)^2 } \right]^{\frac{1}{2}} \\
    &= \frac{1}{1-\gamma} \cdot \left[ \sum_{s}{ d_{\mu}^{\pi_\theta}(s)^2 \cdot \left( \pi_\theta(a^*(s)|s)^2 \cdot A^{\pi_\theta}(s,a^*(s))^2 + \sum_{a \not= a^*(s)}{ \pi_\theta(a|s)^2 \cdot A^{\pi_\theta}(s,a)^2 } \right)  } \right]^{\frac{1}{2}} \\
    &\le \frac{1}{1-\gamma} \cdot \left[ \sum_{s}{ d_{\mu}^{\pi_\theta}(s)^2 \cdot \left( 1 \cdot \frac{1}{(1-\gamma)^2} \cdot \left[ \sum_{a \not= a^*(s)}{ \pi_\theta(a | s) } \right]^2 + \sum_{a \not= a^*(s)}{ \pi_\theta(a|s)^2 \cdot \frac{1}{(1-\gamma)^2} } \right)  } \right]^{\frac{1}{2}} \\
    &\le \frac{1}{(1-\gamma)^2} \cdot \left[ \sum_{s}{ d_{\mu}^{\pi_\theta}(s)^2 \cdot 2 \cdot \left[ \sum_{a \not= a^*(s)}{ \pi_\theta(a | s) } \right]^2 } \right]^{\frac{1}{2}} 
    \qquad \left( \text{by } \| x \|_2 \le \| x \|_1 \right) \\
    &\le \frac{1}{(1-\gamma)^2} \cdot \sqrt{2} \cdot \sum_{s}{ d_{\mu}^{\pi_\theta}(s) \cdot \left[ \sum_{a \not= a^*(s)}{ \pi_\theta(a | s) } \right] }. 
    \qquad \left( \text{by } \| x \|_2 \le \| x \|_1 \right)
\end{align}
Combining the results, we have
\begin{align}
\MoveEqLeft
    \left\| \frac{\partial V^{\pi_\theta}(\mu)}{\partial \theta }\right\|_2 \le \frac{1}{1 - \gamma} \cdot \sqrt{2} \cdot \frac{1}{1 - \gamma} \sum_{s}{ d_\mu^{\pi_\theta}(s) \cdot \left[ \sum_{a \not= a^*(s)}{ \pi_\theta(a | s)  } \right] } \\
    &=\frac{1}{1 - \gamma} \cdot \frac{\sqrt{2}}{\Delta^*} \cdot \frac{1}{1 - \gamma} \sum_{s}{ d_\mu^{\pi_\theta}(s) \cdot \left[ \sum_{a \not= a^*(s)}{ \pi_\theta(a | s)  } \right] \cdot \Delta^* } \\
    &\le \frac{1}{1 - \gamma} \cdot \frac{\sqrt{2}}{\Delta^*} \cdot \frac{1}{1 - \gamma} \sum_{s}{ d_\mu^{\pi_\theta}(s) \cdot \left[ \sum_{a \not= a^*(s)}{ \pi_\theta(a | s)  } \right] \cdot \Delta^*(s) } 
    \qquad \left( \text{by } \Delta^* \le \Delta^*(s) \text{ holds for all } s \right) \\
    &\le \frac{1}{1 - \gamma} \cdot \frac{\sqrt{2}}{\Delta^*} \cdot \left[ V^*(\mu) - V^{\pi_\theta}(\mu) \right]. \qedhere
\end{align}
\end{proof}

\section{Sub-optimality Guarantees for Other Entropy-Based RL Methods}
\label{sec:remark_sub_optimality_guarantees}

Some interesting insight worth mentioning in the proof of \cref{lem:lojasiewicz_entropy_general} is that the intermediate results provide sub-optimality guarantees for existing entropy regularized RL methods. In particular, \cref{eq:intermediate_justification_sac_1,eq:intermediate_justification_sac_2} provides policy improvement guarantee for Soft Actor-Critic \citep[SAC]{haarnoja2018soft}, and  \cref{eq:intermediate_justification_pcl_1,eq:intermediate_justification_pcl_2} provide sub-optimality guarantees for Patch Consistency Learning \citep[PCL]{nachum2017bridging}.
\begin{remark}[Soft policy improvement inequality]
In \citet[Eq. (4) and Lemma 2]{haarnoja2018soft}, the policy is updated by
\begin{align}
    \pi_{\theta_{t+1}} = \argmin_{\pi_\theta}{ \KL\left( \pi_\theta(\cdot | s) \middle\| \frac{\exp\left\{ Q^{\pi_{\theta_t}}(s, \cdot) \right\}}{ \sum_{a}{ \exp\left\{ Q^{\pi_{\theta_t}}(s, a) \right\} } } \right) },
\end{align}
which is exactly the KL divergence in \cref{eq:intermediate_justification_sac_2}, with $\bar{\pi}_{\theta}(\cdot | s)$ defined in \cref{eq:intermediate_justification_sac_1}. The soft policy improvement inequality of \cref{eq:intermediate_justification_sac_2} guarantees that if the soft policy improvement is small, then the sub-optimality  is small.
\end{remark}
\begin{remark}[Path inconsistency inequality]
In \citet[Theorems 1 and 3]{nachum2017bridging}, it is shown that
\begin{itemize}
    \item (i) soft optimal policy $\pi_\tau^*$ satisfies the consistency conditions \cref{eq:path_consistency_conditions_1,eq:path_consistency_conditions_2};
    \item (ii) for any policy $\pi$ that satisfies the consistency conditions, i.e., if $\ \forall s, a$,
    \begin{align}
        \pi(a | s) = \exp\left\{ ( \tilde{Q}^{\pi}(s, a) - \tilde{V}^{\pi}(s) ) / \tau \right\}, \quad \text{and } \quad  \tilde{V}^{\pi}(s) = \tau \log{ \sum_{a} \exp\left\{ \tilde{Q}^{\pi}(s, a) / \tau \right\}},
    \end{align}
then $\pi = \pi_\tau^*$, and $\tilde{V}^{\pi} = \tilde{V}^{\pi_\tau^*}$.
\end{itemize}
However, \citet{nachum2017bridging} does not show if the consistency is violated during learning, how the violation is related to the sub-optimality. To see why \cref{lem:lojasiewicz_entropy_general} provides insight, define the following ``path inconsistency",
\begin{align*}
   r(s,a) + \gamma \sum_{s^\prime}{ \gP(s^\prime | s, a) \tilde{V}^{\pi}(s^\prime) }  - \tilde{V}^{\pi}(s) - \tau \log{\pi(a | s)} =  \tilde{Q}^{\pi}(s, a) - \tilde{V}^{\pi}(s) - \tau \log{\pi(a | s)},
\label{eq:path_inconsistency} \numberthis
\end{align*}
which captures the violation of consistency conditions during learning. Note that for softmax policy $\pi_\theta(\cdot | s) = \softmax(\theta(s, \cdot))$, the r.h.s. of \cref{eq:path_inconsistency} can be written in vector form as
\begin{align}
    \tilde{Q}^{\pi_\theta}(s, \cdot) - \tilde{V}^{\pi_\theta}(s) \cdot \rvone - \tau \log{\pi_\theta(\cdot | s)} &= \tilde{Q}^{\pi_\theta}(s, \cdot) - \tilde{V}^{\pi_\theta}(s) \cdot \rvone - \tau \theta(s, \cdot) + \tau \log \sum_{a} \exp\{ \theta(s,a) \} \cdot \rvone.
\end{align}
Denote $c_\theta(s) = \frac{\tilde{V}^{\pi_\theta}(s)}{\tau} - \log \sum_{a} \exp\{ \theta(s,a) \}$, and using \cref{lem:kl_logit_inequality} in the proof of \cref{lem:lojasiewicz_entropy_general}, in particular, \cref{eq:intermediate_justification_pcl_1},
\begin{align*}
    \KL( \pi_\theta(\cdot | s) \| \bar{\pi}_{\theta}(\cdot | s) ) \le \frac{1}{2} \cdot \left\| \frac{ \tilde{Q}^{\pi_\theta}(s,\cdot) }{\tau} - \theta(s, \cdot) - c_\theta(s) \cdot \rvone \right\|_\infty^2 = \frac{1}{2 \tau^2} \cdot \left\| \tilde{Q}^{\pi_\theta}(s, \cdot) - \tilde{V}^{\pi_\theta}(s) \cdot \rvone - \tau \log{\pi_\theta(\cdot | s)}  \right\|_\infty^2.
\end{align*}
Using the above results in \cref{eq:intermediate_justification_pcl_2},
\begin{align}
\label{eq:intermediate_justification_pcl_3}
\MoveEqLeft
    \left[ \tilde{V}^{\pi_\tau^*}(\rho) - \tilde{V}^{{\pi_\theta}}(\rho) \right]^\frac{1}{2} 
    \le \frac{1}{\sqrt{1 - \gamma}} \cdot \frac{1}{\sqrt{2 \tau }} \cdot \left\| \frac{d_{\rho}^{\pi_\tau^*} }{ d_{\mu}^{\pi_\theta}} \right\|_\infty^\frac{1}{2} \sum_{s}{ \sqrt{ d_{\mu}^{\pi_\theta}(s) } \cdot \left\| \tilde{Q}^{\pi_\theta}(s, \cdot) - \tilde{V}^{\pi_\theta}(s) \cdot \rvone - \tau \log{\pi_\theta(\cdot | s)} \right\|_\infty } \\
    &= \frac{1}{\sqrt{1 - \gamma}} \cdot \frac{1}{\sqrt{2 \tau }} \cdot \left\| \frac{d_{\rho}^{\pi_\tau^*} }{ d_{\mu}^{\pi_\theta}} \right\|_\infty^\frac{1}{2} \sum_{s}{ \sqrt{ d_{\mu}^{\pi_\theta}(s) } \cdot \max_{a} \left| r(s,a) + \gamma \sum_{s^\prime}{ \gP(s^\prime | s, a) \tilde{V}^{\pi_\theta}(s^\prime)  } - \tau \log{\pi_\theta(a | s)} - \tilde{V}^{\pi_\theta}(s)  \right| },
\end{align}
where (square of) $\left| r(s,a) + \gamma \sum_{s^\prime}{ \gP(s^\prime | s, a) \tilde{V}^{\pi_\theta}(s^\prime)  } - \tau \log{\pi_\theta(a | s)} - \tilde{V}^{\pi_\theta}(s) \right|$ is exactly the (one-step) path inconsistency objective used in PCL \citep[Eq. (14)]{nachum2017bridging}. Therefore, minimizing path inconsistency guarantees small sub-optimality. The path inconsistency inequality of \cref{eq:intermediate_justification_pcl_3} implies path consistency of \citet{nachum2017bridging}.
\end{remark}

\section{Simulation Results}
\label{sec:simulation_results}

To verify the convergence rates in the main paper, we conducted experiments on one-state MDPs, which have $K$ actions, with randomly generated reward $r \in [0,1]^K$, and randomly initialized policy $\pi_{\theta_1}$.

\subsection{Softmax Policy Gradient}

$K = 20$, $r \in [0,1]^K$ is randomly generated, and $\pi_{\theta_1}$ is randomly initialized. Softmax policy gradient, i.e., \cref{update_rule:softmax_special} is used with learning rate $\eta = 2/5$ and $T = 3 \times 10^5$. As shown in \cref{fig:fig2}(a), the sub-optimality $\delta_t = \left( \pi^* - \pi_{\theta_t} \right)^\top r$ approaches $0$. Subfigures (b) and (c) show $\log{\delta_t}$ as a function of $\log{t}$. As $\log{t}$ increases, the slope is approaching $-1$, indicating that $\log{\delta_t} = - \log{t} + C$, which is equivalent to $\delta_t = {C^\prime}/t$. Subfigure (d) shows $\pi_{\theta_t}(a^*)$ as a function of $t$.

\begin{figure*}[ht]
\centering
\includegraphics[width=1\linewidth]{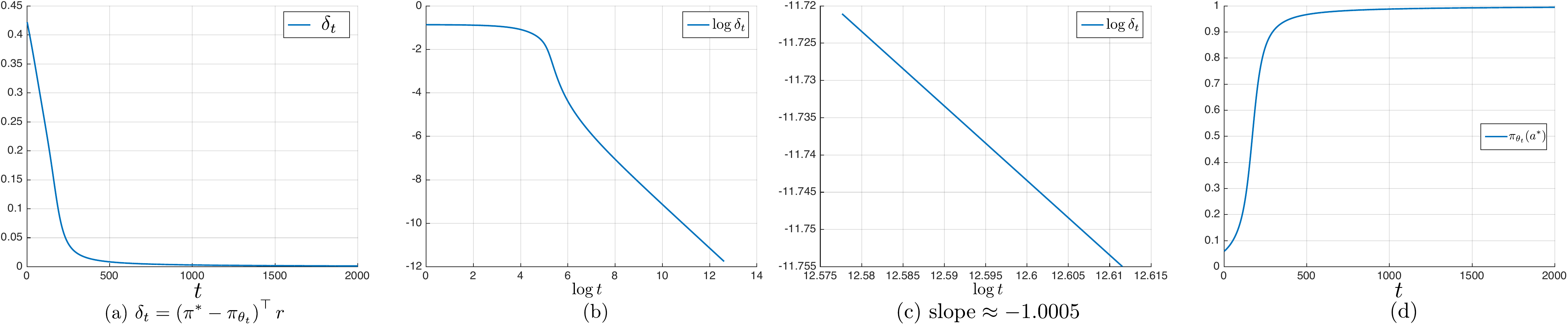}
\vskip -0.1in
\caption{Softmax policy gradient, \cref{update_rule:softmax_special}.}
\label{fig:fig2}
\vskip -0.15in
\end{figure*}

\subsection{Entropy Regularized Softmax Policy Gradient}

$K = 20$, $r \in [0,1]^K$ and $\pi_{\theta_1}$ are the same as above. Entropy regularized softmax policy gradient, i.e., \cref{update_rule:entropy_special} is used with temperature $\tau = 0.2$, learning rate $\eta = 2/5$ and $T = 5 \times 10^4$. As shown in \cref{fig:fig3}(a), the soft sub-optimality $\tilde{\delta}_t = { \pi_\tau^* }^\top \left( r - \tau \log{ \pi_\tau^* } \right)  - { \pi_{\theta_t} }^\top \left( r - \tau \log{ \pi_{\theta_t} } \right)$ approaches $0$. Subfigure (b) shows $\log{\tilde{\delta}_t}$ as a function of $t$. As $t$ increases, the curve approaches a straight line, indicating that $\log{\tilde{\delta}_t} = - C_1 \cdot t + C_2$, which is equivalent to $\tilde{\delta}_t = {C_2^\prime}/\exp\{ C_1^\prime \cdot t\}$. Subfigure (c) shows $\zeta_t$ as defined in \cref{lem:contraction_entropy_special} as a function of $t$, which verifies \cref{lem:matching_entropy_special}. Subfigure (d) shows $\min_{a}{\pi_{\theta_t}(a)}$ as a function of $t$. As $t$ increases, $\min_{a}{\pi_{\theta_t}(a)}$ approaches constant values, which verifies \cref{lem:lower_bound_min_prob_entropy_special}.

\begin{figure*}[ht]
\centering
\includegraphics[width=1\linewidth]{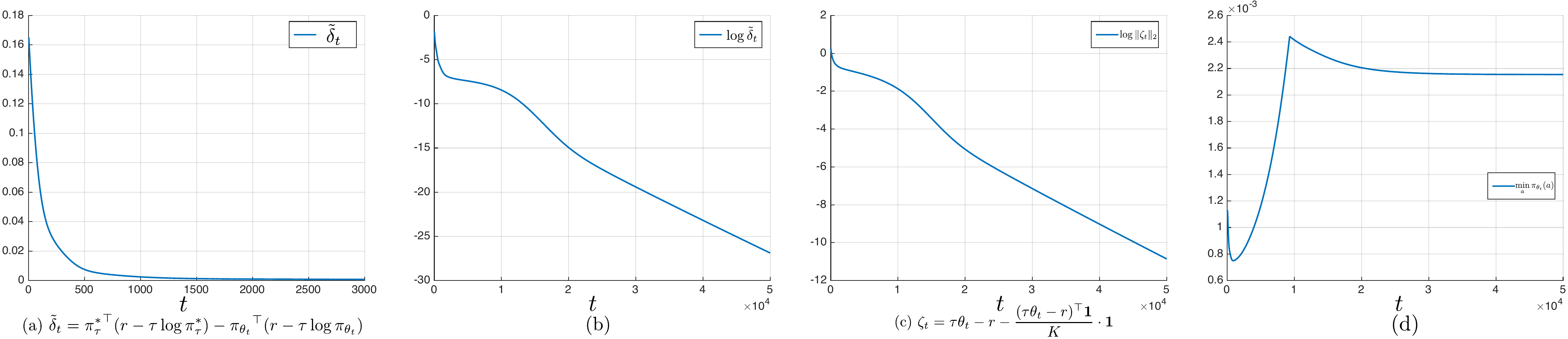}
\vskip -0.1in
\caption{Entropy regularized softmax policy gradient, \cref{update_rule:entropy_special}.}
\label{fig:fig3}
\vskip -0.15in
\end{figure*}

\subsection{``Bad" Initializations for Softmax Policy Gradient (PG)}

As illustrated in \cref{fig:fig1}, ``bad" initializations lead to attraction toward sub-optimal corners and slowly escaping for softmax policy gradient. \cref{fig:fig4} shows one example with $K=5$. Softmax policy gradient takes about $ 8 \times 10^6$ iterations around a  sub-optimal corner. While with entropy regularization ($\tau = 0.2$), the convergence is significantly faster.

\begin{figure*}[ht]
\centering
\includegraphics[width=1\linewidth]{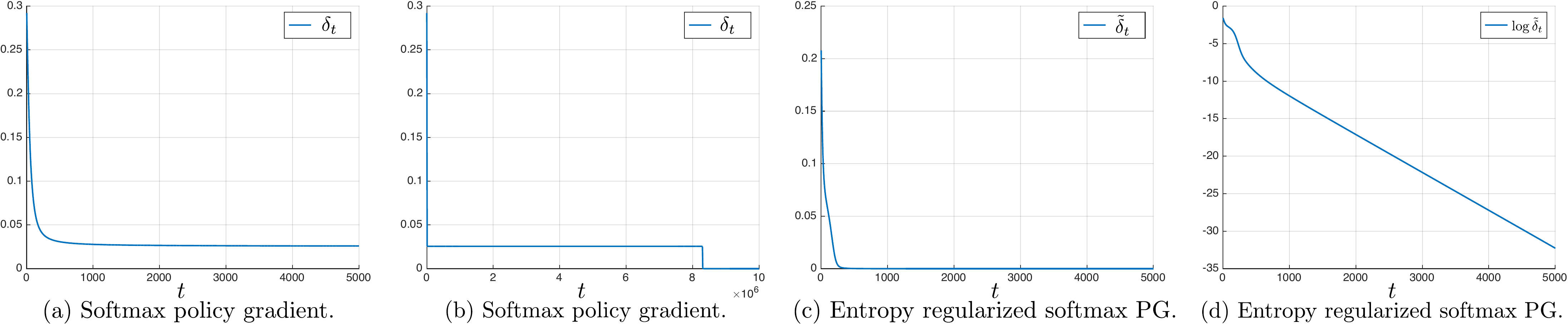}
\vskip -0.1in
\caption{Bad initialization for softmax policy gradient.}
\label{fig:fig4}
\vskip -0.15in
\end{figure*}

\subsection{Decaying Entropy Regularization}

We run entropy regularized policy gradient with decaying temperature $\tau_t = \frac{\alpha \cdot \Delta}{ \log{t} }$ for $t \ge 2$, i.e., \cref{update_rule:decaying_entropy_special}. \cref{fig:fig5} shows one example with $K=10$ and different $\alpha$ values. The actual rate is $O( \frac{1}{t^{-\text{slope}}} )$, and the partial rate in \cref{thm:rates_decaying_entropy_special} is $O(\frac{1}{t^{1/\alpha}})$.

\begin{figure*}[ht]
\centering
\includegraphics[width=1\linewidth]{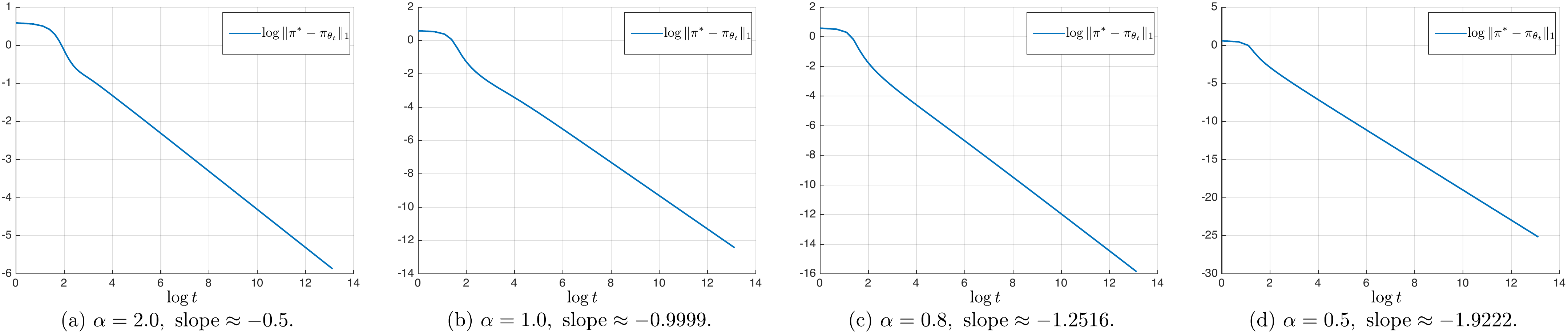}
\vskip -0.1in
\caption{Decaying entropy regularization, \cref{update_rule:decaying_entropy_special}.}
\label{fig:fig5}
\vskip -0.2in
\end{figure*}

\end{document}